\newcommand{\blind}{1}
\tikzset{join/.code=\tikzset{after node path={%
\ifx\tikzchainprevious\pgfutil@empty\else(\tikzchainprevious)%
edge[every join]#1(\tikzchaincurrent)\fi}}}
\tikzset{>=stealth',every on chain/.append style={join},
         every join/.style={}}
\tikzstyle{labeled}=[execute at begin node=$\scriptstyle,
\newcommand*{\Rom}[1]{\expandafter\@slowromancap\romannumeral #1@}
\newcommand*{\rom}[1]{\romannumeral #1}
\DeclareMathOperator*{\argmin}{arg\,min}
\newcommand{\tp}{\mathbb{P}}
\newcommand{\te}{\mathbb{E}}
\newcommand{\cov}{\textup{Cov}}
\newcommand{\var}{\textup{Var}}
\newcommand{\bxi}{\bm{\xi}}
\newcommand{\hb}{\hat{b}}
\newcommand{\lambdak}[1]{\lambda^{(#1)}}
\newcommand{\bdelta}{\bm{\delta}}
\newcommand{\bdeltak}[1]{\bm{\delta}^{(#1)}}
\newcommand{\bdeltaah}{\bm{\delta}^{\mathcal{A}_h}}
\newcommand{\hdeltaa}{\hat{\bm{\delta}}^{\mathcal{A}}}
\newcommand{\hdeltaah}{\hat{\bm{\delta}}^{\mathcal{A}_h}}
\newcommand{\infnorma}[1]{\left\|#1\right\|_{\infty}}
\newcommand{\onenorma}[1]{\left\|#1\right\|_{1}}
\newcommand{\twonorma}[1]{\left\|#1\right\|_{2}}
\newcommand{\infnorm}[1]{\|#1\|_{\infty}}
\newcommand{\twonorm}[1]{\|#1\|_{2}}
\newcommand{\onenorm}[1]{\|#1\|_{1}}
\newcommand{\zeronorm}[1]{\|#1\|_{0}}
\newcommand{\diag}{\text{diag}}
\newcommand{\maxnorma}[1]{\left\|#1\right\|_{\max}}
\newcommand{\maxnorm}[1]{\|#1\|_{\max}}
\newcommand{\norma}[1]{\left|#1\right|}
\newcommand{\norm}[1]{|#1|}
\newcommand{\bbeta}{\bm{\beta}}
\newcommand{\hbeta}{\hat{\bm{\beta}}}
\newcommand{\htau}{\hat{\tau}}
\newcommand{\brho}{\bm{\varrho}}
\newcommand{\hrho}{\hat{\bm{\varrho}}}
\newcommand{\tlambda}{\tilde{\lambda}}
\newcommand{\bgamma}{\bm{\gamma}}
\newcommand{\bgammaa}{\bm{\gamma}^{\ma}}
\newcommand{\bgammaah}{\bm{\gamma}^{\mah}}
\newcommand{\bgammak}[1]{\bm{\gamma}^{(#1)}}
\newcommand{\hgammak}[1]{\hat{\gamma}^{(#1)}}
\newcommand{\hbgammak}[1]{\hat{\bm{\gamma}}^{(#1)}}
\newcommand{\hgammaa}{\hat{\bm{\gamma}}^{\ma}}
\newcommand{\hgammaah}{\hat{\bm{\gamma}}^{\mah}}
\newcommand{\hgamma}{\hat{\bm{\gamma}}}
\newcommand{\bbetak}[1]{\bm{\beta}^{(#1)}}
\newcommand{\hbetak}[1]{\hat{\bm{\beta}}^{(#1)}}
\newcommand{\bx}{\bm{x}}
\newcommand{\bxk}[1]{\bm{x}^{(#1)}}
\newcommand{\yk}[1]{y^{(#1)}}
\newcommand{\byk}[1]{\bm{y}^{(#1)}}
\newcommand{\bXk}[1]{\bm{X}^{(#1)}}
\newcommand{\bu}{\bm{u}}
\newcommand{\hua}{\hat{\bm{u}}^{\mathcal{A}}}
\newcommand{\huah}{\hat{\bm{u}}^{\mathcal{A}_h}}
\newcommand{\bv}{\bm{v}}
\newcommand{\hvah}{\hat{\bm{v}}^{\mathcal{A}_h}}
\newcommand{\bmetak}[1]{\bm{\eta}^{(#1)}}
\newcommand{\bw}{\bm{w}}
\newcommand{\bwk}[1]{\bm{w}^{(#1)}}
\newcommand{\bWk}[1]{\bm{W}^{(#1)}}
\newcommand{\bwa}{\bm{w}^{\mathcal{A}}}
\newcommand{\bwah}{\bm{w}^{\mathcal{A}_h}}
\newcommand{\hwa}{\hat{\bm{w}}^{\mathcal{A}}}
\newcommand{\hwah}{\hat{\bm{w}}^{\mathcal{A}_h}}
\newcommand{\ma}{\mathcal{A}}
\newcommand{\mah}{\mathcal{A}_h}
\newcommand{\mach}{\mathcal{A}^c_h}
\newcommand{\mac}{\mathcal{A}^c}
\newcommand{\md}{\mathcal{D}}
\def\boxit#1{\vbox{\hrule\hbox{\vrule\kern6pt\vbox{\kern6pt#1\kern6pt}\kern6pt\vrule}\hrule}}
\newcommand{\na}{n_{\mathcal{A}}}
\newcommand{\nah}{n_{\mathcal{A}_h}}
\newcommand{\transet}{\{0\}\cup \mah}
\newcommand{\bpsi}{\bm{\psi}}
\newcommand{\Kah}{K_{\mah}}
\newcommand{\by}{\bm{y}}
\newcommand{\bX}{\bm{X}}
\newcommand{\<}{\langle}
\renewcommand{\>}{\rangle}
\newcommand{\bTheta}{\bm{\Theta}}
\newcommand{\hTheta}{\widehat{\bm{\Theta}}}
\newcommand{\bSigma}{\bm{\Sigma}}
\newcommand{\hSigma}{\widehat{\bm{\Sigma}}}
\newcommand{\bSigmak}[1]{\bm{\Sigma}^{(#1)}}
\newcommand{\hSigmak}[1]{\widehat{\bm{\Sigma}}^{(#1)}}
\newcommand\smallo{
  \mathchoice
    {{\scriptstyle\mathcal{O}}}
    {{\scriptstyle\mathcal{O}}}
    {{\scriptscriptstyle\mathcal{O}}}
    {\scalebox{.7}{$\scriptscriptstyle\mathcal{O}$}}
}
\newtheorem{theorem}{Theorem}
\newtheorem{lemma}{Lemma} 
\newtheorem{proposition}{Proposition} 
\newtheorem{remark}{Remark}
\newtheorem{corollary}{Corollary}
\newtheorem{assumption}{Assumption}
\begin{document}

\def\spacingset#1{\renewcommand{\baselinestretch}%
{#1}\small\normalsize} \spacingset{1}

 \title{\bf Transfer Learning under High-dimensional Generalized Linear Models}
  \date{}
\if1\blind
{
 
  \author{Ye Tian\\
    Department of Statistics\\ Columbia University\\
    and \\
    Yang Feng \\
    Department of Biostatistics, School of Global Public Health\\New York University}
   
  \maketitle
} \fi

\if0\blind
{
  \bigskip
  \bigskip
  \bigskip

  \maketitle
} \fi

\bigskip
\begin{abstract}
	In this work, we study the transfer learning problem under high-dimensional generalized linear models (GLMs), which aim to improve the fit on \textit{target} data by borrowing information from useful \textit{source} data. Given which sources to transfer, we propose a transfer learning algorithm on GLM, and derive its $\ell_1/\ell_2$-estimation error bounds as well as a bound for a prediction error measure. The theoretical analysis shows that when the target and source are sufficiently close to each other, these bounds could be improved over those of the classical penalized estimator using only target data under mild conditions. When we don't know which sources to transfer, an \textit{algorithm-free} transferable source detection approach is introduced to detect informative sources. The detection consistency is proved under the high-dimensional GLM transfer learning setting. We also propose an algorithm to construct confidence intervals of each coefficient component, and the corresponding theories are provided. Extensive simulations and a real-data experiment verify the effectiveness of our algorithms. We implement the proposed GLM transfer learning algorithms in a new R package \texttt{glmtrans}, which is available on CRAN.
\end{abstract}

\noindent
{\it Keywords: Generalized linear models, transfer learning, high-dimensional inference, Lasso, sparsity, negative transfer.} 
\vfill

\newpage
\spacingset{1.5} 

\addtocontents{toc}{\protect\setcounter{tocdepth}{0}}
\section{Introduction}\label{sec:intro}

Nowadays, a great deal of machine learning algorithms has been successfully applied in our daily life. Many of these algorithms require sufficient training data to perform well, which sometimes can be limited. For example, from an online merchant's view, it could be difficult to collect enough personal purchase data for predicting the customers' purchase behavior and recommending corresponding items. However, in many cases, some related datasets may be available in addition to the limited data for the original task. In the merchant-customer example, we may also have the customers' clicking data in hand, which is not exactly the same as but shares similarities with the purchase data. How to use these additional data to help with the original target task motivates a well-known concept in computer science: \textit{transfer learning} \citep{torrey2010transfer, weiss2016survey}. As its name indicates, in a transfer learning problem, we aim to transfer some useful information from similar tasks (\textit{sources}) to the original task (\textit{target}), in order to boost the performance on the target. To date, transfer learning has been widely applied in a number of machine learning applications, including the customer review classification \citep{pan2009survey}, medical diagnosis \citep{hajiramezanali2018bayesian}, and ride dispatching in ride-sharing platforms \citep{wang2018deep}, etc. Compared with the rapidly growing applications, there has been little discussion about the theoretical guarantee of transfer learning. Besides, although transfer learning has been prevailing in computer science community for decades, far less attention has been paid to it among statisticians. More specifically, transfer learning can be promising in the \textit{high-dimensional} data analysis, where the sample size is much less than the dimension with some sparsity structure in the data \citep{tibshirani1996regression}. The impact of transfer learning in high-dimensional generalized linear models (GLMs) with sparsity structure is not quite clear up to now. In this paper, we are trying to fill the gap by developing  transfer learning tools in high-dimensional GLM inference problem, and providing corresponding theoretical guarantees.

Prior to our paper, there are a few pioneering works exploring transfer learning under the high-dimensional setting. \cite{bastani2021predicting} studied the single-source case when the target data comes from a high-dimensional GLM with limited sample size while the source data size is sufficiently large than the dimension. A two-step transfer learning algorithm was developed, and the estimation error bound was derived when the contrast between target and source coefficients is $\ell_0$-sparse. \cite{li2021transfer} further explored the multi-source high-dimensional linear regression problem where both target and source samples are high-dimensional. The $\ell_2$-estimation error bound under $\ell_q$-regularization ($q \in [0, 1]$) was derived and proved to be minimax optimal under some conditions. In \cite{li2020transfer2}, the analysis was extended to the Gaussian graphical models  with false discovery rate control. Other related research on transfer learning with theoretical guarantee includes the non-parametric classification model \citep{cai2021transfer, reeve2021adaptive} and the analysis under general functional classes via  transfer exponents \citep{hanneke2020no, hanneke2020value}, etc. In addition, during the past few years, there have been some related works studying parameter sharing under the regression setting. For instance, \cite{chen2015data} and \cite{zheng2019data} developed the so-called ``data enriched model" for linear and logistic regression under a single-source setting, where the properties of the oracle tuned estimator with a quadratic penalty were studied. \cite{gross2016data} and \cite{ollier2017regression} explored the so-called ``data shared Lasso" under the multi-task learning setting, where $\ell_1$ penalties of all contrasts are considered.

In this work, we contribute to transfer learning under a high-dimensional context from three perspectives. First, we extend the results of \cite{bastani2021predicting} and \cite{li2021transfer}, by proposing multi-source transfer learning algorithms on generalized linear models (GLMs) and we assume both target and source data to be high-dimensional. We assume the contrast between target and source coefficients to be $\ell_1$-sparse, which differs from the $\ell_0$-sparsity considered in \cite{bastani2021predicting}. The theoretical analysis shows that when the target and source are sufficiently close to each other, the estimation error bound of target coefficients could be improved over that of the classical penalized estimator using only target data under mild conditions. Moreover, the error rate is shown to be minimax optimal under certain conditions. To the best of our knowledge, this is the first study of the multi-source transfer learning framework under the high-dimensional GLM setting. Second, as we mentioned, transferring sources that are close to the target can bring benefits. However, some sources might be far away from the target, and transferring them can be harmful. This phenomenon is often called \textit{negative transfer} in literature \citep{torrey2010transfer}. We will show the impact of negative transfer in simulation studies in Section \ref{subsec:simulations}. To avoid this issue, we develop an \textit{algorithm-free} transferable source detection algorithm, which can help identify informative sources. And with certain conditions satisfied, the algorithm is shown to be able to distinguish useful sources from useless ones. Third, all aforementioned works of transfer learning on high-dimensional regression only focus on the point estimate of the coefficient, which is not sufficient for statistical inference. How transfer learning can benefit the confidence interval construction remains unclear. We propose an algorithm on the basis of our two-step transfer learning procedure and nodewise regression \citep{van2014asymptotically}, to construct the confidence interval for each coefficient component. The corresponding asymptotic theories are established.

The rest of this paper is organized as follows. Section \ref{sec:method} first introduces GLM basics and transfer learning settings under high-dimensional GLM, then presents a general algorithm (where we know which sources are useful) and the transferable source detection algorithm (where useful sources are automatically detected). At the end of Section \ref{sec:method}, we develop an algorithm to construct confidence intervals. Section \ref{sec:theory} provides the theoretical analysis on the algorithms, including $\ell_1$ and $\ell_2$-estimation error bounds of the general algorithm, detection consistency property of the transferable source detection algorithm, and asymptotic theories for the confidence interval construction. We conduct extensive simulations and a real-data study in Section \ref{sec:numerical}, and the results demonstrate the effectiveness of our GLM transfer learning algorithms. In Section \ref{sec:discussions}, we review our contributions and shed light on some interesting future research directions. Additional simulation results and theoretical analysis, as well as all the proofs, are relegated to supplementary materials.

\section{Methodology}\label{sec:method}
We first introduce some notations to be used throughout the paper. We use bold capitalized letters (e.g. $\bX$, $\bm{A}$) to denote matrices, and use bold little letters (e.g. $\bx$, $\by$) to denote vectors. For a $p$-dimensional vector $\bx = (x_1,\ldots, x_p)^T$, we denote its $\ell_q$-norm as $\|\bx\| = (\sum_{i=1}^p \norm{x_i}^q)^{1/q}$ ($q \in (0, 2]$), and $\ell_0$-``norm" $\zeronorm{\bx} = \#\{j: x_j \neq 0\}$. For a matrix $\bm{A}_{p \times q} = [a_{ij}]_{p \times q}$, its 1-norm, 2-norm, $\infty$-norm and max-norm are defined as $\onenorm{\bm{A}} = \sup_{j}\sum_{i=1}^p |a_{ij}|$, $\twonorm{\bm{A}} = \max_{\bx: \twonorm{\bx}=1}\twonorm{\bm{A}\bx}$, $\infnorm{\bm{A}} = \sup_{i}\sum_{j=1}^q |a_{ij}|$ and $\maxnorm{\bm{A}} = \sup_{i,j}|a_{ij}|$, respectively. For two non-zero real sequences $\{a_n\}_{n=1}^{\infty}$ and $\{b_n\}_{n=1}^{\infty}$, we use $a_n \ll b_n$, $b_n \gg a_n$ or $a_n = \smallo(b_n)$ to represent $|a_n/b_n| \rightarrow 0$ as $n \rightarrow \infty$. And $a_n \lesssim b_n$ or $a_n = \mathcal{O}(b_n)$ means $\sup_n|a_n/b_n| < \infty$. Expression $a_n \asymp b_n$ means that $a_n/b_n$ converges to some positive constant. For two random variable sequences $\{x_n\}_{n=1}^{\infty}$ and $\{y_n\}_{n=1}^{\infty}$, notation $x_n \lesssim_p y_n$ or $x_n = \mathcal{O}_p(y_n)$ means that for any $\epsilon > 0$, there exists a positive constant $M$ such that $\sup_n\tp(|x_n/y_n|> M) \leq \epsilon$. And for two real numbers $a$ and $b$, we use $a \vee b$ and $a\wedge b$ to represent $\max(a, b)$ and $\min(a, b)$, respectively. Without specific notes, the expectation $\te$, variance $\var$, and covariance $\cov$ are calculated based on all randomness.

\subsection{Generalized linear models (GLMs)}\label{subsec:glm}
Given the predictors $\bx \in \mathbb{R}^p$, if the response $y$ follows the generalized linear models (GLMs), then its conditional distribution takes the form
\begin{equation}\label{eq: glm def}
	y|\bx \sim \tp(y|\bx) = \rho(y)\exp\{y\bx^T\bw - \psi(\bx^T\bw)\},
\end{equation}
where $\bw \in \mathbb{R}^p$ is the coefficient, $\rho$ and $\psi$ are some known univariate functions. $\psi'(\bx^T\bw) = \te(y|\bx)$ is called the \textit{inverse link function} \citep{mccullagh1989generalized}. Another important property is that $\var(y|\bx)=\psi''(\bx^T\bw)$, which follows from the fact that the distribution belongs to the exponential family. It is $\psi$ that characterizes different GLMs. For example, in linear model with Gaussian noise, we have a continuous response $y$ and $\psi(u)=\frac{1}{2}u^2$; in the logistic regression model, $y$ is binary and $\psi(u)=\log(1+e^u)$; and in Poisson regression model, $y$ is a nonnegative integer and $\psi(u)=e^u$. For most GLMs, $\psi$ is strictly convex and infinitely differentiable.

\subsection{Target data, source data, and transferring level}\label{subsec:target source trasnfer}
In this paper, we consider the following multi-source transfer learning problem. Suppose we have the \textit{target} data set $(\bX^{(0)}, \by^{(0)})$ and $K$ \textit{source} data sets with the $k$-th source denoted as $(\bX^{(k)}, \by^{(k)})$, where $\bX^{(k)} \in \mathbb{R}^{n_k \times p}$, $\by^{(k)} \in \mathbb{R}^{n_k}$ for $k = 0, \ldots, K$. The $i$-th row of $\bX^{(k)}$ and the $i$-th element of $\by^{(k)}$ are denoted as $\bx_{i}^{(k)}$ and $y_{i}^{(k)}$, respectively. The goal is to transfer useful information from source data to obtain a better model for the target data. We assume the responses in the target and source data all follow the generalized linear model:
\begin{equation}\label{eq: glm}
	y^{(k)}|\bx \sim \tp(y|\bx) = \rho(y)\exp\{y\bx^T\bwk{k} - \psi(\bx^T\bwk{k})\},
\end{equation}
for $k = 0, \ldots, K$, with possibly different coefficient $\bwk{k} \in \mathbb{R}^p$, the predictor $\bx \in \mathbb{R}^p$, and some known univariate functions $\rho$ and $\psi$. Denote the target parameter as $\bbeta = \bwk{0}$. Suppose the target model is $\ell_0$-sparse, which satisfies $\zeronorm{\bbeta} =s \ll p$. This means that only $s$ of the $p$ variables contribute to the target response. Intuitively, if $\bwk{k}$ is close to $\bbeta$, the $k$-th source could be useful for transfer learning.

Define the $k$-th contrast $\bdeltak{k} = \bbeta - \bwk{k}$ and we say $\onenorm{\bdeltak{k}}$ is the \textit{transferring level} of source $k$. And we define the \textit{level-$h$ transferring set} $\mah = \{k: \onenorm{\bdeltak{k}} \leq h\}$ as the set of sources which has transferring level lower than $h$. Note that in general, $h$ can be any positive values and different $h$ values define different $\mah$ set. However, in our regime of interest, $h$ shall be reasonably small to guarantee that transferring sources in $\mah$ is beneficial. Denote $\nah = \sum_{k \in \mah}n_k$, $\alpha_k = \frac{n_k}{\nah + n_0}$ for $k \in \{0\}\cup \mah$ and $\Kah = |\mah|$.

Note that in \eqref{eq: glm}, we assume GLMs of the target and all sources share the same inverse link function $\psi$. After a careful examination of our proofs for theoretical properties in Section \ref{sec:theory}, we find that these theoretical results still hold even when the target and each source have their own function $\psi$, as long as these GLMs satisfy Assumptions \ref{asmp: convexity} and \ref{asmp: second derivative} (to be presented in Section \ref{subsec:oracle alg}). It means that transferring information across different GLM families is possible. For simplicity, in the following discussion, we assume all these GLMs belong to the same family and hence have the same function $\psi$.

\subsection{Two-step GLM transfer learning}\label{subsec:glm tranfer alg}
We first introduce a general transfer learning algorithm on GLMs, which can be applied to transfer all sources in a given index set $\ma$. The algorithm is motivated by the ideas in \cite{bastani2021predicting} and \cite{li2021transfer}, which we call a \textit{two-step transfer learning algorithm}. The main strategy is to first transfer the information from those sources by pooling all the data to obtain a rough estimator, then correct the bias in the second step using the target data. More specifically, we fit a GLM with $\ell_1$-penalty by pooled samples first, then fit the contrast in the second step using only the target by another $\ell_1$-regularization. The detailed algorithm ($\ma$-Trans-GLM) is presented in Algorithm \ref{algo: merging}.  The transferring step could be understood as to solve the following equation w.r.t. $\bw \in \mathbb{R}^p$:
\begin{equation}\label{eq: sample version equation wa}
	\sum\limits_{k\in\{0\}\cup \ma}\left[(\bXk{k})^T\byk{k} - \sum\limits_{i=1}^{n_k}\psi'((\bw)^T\bxk{k}_i)\bxk{k}_i\right] = \bm{0}_{p},
\end{equation}
which converges to the solution of its population version under certain conditions
\begin{equation}\label{eq: wa in population merging}
	\sum\limits_{k\in\{0\}\cup \ma}\alpha_k\te\left\{[\psi'((\bwa)^T\bxk{k})-\psi'((\bwk{k})^T\bxk{k})]\bxk{k}\right\} = \bm{0}_{p},
\end{equation}
where $\alpha_k = \frac{n_k}{\na + n_0}$. Notice that in the linear case, $\bwa$ can be explicitly expressed as a linear transformation of the true parameter $\bwk{k}$, i.e. $
	\bwa = \bm{\Sigma}^{-1}\sum_{k \in \{0\}\cup \ma} \alpha_k \bm{\Sigma}^{(k)}\bwk{k}$,
where $\bm{\Sigma}^{(k)} = \te[\bxk{k}(\bxk{k})^T]$ and $\bm{\Sigma} = \sum_{k \in \{0\}\cup \ma}\alpha_k\te[\bxk{k}(\bxk{k})^T]$ \citep{li2021transfer}. 

To help readers better understand the algorithm, we draw a schematic in Section \ref{subsec: schematic} of supplements. We refer interested readers who want to get more intuitions to that.

\begin{algorithm}[!h]
\caption{$\ma$-Trans-GLM}
\label{algo: merging}
\KwIn{target data $(\bX^{(0)}, \by^{(0)})$, source data $\{(\bXk{k}, \byk{k})\}_{k=1}^K$, penalty parameters $\lambda_{\bm{w}}$ and $\lambda_{\bm{\delta}}$, transferring set $\ma$}
\KwOut{the estimated coefficient vector $\hat{\bbeta}$}
\underline{\textbf{Transferring step}}: Compute
$\hwa \leftarrow \argmin\limits_{\bm{w}}\left\{\frac{1}{\na + n_0}\sum\limits_{k\in\{0\}\cup \ma}\left[-(\byk{k})^T\bXk{k}\bm{w} + \sum\limits_{i=1}^{n_k}\psi(\bm{w}^T\bxk{k}_i)\right] + \lambda_{\bm{w}}\onenorm{\bm{w}}\right\}$ \\
\underline{\textbf{Debiasing step}}: Compute
$\hdeltaa \leftarrow \argmin\limits_{\bm{\delta}}\left\{-\frac{1}{n_0}(\by^{(0)})^T\bX^{(0)}(\hwa+\bm{\delta}) + \frac{1}{n_0}\sum\limits_{i=1}^{n_0}\psi((\hwa+\bm{\delta})^T\bx^{(0)}_i) + \lambda_{\bm{\delta}}\onenorm{\bm{\delta}}\right\}$\\
Let $\hbeta \leftarrow \hwa + \hdeltaa$\\
Output $\hbeta$ \\
\end{algorithm}

\subsection{Transferable source detection}\label{subsec:detection}
As we described, Algorithm \ref{algo: merging} can be applied only if we are certain about which sources to transfer, which in practice may not be known as a priori. Transferring certain sources may not improve the performance of the fitted model based on only target, and can even lead to worse performance. In transfer learning, we say \textit{negative transfer} happens when the source data leads to an inferior performance on the target task \citep{pan2009survey, torrey2010transfer, weiss2016survey}. How to avoid negative transfer has become an increasingly popular research topic. 

Here we propose a simple, \textit{algorithm-free}, and \textit{data-driven} method to determine an informative transferring set $\widehat{\ma}$. We call this approach a transferable source \textit{detection} algorithm and refer to it as Trans-GLM. 

We sketch this detection algorithm as follows. First, divide the target data into three folds, that is, $\{(\bX^{(0)[r]},\by^{(0)[r]})\}_{r=1}^3$. Note that we choose three folds only for convenience. We also explored other fold number choices in the simulation. See Section \ref{subsubsec:source detection} in the supplementary materials. Second, run the transferring step on each source data and every two folds of target data. Then, for a given loss function, we calculate its value on the left-out fold of target data and compute the average cross-validation loss $\hat{L}_0^{(k)}$ for each source. As a benchmark, we also fit Lasso on every choice of two folds of target data and calculate the loss on the remaining fold. The average cross-validation loss $\hat{L}_0^{(0)}$ is viewed as the loss of target. Finally, the difference between $\hat{L}_0^{(k)}$ and $\hat{L}_0^{(0)}$ is calculated and compared with some threshold, and sources with a difference less than the threshold will be recruited into $\widehat{\ma}$. Under the GLM setting, a natural loss function is the negative log-likelihood. For convenience, suppose $n_0$ is divisible by 3. According to \eqref{eq: glm}, for any coefficient estimate $\bw$, the average of negative log-likelihood on the $r$-th fold of target data $(\bX^{(0)[r]}, \by^{(0)[r]})$ is 
\begin{equation}\label{eq: l0 hat def}
	\hat{L}_0^{[r]}(\bw) =-\frac{1}{n_0/3}\sum\limits_{i=1}^{n_0/3}\log \rho(y_i^{(0)[r]})  -\frac{1}{n_0/3}(\by^{(0)[r]})^T\bX^{(0)}\bm{w} + \frac{1}{n_0/3}\sum\limits_{i=1}^{n_0/3}\psi(\bm{w}^T\bx^{(0)[r]}_i).
\end{equation}
The detailed algorithm is presented as Algorithm \ref{algo: a unknown}.

\begin{algorithm}[!h]
\caption{Trans-GLM}
\label{algo: a unknown}
\KwIn{target data $(\bX^{(0)}, \by^{(0)})$, all source data $\{(\bX^{(k)}, \by^{(k)})\}_{k=1}^K$, a constant $C_0>0$, penalty parameters $\{\{\lambda^{(k)[r]}\}_{k=0}^K\}_{r=1}^3$}
\KwOut{the estimated coefficient vector $\hat{\bbeta}$, and the determined transferring set $\widehat{\mathcal{A}}$}
\underline{\textbf{Transferable source detection}}:
Randomly divide $(\bX^{(0)}, \by^{(0)})$ into three sets of equal size as $\{(\bX^{(0)[i]}, \by^{(0)[i]})\}_{i=1}^3$ \\
\For{$r = 1$ \KwTo $3$}{
	$\hbeta^{(0)[r]} \leftarrow$ fit the Lasso on $\{(\bX^{(0)[i]}, \by^{(0)[i]})\}_{i=1}^3 \backslash (\bX^{(0)[r]}, \by^{(0)[r]})$ with penalty parameter $\lambda^{(0)[r]}$\\
	$\hbeta^{(k)[r]} \leftarrow$ run step 1 in Algorithm \ref{algo: merging} with $(\{(\bX^{(0)[i]}, \by^{(0)[i]})\}_{i=1}^3 \allowbreak \backslash (\bX^{(0)[r]}, \by^{(0)[r]}))\cup (\bX^{(k)}, \by^{(k)})$ and penalty parameter $\lambda^{(k)[r]}$ for all $k \neq 0$\\
	Calculate the loss function $\hat{L}_0^{[r]}(\hbeta^{(k)[r]})$ on $(\bX^{(0)[r]}, \by^{(0)[r]})$ for $k = 1, \ldots, K$ \\
}
$\hat{L}_0^{(k)} \leftarrow \sum_{r=1}^3\hat{L}_0^{[r]}(\hbeta^{(k)[r]})/3$, $\hat{L}_0^{(0)} \leftarrow \sum_{r=1}^3 \hat{L}_0^{[r]}(\hbeta^{(0)[r]})/3$, $\hat{\sigma} = \sqrt{\sum_{r=1}^3 (\hat{L}_0^{[r]}(\hbeta^{(0)[r]})-\hat{L}_0^{(0)})^2/2}$\\
$\widehat{\mathcal{A}} \leftarrow \{k \neq 0: \hat{L}_0^{(k)} - \hat{L}_0^{(0)} \leq C_0(\hat{\sigma} \vee 0.01)\}$ \\
\underline{\textbf{$\widehat{\mathcal{A}}$-Trans-GLM}}: $\hbeta \leftarrow$ run Algorithm \ref{algo: merging} using  $\{(\bX^{(k)}, \by^{(k)})\}_{k\in \{0\} \cup \widehat{\mathcal{A}}}$\\
Output $\hbeta$ and $\widehat{\mathcal{A}}$ \\
\end{algorithm}

It's important to point out that Algorithm \ref{algo: a unknown} \textbf{does not} require the input of $h$. We will show that $\widehat{\ma} = \mah$ for some specific $h$ if certain conditions hold, in Section \ref{subsec:detection theory}. Furthermore, under these conditions, transferring with $\widehat{\ma}$ will lead to a faster convergence rate compared to Lasso fitted on only the target data, when target sample size $n_0$ falls into some regime. This is the reason that this algorithm is called the \textit{transferable} source detection algorithm.

\subsection{Confidence intervals}\label{subsec:inf}
In previous sections, we've discussed how to obtain a point estimator of the target coefficient vector $\bbeta$ from the two-step transfer learning approach. In this section, we would like to construct the asymptotic confidence interval (CI) for each component of $\bbeta$ based on that point estimate.

As described in the introduction, there have been quite a few works on high-dimensional GLM inference in the literature. In the following, we will propose a transfer learning procedure to construct CI based on the desparsified Lasso \citep{van2014asymptotically}. Recall that desparsified Lasso contains two main steps. The first step is to learn the inverse Fisher information matrix of GLM by nodewise regression \citep{meinshausen2006high}. The second step is to ``debias" the initial point estimator and then construct the asymptotic CI. Here, the estimator $\hbeta$ from Algorithm \ref{algo: merging} can be used as an initial point estimator. Intuitively, if the predictors from target and source data are similar and satisfy some sparsity conditions, it might be possible to use Algorithm \ref{algo: merging} for learning the inverse Fisher information matrix of target data, which effectively combines the information from target and source data.

Before  formalizing the procedure to construct the CI, let's first define several additional notations. For any $\bw \in \mathbb{R}^{n}$, denote $\bWk{k}_{\bw} = \diag\left(\sqrt{\psi''((\bxk{k}_1)^T\bw)}, \ldots, \sqrt{\psi''((\bxk{k}_{n_k})^T\bw)}\right)$, $\bXk{k}_{\bw} = \bWk{k}_{\bw}\bXk{k}$, $\bSigmak{k}_{\bw} = \te[\bxk{k}(\bxk{k})^T\psi''((\bxk{k})^T\bw)]$ and $\widehat{\bm{\Sigma}}^{(k)}_{\bw} = n_k^{-1}(\bXk{k}_{\bw})^T\bXk{k}_{\bw}$. $\bXk{k}_{\bw, j}$ represents the $j$-th column of $\bXk{k}_{\bw}$ and $\bXk{k}_{\bw, -j}$ represents the matrix $\bXk{k}_{\bw}$ without the $j$-th column. $\widehat{\bm{\Sigma}}^{(k)}_{\bw, j, -j}$ represents the $j$-th row of $\widehat{\bm{\Sigma}}^{(k)}_{\bw}$ without the diagonal $(j,j)$ element, and $\widehat{\bm{\Sigma}}^{(k)}_{\bw, j, j}$ is the diagonal $(j,j)$ element of $\widehat{\bm{\Sigma}}^{(k)}_{\bw}$.

Next, we explain the details of the CI construction procedure in Algorithm \ref{algo: inf}. In step 1, we obtain a point estimator $\hbeta$ from $\ma$-Trans-GLM (Algorithm \ref{algo: merging}), given a specific transferring set $\ma$. Then in steps 2-4, we estimate the target inverse Fisher information matrix $(\bSigmak{0}_{\bbeta})^{-1}$ as 
\begin{equation}\label{eq: theta formula}
	\hTheta = \text{diag}(\htau_1^{-2}, \ldots, \htau_p^{-2})
	\begin{pmatrix}
		1 &-\hgammak{0}_{1,2} &\hdots  &-\hgammak{0}_{1,p} \\
		-\hgammak{0}_{2,1} &1 &\hdots  &-\hgammak{0}_{2,p} \\
		\vdots &\vdots &\ddots &\vdots \\
		-\hgammak{0}_{p,1} &-\hgammak{0}_{p,2} &\hdots &1
	\end{pmatrix}.
\end{equation}
Finally in step 5, we ``debias" $\hbeta$ using the target data to get a new point estimator $\hat{\bm{b}}$ which is asymptotically unbiased as
\begin{equation}\label{eq: b hat}
	\hat{\bm{b}} = \hbeta +\frac{1}{n_0}\hTheta(\bXk{0})^T[\bm{Y}^{(0)} - \bpsi'(\bXk{0}\hbeta)],
\end{equation}
where $\bpsi'(\bXk{0}\hbeta) \coloneqq (\psi'((\bxk{0}_1)^T\hbeta), \ldots, \psi'((\bxk{0}_{n_0})^T\hbeta))^T \in \mathbb{R}^{n_0}$. 

It's necessary to emphasize that the confidence level $(1-\alpha)$ is for every single CI rather than for all $p$ CIs simultaneously. As discussed in Sections 2.2 and 2.3 of \cite{van2014asymptotically}, it is possible to get simultaneous CIs for different coefficient components and do multiple hypothesis tests when the design is fixed. In other cases, e.g., random design in different replications (which we focus on in this paper), multiple hypothesis testing might be more challenging.

%

\begin{algorithm}[!h]
\caption{Confidence interval construction via nodewise regression}
\label{algo: inf}
\KwIn{target data $(\bX^{(0)}, \by^{(0)})$, source data $\{(\bXk{k}, \byk{k})\}_{k=1}^K$, penalty parameters $\{\lambda_j\}_{j=1}^p$ and $\{\tlambda_j\}_{j=1}^p$, transferring set $\ma$, confidence level $(1-\alpha)$}
\KwOut{Level-$(1-\alpha)$ confidence interval $\mathcal{I}_j$ for $\beta_j$ with $j = 1,\ldots, p$}
Compute $\hbeta$ via Algorithm \ref{algo: merging} \\
Compute $\hgammaa_j \leftarrow \argmin\limits_{\bm{\gamma}}\left\{-\frac{1}{2(\na+n_0)}\sum_{k \in \{0\}\cup \ma}\twonorm{\bXk{k}_{\hbeta, j} - \bXk{k}_{\hbeta, -j}\bgamma}^2 + \lambda_j\onenorm{\bgamma}\right\}$ for $j = 1, \ldots, p$\\
Compute
$\hrho_j \leftarrow \argmin\limits_{\bm{\varrho}}\left\{-\frac{1}{2n_0}\twonorm{\bXk{0}_{\hbeta, j} - \bXk{0}_{\hbeta, -j}(\hgammaa_j + \brho)} + \tlambda_j\onenorm{\bm{\varrho}}\right\}$\\
Compute $\hbgammak{0}_j \leftarrow \hgammaa_j + \hrho_j$, $\hSigma_{\hbeta} \leftarrow \sum_{k \in \{0\}\cup \ma}\frac{n_k}{\na+n_0}\widehat{\bm{\Sigma}}^{(k)}_{\hbeta}$, $\htau_j^2 = \hSigma_{\hbeta, j, j} - \hSigma_{\hbeta, j, -j}\hgamma_j$ and calculate $\hTheta$ via \eqref{eq: theta formula}, where $\hbgammak{0}_j = (\hgammak{0}_{j,1}, \ldots, \hgammak{0}_{j,j-1}, \hgammak{0}_{j,j+1}, \ldots, \hgammak{0}_{j,p})^T$.\\
Compute $\mathcal{I}_j \leftarrow [\hb_j-\hTheta^T_j\hSigma_{\hbeta}\hTheta_jq_{\alpha/2}/\sqrt{n_0}, \hb_j+\hTheta^T_j\hSigma_{\hbeta}\hTheta_jq_{\alpha/2}/\sqrt{n_0}]$ for $j = 1, \ldots, p$, where $\hb_j$ is the $j$-th component of $\hat{\bm{b}}$ in \eqref{eq: b hat}, and $q_{\alpha/2}$ is the $\alpha/2$-left tail quantile of $\mathcal{N}(0, 1)$\\
Output $\{\mathcal{I}_j\}_{j=1}^p$
\end{algorithm}

\section{Theory}\label{sec:theory}
In this section, we will establish theoretical guarantees on the three proposed algorithms. Section \ref{subsec:oracle alg} provides a detailed analysis of Algorithm \ref{algo: merging} with transferring set $\mah$, which we denote as $\mah$-Trans-GLM. Section \ref{subsec:detection theory} introduces certain conditions, under which we show that the transferring set $\widehat{\ma}$ detected by Algorithm \ref{algo: a unknown} (Trans-GLM) is equal to $\mah$ for some $h$ with high probability. Section \ref{subsec:inf theory} presents the analysis of Algorithm \ref{algo: inf} with transferring set $\mah$, where we prove a central limit theorem. For the proofs and some additional theoretical results, refer to supplementary materials.

\subsection{Theory on $\mah$-Trans-GLM}\label{subsec:oracle alg}
We first impose some common assumptions about GLM. 

\begin{assumption}\label{asmp: convexity}
	$\psi$ is infinitely differentiable and strictly convex. We call a second-order differentiable function $\psi$ strictly convex if $\psi''(x) > 0$.
\end{assumption}

\begin{assumption}\label{asmp: x}
	For any $\bm{a} \in \mathbb{R}^p$, $\bm{a}^T\bxk{k}_i$'s are i.i.d. $\kappa_u\twonorm{\bm{a}}^2$-subGaussian variables with zero mean for all $k = 0, \ldots, K$, where $\kappa_u$ is a positive constant. Denote the covariance matrix of $\bxk{k}$ as $\bm{\Sigma}^{(k)}$, with $\inf_{k}\lambda_{\min}(\bm{\Sigma}^{(k)}) \geq \kappa_l > 0$, where $\kappa_l$ is a positive constant.
\end{assumption}

\begin{assumption}\label{asmp: second derivative}
	At least one of the following assumptions hold: ($M_{\psi}$, $U$ and $\bar{U}$ are some positive constants)
	\begin{enumerate}[(i)]
		\item $\infnorm{\psi''} \leq M_{\psi} < \infty$ a.s.;
		\item $\sup\limits_k \infnorm{\bxk{k}} \leq U < \infty$ a.s., $\sup\limits_{k}\sup\limits_{|z|\leq \bar{U}} \psi''((\bxk{k})^T\bwk{k} + z) \leq M_{\psi}< \infty$ a.s.
	\end{enumerate}
\end{assumption}

Assumption \ref{asmp: convexity} imposes the \textit{strict convexity} and differentiability of $\psi$, which is satisfied by many popular distribution families, such as Gaussian, binomial, and Poisson distributions. Note that we do not require $\psi$ to be \textit{strongly convex} (that is, $\exists C > 0$, such that $\psi''(x) > C$), which relaxes Assumption 4 in \cite{bastani2021predicting}. It is easy to verify that $\psi$ in logistic regression is in general not strongly convex with unbounded predictors. Assumption \ref{asmp: x} requires the predictors in each source to be subGaussian with a well-behaved correlation structure. Assumption \ref{asmp: second derivative} is motivated by Assumption (GLM 2) in the full-length version of \cite{negahban2009unified}, which is imposed to restrict $\psi''$ in a bounded region in some sense. Note that linear regression and logistic regression satisfy condition (\rom{1}), while Poisson regression with coordinate-wise bounded predictors and $\ell_1$-bounded coefficients satisfies condition (\rom{2}). 

Besides these common conditions on GLM, as discussed in Section \ref{subsec:glm tranfer alg}, to guarantee the success of $\mah$-Trans-GLM, we have to make sure that the estimator from the transferring step is close enough to $\bbeta$. Therefore we introduce the following assumption, which guarantees $\bwah$ defined in \eqref{eq: wa in population merging} with $\ma = \mah$ is close to $\bbeta$. 

\begin{assumption}\label{asmp: merging}
Denote $\widetilde{\bm{\Sigma}}_h = \sum_{k \in \transet} \alpha_k \te\Big[\int_0^1 \psi''((\bxk{k})^T\bbeta+t(\bxk{k})^T(\bwah-\bbeta))dt \cdot \bxk{k}(\bxk{k})^T \Big]$ and $\widetilde{\bm{\Sigma}}_h^{(k)} = \te\Big[\int_0^1 \psi''((\bxk{k})^T\bbeta+t(\bxk{k})^T(\bwk{k}-\bbeta))dt \cdot \bxk{k}(\bxk{k})^T \Big]$. It holds that $\sup_{k \in \transet}\onenorm{\widetilde{\bm{\Sigma}}_h^{-1}\widetilde{\bm{\Sigma}}_h^{(k)}} < \infty$.
\end{assumption}
\begin{remark}
	A sufficient condition for Assumption \ref{asmp: merging} to hold is $(\widetilde{\bSigma}^{(k)}_{\bwah, \bbeta})^{-1}\widetilde{\bSigma}^{(k')}_{\bwk{k'}, \bbeta}$ has positive diagonal elements and is diagonally dominant, for any $k \neq k'$ in $\mah$, where $\widetilde{\bSigma}^{(k)}_{\bw, \bbeta} \coloneqq \te\Big[\int_0^1 \psi''((\bxk{k})^T\bbeta+t(\bxk{k})^T(\bw-\bbeta))dt \cdot \bxk{k}(\bxk{k})^T \Big]$ for any $\bw \in \mathbb{R}^p$.
\end{remark}
In the linear case, this assumption can be further simplified as a restriction on heterogeneity between target predictors and source predictors. More discussions can be found in Condition 4 of \cite{li2021transfer}. Now, we are ready to present the following main result for the $\mah$-Trans-GLM algorithm. Define the parameter space as
\begin{equation}
	\Xi(s, h) = \left\{\bbeta, \{\bwk{k}\}_{k\in \mah}: \zeronorm{\bbeta} \leq s, \sup_{k\in \mah}\onenorm{\bwk{k} - \beta} \leq h\right\}.
\end{equation}
Given $s$ and $h$, we compress parameters $\bbeta$, $\{\bwk{k}\}_{k\in \mah}$ into a parameter set $\bxi$ for simplicity.

\begin{theorem}[$\ell_1$/$\ell_2$-estimation error bound of $\mah$-Trans-GLM with Assumption \ref{asmp: merging}]\label{thm: l2 with assumption}
	Assume Assumptions \ref{asmp: convexity}, \ref{asmp: x} and \ref{asmp: merging} hold.  Suppose $h \ll \sqrt{\frac{n_0}{\log p}}$, $h \leq C\sqrt{s}$, $n_0 \geq C\log p$ and $\nah \geq Cs\log p$, where $C>0$ is a constant. Also assume Assumption \ref{asmp: second derivative}.(\rom{1}) holds or Assumption \ref{asmp: second derivative}.(\rom{2}) with $h \leq C'U^{-1}\bar{U}$ for some $C' > 0$ holds. We take $\lambda_{\bm{w}} = C_{\bw}\sqrt{\frac{\log p}{\nah+n_0}}$ and $\lambda_{\bdelta} = C_{\bdelta}\sqrt{\frac{\log p}{n_0}}$, where $C_{\bw}$ and $C_{\bdelta}$ are sufficiently large positive constants.  Then
	\begin{align}
		\sup_{\xi \in \Xi(s, h)}\tp\left(\twonorm{\hbeta - \bbeta} \lesssim \left(\frac{s\log p}{\nah+n_0}\right)^{1/2} +  \left[\left(\frac{\log p}{n_0}\right)^{1/4}h^{1/2}\right]\wedge h  \right)&\geq 1-n_0^{-1}, \label{eq: l2 bound 1 with asmp}\\
		\sup_{\xi \in \Xi(s, h)}\tp\left(\onenorm{\hbeta - \bbeta} \lesssim s\left(\frac{\log p}{\nah+n_0}\right)^{1/2} + h\right)&\geq 1-n_0^{-1}.\label{eq: l1 bound 1 with asmp}
	\end{align}
\end{theorem}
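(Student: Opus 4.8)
The plan is to analyze the two steps of Algorithm~\ref{algo: merging} separately and then combine the errors by the triangle inequality $\twonorm{\hbeta - \bbeta} \leq \twonorm{\hwah - \bwah} + \twonorm{\hdeltaah - \bdeltaah} + \twonorm{\bwah + \bdeltaah - \bbeta}$, where $\bwah$ solves the population equation~\eqref{eq: wa in population merging} with $\ma = \mah$ and $\bdeltaah = \bbeta - \bwah$ is the population contrast. The last term vanishes by definition, so the work is in bounding the transferring-step error and the debiasing-step error. The first preliminary step I would carry out is to control $\onenorm{\bdeltaah} = \onenorm{\bwah - \bbeta}$: starting from~\eqref{eq: wa in population merging}, a first-order Taylor expansion of $\psi'$ around $\bbeta$ together with the definition of $\widetilde{\bm{\Sigma}}_h$ gives $\widetilde{\bm{\Sigma}}_h(\bwah - \bbeta) = \sum_{k \in \transet}\alpha_k \widetilde{\bm{\Sigma}}_h^{(k)}(\bwk{k} - \bbeta)$, hence $\onenorm{\bwah - \bbeta} \leq \sup_{k}\onenorm{\widetilde{\bm{\Sigma}}_h^{-1}\widetilde{\bm{\Sigma}}_h^{(k)}}\cdot \sum_{k}\alpha_k \onenorm{\bdeltak{k}} \lesssim h$ by Assumption~\ref{asmp: merging} and the definition of $\mah$. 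This shows $\bwah$ is $O(h)$-close to $\bbeta$ in $\ell_1$, which is what makes the pooled estimator a good warm start.

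For the transferring step, I would invoke a standard high-dimensional $M$-estimation argument for $\ell_1$-penalized GLM applied to the pooled sample of size $\nah + n_0$. The effective sparsity of the pooled target parameter $\bwah$ is not exactly $s$, but because $\onenorm{\bwah - \bbeta}\lesssim h$ and $h \le C\sqrt{s}$, a standard cone/basic-inequality argument still delivers $\twonorm{\hwah - \bwah} \lesssim \sqrt{s\log p/(\nah+n_0)} + $ (a term controlled by $h$), provided a restricted eigenvalue / restricted strong convexity condition holds for the pooled Hessian. Establishing RSC is where Assumptions~\ref{asmp: convexity},~\ref{asmp: x}, and~\ref{asmp: second derivative} enter: Assumption~\ref{asmp: x} (subGaussian design, $\lambda_{\min}$ bounded below) gives RSC for the quadratic part, and Assumption~\ref{asmp: second derivative} is needed to transfer this to the GLM loss, either via the uniform bound $\infnorm{\psi''}\le M_\psi$ in case~(i), or, in case~(ii), via the bounded-predictor condition together with $h \le C'U^{-1}\bar U$ ensuring $|(\bxk{k})^T(\bwah-\bbeta)| \le \bar U$ so that $\psi''$ stays bounded on the relevant segment. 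The choice $\lambda_{\bm{w}} = C_{\bm{w}}\sqrt{\log p/(\nah+n_0)}$ with $C_{\bm{w}}$ large dominates the empirical-process (score) term $\maxnorm{\nabla}$ with probability $\geq 1 - p^{-c}$, which I would absorb into the $1 - n_0^{-1}$ budget using $\nah \gtrsim s\log p$ and $n_0 \gtrsim \log p$.

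For the debiasing step, observe that $\hdeltaah$ is the $\ell_1$-penalized GLM estimator on the target data only, of the parameter $\bdeltaah$ (fitting the offset $\hwah$), and $\onenorm{\bdeltaah}\lesssim h$ plays the role of its "radius." Here there is a bias–variance tradeoff: one either uses the crude bound $\twonorm{\hdeltaah - \bdeltaah + \bdeltaah} \lesssim h$ (when $h$ is small, just don't debias much), or runs the penalized analysis to get $\twonorm{\hdeltaah} \lesssim (\log p/n_0)^{1/4} h^{1/2}$ — this quarter-power rate arises because the effective $\ell_1$-radius is $h$ rather than a sparsity $s$, so the cone argument yields $\lambda_{\bdelta}\onenorm{\bdeltaah}\asymp \sqrt{\log p/n_0}\cdot h$ as the dominant term in the basic inequality, and converting an $\ell_1$-radius-$h$ bound under RSC to an $\ell_2$ bound costs a square root, giving $(\sqrt{\log p/n_0}\cdot h)^{1/2}$. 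Taking the minimum of the two bounds produces the term $[(\log p/n_0)^{1/4}h^{1/2}]\wedge h$ in~\eqref{eq: l2 bound 1 with asmp}; for~\eqref{eq: l1 bound 1 with asmp} the analogous $\ell_1$ computation gives $s\sqrt{\log p/(\nah+n_0)} + h$ directly. The condition $h \ll \sqrt{n_0/\log p}$ guarantees $\lambda_{\bdelta}\onenorm{\bdeltaah} = o(1)$ so the debiasing analysis is non-vacuous and the RSC radius is not exceeded.

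The main obstacle I anticipate is establishing restricted strong convexity for the \emph{pooled} GLM loss uniformly over the transferring set while the target parameter $\bwah$ is only approximately sparse. One has to run the cone argument with the enlarged cone $\{\bm{v}: \onenorm{\bm{v}_{S^c}} \leq 3\onenorm{\bm{v}_S} + 4\onenorm{\bdeltaah}\}$ (or similar), carefully track the extra $\onenorm{\bdeltaah}\lesssim h$ slack through both the RSC lower bound and the basic inequality, and verify via Assumption~\ref{asmp: second derivative} that the second-derivative factor $\psi''$ appearing in $\widetilde{\bm{\Sigma}}_h$ does not degenerate along the line segments — this is the step that genuinely uses the GLM structure rather than generic Lasso theory, and it is where the non-strong-convexity of $\psi$ (for logistic regression) must be handled by localization arguments rather than a global lower bound on $\psi''$.
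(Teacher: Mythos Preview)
Your proposal follows the same route as the paper and correctly identifies the main ingredients: the bound $\onenorm{\bdeltaah}\lesssim h$ from Assumption~\ref{asmp: merging} (this is the paper's Lemma~\ref{lem: delta a}), RSC for the pooled loss over the enlarged cone $\{\bv:\onenorm{\bv_{S^c}}\le 3\onenorm{\bv_S}+O(h)\}$, the score bound $\infnorm{\nabla\hat L(\bwah)}\lesssim\sqrt{\log p/(\nah+n_0)}$, and the $(\lambda_{\bdelta}h)^{1/2}$ mechanism behind the quarter-power rate in the debiasing step.

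There is, however, one substantive piece you have not addressed. In the debiasing step the offset is the \emph{random} $\hwah$, so the relevant gradient is $\nabla\hat L^{(0)}(\hwah+\bdeltaah)=\nabla\hat L^{(0)}(\bbeta+\huah)$, and this is \emph{not} $O_p(\sqrt{\log p/n_0})$ in $\ell_\infty$; you cannot simply take $\lambda_{\bdelta}\ge 2\infnorm{\nabla\hat L^{(0)}(\bbeta+\huah)}$ at the stated value of $\lambda_{\bdelta}$. The paper resolves this by splitting off the centered score $\nabla\hat L^{(0)}(\bbeta)$ and bounding the remainder $[\nabla\hat L^{(0)}(\bbeta+\huah)-\nabla\hat L^{(0)}(\bbeta)]^T\hvah$ via a Young/Cauchy--Schwarz step,
\[
\frac{M_{\psi}^2}{4c_0}\cdot\frac{1}{n_0}\twonorm{\bXk{0}\huah}^2\;+\;c_0\cdot\frac{1}{n_0}\twonorm{\bXk{0}\hvah}^2,
\]
where the second piece is absorbed into the RSC lower bound (for $c_0$ small) and the first is controlled by the transferring-step rate via $\frac{1}{n_0}\twonorm{\bXk{0}\huah}^2\lesssim\twonorm{\huah}^2$, using again that $\huah$ lies in the enlarged cone so that an upper-RE bound applies on the target design. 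This coupling term is exactly what your triangle-inequality framing hides: the bound on $\hvah$ genuinely depends on $\huah$, and without this step the basic inequality for the debiasing stage does not close at $\lambda_{\bdelta}=C_{\bdelta}\sqrt{\log p/n_0}$. Once you insert it, the rest of your outline goes through as in the paper.
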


\begin{remark}
	When $h \ll s\sqrt{\frac{\log p}{n_0}}$, $\nah \gg n_0$, the upper bounds in \eqref{eq: l2 bound 1 with asmp} and \eqref{eq: l1 bound 1 with asmp} are better than the classical Lasso $\ell_2$-bound $\mathcal{O}_p\left(\sqrt{\frac{s\log p}{n_0}}\right)$ and $\ell_1$-bound $\mathcal{O}_p\left(s\sqrt{\frac{\log p}{n_0}}\right)$ using only target data.
\end{remark}

Similar to Theorem 2 in \cite{li2021transfer}, we can show the following lower bound of $\ell_1$/$\ell_2$-estimation error in regime $\Xi(s, h)$ in the minimax sense.

\begin{theorem}[$\ell_1$/$\ell_2$-minimax estimation error bound]\label{thm: minimax}
Assume Assumptions \ref{asmp: convexity}, \ref{asmp: x} and \ref{asmp: merging} hold. Also assume Assumption \ref{asmp: second derivative}.(\rom{1}) holds or Assumption \ref{asmp: second derivative}.(\rom{2}) with $n_0 \gtrsim s^2\log p$. Then
\begin{align}
	\inf_{\hat{\bbeta}}\sup_{\bxi \in \Xi(s, h)}\tp\left(\twonorm{\hbeta - \bbeta} \gtrsim \left(\frac{s\log p}{\nah+n_0}\right)^{1/2} + \left(\frac{s\log p}{n_0}\right)^{1/2} \wedge \left[\left(\frac{\log p}{n_0}\right)^{1/4}h^{1/2}\right]\wedge h\right)&\geq \frac{1}{2}, \\
	\inf_{\hat{\bbeta}}\sup_{\bxi \in \Xi(s, h)}\tp\left(\onenorm{\hbeta - \bbeta} \gtrsim s\left(\frac{\log p}{\nah+n_0}\right)^{1/2} + \left[s\left(\frac{\log p}{n_0}\right)^{1/2}\right] \wedge h\right)&\geq \frac{1}{2}.
\end{align}
\end{theorem}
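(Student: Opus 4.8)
The plan is to establish the two stated lower bounds by reducing the estimation problem to hypothesis testing over carefully chosen finite subfamilies of $\Xi(s,h)$, and then invoking standard information-theoretic tools (Le Cam's two-point method and the Fano/Gilbert--Varshamov argument, exactly as in Theorem 2 of \cite{li2021transfer}). The key observation is that the claimed rate is a sum of two terms, and each term will come from a separate construction: the term $\left(\frac{s\log p}{\nah+n_0}\right)^{1/2}$ reflects the difficulty of estimating an $s$-sparse vector from the \emph{pooled} sample of size $\nah+n_0$ (here all sources are taken to coincide exactly with the target, so $\bdeltak{k}=\bm{0}$ and the effective sample size is genuinely $\nah+n_0$), while the term $\left(\frac{s\log p}{n_0}\right)^{1/2}\wedge\left[\left(\frac{\log p}{n_0}\right)^{1/4}h^{1/2}\right]\wedge h$ reflects the irreducible bias from the contrasts: even with infinitely much source data, one must still learn a perturbation of $\ell_1$-size $\le h$ using only the $n_0$ target samples.

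First I would handle the pooled-sample term. Fix all source coefficients equal to the target coefficient, restrict attention to the $s$-sparse subset of the parameter space, and apply the standard Gilbert--Varshamov construction: choose a packing set of $s$-sparse vectors in $\{0,\pm\gamma\}^p$ with pairwise Hamming distance $\gtrsim s$ and $\twonorm{\cdot}$-separation $\asymp \gamma\sqrt{s}$, where $\gamma \asymp \sqrt{\frac{\log p}{\nah+n_0}}$. Using Assumptions \ref{asmp: convexity}, \ref{asmp: x}, \ref{asmp: second derivative} the per-sample KL divergence between two GLM laws with coefficients $\bbeta,\bbeta'$ is $\asymp \twonorm{\bbeta-\bbeta'}^2$ (this is where the upper bound $M_\psi$ on $\psi''$ or the boundedness in (ii), together with $\lambda_{\min}(\bSigmak{k})\ge\kappa_l$, is used; the condition $n_0\gtrsim s^2\log p$ under Assumption \ref{asmp: second derivative}.(ii) is needed to keep the relevant linear predictors in the bounded region $|z|\le\bar U$ so that $\psi''$ stays controlled), so the total KL over the pooled sample is $\lesssim (\nah+n_0)\gamma^2 s \asymp s\log p$, which is $\le c\log(\text{packing number})$ for small $c$; Fano then gives the $\twonorm{\cdot}$-bound at rate $\gamma\sqrt{s}$, and the $\onenorm{\cdot}$-version at rate $\gamma s$ follows from the same packing since the packing vectors also have $\ell_1$-separation $\asymp\gamma s$.

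Next I would handle the bias term via a two-point (Le Cam) argument using only the target samples. Take $\bbeta$ fixed (say equal to some fixed $s$-sparse vector), and let the sources all have a common coefficient $\bwk{k}=\bbeta-\bdelta$ with $\onenorm{\bdelta}\le h$; the transferring-step population target $\bwah$ then equals $\bbeta-\bdelta$ as well, so the estimator effectively only sees the contrast $\bdelta$ through the $n_0$ target observations. Comparing $\bdelta=\bm 0$ against a single spike $\bdelta=\pm v\,e_j$ (or a few coordinates) with $v \asymp h$ and also the alternative choice $v$ chosen so that $n_0 v^2 \asymp \log p$ (giving $v\asymp\sqrt{\log p/n_0}$, whence $\twonorm{\cdot}$-separation $\asymp (\log p/n_0)^{1/4}h^{1/2}$ after balancing $\ell_1$-budget across $\asymp h/v$ coordinates), the KL between the two induced target laws is $\lesssim n_0 v^2 \le O(1)$ or $O(\log p)$ as appropriate, so Le Cam/Fano yields the stated minimum-of-three-terms. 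One must verify that both hypotheses genuinely lie in $\Xi(s,h)$ (the $\ell_0$-sparsity of $\bbeta$ is preserved since we only perturb the sources; the $\ell_1$-budget $\onenorm{\bwk{k}-\bbeta}\le h$ is respected by construction) and that Assumption \ref{asmp: merging} holds for these configurations (it does trivially, since with all sources sharing one coefficient the matrices $\widetilde{\bSigma}_h$ and $\widetilde{\bSigma}_h^{(k)}$ are comparable up to constants depending only on $\kappa_l,\kappa_u,M_\psi$).

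The main obstacle I anticipate is the bias-term construction: one needs the two-point family to simultaneously (a) stay inside the $\ell_0$- and $\ell_1$-constrained space $\Xi(s,h)$, (b) make the \emph{pooled} law nearly uninformative about $\bdelta$ so that only the $n_0$ target samples carry signal — which requires choosing the source coefficients so that $\bwah$ is pinned down and the sources contribute no extra discriminating power — and (c) produce exactly the triple minimum $\left(\frac{s\log p}{n_0}\right)^{1/2}\wedge\left[\left(\frac{\log p}{n_0}\right)^{1/4}h^{1/2}\right]\wedge h$, which means carefully optimizing the number of perturbed coordinates and the per-coordinate magnitude against the $\ell_1$-budget $h$ and the KL-budget. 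The first two regimes of the minimum require a multi-point (Fano) construction rather than a plain two-point one. Apart from this, the argument is a routine adaptation of the minimax lower-bound machinery for sparse GLMs combined with the pooled-sample bookkeeping already set up for Theorem \ref{thm: l2 with assumption}.
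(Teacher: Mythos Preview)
Your overall plan—separately establishing the pooled-sample term and the bias term via Fano-type packings, with the GLM KL controlled by $\psi''$ through Assumptions~\ref{asmp: convexity}--\ref{asmp: second derivative}—matches the paper's approach (which defers to Theorems~B and~2 of \cite{li2021transfer}). Your treatment of the pooled term is correct: setting all $\bwk{k}=\bbeta$ and running the standard Gilbert--Varshamov/Fano argument on $\nah+n_0$ samples gives $\sqrt{s\log p/(\nah+n_0)}$ (and $s\sqrt{\log p/(\nah+n_0)}$ in $\ell_1$).

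However, your bias-term construction is \emph{inverted}, and as written it yields no lower bound at all. You fix $\bbeta$ and let the sources have $\bwk{k}=\bbeta-\bdelta$, varying $\bdelta$ across hypotheses. But the quantity being estimated is $\bbeta$; if $\bbeta$ is identical under every hypothesis, the constant estimator $\hbeta\equiv\bbeta$ has zero error, so no nontrivial minimax bound follows. Equivalently: under your construction it is the \emph{source} laws that differ across hypotheses while the target law is fixed, the opposite of your claim that ``the estimator effectively only sees the contrast $\bdelta$ through the $n_0$ target observations.'' The correct construction reverses the roles: fix a common source coefficient $\bwk{k}\equiv\bw^\star$ for all $k\in\mah$ and all hypotheses, and let $\bbeta$ range over a packing inside $\{\bbeta:\zeronorm{\bbeta}\le s,\ \onenorm{\bbeta-\bw^\star}\le h\}$. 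Then the source laws are identical across hypotheses and contribute nothing to the KL, so only the $n_0$ target samples discriminate. From there your regime analysis is right: a packing on $s$ coordinates of magnitude $\sqrt{\log p/n_0}$ gives $\sqrt{s\log p/n_0}$; spreading the $\ell_1$-budget $h$ over $\bar m\asymp h\sqrt{n_0/\log p}\le s$ coordinates of that magnitude gives $(\log p/n_0)^{1/4}h^{1/2}$; and a two-point comparison with a single perturbation of $\ell_1$-size $h$ gives the term $h$. These are exactly cases~(ii-1), (ii-2), (ii-3) in the paper's proof.
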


\begin{remark}
	Theorem \ref{thm: minimax} indicates that under conditions on $h$ required by Theorem \ref{thm: l2 with assumption} ($h \lesssim s\sqrt{\log p/n_0}$), $\mah$-Trans-GLM achieves the minimax optimal rate of $\ell_1$/$\ell_2$-estimation error bound.
\end{remark}
Next, we present a similar upper bound, which is weaker than the bound above but holds without requiring Assumption \ref{asmp: merging}.

\begin{theorem}[$\ell_1$/$\ell_2$-estimation error  bound of $\mah$-Trans-GLM without Assumption \ref{asmp: merging}]\label{thm: l2 without assumption}
	Assume Assumptions \ref{asmp: convexity} and \ref{asmp: x} hold.  Suppose $h \ll \sqrt{\frac{n_0}{\log p}}$, $h \leq Cs^{-1/2}$, $n_0 \geq C\log p$ and $\nah \geq Cs\log p$, where $C>0$ is a constant. Also assume Assumption \ref{asmp: second derivative}.(\rom{1}) holds or Assumption \ref{asmp: second derivative}.(\rom{2}) with $h \leq C'U^{-1}\bar{U}$ for some $C' > 0$ holds. We take $\lambda_{\bm{w}} = C_{\bw}\left(\sqrt{\frac{\log p}{\nah+n_0}} + h\right)$ and $\lambda_{\bdelta} = C_{\bdelta}\sqrt{\frac{\log p}{n_0}}$, where $C_{\bw}$ and $C_{\bdelta}$ are sufficiently large positive constants. Then
		\begin{align}
			\sup_{\bxi \in \Xi(s, h)}\tp\left(\twonorm{\hbeta - \bbeta} \lesssim \left(\frac{s\log p}{\nah+n_0}\right)^{1/2} + \sqrt{s}h+ \left[\left(\frac{\log p}{n_0}\right)^{1/4}h^{1/2}\right]\wedge h\right)&\geq 1-n_0^{-1}, \\
			\sup_{\bxi \in \Xi(s, h)}\tp\left(\onenorm{\hbeta - \bbeta} \lesssim s\sqrt{\frac{\log p}{\nah+n_0}} + sh\right)&\geq 1-n_0^{-1}.\label{eq: l1 bound 1 without asmp}
		\end{align}
\end{theorem}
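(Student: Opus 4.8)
\emph{Plan.} The plan is to run the two steps of Algorithm~\ref{algo: merging} (with $\ma=\mah$) in sequence, much as in the proof of Theorem~\ref{thm: l2 with assumption}, but — since Assumption~\ref{asmp: merging} is no longer available to force the population pooled parameter $\bwah$ from \eqref{eq: wa in population merging} to be $\ell_1$-close to $\bbeta$ — taking the genuinely $s$-sparse vector $\bbeta$ itself as the reference point throughout the transferring step. This substitution is precisely what inflates the required penalty $\lambda_{\bm{w}}$ by an additive $h$, and it is what degrades the bounds relative to Theorem~\ref{thm: l2 with assumption}.

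\emph{Step 1 (transferring step).} Write $\bar L(\bm w)$ for the pooled negative log-likelihood minimized in the transferring step and $S=\{j:\beta_j\neq0\}$, so $|S|\le s$. I would first show that, with probability at least $1-n_0^{-1}$, $\infnorm{\nabla\bar L(\bbeta)}\lesssim\sqrt{\log p/(\nah+n_0)}+h$. The score $\nabla\bar L(\bbeta)$ splits into a mean-zero part — the usual GLM score at the true parameters $\bwk{k}$, pooled over the target and the transferred sources, whose $\ell_\infty$-norm concentrates at rate $\sqrt{\log p/(\nah+n_0)}$ by sub-Gaussian/sub-exponential tail bounds (using Assumptions~\ref{asmp: x} and~\ref{asmp: second derivative} to control $\psi''$, hence $\var(y\mid\bx)$) — plus a deterministic bias, each coordinate of which is an average of terms $\int_0^1\psi''(\cdot)\,dt\cdot(\bx_i^{(k)})^T\bdeltak{k}$ times a coordinate of $\bx_i^{(k)}$; since $(\bx_i^{(k)})^T\bdeltak{k}$ is $\kappa_u\twonorm{\bdeltak{k}}^2$-sub-Gaussian with $\twonorm{\bdeltak{k}}\le\onenorm{\bdeltak{k}}\le h$, this bias has $\ell_\infty$-norm $\lesssim h$ (under Assumption~\ref{asmp: second derivative}.(\rom{2}), $h\le C'U^{-1}\bar U$ keeps the integration path inside $\{|z|\le\bar U\}$, where $\psi''\le M_\psi$). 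Taking $\lambda_{\bm{w}}=C_{\bm{w}}(\sqrt{\log p/(\nah+n_0)}+h)\ge2\infnorm{\nabla\bar L(\bbeta)}$ places $\hwah-\bbeta$ in the usual cone relative to $S$; combined with restricted strong convexity of $\bar L$ over that cone — valid once $\nah+n_0\gtrsim s\log p$ by the standard sub-Gaussian-design argument for GLMs (cf.\ \cite{negahban2009unified}) — and the genuine $s$-sparsity of $\bbeta$, the routine Lasso argument gives $\onenorm{\hwah-\bbeta}\lesssim s\lambda_{\bm{w}}\asymp s\sqrt{\log p/(\nah+n_0)}+sh$ and $\twonorm{\hwah-\bbeta}\lesssim\sqrt s\,\lambda_{\bm{w}}\asymp\sqrt{s\log p/(\nah+n_0)}+\sqrt s\,h$, i.e.\ the claimed $\ell_1$-bound and the first two terms of the claimed $\ell_2$-bound.

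\emph{Step 2 (debiasing step).} Here $\hbeta=\hwah+\hdeltaah$ is the minimizer over $\bm b$ of $L_0(\bm b)+\lambda_{\bm{\delta}}\onenorm{\bm b-\hwah}$, where $L_0$ is the target negative log-likelihood; the role of this step is not to improve the rate (Step~1 already gives it) but to show the debiasing does not destroy it. I would (i) use $\infnorm{\nabla L_0(\bbeta)}\lesssim\sqrt{\log p/n_0}\le\lambda_{\bm{\delta}}/2$ together with the basic inequality against the feasible point $\bm b=\bbeta$ to get $L_0(\hbeta)-L_0(\bbeta)\lesssim\lambda_{\bm{\delta}}\onenorm{\hbeta-\bbeta}$; (ii) invoke the weak form of restricted strong convexity for $L_0$ that is valid with only $n_0\gtrsim\log p$ (rather than $n_0\gtrsim s\log p$), namely $L_0(\bbeta+\bm v)-L_0(\bbeta)-\langle\nabla L_0(\bbeta),\bm v\rangle\gtrsim\twonorm{\bm v}^2-c\,\tfrac{\log p}{n_0}\onenorm{\bm v}^2$ on a ball of constant radius (again under Assumptions~\ref{asmp: convexity}--\ref{asmp: second derivative}); and (iii) import the $\ell_1$-control $\onenorm{\hbeta-\bbeta}\lesssim s\lambda_{\bm{w}}$ from Step~1, obtained by writing $\hbeta-\bbeta=(\hwah-\bbeta)+\hdeltaah$, decomposing the contrast $\bbeta-\hwah$ along $S$ (on whose complement it equals $-\hwah$ and is controlled by the Step~1 cone inequality), and running basic-inequality arguments for $\hdeltaah$ against $\bm{\delta}=0$ and $\bm{\delta}=\bbeta-\hwah$. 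Feeding (iii) into (i)--(ii) closes the loop and makes $\twonorm{\hbeta-\bbeta}$ of the same order as $\twonorm{\hwah-\bbeta}$, up to the extra $[(\log p/n_0)^{1/4}h^{1/2}]\wedge h$ term, which for $s\ge1$ is in any case dominated by $\sqrt s\,h$.

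\emph{Main obstacle.} The delicate part is Step~2 in the regime where $n_0$ is small (only $n_0\gtrsim\log p$) and $h$ is small: the usual restricted-eigenvalue bound for $L_0$ over the full $s$-sparse cone is unavailable, so the entire debiasing analysis must be carried through with the weaker ``$\twonorm{\bm v}^2-\tfrac{\log p}{n_0}\onenorm{\bm v}^2$''-type curvature lower bound, and the $\ell_1$-radius that makes that bound bite has to be imported from Step~1 rather than produced internally; getting all the error terms to line up so the final rates are exactly $(\tfrac{s\log p}{\nah+n_0})^{1/2}+\sqrt s\,h+[(\tfrac{\log p}{n_0})^{1/4}h^{1/2}]\wedge h$ and $s(\tfrac{\log p}{\nah+n_0})^{1/2}+sh$ is the bookkeeping-heavy core of the argument. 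A secondary technical point, handled uniformly, is the unbounded-$\psi''$ case (e.g.\ Poisson), where throughout one replaces ``$\psi''\le M_\psi$'' by Assumption~\ref{asmp: second derivative}.(\rom{2}) together with $\sup_k\infnorm{\bxk{k}}\le U$, $h\le C'U^{-1}\bar U$ and $\zeronorm{\bbeta}\le s$ so that all relevant linear predictors stay inside $\{|z|\le\bar U\}$, where the curvature is bounded.
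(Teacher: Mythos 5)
Your Step 1 is correct and is essentially a valid variant of the paper's transferring-step analysis: the paper also drops the population moment equation and takes an arbitrary reference point within $\ell_1$-distance $h$ of $\bbeta$ (it uses $\sum_k\alpha_k\bwk{k}$, you use $\bbeta$ itself); either way the score at the reference point has $\ell_\infty$-norm $\lesssim\sqrt{\log p/(\nah+n_0)}+h$, the cone/RSC argument applies, and one gets $\onenorm{\hwah-\bbeta}\lesssim s\lambda_{\bm{w}}$ and $\twonorm{\hwah-\bbeta}\lesssim\sqrt{s}\,\lambda_{\bm{w}}$. Your choice of $\bbeta$ as reference is in fact slightly cleaner, since exact sparsity removes the $O(h)$ slack in the cone. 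The claimed $\ell_1$-bound \eqref{eq: l1 bound 1 without asmp} also follows from your scheme.

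The gap is in Step 2, and it is not mere bookkeeping: the mechanism (i)--(iii) you describe provably cannot recover the claimed $\ell_2$-rate. Combining the H\"older bound $L_0(\hbeta)-L_0(\bbeta)-\langle\nabla L_0(\bbeta),\hbeta-\bbeta\rangle\lesssim\lambda_{\bdelta}\onenorm{\hbeta-\bbeta}$ with the weak curvature bound $\twonorm{\bm v}^2-c\tfrac{\log p}{n_0}\onenorm{\bm v}^2$ and the imported control $\onenorm{\hbeta-\bbeta}\lesssim s\lambda_{\bm{w}}$ yields $\twonorm{\hbeta-\bbeta}^2\lesssim\tfrac{\log p}{n_0}(s\lambda_{\bm{w}})^2+\lambda_{\bdelta}\, s\lambda_{\bm{w}}$. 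The second term contains $s\sqrt{\tfrac{\log p}{n_0}}\sqrt{\tfrac{\log p}{\nah+n_0}}=\tfrac{s\log p}{\sqrt{n_0(\nah+n_0)}}$, which exceeds the target $\tfrac{s\log p}{\nah+n_0}$ by a factor $\sqrt{(\nah+n_0)/n_0}$ --- i.e.\ the bound degrades exactly in the regime $\nah\gg n_0$ where the theorem is supposed to beat the target-only Lasso; similarly $sh\sqrt{\log p/n_0}$ is not dominated by $sh^2+[h\sqrt{\log p/n_0}]\wedge h^2$ when $h\ll\sqrt{\log p/n_0}$, and $\tfrac{\log p}{n_0}(s\lambda_{\bm{w}})^2$ requires $n_0\gtrsim s\log p$, which is not assumed. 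The paper's debiasing analysis avoids all three problems by a different decomposition: in the basic inequality for $\hdeltaah$ it compares against the offset $\bdeltaah=\bbeta-\bwah$ with $\onenorm{\bdeltaah}\le h$ (with your reference point this offset is $\bm 0$), so the penalty-difference term contributes only $O(\lambda_{\bdelta}h)$ rather than $O(\lambda_{\bdelta}s\lambda_{\bm{w}})$; the price is a cross term $\tfrac{1}{n_0}\langle\bXk{0}\huah,\bm{\Lambda}^{(0)}\bXk{0}\hvah\rangle$, which is split by Young's inequality into $\tfrac{1}{n_0}\twonorm{\bXk{0}\huah}^2$ and $c_0\tfrac{1}{n_0}\twonorm{\bXk{0}\hvah}^2$; the former is bounded by $C\twonorm{\huah}^2$ using restricted upper-eigenvalue bounds (the cone case via \cite{zhou2009restricted}, the $\onenorm{\cdot}\le\sqrt s$ case via the convex-hull argument of \cite{plan2013one}), and the latter is absorbed into the curvature term on the left. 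Without this quadratic treatment of the cross term --- replacing your $\ell_\infty$--$\ell_1$ H\"older step --- the stated $\ell_2$-bound does not follow.
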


\begin{remark}\label{rmk: h beneficial regime}
	When $h \ll \sqrt{\frac{\log p}{n_0}}$ and $\nah \gg n_0$, the upper bounds in (\rom{1}) and (\rom{2}) are better than the classical Lasso bound $\mathcal{O}_p\left(\sqrt{\frac{\log p}{n_0}}\right)$ with target data.
\end{remark}

Comparing the results in Theorems \ref{thm: l2 with assumption} and \ref{thm: l2 without assumption}, we know that with Assumption \ref{asmp: merging}, we could get sharper $\ell_1/\ell_2$-estimation error bounds.

\subsection{Theory on the transferable source detection algorithm}\label{subsec:detection theory}

In this section, we will show that under certain conditions, our transferable set detection algorithm (Trans-GLM) can recover the level-$h$ transferring set $\mah$ for some specific $h$, that is, $\widehat{\ma} = \mah$ with high probability. Under these conditions, transferring with $\widehat{\ma}$ will lead to a faster convergence rate compared to Lasso fitted on the target data, when the target sample size $n_0$ falls into certain regime. But as we described in Section \ref{subsec:detection}, Algorithm \ref{algo: a unknown} does not require any explicit input of $h$.

The corresponding population version of $\hat{L}_0^{[r]}(\bw)$ defined in \eqref{eq: l0 hat def} is
\begin{align}
	L_0(\bw)&= -\te[\log \rho(y^{(0)})]-\te[\yk{0}\bw^T\bxk{0}] + \te[\psi(\bm{w}^T\bx^{(0)})] \\
	&= -\te[\log \rho(y^{(0)})] -\te[\psi'(\bm{w}^T\bx^{(0)})\bw^T\bxk{0}] + \te[\psi(\bm{w}^T\bx^{(0)})].
\end{align}

Based on Assumption \ref{asmp: merging detection}, similar to \eqref{eq: wa in population merging}, for $\{k\}$-Trans-GLM (Algorithm \ref{algo: merging} with $\ma = \{k\}$) used in Algorithm \ref{algo: a unknown}, consider the following population version of estimators from the transferring step with respect to target data and the $k$-th source data, which is the solution $\bbetak{k}$ of equation $\sum_{j\in\{0,k\}}\alpha_j^{(k)}\te\left\{[\psi'((\bbetak{k})^T\bxk{k})-\psi'((\bwk{k})^T\bxk{k})]\bxk{k}\right\} = 0$,
where $\alpha_0^{(k)} = \frac{2n_0/3}{2n_0/3 + n_k}$ and $\alpha_k^{(k)} = \frac{n_k}{2n_0/3 + n_k}$. Define $\bbetak{0} = \bbeta$. Next, let's impose a general assumption to ensure the identifiability of some $\mah$ by Trans-GLM.
\begin{assumption}[Identifiability of $\mah$]\label{asmp: idenfity a}
	Denote $\mach = \{1, \ldots,K\}\backslash \mah$. Suppose for some $h$, we have
	\begin{align}
		\tp\left(\sup_r\norm{\hat{L}_0^{[r]}(\hbeta^{(k)[r]}) - \hat{L}_0^{[r]}(\bbetak{k})} > \Upsilon_1^{(k)}+\zeta\Gamma_1^{(k)}\right) &\lesssim g_1^{(k)}(\zeta),\\ 
		 \tp\left(\sup_r\norm{\hat{L}_0^{[r]}(\bbetak{k}) -L_0(\bbetak{k})} > \zeta\Gamma_2^{(k)}\right) &\lesssim g_2^{(k)}(\zeta),
	\end{align}
	where $g_1^{(k)}(\zeta)$, $g_2^{(k)}(\zeta) \rightarrow 0$ as $\zeta \rightarrow \infty$.
	Assume $\inf\limits_{k \in \mach}\lambda_{\min}(\te[\int_0^1\psi''((1-t)(\bxk{0})^T\bbeta + t(\bxk{0})^T\bbetak{k})dt\cdot \bxk{0}(\bxk{0})^T]) \coloneqq \underline{\lambda} > 0$, and
	\begin{align}
		\twonorm{\bbetak{k} -\bbeta} &\geq \underline{\lambda}^{-1/2}\left[C_1\left(\sqrt{\Gamma_1^{(0)}}\vee \sqrt{\Gamma_2^{(0)}} \vee 1\right) + \sqrt{2\Upsilon_1^{(k)}}\right], \forall k \in \mach \label{eq: gap ac l0}\\
		\Upsilon_1^{(k)} + \Gamma_1^{(k)} + \Gamma_2^{(k)} + h^2 &= \smallo(1), \forall k \in \mah;\quad \Gamma_1^{(k)} = \smallo(1), \Gamma_2^{(k)} = \smallo(1), \forall k \in \mach, \label{eq: error term dominated}
	\end{align}
	where $C_1 > 0$ is sufficiently large.
\end{assumption}

\begin{remark}
	Here we use generic notations $\Upsilon_1^{(k)}$, $\Gamma_1^{(k)}$, $\Gamma_2^{(k)}$, $g_1^{(k)}(\zeta)$ and $g_2^{(k)}(\zeta)$. We show their explicit forms under linear, logistic, and Poisson regression models in Proposition \ref{prop: gamma_1 gamma_2} in Section \ref{subsubsec: explicit forms} of supplements.
\end{remark}

\begin{remark}
	Condition \eqref{eq: gap ac l0} guarantees that for the sources not in $\mah$, there is a sufficiently large gap between the population-level coefficient from the transferring step and the true coefficient of target data. Condition \eqref{eq: error term dominated} guarantees the variations of $\sup_r\norm{\hat{L}_0^{[r]}(\hbeta^{(k)[r]}) - \hat{L}_0^{[r]}(\bbetak{k})}$ and $\sup_r\norm{\hat{L}_0^{[r]}(\bbetak{k}) -L_0(\bbetak{k})}$ are shrinking as the sample sizes go to infinity.
\end{remark}

Based on Assumption \ref{asmp: idenfity a}, we have the following detection consistency property.
\begin{theorem}[Detection consistency of $\mah$]\label{thm: a detection}
	For Algorithm \ref{algo: a unknown} (Trans-GLM), with Assumption \ref{asmp: idenfity a} satisfied for some $h$, for any $\delta > 0$, there exist constants $C'(\delta)$ and $N = N(\delta) > 0$ such that when $C_0 = C'(\delta)$ and $\min_{k \in \transet} n_k > N(\delta)$,
	\begin{equation}
		\tp(\widehat{\mathcal{A}} = \mathcal{A}_h) \geq 1-\delta.
	\end{equation}
	Then Algorithm \ref{algo: a unknown} has the same high-probability upper bounds of $\ell_1/\ell_2$-estimation error as those in Theorems \ref{thm: l2 with assumption} and \ref{thm: l2 without assumption} under the same conditions, respectively. 
\end{theorem}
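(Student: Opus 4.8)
The plan is to show that $\widehat{\ma}=\mah$ holds deterministically on one high-probability event, and to use Assumption \ref{asmp: idenfity a} only to guarantee that event. First I would decompose, for each source $k\in\{1,\dots,K\}$ and fold $r$, the quantity $\Delta^{(k)[r]}:=\hat L_0^{[r]}(\hbeta^{(k)[r]})-\hat L_0^{[r]}(\hbeta^{(0)[r]})$ (so that $\hat L_0^{(k)}-\hat L_0^{(0)}=\tfrac{1}{3}\sum_{r=1}^3\Delta^{(k)[r]}$) by inserting the population transferring-step limit $\bbetak{k}$ and the population risk $L_0$:
\[
\Delta^{(k)[r]}=A_k^{[r]}+B_k^{[r]}+D_k-B_0^{[r]}-A_0^{[r]},
\]
where $A_k^{[r]}:=\hat L_0^{[r]}(\hbeta^{(k)[r]})-\hat L_0^{[r]}(\bbetak{k})$, $B_k^{[r]}:=\hat L_0^{[r]}(\bbetak{k})-L_0(\bbetak{k})$, $D_k:=L_0(\bbetak{k})-L_0(\bbeta)$, and $\bbetak{0}:=\bbeta$ (for $k=0$, $\hbeta^{(0)[r]}$ is the Lasso fit and $A_0^{[r]},B_0^{[r]}$ are defined the same way). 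Next I would pick $\zeta=\zeta(\delta)$ large enough that the union over $k=0,\dots,K$ of the two probabilities in Assumption \ref{asmp: idenfity a} is at most $\delta$, and let $\mathcal{E}$ be the event on which $\sup_r\norm{A_k^{[r]}}\leq\Upsilon_1^{(k)}+\zeta\Gamma_1^{(k)}$ and $\sup_r\norm{B_k^{[r]}}\leq\zeta\Gamma_2^{(k)}$ for every $k$; then $\tp(\mathcal{E})\geq 1-\delta$, and the rest of the argument is deterministic on $\mathcal{E}$.

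The core is the deterministic control of the signal term $D_k=L_0(\bbetak{k})-L_0(\bbeta)$. Since $\psi$ is convex (Assumption \ref{asmp: convexity}), $L_0$ is convex and $\bbeta=\bbetak{0}$ is its unconstrained minimizer, so $\nabla L_0(\bbeta)=\bm{0}$; a second-order Taylor expansion of $L_0$ at $\bbeta$ together with the eigenvalue lower bound $\underline{\lambda}$ of $\te[\int_0^1\psi''((1-t)(\bxk{0})^T\bbeta+t(\bxk{0})^T\bbetak{k})\,dt\cdot\bxk{0}(\bxk{0})^T]$ from Assumption \ref{asmp: idenfity a} should give $D_k\geq\tfrac{1}{2}\underline{\lambda}\twonorm{\bbetak{k}-\bbeta}^2$ for $k\in\mach$ -- the factor $\tfrac{1}{2}$ is precisely what the $\sqrt{2\Upsilon_1^{(k)}}$ in \eqref{eq: gap ac l0} is calibrated to absorb, since \eqref{eq: gap ac l0} then forces $D_k\geq\Upsilon_1^{(k)}+\tfrac{1}{2}C_1^2(\sqrt{\Gamma_1^{(0)}}\vee\sqrt{\Gamma_2^{(0)}}\vee 1)^2\geq\Upsilon_1^{(k)}+\tfrac{1}{2}C_1^2$. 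Conversely, for $k\in\mah$ the single-source population transferring limit obeys $\twonorm{\bbetak{k}-\bbeta}\lesssim h$ (by the same population bias estimate used in the proofs of Theorems \ref{thm: l2 with assumption}--\ref{thm: l2 without assumption}), and combining with the Hessian upper bound supplied by Assumptions \ref{asmp: x} and \ref{asmp: second derivative} yields $0\leq D_k\lesssim h^2=\smallo(1)$.

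With those two facts, I would evaluate $\hat L_0^{(k)}-\hat L_0^{(0)}=D_k+\tfrac{1}{3}\sum_r(A_k^{[r]}+B_k^{[r]}-B_0^{[r]}-A_0^{[r]})$ on $\mathcal{E}$. For $k\in\mach$ it is at least $D_k-(\Upsilon_1^{(k)}+\zeta\Gamma_1^{(k)})-\zeta\Gamma_2^{(k)}-\zeta\Gamma_2^{(0)}-(\Upsilon_1^{(0)}+\zeta\Gamma_1^{(0)})\geq\tfrac{1}{2}C_1^2-\smallo(1)$, where the $\smallo(1)$ collects terms that \eqref{eq: error term dominated} forces to vanish; for $k\in\mah$ it is at most $D_k$ plus those same vanishing terms, hence $\smallo(1)$. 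Also, $\hat L_0^{[r]}(\hbeta^{(0)[r]})-\hat L_0^{(0)}$ is a centered version of $A_0^{[r]}+B_0^{[r]}$, so $\hat\sigma\leq\sqrt{6}\,(\Upsilon_1^{(0)}+\zeta\Gamma_1^{(0)}+\zeta\Gamma_2^{(0)})=\smallo(1)$ and $\hat\sigma\vee 0.01=0.01$ once $\min_{k\in\transet}n_k$ is large. I would then take $C_0=C'(\delta)$ to be a fixed constant (e.g. $C_0=1$; using that $C_1$ in Assumption \ref{asmp: idenfity a} is sufficiently large, so that $\tfrac{1}{4}C_1^2>0.01\,C_0$), and choose $N(\delta)$ so that, with $\zeta=\zeta(\delta)$ fixed, every $\smallo(1)$ quantity above is below $0.005$. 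On $\mathcal{E}$ and for $\min_{k\in\transet}n_k>N(\delta)$ this gives: every $k\in\mah$ satisfies $\hat L_0^{(k)}-\hat L_0^{(0)}<0.01\,C_0$, so $k\in\widehat{\ma}$; every $k\in\mach$ satisfies $\hat L_0^{(k)}-\hat L_0^{(0)}\geq\tfrac{1}{4}C_1^2>0.01\,C_0$, so $k\notin\widehat{\ma}$. Hence $\mathcal{E}\subseteq\{\widehat{\ma}=\mah\}$ and $\tp(\widehat{\ma}=\mah)\geq\tp(\mathcal{E})\geq 1-\delta$. For the last sentence, on $\{\widehat{\ma}=\mah\}$ the output of Algorithm \ref{algo: a unknown} coincides with the $\mah$-Trans-GLM estimator run on the same data, so Theorems \ref{thm: l2 with assumption} and \ref{thm: l2 without assumption} apply to it; a union bound with $\tp(\widehat{\ma}=\mah)\geq 1-\delta$ gives the stated $\ell_1/\ell_2$ rates with probability at least $1-\delta-n_0^{-1}$.

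The hard part will be the deterministic curvature step of the second paragraph: extracting the strong-convexity-type bound $D_k\geq\tfrac{1}{2}\underline{\lambda}\twonorm{\bbetak{k}-\bbeta}^2$ from the \emph{averaged}-Hessian eigenvalue condition in Assumption \ref{asmp: idenfity a} (the Taylor remainder carries a $t$-dependent weight, so one must verify the effective curvature does not drop below $\tfrac{1}{2}\underline{\lambda}$), together with establishing $\twonorm{\bbetak{k}-\bbeta}\lesssim h$ for $k\in\mah$, which means re-running the single-source population analysis behind Theorems \ref{thm: l2 with assumption}--\ref{thm: l2 without assumption}. The stochastic ingredients -- that $A_k^{[r]}$ and $B_k^{[r]}$ concentrate as written -- are exactly what Assumption \ref{asmp: idenfity a} postulates (explicit forms being deferred to the referenced proposition), so no extra probabilistic work is needed there.
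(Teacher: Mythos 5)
Your proposal is correct and follows essentially the same route as the paper's proof: the same decomposition of $\hat{L}_0^{(k)}-\hat{L}_0^{(0)}$ through $\bbetak{k}$ and $L_0$, the same curvature lower bound via $\underline{\lambda}$ for $k\in\mach$ and the $\mathcal{O}(h^2)$ upper bound for $k\in\mah$, the same control of $\hat{\sigma}$, and the same union bound. The only differences are bookkeeping (you fix $C_0$ and tune $\zeta(\delta)$, while the paper threads $C_0^{-1}$ into the tail functions for $k\in\mach$) and that you are, if anything, more careful than the paper about the $\tfrac{1}{2}$ factor in the Taylor remainder that the $\sqrt{2\Upsilon_1^{(k)}}$ term in \eqref{eq: gap ac l0} is meant to absorb.
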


\begin{remark}
	We would like to emphasize again that Algorithm \ref{algo: a unknown} does not require the explicit input of $h$. Theorem \ref{thm: a detection} tells us that the transferring set $\widehat{\ma}$ suggested by Trans-GLM will be $\mah$ for some $h$, under certain conditions.
\end{remark}

Next, we attempt to provide a sufficient and more explicit condition (Corollary \ref{cor: explicit asmp}) to ensure that Assumption \ref{asmp: idenfity a} hold. Recalling the procedure of Algorithm \ref{algo: a unknown}, we note that it relies on using the negative log-likelihood as the similarity metric between target and source data, where the accurate estimation of coefficients or log-likelihood for GLM under the high-dimensional setting depends on the sparse structure. Therefore, in order to provide an explicit and sufficient condition for Assumption \ref{asmp: merging detection} to hold, we now impose a ``weak" sparsity assumption on both $\bwk{k}$ and $\bbetak{k}$ with $k \in \mach$ for some $h$. Note that the source data in $\mah$ automatically satisfy the sparsity condition due to the definition of $\mah$.

\begin{assumption}\label{asmp: merging detection}
For some $h$ and any $k \in \mach$, we assume $\bwk{k}$ and $\bbetak{k}$ can be decomposed as follows with some $s'$ and $h' > 0$:
\begin{enumerate}[(i)]
	\item $\bwk{k} = \bm{\varsigma}^{(k)} + \bm{\vartheta}^{(k)}$, where $\zeronorm{\bm{\varsigma}^{(k)}} \leq s'$ and  $\onenorm{\bm{\vartheta}^{(k)}} \leq h'$;
	\item $\bbetak{k} = \bm{\iota}^{(k)} + \bm{\varpi}^{(k)}$, where $\zeronorm{\bm{\iota}^{(k)}} \leq s'$ and  $\onenorm{\bm{\varpi}^{(k)}} \leq h'$.
\end{enumerate}
\end{assumption}

\begin{corollary}\label{cor: explicit asmp}
	Assume Assumptions \ref{asmp: convexity}, \ref{asmp: x}, \ref{asmp: merging detection} and $\inf\limits_{k \in \mach}\lambda_{\min}\big(\te[\int_0^1\psi''((1-t)(\bxk{0})^T\bbeta + t(\bxk{0})^T\bbetak{k}) \allowbreak dt\cdot \bxk{0}(\bxk{0})^T]\big) \coloneqq \underline{\lambda} > 0$ hold. Also assume $\sup_{k \in \mach} \infnorm{\bbetak{k}} < \infty$, $\sup_k \infnorm{\bwk{k}} < \infty$. Let $\lambda^{(k)[r]} = C\left(\sqrt{\frac{\log p}{n_k+n_0}} + h\right)$ when $k \in \mah$, $\lambda^{(k)[r]} = C\sqrt{\frac{\log p}{n_k+n_0}}\cdot (1 \vee \twonorm{\bbetak{k}-\bbeta} \vee \twonorm{\bwk{k}-\bbeta})$ when $k \in \mach$ and $\lambdak{0} = C\sqrt{\frac{\log p}{n_0}}$ for some sufficiently large constant $C > 0$. Then we have the following sufficient conditions to make Assumption \ref{asmp: idenfity a} hold for logistic, linear and Poisson regression models. Denote $\Omega = \sqrt{h'}\allowbreak \left(\frac{\log p}{\min_{k \in \mah}n_k + n_0}\right)^{1/4} + \left(\frac{s'\log p}{\min_{k \in \mah}n_k + n_0}\right)^{1/4}[(s\vee s')^{1/4}+\sqrt{h'}] + \left(\frac{\log p}{\min_{k \in \mah}n_k + n_0}\right)^{1/8} (h')^{1/4}[(s\vee s')^{1/8}+(h')^{1/4}]$.
	\begin{enumerate}[(i)]
		\item For logistic regression models, we require 
			\begin{align}
				\inf_{k \in \mah}n_k &\gg s\log p, \quad n_0 \gg \left\{[s\vee s' + (h')^2]\vee \Omega^2\right\}\cdot \log K, \\
				\inf_{k \in \mach}\twonorm{\bbetak{k}-\bbeta} &\gtrsim \left(\frac{s\log p}{n_0}\right)^{1/4} \vee 1 + \Omega, \quad h \ll s^{-1/2}.
			\end{align} 
		\item For linear models, we require 
			\begin{align}
				\inf_{k \in \mah}n_k &\gg s^2\log p, \quad n_0 \gg \left\{[(s\vee s')^2 + (h')^4]\vee [(s\vee s' + (h')^2)\Omega^2]\right\}\cdot \log K, \\
				\inf_{k \in \mach}\twonorm{\bbetak{k}-\bbeta} &\gtrsim \left(\frac{s^2\log p}{n_0}\right)^{1/4} \vee 1 + \left[(s')^{1/4} + \sqrt{h'}\right]\Omega, \quad h \ll s^{-1}.
			\end{align}
		\item For Poisson regression models, we require 
			\begin{align}
				\inf_{k \in \mah}n_k \gg s^2\log p, \quad n_0 &\gg \left[(s\vee s' + h')\vee \Omega^2\right]\cdot \log K, \quad U(s \vee s'+h \vee h') \lesssim 1,\\
				\inf_{k \in \mach}\twonorm{\bbetak{k}-\bbeta} &\gtrsim \left(\frac{s\log p}{n_0}\right)^{1/4} \vee 1 + \left[(s')^{1/4} + \sqrt{h'}\right]\Omega, \quad h \ll s^{-1}.
			\end{align}
	\end{enumerate}
\end{corollary}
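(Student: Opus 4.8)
The plan is to derive, for each of the three model families, explicit expressions for the generic quantities $\Upsilon_1^{(k)},\Gamma_1^{(k)},\Gamma_2^{(k)},g_1^{(k)},g_2^{(k)}$ appearing in Assumption \ref{asmp: idenfity a}, and then to check that the displayed sample-size, heterogeneity-gap, and $h$-conditions force \eqref{eq: gap ac l0} and \eqref{eq: error term dominated}; this is exactly what is packaged in Proposition \ref{prop: gamma_1 gamma_2}. Concretely, writing $\hbeta^{(k)[r]}$ for the transferring-step (pooled $\ell_1$-penalized GLM) estimator built from two folds of target data and source $k$, and $\bbetak{k}$ for its population limit, I would first Taylor-expand the empirical loss \eqref{eq: l0 hat def} around $\bbetak{k}$,
\[
\hat{L}_0^{[r]}(\hbeta^{(k)[r]}) - \hat{L}_0^{[r]}(\bbetak{k}) = \langle \nabla\hat{L}_0^{[r]}(\bbetak{k}),\, \hbeta^{(k)[r]}-\bbetak{k}\rangle + \tfrac12 (\hbeta^{(k)[r]}-\bbetak{k})^T \nabla^2\hat{L}_0^{[r]}(\bar{\bw})(\hbeta^{(k)[r]}-\bbetak{k}),
\]
bound the linear term by $\infnorm{\nabla\hat{L}_0^{[r]}(\bbetak{k})}\cdot\onenorm{\hbeta^{(k)[r]}-\bbetak{k}}$, and bound the quadratic term by the size of $\psi''$ on the relevant neighbourhood times $\onenorm{\hbeta^{(k)[r]}-\bbetak{k}}^2$ (or times $\twonorm{\cdot}^2$ when a restricted eigenvalue is available). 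Assumption \ref{asmp: second derivative} is used to control $\psi''$: trivially for the linear model ($\psi''\equiv 1$) and the logistic model ($\psi''\le 1/4$), and for the Poisson model ($\psi''(u)=e^u$) only after imposing $U(s\vee s'+h\vee h')\lesssim1$, which keeps the arguments of $\psi''$ bounded. The gradient $\nabla\hat{L}_0^{[r]}(\bbetak{k})$ concentrates around $\nabla L_0(\bbetak{k})$ at rate $\sqrt{\log p/n_0}$ via the sub-Gaussian design (Assumption \ref{asmp: x}) and $\infnorm{\bbetak{k}}<\infty$; note $\nabla L_0(\bbetak{k})$ is the gradient of the \emph{target} population loss at the \emph{pooled} minimizer, hence generically nonzero and of order $\mathcal O(h)$ when $k\in\mah$. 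This produces $\Gamma_1^{(k)},\Gamma_2^{(k)}$ and the tail functions $g_1^{(k)},g_2^{(k)}$.

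The second ingredient is the estimation error of $\hbeta^{(k)[r]}$. For $k\in\mah$ the contrast $\bbeta-\bwk{k}$ is $\ell_1$-small, so the transferring-step analysis inside the proof of Theorem \ref{thm: l2 without assumption} gives $\onenorm{\hbeta^{(k)[r]}-\bbetak{k}}\lesssim s\sqrt{\log p/(n_k+n_0)}+sh$ and the matching $\ell_2$ bound. For $k\in\mach$ the population target $\bbetak{k}$ is generically dense, and this is precisely where Assumption \ref{asmp: merging detection} enters: decomposing $\bwk{k}$ and $\bbetak{k}$ into an $s'$-sparse part plus an $\ell_1$-small ($\le h'$) part lets the usual cone/compatibility argument run for approximately sparse targets, and the resulting pooled-Lasso error is the mixed $s'$–$h'$–$\log p$ rate encoded in $\Omega$. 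Feeding these bounds into the expansion above yields the explicit $\Upsilon_1^{(k)}$; for $k\in\mach$ the linear term contributes the extra $(s')^{1/4}+\sqrt{h'}$ factor multiplying $\Omega$ through the $\infnorm{\nabla\hat{L}}\times\onenorm{\cdot}$ product. A union bound over $r\in\{1,2,3\}$ and $k\in\{1,\dots,K\}$ turns $g_1^{(k)},g_2^{(k)}$ into an aggregate failure probability, which is the source of the $\log K$ factor in the $n_0$-requirements.

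With these explicit rates in hand, \eqref{eq: error term dominated} reduces: $\Upsilon_1^{(k)}+\Gamma_1^{(k)}+\Gamma_2^{(k)}+h^2=\smallo(1)$ for $k\in\mah$ becomes the stated lower bounds on $\inf_{k\in\mah}n_k$ and $n_0$ and the upper bound on $h$ ($h\ll s^{-1/2}$ for logistic, $h\ll s^{-1}$ for linear and Poisson), while $\Gamma_1^{(k)},\Gamma_2^{(k)}=\smallo(1)$ for $k\in\mach$ requires $\Omega^2\log K=\smallo(n_0)$ and the like. For \eqref{eq: gap ac l0} I would use $\underline{\lambda}>0$ together with a second-order expansion of $L_0$ (remainder the integrated-Hessian form in the assumption) to convert $L_0(\bbetak{k})-L_0(\bbeta)$ into $\asymp \underline{\lambda}\,\twonorm{\bbetak{k}-\bbeta}^2$; then, since $\Gamma_1^{(0)},\Gamma_2^{(0)}$ scale with the target-only Lasso rate ($s\log p/n_0$ for logistic, $s^2\log p/n_0$ for linear and Poisson, the stronger rate being forced by the weaker restricted eigenvalue together with the cross-term bookkeeping) and $\Upsilon_1^{(k)}$ for $k\in\mach$ scales with $[(s')^{1/4}+\sqrt{h'}]\,\Omega$, the inequality \eqref{eq: gap ac l0} becomes exactly the displayed signal-strength condition $\inf_{k\in\mach}\twonorm{\bbetak{k}-\bbeta}\gtrsim(s\log p/n_0)^{1/4}\vee1+[(s')^{1/4}+\sqrt{h'}]\,\Omega$ (with $s^2$ in place of $s$ for the linear and Poisson cases).

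The main obstacle I anticipate is Step 2 for $k\in\mach$: obtaining a sufficiently sharp estimation-error bound for the pooled Lasso when the estimation target $\bbetak{k}$ is only weakly sparse, and then propagating that bound — the quantity $\Omega$, with its several interacting $s'$-, $h'$- and $\log p$-pieces — through the loss-difference expansion so that the final gap requirement \eqref{eq: gap ac l0} stays as mild as possible. The per-model discrepancies (the $s^2\log p$ versus $s\log p$ thresholds, and the $U(s\vee s'+h\vee h')\lesssim1$ clause needed so that $\psi''(u)=e^u$ stays bounded for Poisson) come precisely from whether Assumption \ref{asmp: second derivative}.(i) or (ii) applies and from the strength of the available restricted-eigenvalue constant; reconciling the bookkeeping across all three models simultaneously is the bulk of the work, even though each individual estimate is routine.
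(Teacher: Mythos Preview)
Your overall strategy is correct and matches the paper's route: the corollary is obtained by combining the explicit rates of Proposition~\ref{prop: gamma_1 gamma_2} (which supplies $\Upsilon_1^{(k)},\Gamma_1^{(k)},\Gamma_2^{(k)},g_1^{(k)},g_2^{(k)}$ for each family) with the transferring-step estimation bounds, and then reading off when \eqref{eq: gap ac l0}--\eqref{eq: error term dominated} hold. The paper does not spell out the final bookkeeping for the corollary, so your plan of plugging the rates into Assumption~\ref{asmp: idenfity a} is exactly what is intended.

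There is, however, one technical difference worth flagging. You propose to control $\hat L_0^{[r]}(\hbeta^{(k)[r]})-\hat L_0^{[r]}(\bbetak{k})$ by a Taylor expansion followed by H\"older, i.e.\ $\infnorm{\nabla\hat L_0^{[r]}(\bbetak{k})}\cdot\onenorm{\hbeta^{(k)[r]}-\bbetak{k}}$. The paper instead (Lemmas~\ref{lem: gamma_1 rate} and~\ref{lem: gamma_2 rate}) exploits the cross-validation independence directly: since $\hbeta^{(k)[r]}$ is built from folds other than $r$, conditional on it the inner products $(\bx_i^{(0)[r]})^T(\hbeta^{(k)[r]}-\bbetak{k})$ are i.i.d.\ sub-Gaussian with parameter $\kappa_u\twonorm{\hbeta^{(k)[r]}-\bbetak{k}}^2$ by Assumption~\ref{asmp: x}. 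Averaging then gives a bound of the form $(1+\zeta/\sqrt{n_0})\twonorm{\hbeta^{(k)[r]}-\bbetak{k}}$ directly in the $\ell_2$-norm, without ever passing through $\onenorm{\cdot}$. This is sharper than your H\"older route, and for $k\in\mach$---where the error $\hbeta^{(k)[r]}-\bbetak{k}$ need not lie in a Lasso cone---the difference is not just cosmetic: an $\ell_1$-based bound would inflate $\Upsilon_1^{(k)}$ and hence the gap requirement on $\twonorm{\bbetak{k}-\bbeta}$. If you pursue your Taylor route you will likely need to swap the H\"older step for this conditional sub-Gaussian argument to recover the stated $\Omega$.

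Two minor points. First, your claim that the Poisson gap threshold involves $s^2$ is off: in the corollary the Poisson case has $(s\log p/n_0)^{1/4}$, like logistic; only the linear model carries the $s^2$, and that comes from the extra $\twonorm{\bbeta}\lesssim\sqrt{s}$ factor multiplying $\Gamma_1^{(0)}$ in Proposition~\ref{prop: gamma_1 gamma_2}(ii), not from a restricted-eigenvalue argument. Second, the $\log K$ factor arises not from a full union bound over $k$ inside the sub-Gaussian tail, but from requiring $K\cdot g_j^{(k)}(\zeta)\to 0$ with $g_j^{(k)}(\zeta)=e^{-\zeta^2}$, which forces $\zeta\asymp\sqrt{\log K}$ and then propagates into the $n_0$ requirement through $\zeta\Gamma_j^{(k)}=\smallo(1)$.
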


Under Assumptions \ref{asmp: convexity}, \ref{asmp: x}, and the sufficient conditions derived  in Corollary \ref{cor: explicit asmp}, by Theorem \ref{thm: a detection}, we can conclude that $\widehat{\ma} = \mah$ for some $h$. Note that we don't impose Assumption \ref{asmp: merging} here. Remark \ref{rmk: h beneficial regime} indicates that, for $\mah$-Trans-GLM to have a faster convergence rate than Lasso on target data, we need $h \ll \sqrt{\frac{\log p}{n_0}}$ and $\nah \gg n_0$. Suppose $s' \asymp s$, $h' \lesssim s^{1/2}$. Then for logistic regression models, when $s\log K \ll n_0 \ll s\log p$, the sufficient condition implies $h \ll s^{-1/2} \ll \sqrt{\frac{\log p}{n_0}}$. For linear models, when $s^2\log K \ll n_0 \ll s^2\log p$, $h \ll s^{-1} \ll \sqrt{\frac{\log p}{n_0}}$. And for Poisson models, when $s\log K \ll n_0 \ll s^2\log p$, $h \ll s^{-1} \ll \sqrt{\frac{\log p}{n_0}}$. This implies that when target sample size $n_0$ is within certain regimes and there are many more source data points than target data points, Trans-GLM can lead to a better $\ell_2$-estimation error bound than the classical Lasso on target data.

\subsection{Theory on confidence interval construction}\label{subsec:inf theory}
In this section, we will establish the theory for our confidence interval construction procedure described in Algorithm \ref{algo: inf}. First, we would like to review and introduce some notations. In Section \ref{subsec:inf}, we defined $\bSigmak{k}_{\bbeta} = \te[\bxk{k}(\bxk{k})^T\psi''((\bxk{k})^T\bbeta)]$. Let $\bTheta = (\bSigmak{0}_{\bbeta})^{-1}$ and $\Kah  = \norm{\mah}$. Define
\begin{equation}
	\bgammak{k}_j = \argmin_{\bgamma \in \mathbb{R}^{p-1}} \te\left\{\psi''(\bbeta^T\bxk{k})\cdot [\bxk{k}_{j} - (\bxk{k}_{-j})^T\bgamma]^2 \right\} = (\bSigmak{k}_{\bbeta, -j, -j})^{-1}\bSigmak{k}_{\bbeta, -j, j},
\end{equation}
which is closely related to $(\bSigmak{k}_{\bbeta})^{-1}$ and $\bgammak{0}_j$ can be viewed as the population version of $\hbgammak{0}_j$. And $\bSigmak{k}_{\bbeta, j, -j}$ represents the $j$-th row without the $(j,j)$ diagonal element of $\bSigmak{k}_{\bbeta}$. $\bSigmak{k}_{\bbeta, -j, -j}$ denotes the submatrix of $\bSigmak{k}_{\bbeta}$ without the $j$-th row and $j$-th column.
Suppose 
\begin{align}
	\sup_{k \in \mah, j = 1:p}\onenorm{(\bSigmak{0}_{\bbeta, -j, -j})^{-1}\bSigmak{0}_{\bbeta, -j, j} - (\bSigmak{k}_{\bbeta, -j, -j})^{-1}\bSigmak{k}_{\bbeta, -j, j}} &\leq h_1, \\
	\sup_{k \in \mah, j = 1:p}\left[\norm{\bSigmak{k}_{\bbeta,j,j} - \bSigmak{0}_{\bbeta,j,j}}\vee \norm{(\bSigmak{k}_{\bbeta, j, -j} - \bSigmak{0}_{\bbeta, j, -j})\bgammak{0}_j}\right] &\leq h_{\max}.
\end{align}
Then by the definition of $\bgammak{k}_j$, 
\begin{equation}
	\sup_{k \in \mah, j = 1:p} \onenorm{\bgammak{k}_j-\bgammak{0}_j} \lesssim  h_1,
\end{equation}
which is similar to our previous setting $\sup_{k \in \mah}\onenorm{\bwk{k}-\bbeta} \leq h$. This motivates us to apply a similar two-step transfer learning procedure (steps 2-4 in Algorithm \ref{algo: inf}) to learn $\bgammak{0}_j$ for $j = 1, \ldots, p$. We impose the following set of conditions.
\begin{assumption}\label{asmp: inf}
	Suppose the following conditions hold:
	\begin{enumerate}[(i)]
		\item $\sup_{k \in \transet}\infnorm{\bxk{k}} \leq U < \infty$, $\sup_{k \in \transet}\norm{(\bxk{k})^T\bwk{k}} \leq U' < \infty$ a.s.;
		\item $\sup_j\zeronorm{\bgammak{0}_j}/s < \infty$, $\sup_{j \in 1:p, k \in \transet} |(\bxk{k})^T\bgammak{0}_j| \leq U'' < \infty$ a.s.;
		\item $\inf_{k \in \transet}\lambda_{\min}(\bSigmak{k}_{\bwk{k}}) \geq \underline{U} > 0$;
		\item $\sup\limits_{k \in \transet}\sup\limits_{|z|\leq \bar{U}} \psi'''((\bxk{k})^T\bwk{k} + z) \leq M_{\psi}< \infty$ a.s.
		\item $\sup_{k \in \transet}\onenorm{(\bm{\Sigma}_{\bbeta,-j,-j}^{\mah})^{-1}\bm{\Sigma}^{(k)}_{\bbeta, -j, -j}} < \infty$, where $\bm{\Sigma}^{\mah}_{\bbeta} = \sum_{k \in \transet}\alpha_k \bm{\Sigma}^{(k)}_{\bbeta}$;
		\item $\min_{k \in \mah}n_k \gtrsim n_0$, $n_0 \gg \frac{s^3(\log p)^2}{\Kah^2} \vee \Kah$, $\nah+n_0 \gg s^2\log p$;
		\item $h_1 \lesssim s^{-1/2} \wedge \left[\sqrt{\frac{n_0}{\log p}}\left(\frac{\sqrt{\Kah}}{s} \wedge 1\right)\right]$, $h_1 \vee h \ll \frac{\Kah n_0^{1/2}}{s^2(\log p)^{3/2}} \wedge \frac{n_0^{1/4}}{s^{1/2}(\log p)^{1/4}}$, $hh_1^{1/2} \ll n_0^{-1/4}(\log p)^{-1/4}\left(\frac{\Kah}{s} \wedge 1\right)$, $h^{5/2}h_1 \ll n_0^{-3/4}(s\log p)^{-1/4}$, $h_1 \ll \frac{\Kah^{1/2}n_0^{1/2}}{s^{3/2}(\log p)^{1/2}} \wedge \frac{\Kah^{3/2}n_0^{1/2}}{s^{5/2}(\log p)^{3/2}}$, $h_1h^{1/2} \ll \frac{n_0^{1/4}}{s(\log p)^{1/4}} \wedge \frac{\Kah n_0^{1/4}}{s^2(\log p)^{5/4}}$, $h \ll \frac{\Kah^{1/2}}{(s\log p)^{1/2}} \wedge \frac{1}{n_0^{1/4}(\log p)^{1/2}}$, $h_{\max} \ll s^{-1/2} \wedge \left(\frac{1}{s}\sqrt{\frac{\Kah}{\log p}}\right)$, $hh_{\max} \ll n_0^{-1/2}$.
	\end{enumerate}
\end{assumption}

\begin{remark}
	Conditions (\rom{1})-(\rom{3}) are motivated from conditions of Theorem 3.3 in \cite{van2014asymptotically}. Note that in \cite{van2014asymptotically}, they define $s_j = \zeronorm{\bgammak{0}_j}$ and treat $s_j$ and $s$ as two different parameters. Here we require $\sup_j s_j \lesssim s$ just for simplicity (otherwise condition (\rom{7}) would be more complicated). Condition (\rom{4}) requires the inverse link function to behave well, which is similar to Assumption \ref{asmp: second derivative}. Condition (\rom{5}) is similar to Assumption \ref{asmp: merging} to guarantee the success of the two-step transfer learning procedure to learn $\bgammak{0}$ in Algorithm \ref{algo: inf} with a fast rate. Without condition (\rom{5}), the conclusions in the following Theorem \ref{thm: inf} may still hold but under a stronger condition on $h$, $h_1$ and $h_{\max}$, and the rate \eqref{eq: inf var est bound} may be worse. We do not explore the details in this paper and leave them to interested readers. Conditions (\rom{6}) and (\rom{7}) require that the sample size is sufficiently large and the distance between target and source is not too large. In condition (\rom{6}), $\min_{k \in \mah}n_k \gtrsim n_0$ is not necessary and the only reason we add it here is to simplify condition (\rom{7}). 
\end{remark}

\begin{remark}
	When $\bxk{k}$'s are from the same distribution, $h_1 = h_{\max} = 0$. In this case, we can drop the debiasing step to estimate $\hbgammak{0}_j$ in Algorithm \ref{algo: inf} as well as condition (\rom{5}). Furthermore, condition (\rom{7}) can be significantly simplified and only $h \ll \frac{\Kah n_0^{1/2}}{s^2(\log p)^{3/2}} \wedge \frac{n_0^{1/4}}{s^{1/2}(\log p)^{1/4}} \wedge \frac{\Kah^{1/2}}{(s\log p)^{1/2}} \wedge \frac{1}{n_0^{1/4}(\log p)^{1/2}}$ is needed.  
\end{remark}

\begin{remark}
	From conditions (\rom{6}) and (\rom{7}), we can see that as long as $\Kah \lesssim s (\log p)^{2/3}$, the conditions become milder as $\Kah$ increases.
\end{remark}

Now, we are ready to present our main result for Algorithm \ref{algo: inf}.
\begin{theorem}\label{thm: inf}
	Under Assumptions \ref{asmp: convexity}-\ref{asmp: merging} and Assumption \ref{asmp: inf},
	\begin{equation}\label{eq: inf clt}
		\frac{\sqrt{n_0}(\hb_j - \beta_j)}{\sqrt{\hTheta_j^T\hSigma_{\hbeta}\hTheta_j}} \xrightarrow{d} \mathcal{N}(0, 1),
	\end{equation}
	and 
	\begin{align}
		\norm{\hTheta_j^T\hSigma_{\hbeta}\hTheta_j - \bTheta_{jj}} &\lesssim s\sqrt{\frac{\log p}{\nah+n_0}} + \sqrt{s}\left[h^{1/2}\left(\frac{\log p}{n_0}\right)^{1/4}\wedge h\right] + (sh_1)^{1/2}\left(\frac{\log p}{n_0}\right)^{1/4} \\
		&\quad + (sh_1)^{1/2}\left[\left(\frac{s\log p}{\nah+n_0}\right)^{1/4} + \left(h^{1/4}\left(\frac{\log p}{n_0}\right)^{1/8}\right) \wedge h^{1/2}\right] + \sqrt{s}h_{\max}, \label{eq: inf var est bound}
	\end{align}
	for $j = 1, \ldots, p$, with probability at least $1-\Kah n_0^{-1}$.
\end{theorem}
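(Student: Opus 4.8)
The plan is to adapt the desparsified-Lasso argument of \cite{van2014asymptotically}, the new feature being that two layers of transfer-learning error — in $\hbeta$ (Algorithm \ref{algo: merging}) and in the nodewise estimates $\hbgammak{0}_j$ of steps 2--4 of Algorithm \ref{algo: inf} — must be propagated through the bias correction. Writing $\bm{\varepsilon}^{(0)} = \by^{(0)} - \bpsi'(\bXk{0}\bbeta)$ and expanding $\psi'$ coordinate-wise by the mean value theorem inside \eqref{eq: b hat}, one gets
\begin{equation*}
\sqrt{n_0}(\hb_j - \beta_j) = \frac{1}{\sqrt{n_0}}\hTheta_j^T(\bXk{0})^T\bm{\varepsilon}^{(0)} + \sqrt{n_0}\,\bigl(\bm{e}_j - \hTheta_j^T\widehat{\bm{\Sigma}}^{(0)}_{\ast}\bigr)^T(\hbeta - \bbeta) + R_j ,
\end{equation*}
where $\bm{e}_j$ is the $j$-th canonical vector, $\widehat{\bm{\Sigma}}^{(0)}_{\ast}$ is a $\psi''$-weighted sample second-moment matrix of the target with the weights at coordinate-wise intermediate points between $\bbeta$ and $\hbeta$, and $R_j$ gathers the second-order remainder of the expansion. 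In the leading term I would first replace $\hTheta_j$ by its deterministic limit $\bTheta_j$ (the $j$-th column of $\bTheta = (\bSigmak{0}_{\bbeta})^{-1}$), the error being $\onenorm{\hTheta_j-\bTheta_j}\cdot\infnorm{n_0^{-1/2}(\bXk{0})^T\bm{\varepsilon}^{(0)}} = \smallo_p(1)$ once the nodewise analysis below yields $\onenorm{\hTheta_j-\bTheta_j} = \smallo_p(1/\sqrt{\log p})$; the remaining $n_0^{-1/2}\bTheta_j^T(\bXk{0})^T\bm{\varepsilon}^{(0)}$ is, conditionally on $\{\bxk{0}_i\}$, a sum of independent mean-zero terms whose conditional variance is $\bTheta_j^T\widehat{\bm{\Sigma}}^{(0)}_{\bbeta}\bTheta_j \to \bTheta_{jj}$ by standard concentration of the weighted sample covariance, so a Lyapunov CLT applies via sub-Gaussianity (Assumption \ref{asmp: x}) and Assumption \ref{asmp: inf}(iii)--(iv).

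The middle term and $R_j$ must be shown to be $\smallo_p(1)$. I would split $\bm{e}_j - \hTheta_j^T\widehat{\bm{\Sigma}}^{(0)}_{\ast} = (\bm{e}_j - \hTheta_j^T\hSigma_{\hbeta}) + \hTheta_j^T(\hSigma_{\hbeta} - \widehat{\bm{\Sigma}}^{(0)}_{\ast})$. The first piece is $\mathcal{O}(\lambda_j+\tlambda_j)$ in $\infnorm{\cdot}$ by the KKT conditions of the two nodewise optimizations together with the definition of $\htau_j^2$ (the usual desparsification slack, with $\lambda_j,\tlambda_j$ chosen in analogy with $\lambda_{\bm{w}},\lambda_{\bm{\delta}}$). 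The second piece is controlled in $\maxnorm{\cdot}$ by the Lipschitz-in-$\bm{w}$ bound on $\psi''$ (Assumption \ref{asmp: inf}(i)--(ii),(iv)) times $\onenorm{\hbeta-\bbeta}$, plus a pooled-versus-target discrepancy $\maxnorm{\hSigma_{\hbeta} - \widehat{\bm{\Sigma}}^{(0)}_{\hbeta}}$ of order $h+h_1+h_{\max}+\sqrt{\log p/n_0}$. Pairing these with $\onenorm{\hbeta-\bbeta} = \mathcal{O}_p\bigl(s\sqrt{\log p/(\nah+n_0)}+h\bigr)$ from Theorem \ref{thm: l2 with assumption} and using H\"older reduces the middle term to a sum of products of error rates; multiplied by $\sqrt{n_0}$, each such product vanishes by one of the inequalities of Assumption \ref{asmp: inf}(vi)--(vii). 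Similarly $R_j$ is bounded by $\onenorm{\hTheta_j}\,\twonorm{\hbeta-\bbeta}^2$ times the $\psi'''$-bound of Assumption \ref{asmp: inf}(iv), and $\sqrt{n_0}R_j = \smallo_p(1)$ again under (vi)--(vii).

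The nodewise step needs its own analysis. Since $\bgammak{0}_j = (\bSigmak{0}_{\bbeta,-j,-j})^{-1}\bSigmak{0}_{\bbeta,-j,j}$ and the assumed $\sup_{k\in\mah,j}\onenorm{\bgammak{k}_j-\bgammak{0}_j}\lesssim h_1$ play exactly the roles of $\bbeta$ and $\sup_k\onenorm{\bwk{k}-\bbeta}\le h$ in Algorithm \ref{algo: merging}, I would re-run the proof of Theorem \ref{thm: l2 with assumption} with $(\bgammak{0}_j,\{\bgammak{k}_j\}_k,h_1)$ in place of $(\bbeta,\{\bwk{k}\}_k,h)$, using Assumption \ref{asmp: inf}(iii),(v) in place of Assumptions \ref{asmp: x},\ref{asmp: merging}. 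The one new ingredient is that the design is $\bXk{k}_{\hbeta}$, not $\bXk{k}_{\bbeta}$; the induced perturbation of the weighted Gram matrices is absorbed through Assumption \ref{asmp: inf}(iv) and $\twonorm{\hbeta-\bbeta}$, producing the $\sqrt{s}\,[h^{1/2}(\log p/n_0)^{1/4}\wedge h]$-type terms in \eqref{eq: inf var est bound}. This gives $\onenorm{\hbgammak{0}_j-\bgammak{0}_j}\lesssim s\sqrt{\log p/(\nah+n_0)}+h_1$ (up to lower-order $\hbeta$-inherited terms) and the matching $\ell_2$-rate, hence the $\smallo_p(1/\sqrt{\log p})$ bound used above. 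For the variance estimator, $\bTheta_{jj}=1/\tau_j^2$ with $\tau_j^2=\bSigmak{0}_{\bbeta,j,j}-\bSigmak{0}_{\bbeta,j,-j}\bgammak{0}_j$ while $\hTheta_j^T\hSigma_{\hbeta}\hTheta_j=1/\htau_j^2$ up to the KKT slack; expanding $\htau_j^2-\tau_j^2$ into the cost of replacing $\hSigma_{\hbeta}$ by $\bSigmak{0}_{\bbeta}$ (the $h,h_1,h_{\max}$ and $\sqrt{s\log p/(\nah+n_0)}$ terms) and of replacing $\hbgammak{0}_j$ by $\bgammak{0}_j$ (the $(sh_1)^{1/2}(\log p/n_0)^{1/4}$ term and its products with the $\hbgammak{0}_j$-rate) yields \eqref{eq: inf var est bound}, which tends to $0$ under (vi)--(vii); Slutsky then upgrades the CLT for $n_0^{-1/2}\bTheta_j^T(\bXk{0})^T\bm{\varepsilon}^{(0)}/\sqrt{\bTheta_{jj}}$ to \eqref{eq: inf clt}.

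The hard part is the bookkeeping in the middle two paragraphs: because $\hbeta$ enters the $\psi''$-weights of the nodewise design while both $\hbeta$ and $\hbgammak{0}_j$ enter the bias correction, every remainder is a product of several rates ($s\sqrt{\log p/(\nah+n_0)}$, $h$, $h_1$, $h_{\max}$, $(\log p/n_0)^{1/4}h^{1/2}$, $(sh_1)^{1/2}(\log p/n_0)^{1/4}$, \ldots), and one must check that each product is $\smallo(n_0^{-1/2})$ for the CLT and $\smallo(1)$ for the variance bound — this is exactly what the many inequalities of Assumption \ref{asmp: inf}(vii) encode, and matching each product to its line is where nearly all the effort goes. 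A smaller but real subtlety is that the CLT's natural normalization is the target-only weighted covariance $\widehat{\bm{\Sigma}}^{(0)}_{\bbeta}$, whereas the stated interval uses the pooled $\hSigma_{\hbeta}$, forcing the extra pooled-versus-target comparison that is again controlled by $h$, $h_1$, $h_{\max}$.
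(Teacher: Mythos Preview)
Your high-level plan (three-term decomposition, Lindeberg CLT for the leading term, re-running the two-step transfer argument of Theorem \ref{thm: l2 with assumption} for the nodewise quantities) matches the paper. The handling of the score term is in fact slightly cleaner than the paper's: you explicitly split off $\bTheta_j$ and control $\onenorm{\hTheta_j-\bTheta_j}\cdot\infnorm{n_0^{-1/2}(\bXk{0})^T\bm{\varepsilon}^{(0)}}$, whereas the paper invokes Lindeberg somewhat informally with the random $\hTheta_j$ still inside.

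Where your route departs from the paper, and where it becomes problematic, is the bias term. You propose the split $\bm{e}_j-\hTheta_j^T\widehat{\bm{\Sigma}}^{(0)}_{\ast}=(\bm{e}_j-\hTheta_j^T\hSigma_{\hbeta})+\hTheta_j^T(\hSigma_{\hbeta}-\widehat{\bm{\Sigma}}^{(0)}_{\ast})$ and claim the first piece is $\mathcal{O}(\lambda_j+\tlambda_j)$ by KKT. That identity fails here: the transferring-step KKT is with respect to the \emph{pooled} $\hSigma_{\hbeta}$ and yields control of $\hSigma_{\hbeta,-j,j}-\hSigma_{\hbeta,-j,-j}\hgammaa_j$, while the debiasing-step KKT is with respect to the \emph{target-only} $\widehat{\bm{\Sigma}}^{(0)}_{\hbeta}$ and involves $\hbgammak{0}_j=\hgammaa_j+\hrho_j$; since $\htau_j^2$ is defined through the pooled matrix and $\hbgammak{0}_j$, the residual $\hSigma_{\hbeta,-j,-j}\hrho_j$ is not killed by either KKT and survives. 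Your fallback for the second piece, bounding $\maxnorm{\hSigma_{\hbeta}-\widehat{\bm{\Sigma}}^{(0)}_{\hbeta}}$ by ``$h+h_1+h_{\max}+\sqrt{\log p/n_0}$'', is also not available: $h_{\max}$ only controls the diagonal of $\bSigmak{k}_{\bbeta}-\bSigmak{0}_{\bbeta}$ and the contractions $(\bSigmak{k}_{\bbeta,j,-j}-\bSigmak{0}_{\bbeta,j,-j})\bgammak{0}_j$, not the full max-norm, and neither $h$ nor $h_1$ says anything about entrywise closeness of the weighted second-moment matrices across sources.

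The paper avoids both issues by expanding $\bpsi'$ around $\bbeta$ (not at intermediate points), so the bias term reads $(\hSigmak{0}_{\bbeta}\hTheta_j-\bm{e}_j)^T(\hbeta-\bbeta)$ with the \emph{target-only} matrix at the \emph{true} $\bbeta$, and then uses the exact population identity $\bSigmak{0}_{\bbeta}\bTheta_j=\bm{e}_j$ to rewrite it as $(\hSigmak{0}_{\bbeta}\hTheta_j-\bSigmak{0}_{\bbeta}\bTheta_j)^T(\hbeta-\bbeta)$. This is bounded directly through $\twonorm{\hbgammak{0}_j-\bgammak{0}_j}$ and $|\htau_j^{-2}-\tau_j^{-2}|$, whose rates come from two dedicated lemmas (the linear-regression analogues of the transferring and debiasing steps) collected in a proposition; no KKT slack and no pooled-versus-target max-norm comparison is needed for the CLT part. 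The pooled matrix $\hSigma_{\hbeta}$ enters only in the variance statement \eqref{eq: inf var est bound}, where the specific contractions that $h_{\max}$ actually controls suffice because one always hits $\bSigmak{k}_{\bbeta}-\bSigmak{0}_{\bbeta}$ against the sparse vector $(\bgammak{0}_j)^{\dagger}/\tau_j^2$.
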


Theorem \ref{thm: inf} guarantees that under certain conditions, the $(1-\alpha)$-confidence interval for each coefficient component obtained in Algorithm \ref{algo: inf} has approximately level $(1-\alpha)$ when the sample size is large. Also, if we compare the rate of \eqref{eq: inf var est bound} with the rate $\mathcal{O}_p(s\sqrt{\log p/n_0})$ in \cite{van2014asymptotically} (see the proof of Theorem 3.1), we can see that when $h \ll s\sqrt{\frac{\log p}{n_0}}$, $h_1 \ll \sqrt{\frac{s\log p}{n_0}} \cdot \left[s^{1/2}\wedge \left(\frac{\nah+n_0}{n_0}\right)^{1/4}\right]$, $h_1^{1/2}h^{1/4} \ll s^{1/2}\left(\frac{\log p}{n_0}\right)^{3/8}$ and $h_{\max} \ll \sqrt{\frac{s\log p}{n_0}}$, the rate is better than that of desparsified Lasso using only target data.

\section{Numerical Experiments}\label{sec:numerical}
In this section, we demonstrate the power of our GLM transfer learning algorithms via extensive simulation studies and a real-data application. In the simulation part, we study the performance of different methods under various settings of $h$. The methods include \textit{Trans-GLM} (Algorithm \ref{algo: a unknown}), \textit{na\"ive-Lasso} (Lasso on target data), \textit{$\mah$-Trans-GLM} (Algorithm \ref{algo: merging} with $\ma = \mah$) and \textit{Pooled-Trans-GLM} (Algorithm \ref{algo: merging} with all sources). In the real-data study, besides na\"ive-Lasso, Pooled-Trans-GLM, and Trans-GLM, additional methods are explored for comparison, including support vector machines (SVM), decision trees (Tree), random forests (RF) and Adaboost algorithm with trees (Boosting). We run these benchmark methods twice. First, we fit the models on only the target data, then at the second time, we fit them a combined data of target and all sources, which is called a pooled version. We use the original method name to denote the corresponding method implemented on target data, and add a prefix ``Pooled" to denote the corresponding method implemented on target and all source data. For example, Pooled-SVM represents SVM fitted on all data from target and sources.

All experiments are conducted in R. We implement our GLM transfer learning algorithms in a new R package \texttt{glmtrans}, which is available on CRAN. More implementation details can be found in Section \ref{subsubsec: implementation supp} in the supplements.
%
\subsection{Simulations}\label{subsec:simulations}

\subsubsection{Transfer learning on $\mah$}\label{subsubsec: oracle}
In this section, we study the performance of $\mah$-Trans-GLM and compare it with that of na\"ive-Lasso. The purpose of the simulation is to verify that $\mah$-Trans-GLM can outperform na\"ive-Lasso in terms of the target coefficient estimation error, when $h$ is not too large.

Consider the simulation setting as follows. We set the target sample size $n_0 = 200$ and source sample sample size $n_k = 100$ for each $k \neq 0$. The dimension $p = 500$ for both target and source data. For the target, the coefficient is set to be $\bbeta = (0.5\cdot\bm{1}_{s}, \bm{0}_{p-s})^T$, where $\bm{1}_{s}$ has all $s$ elements 1 and $\bm{0}_{p-s}$ has all $(p-s)$ elements 0, where $s$ is set to be 5. Denote $\bm{\mathscr{R}}_p^{(k)}$ as $p$ independent Rademacher variables (being $-1$ or $1$ with equal probability) for any $k$. $\bm{\mathscr{R}}_p^{(k)}$ is independent with $\bm{\mathscr{R}}_p^{(k')}$ for any $k \neq k'$. For any source data $k$ in $\mah$, we set $\bwk{k} = \bbeta + (h/p)\bm{\mathscr{R}}_p^{(k)}$. For linear and logistic regression models, predictors from target $\bx^{(0)}_i \overset{i.i.d.}{\sim} \mathcal{N}(\bm{0}_p, \bm{\Sigma})$ with $\bm{\Sigma} = [\Sigma_{jj'}]_{p \times p}$ where $\Sigma_{jj'}=0.5^{|j-j'|}$, for all $i=1,\cdots, n$. And for $k\in\mah$, we generate $p$-dimensional predictors from $\mathcal{N}(\bm{0}_p, \bm{\Sigma}+\bm{\epsilon}\bm{\epsilon}^T)$, where $\bm{\epsilon} \sim \mathcal{N}(\bm{0}_p, 0.3^2\bm{I}_p)$ and is independently generated. For Poisson regression model, predictors are from the same Gaussian distributions as those in linear and binomial cases with coordinate-wise truncation at $\pm 0.5$.

Note that na\"ive-Lasso is fitted on only target data, and $\mah$-Trans-GLM denotes Algorithm \ref{algo: merging} on source data in $\mah$ as well as target data. We train na\"ive-Lasso and $\mah$-Trans-GLM models under different settings of $h$ and $\Kah$, then calculate the $\ell_2$-estimation error of $\bbeta$. All the experiments are replicated  200 times and the average $\ell_2$-estimation errors of $\mah$-Trans-GLM and na\"ive-Lasso under linear, logistic, and Poisson regression models are shown in Figure \ref{fig: oracle1}.

\begin{figure}[!h]
	\centering
	\includegraphics[width=\textwidth]{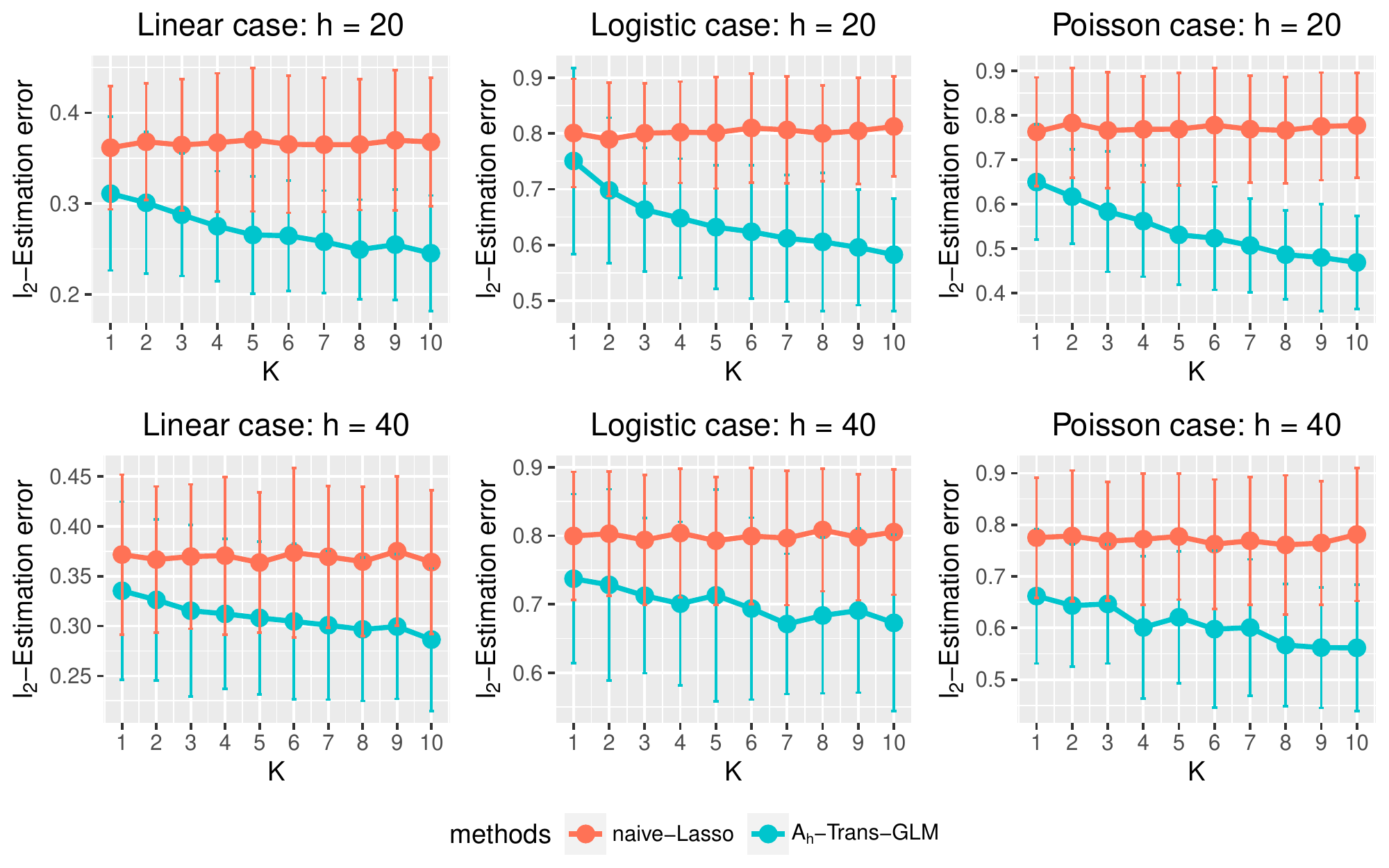}
	\caption{The average $\ell_2$-estimation error of $\mah$-Trans-GLM and na\"ive-Lasso under linear, logistic and Poisson regression models with different settings of $h$ and $K$. $n_0 = 200$ and $n_k = 100$ for all $k = 1, \ldots, p$, $p = 500$, $s =5$. Error bars denote the standard deviations.}
	\label{fig: oracle1}
\end{figure}

From Figure \ref{fig: oracle1}, it can be seen that $\mah$-Trans-GLM outperforms na\"ive-Lasso for most combinations of $h$ and $K$. As more and more source data become available, the performance of $\mah$-Trans-GLM improves significantly. This matches our theoretical analysis because the $\ell_2$-estimation error bounds in Theorems \ref{thm: l2 with assumption} and \ref{thm: l2 without assumption} become sharper as $\nah$ grows. When $h$ increases, the performance of $\mah$-Trans-GLM becomes worse. 

\begin{figure}[!h]
	\centering
	\includegraphics[width=\textwidth]{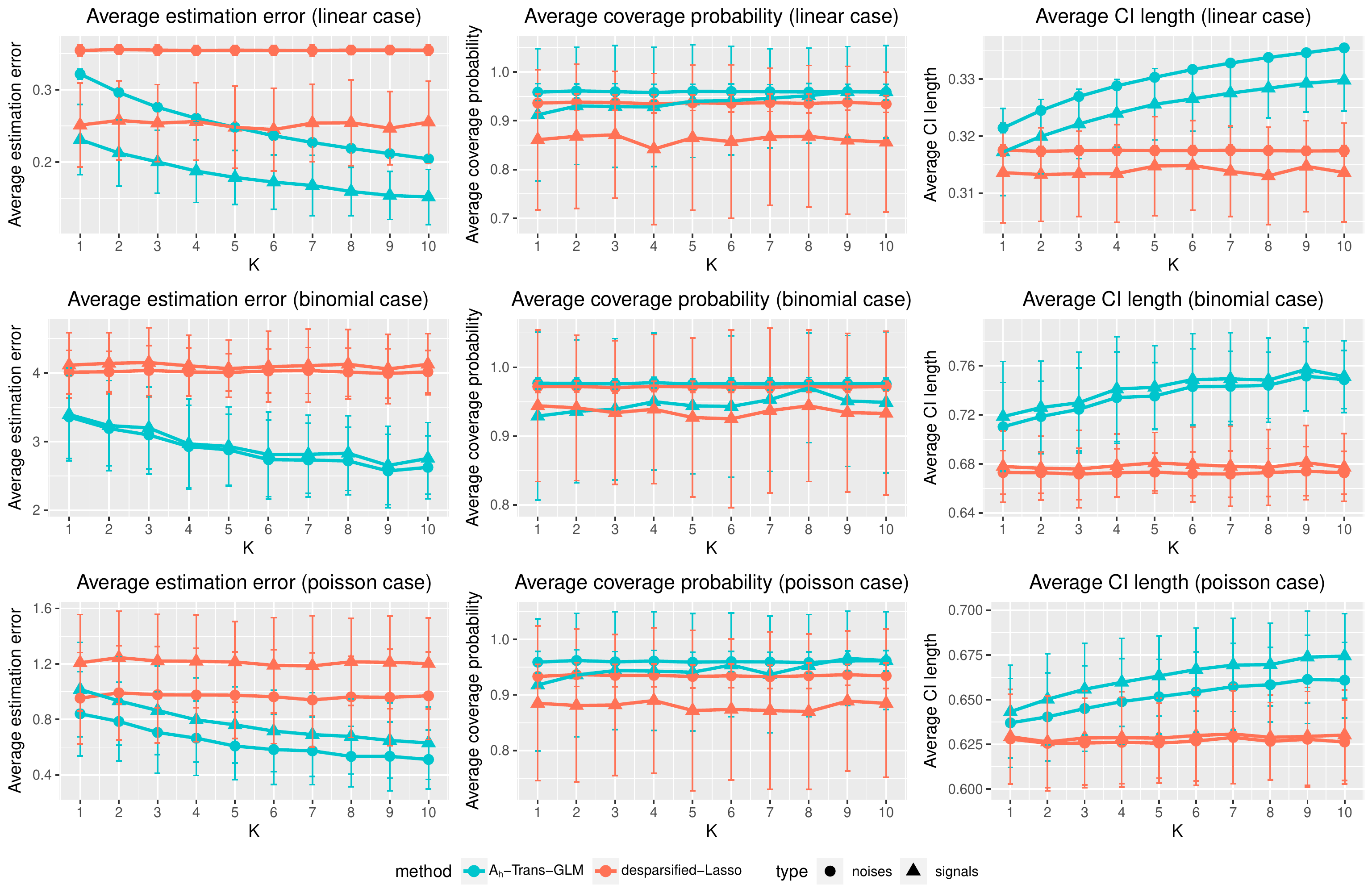}
	\caption{Three evaluation metrics of Algorithm \ref{algo: inf} with $\mah$ (we denote it as $\mah$-Trans-GLM) and desparsified Lasso on target data, under linear, logistic and Poisson regression models, with different settings of $K$. $h$ = 20. $n_0 = 200$ and $n_k = 100$ for all $k = 1, \ldots, p$, $p = 500$, $s =5$. Error bars denote the standard deviations.}
	\label{fig: oracle_inf}
\end{figure} 

We also apply the inference algorithm \ref{algo: inf} with $\mah$ and compare it with desparsified Lasso \citep{van2014asymptotically} on only target data. Recall the notations we used in Section \ref{subsec:inf theory}. Here we consider $95\%$ confidence intervals (CIs) for each component of coefficient $\bbeta$, and report three evaluation metrics in Figure \ref{fig: oracle_inf} when $h = 20$ under different $\Kah$: (\rom{1}) the average of estimation error of $\bTheta_{jj}$ over variables in the signal set $S$ and noise set $S^c$ (including the intercept), respectively (which we call ``average estimation error"); (\rom{2}) the average CI coverage probability over variables in the signal set $S$ and noise set $S^c$; (\rom{3}) the average CI length over $j \in$ signal set $S$ and noise set $S^c$. Note that there is no explicit formula of $\bTheta$ for logistic and Poisson regression models. Here we approximated it through $5 \times 10^6$ Monte-Carlo simulations. Notice that the average estimation error of $\mah$-Trans-GLM declines as $K$ increases, which agrees with our theoretical analysis in Section \ref{subsec:inf theory}. As for the coverage probability, although CIs obtained by desparsified Lasso can achieve $95\%$ coverage probability on $S^c$ in linear and binomial cases, it fails to meet the $95\%$ requirement of coverage probability on $S$ in all three cases. In contrast, CIs provided by $\mah$-Trans-GLM can achieve approximately $95\%$ level when $K$ is large on both $S$ and $S^c$. Finally, the results of average CI length reveal that the CIs obtained by $\mah$-Trans-GLM tend to be wider as $K$ increases. Considering this together with the average estimation error and coverage probability, a possible explanation could be that desparsified Lasso might down-estimate $\bTheta_{jj}$ which leads to too narrow CIs to cover the true coefficients. And $\mah$-Trans-GLM offers a more accurate estimate of $\bTheta_{jj}$ which results in wider CIs.

We also consider different $(\{n_k\}_{k=0}^K, p, s)$ settings with the results in the supplements.

\subsubsection{Transfer learning when $\mah$ is unknown}\label{subsubsec: unknown simulation}
Different from the previous subsection, now we fix the total number of sources as $K = 10$. There are two types of sources, which belong to either $\mah$ or $\mah^c$. Sources from $\mah$ have similar coefficients to the target one, while the coefficients of sources from $\mah$ can be quite different. Intuitively, using more sources from $\mah$ benefits the estimation of the target coefficient. But in practice, $\mah$ may not be known as a priori. As we argued before, Trans-GLM can detect useful sources automatically, therefore it is expected to be helpful in such a scenario. Simulations in this section aim to justify the effectiveness of Trans-GLM.

Here is the detailed setting. We set the target sample size $n_0 = 200$ and source sample sample size $n_k = 200$ for all $k \neq 0$. The dimension $p = 2000$. Target coefficient is the same as the one used in Section \ref{subsubsec: oracle} and we fix the signal number $s = 20$. Recall $\bm{\mathscr{R}}_p^{(k)}$ denotes $p$ independent Rademacher variables and $\bm{\mathscr{R}}_p^{(k')}$ are independent for any $k \neq k'$. Consider $h = 20$ and $40$. For any source data $k$ in $\mah$, we set $\bwk{k} = \bbeta + (h/p)\bm{\mathscr{R}}_p^{(k)}$. For linear and logistic regression models, predictors from target $\bx^{(0)}_i \overset{i.i.d.}{\sim} N(\bm{0}, \bm{\Sigma})$ with $\bm{\Sigma} = [\Sigma_{jj'}]_{p \times p}$ where $\Sigma_{jj'}=0.9^{|j-j'|}$, for all $i=1,\cdots, n_0$. For the source, we generate $p$-dimensional predictors from independent $t$-distribution with degrees of freedom 4. For the target and sources of Poisson regression model, we generate predictors from the same Gaussian distribution and $t$-distribution respectively, and truncate each predictor at $\pm 0.5$. 

To generate the coefficient $\bwk{k}$ for $k \notin \mah$, we randomly generate $S^{(k)}$ of size $s$ from $\{2s+1, \ldots, p\}$. Then, the $j$-th component of coefficient $\bw^{(k)}$ is set to be 
\begin{equation}
	w^{(k)}_j = \begin{cases}
		0.5 + 2hr_j^{(k)}/p, & j \in \{s+1, \ldots, 2s\}\cup S^{(k)},\\
		2hr_j^{(k)}/p, & \textrm{otherwise},
	\end{cases}
\end{equation}
where $r_j^{(k)}$ is a Rademacher variable. We also add an intercept $0.5$. The generating process of each source data is independent. Compared to the setting in Section \ref{subsubsec: oracle}, the current setting is more challenging because source predictors come from $t$-distribution with heavier tails than sub-Gaussian tails. However, although Assumption \ref{asmp: x} is violated, in the following analysis, we will see that Trans-GLM can still succeed in detecting informative sources.

As before, we fit na\"ive-Lasso  on only target data. $\mah$-Trans-GLM and Pooled-Trans-GLM represent Algorithm \ref{algo: merging} on source data in $\mah$ and target data or all sources and target data, respectively. Trans-GLM runs Algorithm \ref{algo: a unknown} by first identifying the informative source set $\widehat{\ma}$, then applying Algorithm \ref{algo: merging} to fit the model on sources in $\widehat{\ma}$. We vary the values of $\Kah$ and $h$, and repeat simulations in each setting 200 times. The average $\ell_2$-estimation errors are summarized in Figure \ref{fig: unknown3}.

\begin{figure}[!h]
	\centering
	\includegraphics[width=\textwidth]{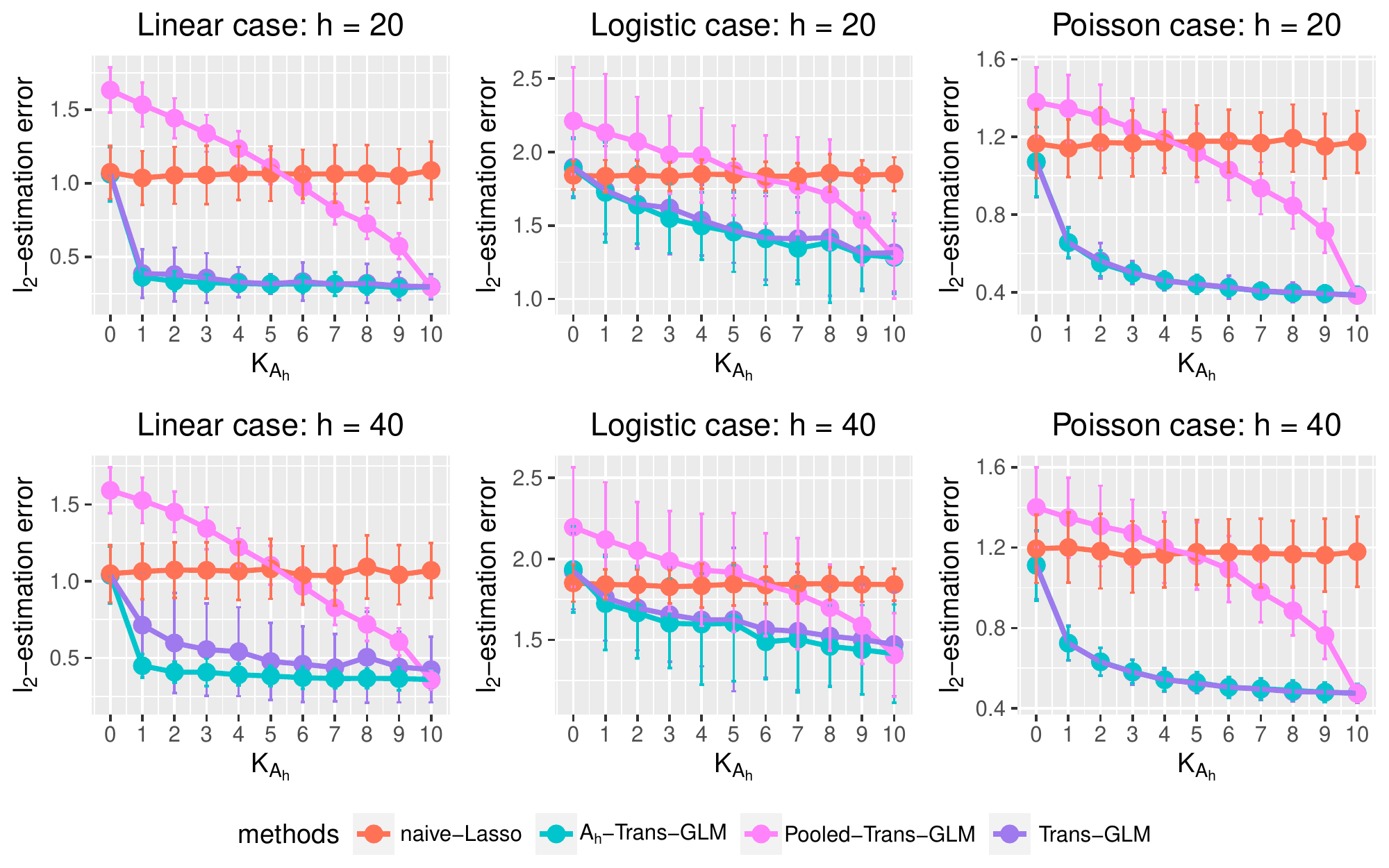}
	\caption{The average $\ell_2$-estimation error of various models with different settings of $h$ and $\Kah$ when $K = 10$. $n_k = 200$ for all $k = 0, \ldots, K$, $p = 2000$, $s = 20$. Error bars denote the standard deviations.}
	\label{fig: unknown3}
\end{figure} 

From Figure \ref{fig: unknown3}, it can be observed that in all three models, $\mah$-Trans-GLM always achieves the best performance as expected since it transfers information from sources in $\mah$. Trans-GLM mimics the behavior of $\mah$-Trans-GLM very well, implying that the transferable source detection algorithm can successfully recover $\mah$. When $\Kah$ is small, Pooled-Trans-GLM performs worse than na\"ive-Lasso because of the negative transfer. As $\Kah$ increases, the performance of Pooled-Trans-GLM improves and finally matches those of $\mah$-Trans-GLM and Trans-GLM when $\Kah = K = 10$.

\subsection{A real-data study}\label{subsec:real data}
\begin{figure}[!h]
\centering
  \includegraphics[width=0.7\textwidth]{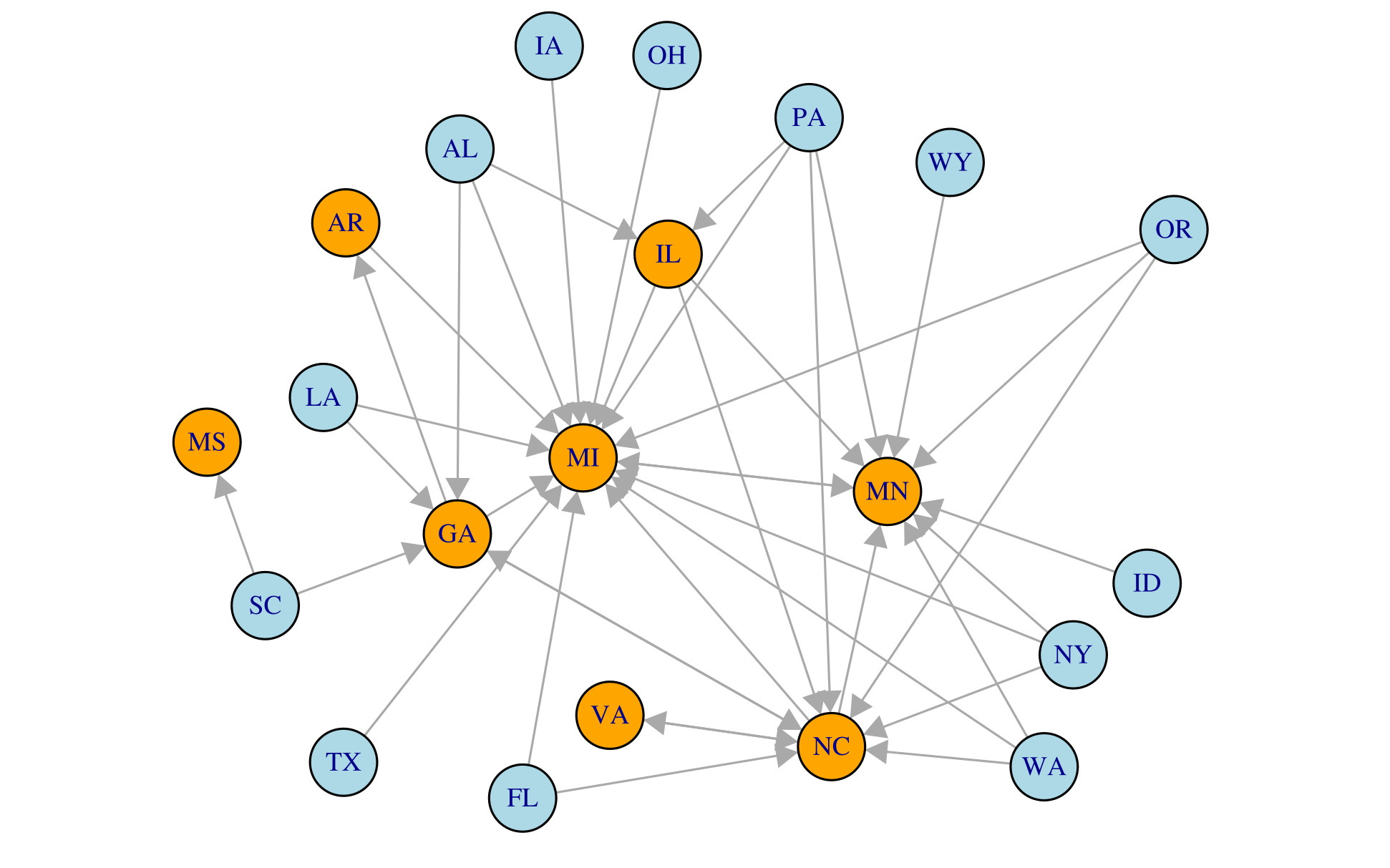}
  \caption{The transferability between different states for Trans-GLM.}
  \label{fig: network}
\end{figure}

In this section, we study the 2020 US presidential election results of each county. We only consider win or lose between two main parties, Democrats and Republicans, in each county. The 2020 county-level election result is available at \url{https://github.com/tonmcg/US_County_Level_Election_Results_08-20}. The response is the election result of each county. If Democrats win, we denote this county as class 1, otherwise, we denote it as class 0. And we also collect the county-level information as the predictors, including the population and race proportions, from \url{https://www.kaggle.com/benhamner/2016-us-election}. 

The goal is to explore the relationship between states in the election using transfer learning. We are interested in swing states with a large number of counties. Among 49 states (Alaska and Washington, D.C. excluded), we select the states where the proportion of counties voting Democrats falls in $[10\%, 90\%]$, and have at least 75 counties as target states. They include Arkansas (AR), Georgia (GA), Illinois (IL), Michigan (MI), Minnesota (MN), Mississippi (MS), North Carolina (NC), and Virginia (VA). 

The original data includes 3111 counties and 52 county-level predictors. We also consider the pairwise interaction terms between predictors. After pre-processing, there are 3111 counties and 1081 predictors in the final data, belonging to 49 US states.

We would like to investigate which states have a closer relationship with these target states by our transferable source detection algorithm. For each target state, we use it as the target data and the remaining 48 states as source datasets. Each time we randomly sample 80\% of target data as training data and the remaining 20\%  is used for testing. Then we run Trans-GLM (Algorithm \ref{algo: a unknown}) and see which states are in the estimated transferring set $\widehat{\mathcal{A}}$. We repeat the simulation 500 times and count the transferring frequency between every state pair. The 25 (directed) state pairs with the highest transferring frequencies are visualized in Figure \ref{fig: network}. Each orange node represents a target state we mentioned above and blue nodes are source states. States with the top 25 transferring frequencies are connected with a directed edge.

\begin{table}[!h]
\renewcommand{\arraystretch}{0.75}
\begin{center}
\begin{threeparttable}
\resizebox{\textwidth}{!}{\begin{tabular}{lccccccccc}
\Xhline{1.25pt}
\multirow{2}{*}{Methods} & \multicolumn{8}{c}{Target states} \\ 
\cline{2-9}
&AR&GA&IL&MI&MN&MS&NC&VA\\
\hline
na\"ive-Lasso& \multicolumn{1}{r}{4.79\textsubscript{3.36} }& 6.98\textsubscript{3.90} & 5.73\textsubscript{4.14} & 11.49\textsubscript{2.44} & 12.46\textsubscript{2.70} & 7.53\textsubscript{6.57} & 15.60\textsubscript{6.73} & 9.48\textsubscript{4.88} \\
Pooled-Lasso& \multicolumn{1}{r}{3.59\textsubscript{4.71} }& 9.98\textsubscript{4.22} & 7.89\textsubscript{5.56} & 7.04\textsubscript{5.80} & \textit{10.38}\textsubscript{5.18} & 22.01\textsubscript{7.18} & 12.73\textsubscript{5.35} & 21.44\textsubscript{5.46} \\
Pooled-Trans-GLM& \multicolumn{1}{r}{\textit{1.83}\textsubscript{3.12} }& \textit{4.86}\textsubscript{3.60} & \textit{2.52}\textsubscript{3.55} & \textit{5.62}\textsubscript{4.54} & 10.75\textsubscript{5.60} & \textit{7.23}\textsubscript{6.65} & \textit{9.71}\textsubscript{5.75} & \textbf{7.15}\textsubscript{4.23} \\
Trans-GLM& \multicolumn{1}{r}{\textbf{1.54}\textsubscript{2.94} }& \textit{4.74}\textsubscript{3.54} & \textbf{2.51}\textsubscript{3.45} & \textit{5.53}\textsubscript{4.73} & \textbf{10.34}\textsubscript{5.73} & 7.24\textsubscript{6.81} & \textbf{9.34}\textsubscript{5.57} & \textit{7.18}\textsubscript{4.67} \\
SVM& \multicolumn{1}{r}{6.71\textsubscript{1.70} }& 17.09\textsubscript{3.89} & 7.00\textsubscript{5.40} & 12.59\textsubscript{1.87} & 13.29\textsubscript{2.29} & 23.92\textsubscript{8.90} & 12.66\textsubscript{6.86} & 10.78\textsubscript{5.29} \\
Pooled-SVM& \multicolumn{1}{r}{7.84\textsubscript{6.32} }& 13.47\textsubscript{4.73} & 7.75\textsubscript{5.24} & 7.58\textsubscript{6.40} & 13.01\textsubscript{5.69} & 27.32\textsubscript{8.72} & 12.30\textsubscript{5.75} & 17.31\textsubscript{5.46} \\
Tree& \multicolumn{1}{r}{\textit{2.23}\textsubscript{3.58} }& 8.37\textsubscript{4.40} & 4.62\textsubscript{5.27} & 10.05\textsubscript{5.53} & 10.97\textsubscript{8.42} & \textit{5.97}\textsubscript{5.26} & 18.29\textsubscript{8.01} & 14.46\textsubscript{6.88} \\
Pooled-Tree& \multicolumn{1}{r}{7.81\textsubscript{6.89} }& 7.68\textsubscript{4.59} & 4.63\textsubscript{4.26} & 7.42\textsubscript{6.18} & 10.53\textsubscript{5.91} & 16.73\textsubscript{7.30} & 14.76\textsubscript{7.26} & 17.43\textsubscript{5.85} \\
RF& \multicolumn{1}{r}{3.60\textsubscript{3.57} }& 6.04\textsubscript{3.59} & 4.08\textsubscript{3.98} & 6.42\textsubscript{4.79} & \textit{10.51}\textsubscript{5.10} & 7.27\textsubscript{5.72} & 11.29\textsubscript{6.29} & 7.73\textsubscript{4.77} \\
Pooled-RF& \multicolumn{1}{r}{3.73\textsubscript{4.82} }& 7.49\textsubscript{3.90} & 4.35\textsubscript{3.63} & \textbf{5.34}\textsubscript{4.99} & 10.86\textsubscript{4.96} & 12.56\textsubscript{6.88} & 11.04\textsubscript{6.03} & 10.40\textsubscript{5.18} \\
Boosting& \multicolumn{1}{r}{2.23\textsubscript{3.58} }& \textbf{4.65}\textsubscript{3.77} & \textit{2.55}\textsubscript{3.82} & 7.79\textsubscript{5.52} & 10.64\textsubscript{6.51} & \textbf{5.28}\textsubscript{5.16} & 10.88\textsubscript{6.47} & \textit{7.53}\textsubscript{5.10} \\
Pooled-Boosting& \multicolumn{1}{r}{3.10\textsubscript{4.84} }& 5.71\textsubscript{3.53} & 3.82\textsubscript{3.85} & 5.81\textsubscript{5.27} & 11.21\textsubscript{5.13} & 14.31\textsubscript{7.42} & \textit{10.82}\textsubscript{5.99} & 11.95\textsubscript{5.25} \\
\Xhline{1.25pt}
\end{tabular}}
\end{threeparttable}
\caption{The average test error rate (in percentage) of various methods with different targets among 500 replications. The cutoff for all binary classification methods is set to be 1/2. Numbers in the subscript indicate the standard deviations. }
\label{table: error}
\end{center}
\end{table}

From Figure \ref{fig: network}, we observe that Michigan has a strong relationship with other states, since there is a lot of information transferable when predicting the county-level results in Michigan, Minnesota, and North Carolina. Another interesting finding is that states which are geographically close to each other may share more similarities. For instance, see the connection between Indiana and Michigan, Ohio and Michigan, North Carolina and Virginia, South Carolina and Georgia, Alabama and Georgia, etc. In addition, one can observe that states in the Rust Belt also share more similarities. As examples, see the edges among Pennsylvania, Minnesota, Illinois, Michigan, New York, and Ohio, etc. 

To further verify the effectiveness of our GLM transfer learning framework on this dataset and make our findings more convincing, we calculate the average test misclassification error rates for each of the eight target states. For comparison, we compare the performances of Trans-GLM and Pooled-Trans-GLM with na\"ive-Lasso, SVM, trees, random forests (RF), boosting trees (Boosting) as well as their pooled version. Average test errors and the standard deviations of various methods are summarized in Table \ref{table: error}. The best method and other top three methods for each target are highlighted in bold and italics, respectively.

Table \ref{table: error} shows that in four out of eight scenarios, Trans-GLM performs the best among all approaches. Moreover, Trans-GLM is always ranked in the top three except in the case of target state MS. This verifies the effectiveness of our GLM transfer learning algorithm. Besides, Pooled-Trans-GLM can always improve the performance of na\"ive-Lasso, while for other methods, pooling the data can sometimes lead to worse performance than that of the model fitted on only the target data. This marks the success of our two-step transfer learning framework and the importance of the debiasing step. Combining the results with Figure \ref{fig: network}, it can be seen that the performance improvement of Trans-GLM (compared to na\"ive-Lasso) for the target states with more connections (share more similarities with other states) are larger. For example, Trans-GLM outperforms na\"ive-Lasso a lot on Michigan, Minnesota and North Carolina, while it performs similarly to na\"ive-Lasso on Mississippi. 

We also try to identify significant variables by Algorithm \ref{algo: inf}. Due to the space limit, we put the results and analysis in Section \ref{subsubsec: election inf supp} of supplements. Interested readers can find the details there. Furthermore, since we have considered all main effects and their interactions, one reviewer pointed out that besides the classical Lasso penalty, there are other variants like group Lasso \citep{yuan2006model} or Lasso with hierarchy restriction \citep{bien2013lasso}, which may bring better practical performance and model interpretation. To be consistent with our theories, we only consider the Lasso penalty here and leave other options for future study.

\section{Discussions}\label{sec:discussions}
In this work, we study the GLM transfer learning problem. To the best of our knowledge, this is the first paper to study high-dimensional GLM under a transfer learning framework, which can be seen as an extension to \cite{bastani2021predicting} and \cite{li2021transfer}. We propose GLM transfer learning algorithms, and derive bounds for $\ell_1/\ell_2$-estimation error and a prediction error measure with fast and slow rates under different conditions. In addition, to avoid the negative transfer, an algorithm-free transferable source detection algorithm is developed and its theoretical properties are presented in detail. Moreover, we accommodate the two-step transfer learning method to construct confidence intervals of each coefficient component with theoretical guarantees. Finally, we demonstrate the effectiveness of our algorithms via  simulations and a real-data study.

There are several promising future avenues that are worth further research. The first interesting problem is how to extend the current framework and theoretical analysis to other models, such as multinomial regression and the Cox model. Second, Algorithm \ref{algo: merging} is shown to achieve the minimax $\ell_1/\ell_2$ estimation error rate when the homogeneity assumption (Assumption \ref{asmp: merging}) holds. Without homogeneity of predictors between target and source, only sub-optimal rates are obtained. This problem exists in the line of most existing high-dimensional transfer learning research \citep{bastani2021predicting, li2021transfer, li2020transfer2}. It remains unclear how to achieve the minimax rate when source predictors' distribution deviates a lot from the target one. Another promising direction is to explore similar frameworks for other machine learning models, including support vector machines, decision trees, and random forests.

\section*{Acknowledgments}
We thank the editor, the AE, and anonymous reviewers for their insightful comments which have greatly improved the scope and quality of the paper. \if1\blind
{This work was supported by NIH grant 1R21AG074205-01, NYU University Research Challenge Fund, and through the NYU IT High Performance Computing resources, services, and staff expertise.} \fi


\begin{appendices}
\end{appendices}

\spacingset{0.87} 
{\small
\bibliography{reference.bib}}
\bibliographystyle{apalike}
\spacingset{1.5} 

\newpage
\setcounter{page}{1}
\setcounter{section}{0}
\renewcommand{\thesection}{S.\arabic{section}}
\renewcommand{\theequation}{\thesection.\arabic{equation}}
\begin{center}
{\large\bf Supplementary Materials of ``Transfer Learning with High-dimensional Generalized Linear Models"}
\end{center}
\renewcommand{\cftsecnumwidth}{30pt}
\tableofcontents
\addtocontents{toc}{\protect\setcounter{tocdepth}{5}}

\section{More details}

\subsection{A schematic to illustrate $\ma$-Trans-GLM}\label{subsec: schematic}

To better illustrate Algorithm \ref{algo: merging}, we draw a schematic in Figure \ref{fig: schematic}. The blue point represents the target coefficient $\bbeta = \bwk{0}$ and the surrounding blue circle represents the estimation error $\mathcal{O}_p\left(\sqrt{\frac{s\log p}{n_0}}\right)$. The purple point denotes the estimator $\hat{\bbeta}_{\textup{na\"ive-Lasso}}$ from the classical Lasso with only target data. The orange point represents $\bwah$, which is the population version of the rough estimator we obtain from the transferring step by pooling target and source data in $\ma$ (see Section \ref{subsec:glm tranfer alg}), and the surrounding orange circle denotes its estimation error. It can be seen that $\bwah$ is a pooled version of $\{\bwk{k}\}_{k \in \{0\}\cup \mah}$, which is close to $\bbeta$ when $h$ is small. Starting from an initial estimate of $\bbeta$, the transferring step of $\ma$-Trans-GLM algorithm updates the estimate to $\hwah$ (an estimate of $\bwah$ based on source data in $\ma$ and the target data), then the debiasing step yields the final estimator $\hat{\bbeta}_{\ma\textup{-Trans-GLM}}$. 

\begin{figure}[!h]
	\centering
	\includegraphics[width=0.8\textwidth]{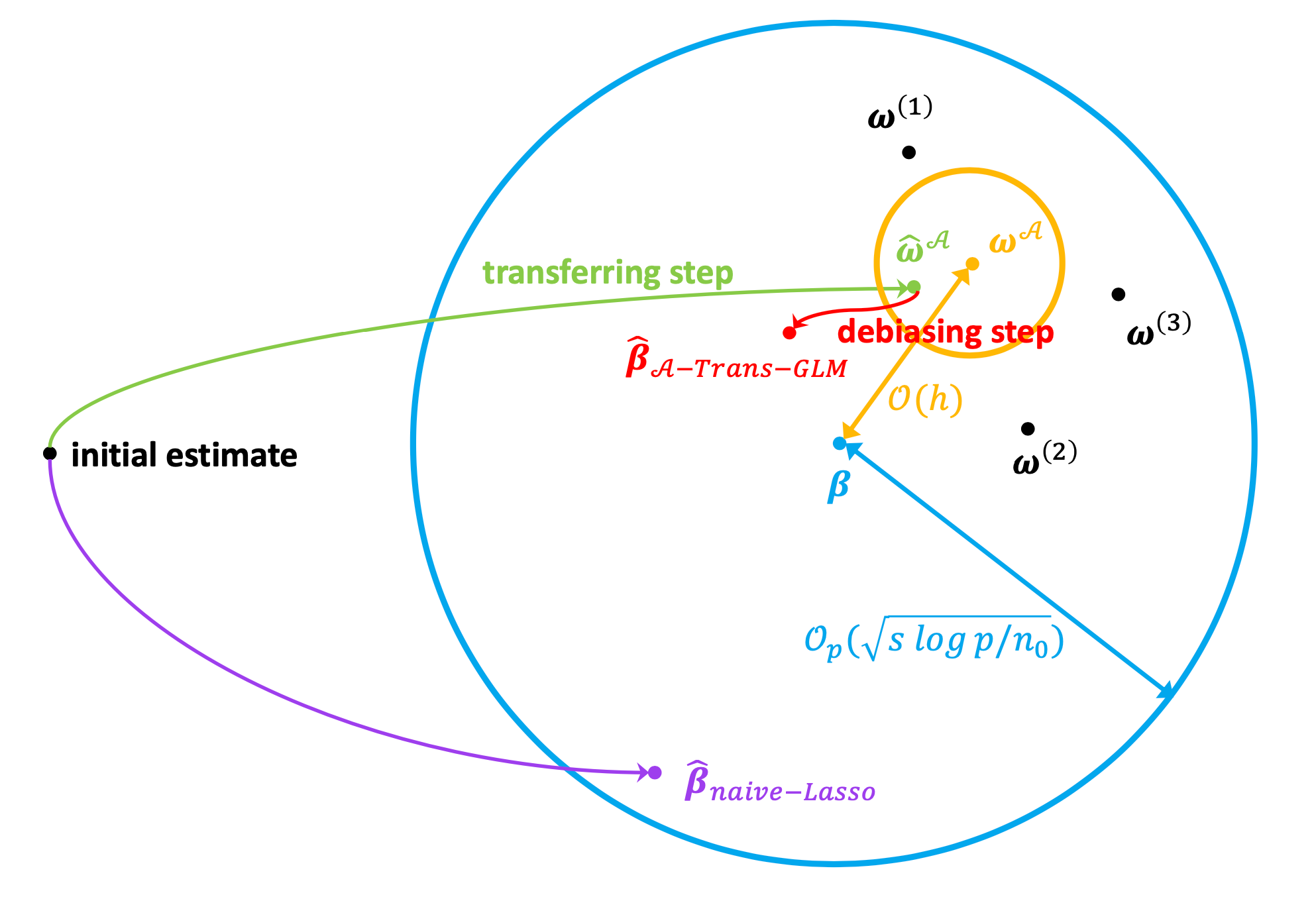}
	\caption{A schematic of $\ma$-Trans-GLM (Algorithm \ref{algo: merging}). $\ma = \{1, 2, 3\}$.}
	\label{fig: schematic}
\end{figure}

\subsection{Theory}\label{subsec: supp theory}

\subsubsection{Explicit forms of convergence rates in Assumption \ref{asmp: idenfity a}}\label{subsubsec: explicit forms}
\begin{proposition}[Explicit forms of $\Upsilon^{(k)}_1$, $\Gamma_1^{(k)}$, $\Gamma_2^{(k)}$, $g_1^{(k)}$ and $g_2^{(k)}$ for certain families]\label{prop: gamma_1 gamma_2} 
Denote
\begin{equation}
	\Omega_k = \begin{cases}
		\sqrt{\frac{s \log p}{n_0}}, &\quad k = 0,\\
		\sqrt{\frac{s\log p}{n_k + n_0}}  + \left(\frac{\log p}{ n_k+n_0}\right)^{1/4}\sqrt{h} + \sqrt{s}h,& \quad k \in \mah \\
		h'\sqrt{\frac{\log p}{n_k + n_0}} + \sqrt{\frac{s'\log p}{n_k + n_0}}\cdot W_k  + \left(\frac{\log p}{n_k+n_0}\right)^{1/4}\sqrt{h'W_k}, & \quad k \in \mach,
	\end{cases}
\end{equation}
where $W_k = 1 \vee \twonorm{\bbetak{k}-\bbeta} \vee \twonorm{\bbetak{k}-\bwk{k}}$.  Assume Assumptions \ref{asmp: convexity} and \ref{asmp: x} hold. For Poisson model, it is further required that $h \leq U^{-1}\bar{U}$ and $U\sup_{k \in \mac}\{\onenorm{\bbetak{k}-\bbeta}\vee \onenorm{\bbetak{k}-\bwk{k}}\} \leq \bar{U}$. With $\lambda^{(k)[r]} = C\left(\sqrt{\frac{\log p}{n_k+n_0}} + h\right)$ when $k \in \mah$, $\lambda^{(k)[r]} = C\sqrt{\frac{\log p}{n_k+n_0}}\cdot W_k$ when $k \in \mach$ and $\lambdak{0} = C\sqrt{\frac{\log p}{n_0}}$ for some sufficiently large constant $C > 0$, we have the following explicit forms of $\Gamma_1^{(k)}$, $\Gamma_2^{(k)}$, $\Upsilon_1^{(k)}$, $g_1^{(k)}$ and $g_2^{(k)}$ for logistic, linear and Poisson regression models.
	\begin{enumerate}[(i)]
		\item For the logistic regression model:
			\begin{align}
				\Gamma_1^{(0)} &= \sqrt{\frac{s\log p}{n_0}}, \quad \Gamma_2^{(0)} =\twonorm{\bbeta}/\sqrt{n_0},\\
				\Upsilon_1^{(k)} &= \Omega_k, \quad \Gamma_1^{(k)} = \sqrt{\frac{1}{n_0}}\Omega_k, \quad
				\Gamma_2^{(k)} = \sqrt{\frac{1}{n_0}} \cdot\left[\twonorm{\bwk{k}}\mathds{1}(k \in \mah) + \twonorm{\bbetak{k}}\mathds{1}(k \in \mach)\right], \\
				g_1^{(k)}(\zeta)  &= g_2^{(k)}(\zeta) = \exp(-\zeta^2).
			\end{align}
		\item For the linear model:
			\begin{align}
				\Gamma_1^{(0)} &= \sqrt{\frac{s\log p}{n_0}}\cdot \twonorm{\bbeta}, \quad \Gamma_2^{(0)} =(\twonorm{\bbeta}^2 \vee \twonorm{\bbeta})/\sqrt{n_0},\\
				\Upsilon_1^{(k)} &= \Omega_k\cdot \left[\twonorm{\bwk{k}}\mathds{1}(k \in \mah) + \twonorm{\bbetak{k}}\mathds{1}(k \in \mach)\right],\\
				\Gamma_1^{(k)} &= \sqrt{\frac{1}{n_0}}\Omega_k\cdot \left[\twonorm{\bwk{k}}\mathds{1}(k \in \mah) + \twonorm{\bbetak{k}}\mathds{1}(k \in \mach)\right],\\
				\Gamma_2^{(k)} &= \sqrt{\frac{1}{n_0}} \left[(\twonorm{\bwk{k}}^2 \vee \twonorm{\bwk{k}})\mathds{1}(k \in \mah) + (\twonorm{\bbetak{k}}^2 \vee \twonorm{\bbetak{k}})\mathds{1}(k \in \mach)\right], \\
				g_1^{(k)}(\zeta) &= g_2^{(k)}(\zeta) = \exp(-\zeta^2), k \neq 0; \quad 
				g_1^{(0)}(\zeta) = \exp(-\zeta^2), g_2^{(0)}(\zeta) = \exp\{-n_0\} + \exp(-\zeta^2).
			\end{align}
		\item For the Poisson regression model with bounded predictors ($\sup_k \infnorm{\bxk{k}} \leq U < \infty$): 
			\begin{align}
				\Gamma_1^{(0)} &= \sqrt{\frac{s\log p}{n_0}}\cdot \exp\left(U\onenorm{\bbeta}\right), \quad \Gamma_2^{(0)} = \exp\left(U\onenorm{\bbeta}\right)\cdot \frac{1 + \twonorm{\bbeta} + U \onenorm{\bbeta}}{\sqrt{n_0}},\\
				\Upsilon_1^{(k)} &= \Omega_k\cdot \exp\left\{U\onenorm{\bwk{k}}\cdot \mathds{1}(k \in \mah) + U\onenorm{\bbetak{k}}\cdot \mathds{1}(k \in \mach)\right\},\\
				\Gamma_1^{(k)} &= \sqrt{\frac{1}{n_0}}\Omega_k\cdot \exp\left\{U\onenorm{\bwk{k}}\cdot \mathds{1}(k \in \mah) + U\onenorm{\bbetak{k}}\cdot \mathds{1}(k \in \mach)\right\},\\
				\Gamma_2^{(k)} &= \sqrt{\frac{1}{n_0}}\left[\exp\left(U\onenorm{\bwk{k}}\right) \left(1 + \twonorm{\bwk{k}} + U \onenorm{\bwk{k}}\right)\cdot \mathds{1}(k \in \mah)  \right.\\
				&\quad \left.+ \exp\left(U\onenorm{\bbetak{k}}\right) \left(1 + \twonorm{\bbetak{k}} + U \onenorm{\bbetak{k}}\right)\cdot \mathds{1}(k \in \mach)\right], \\
				g_1^{(k)}(\zeta) &= g_2^{(k)}(\zeta) = \exp(-\zeta^2), k \neq 0; \quad g_1^{(0)}(\zeta)  = \exp(-\zeta^2), g_2^{(0)}(\zeta) = \zeta^{-2}.
			\end{align}
	\end{enumerate}
\end{proposition}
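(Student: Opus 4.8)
The plan is to establish both displays in Assumption~\ref{asmp: idenfity a} through the bias--variance split that their two-term right-hand sides already suggest. Fix $k$ and a fold $r$, set $\hat\Delta = \hbeta^{(k)[r]} - \bbetak{k}$, and write
\[
\hat{L}_0^{[r]}(\hbeta^{(k)[r]}) - \hat{L}_0^{[r]}(\bbetak{k}) = \big[L_0(\hbeta^{(k)[r]}) - L_0(\bbetak{k})\big] + \big[(\hat{L}_0^{[r]}-L_0)(\hbeta^{(k)[r]}) - (\hat{L}_0^{[r]}-L_0)(\bbetak{k})\big],
\]
so that, conditionally on $\hbeta^{(k)[r]}$, the first bracket is deterministic and produces $\Upsilon_1^{(k)}$, while the second bracket and the separate term $\hat{L}_0^{[r]}(\bbetak{k}) - L_0(\bbetak{k})$ are centred empirical averages and produce $\zeta\Gamma_1^{(k)}$ and $\zeta\Gamma_2^{(k)}$. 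The essential preliminary is a high-probability bound $\twonorm{\hat\Delta} \lesssim \Omega_k$, $\onenorm{\hat\Delta} \lesssim \sqrt{s\vee s'}\,\Omega_k$ on the transferring-step estimator $\hbeta^{(k)[r]}$ (which is run \emph{without} the debiasing step, so its population target is the pooled solution $\bbetak{k}$, not $\bbeta$): for $k=0$ this is the ordinary $\ell_1$-penalized-GLM bound on $2n_0/3$ target samples, already available from the proofs of Theorems~\ref{thm: l2 with assumption} and~\ref{thm: l2 without assumption}; for $k\in\mah$ it is the transferring step of Algorithm~\ref{algo: merging} with $\ma=\{k\}$ on the pooled fold-and-source sample, whose analysis copies the transferring-step part of the proof of Theorem~\ref{thm: l2 without assumption} and yields the stated $\Omega_k$; for $k\in\mach$ the same cone/restricted-eigenvalue argument is run with the $(s',h')$-weakly-sparse decompositions of Assumption~\ref{asmp: merging detection} in place of the $(s,h)$ structure and with $\lambda^{(k)[r]}$ calibrated to the contrast size $W_k = 1\vee\twonorm{\bbetak{k}-\bbeta}\vee\twonorm{\bbetak{k}-\bwk{k}}$, giving the $W_k$-weighted $\Omega_k$.

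Given these bounds, the deterministic bracket is controlled by a second-order Taylor expansion of $L_0$ about the target minimizer $\bbeta$, at which $\nabla L_0(\bbeta)=\bm{0}$, so $L_0(\bw)-L_0(\bbeta) = \tfrac{1}{2}(\bw-\bbeta)^T\widetilde{\bSigma}^{(0)}_{\bw,\bbeta}(\bw-\bbeta)$ and hence $|L_0(\hbeta^{(k)[r]})-L_0(\bbetak{k})| \lesssim \big(\twonorm{\hbeta^{(k)[r]}-\bbeta}+\twonorm{\bbetak{k}-\bbeta}\big)\,\twonorm{\hat\Delta}\cdot B_\psi$, with $B_\psi$ the bound on $\psi''$ along the relevant segment; inserting $\twonorm{\hat\Delta}\lesssim\Omega_k$ collapses this to $\Upsilon_1^{(k)}$, the family-specific prefactors being $B_\psi\le\tfrac{1}{4}$ for logistic, $B_\psi\equiv1$ for linear, $B_\psi=e^u\le e^{U\onenorm{\cdot}}$ for Poisson, together with the norm factors carried by $\twonorm{\bbetak{k}}$ and $\onenorm{\bbetak{k}}$. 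For the two stochastic pieces I would condition on the folds other than $r$, which fixes $\hbeta^{(k)[r]}$ and makes it independent of the fold-$r$ sample; then $(\hat{L}_0^{[r]}-L_0)(\hbeta^{(k)[r]}) - (\hat{L}_0^{[r]}-L_0)(\bbetak{k})$ is a centred average of $n_0/3$ i.i.d.\ summands of the form $\big(\psi'((\bx^{(0)[r]}_i)^T\bbetak{k})-y^{(0)[r]}_i\big)(\bx^{(0)[r]}_i)^T\hat\Delta$ plus a second-order remainder of order $((\bx^{(0)[r]}_i)^T\hat\Delta)^2$. Since $(\bx^{(0)[r]}_i)^T\hat\Delta$ is $\twonorm{\hat\Delta}$-sub-Gaussian, each summand has magnitude of order $\twonorm{\hat\Delta}$ times a family prefactor ($1$ for logistic since $\psi'$ is bounded, a norm factor for linear, an exponential factor for Poisson), so the average deviates from its mean by $\zeta\Gamma_1^{(k)}$ with $\Gamma_1^{(k)}$ that magnitude divided by $\sqrt{n_0}$ and tail $g_1^{(k)}(\zeta)$ (Gaussian when the summands are bounded, sub-exponential otherwise); the same bookkeeping applied to the i.i.d.\ summands $-\log\rho(y^{(0)[r]}_i)-y^{(0)[r]}_i(\bx^{(0)[r]}_i)^T\bbetak{k}+\psi((\bx^{(0)[r]}_i)^T\bbetak{k})$ of $\hat{L}_0^{[r]}(\bbetak{k})-L_0(\bbetak{k})$ gives $\zeta\Gamma_2^{(k)}$ with tail $g_2^{(k)}(\zeta)$. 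A union bound over $r\in\{1,2,3\}$ then finishes each display.

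What is left is family-specific tail accounting. For logistic regression $y\in\{0,1\}$ and both $\psi'$ and $\psi''$ are bounded, so every step is genuinely sub-Gaussian and $g_1^{(k)}=g_2^{(k)}=e^{-\zeta^2}$, with the clean $\Omega_k$-proportional $\Gamma$'s. For the linear model the relevant summands carry squared sub-Gaussians ($(\bx^T\bw)^2$, $y^2$, $\epsilon^2$), so the concentration is only sub-exponential; for $k=0$ the extra $e^{-n_0}$ term in $g_2^{(0)}$ is the probability of the event on which the empirical second moment of the noise deviates from $\sigma^2$ by a constant factor. For Poisson regression $\psi$ and $\psi''$ both equal $e^u$, so under $\sup_k\infnorm{\bxk{k}}\le U$ one must carry $e^{U\onenorm{\cdot}}$-type prefactors throughout and invoke $h\le U^{-1}\bar{U}$ together with $U\sup_{k\in\mac}\{\onenorm{\bbetak{k}-\bbeta}\vee\onenorm{\bbetak{k}-\bwk{k}}\}\le\bar{U}$ to keep them finite; because the exponentially-weighted Poisson response has only finitely many moments, the $k=0$ deviation $\hat{L}_0^{[r]}(\bbeta)-L_0(\bbeta)$ is controlled merely with a Chebyshev (polynomial) tail $g_2^{(0)}(\zeta)=\zeta^{-2}$. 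I expect the main obstacle to be the $k\in\mach$ instance of the transferring-step estimation bound: the cone/restricted-eigenvalue analysis has to be redone when the step's own population target $\bbetak{k}$ is merely $(s',h')$-weakly sparse and possibly far from $\bbeta$, with careful tracking of how $W_k$ propagates from the penalty level into $\Omega_k$ — and in the Poisson case this must be carried out with an unbounded $\psi''$, which turns each sub-Gaussian estimate into a sub-exponential one and forces repeated use of the boundedness side-conditions.
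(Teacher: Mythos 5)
Your architecture matches the paper's: the paper proves this proposition by (a) bounding $\twonorm{\hbeta^{(k)[r]}-\bbetak{k}}$ by $\Omega_k$ via the transferring-step analysis (the tail inequality \eqref{eq: detection cite} and its $k=0$, $k\in\mach$ analogues), (b) a lemma controlling $\sup_r\norm{\hat{L}_0^{[r]}(\hbeta^{(k)[r]})-\hat{L}_0^{[r]}(\bbetak{k})}$ in terms of $\twonorm{\hbeta^{(k)[r]}-\bbetak{k}}$ with family-specific prefactors and tails, treating $\hbeta^{(k)[r]}$ as fixed and independent of fold $r$ exactly as you propose, and (c) direct concentration of the i.i.d.\ summands of $\hat{L}_0^{[r]}(\bbetak{k})-L_0(\bbetak{k})$, including the sub-exponential second-moment event ($e^{-n_0}$) for the linear model and the Chebyshev tail ($\zeta^{-2}$) for the $\log\rho(y)=-\log(y!)$ term in the Poisson model. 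Your identification of the $k\in\mach$ estimation bound under the $(s',h')$ weak-sparsity decomposition with the $W_k$-calibrated penalty as the delicate ingredient is also accurate; the paper leaves that step equally terse.

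The one place your route diverges in a way that matters is the $\Upsilon_1^{(k)}$ term. You extract it as the population-loss difference $L_0(\hbeta^{(k)[r]})-L_0(\bbetak{k})$ and bound it by a second-order expansion around $\bbeta$, yielding a prefactor $\bigl(\twonorm{\hbeta^{(k)[r]}-\bbeta}+\twonorm{\bbetak{k}-\bbeta}\bigr)B_\psi$. For $k=0$ and $k\in\mah$ this is harmless (the distances are $O(1)$ there), but for $k\in\mach$ the factor $\twonorm{\bbetak{k}-\bbeta}$ is exactly the quantity that Assumption \ref{asmp: idenfity a} requires to be bounded \emph{below}, so your bound delivers $\Upsilon_1^{(k)}\asymp W_k\Omega_k$ (logistic) rather than the stated $\Omega_k$, and analogously inserts an extra $W_k$ into the linear and Poisson forms. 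Since $\Upsilon_1^{(k)}$ for $k\in\mach$ feeds into the gap condition \eqref{eq: gap ac l0}, this changes the explicit conditions the proposition is meant to produce (it makes \eqref{eq: gap ac l0} self-referential in $\twonorm{\bbetak{k}-\bbeta}$). The paper avoids this by never passing through $L_0$: it bounds the empirical difference $\hat{L}_0^{[r]}(\hbeta^{(k)[r]})-\hat{L}_0^{[r]}(\bbetak{k})$ directly, term by term, using a \emph{first-order} mean-value/Lipschitz bound on $\psi$ and the boundedness of $y$ and $\psi'$ (logistic), or the single extra norm factor $\twonorm{\bbetak{k}}$ (linear) or $e^{U\onenorm{\bbetak{k}}}$ (Poisson), so the deterministic part is the \emph{mean} of that sub-Gaussian/sub-exponential average, of order $\twonorm{\hbeta^{(k)[r]}-\bbetak{k}}$ times only the stated prefactor, with the $\zeta/\sqrt{n_0}$ fluctuation giving $\Gamma_1^{(k)}$. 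Replacing your quadratic expansion with this first-order argument repairs the discrepancy; the rest of your plan then goes through as the paper's does.
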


\subsubsection{Bound of a prediction error measure}
Denote $L^{(0)}_{n_0}(\bw) = -\frac{1}{n_0}\sum_{i=1}^{n_0}\log \rho(\bxk{0}_i)-\frac{1}{n_0}(\by^{(0)})^T\bX^{(0)}\bw + \frac{1}{n_0}\sum\limits_{i=1}^{n_0}\psi(\bw^T\bx^{(0)}_i)$. Suggested by \cite{loh2015regularized}, we consider a special measure of the prediction error, which is defined by $\<\nabla L^{(0)}_{n_0}(\hbeta) -\nabla L^{(0)}_{n_0}(\bbeta), \hbeta - \bbeta\>$, where $\nabla L^{(0)}_{n_0}(\bw) = -\frac{1}{n_0}(\bX^{(0)})^T\by^{(0)} + \frac{1}{n_0}\sum\limits_{i=1}^{n_0}\bx^{(0)}_i\psi(\bw^T\bx^{(0)}_i) \in \mathbb{R}^p$. Note that the loss function $L^{(0)}_{n_0}$ is convex, therefore this quantity is non-negative. As argued in their paper, $\<\nabla L^{(0)}_{n_0}(\hbeta) -\nabla L^{(0)}_{n_0}(\bbeta), \hbeta - \bbeta\>$ can be easily interpreted in GLMs. For example, in the case of linear models where $\psi(u) = u^2/2$, this measure equals to the in-sample square loss $\twonorm{\bX^{(0)}(\hbeta - \bbeta)}^2$. For general GLMs, it is equivalent to the symmetrized Bregman divergence defined by $\psi$. 

Next we would like to present the bounds of $\<\nabla L^{(0)}_{n_0}(\hbeta) -\nabla L^{(0)}_{n_0}(\bbeta), \hbeta - \bbeta\>$ with and without Assumption \ref{asmp: merging} for $\mah$-Trans-GLM.

\begin{theorem}[Bound of a prediction error measure for $\mah$-Trans-GLM with Assumption \ref{asmp: merging}]\label{thm: prediction error with assumption}
	By imposing the same conditions in Theorem \ref{thm: l2 with assumption}, we have		
	\begin{align}
			\sup_{\bxi \in \Xi(s, h)}\<\nabla L^{(0)}_{n_0}(\hbeta) -\nabla L^{(0)}_{n_0}(\bbeta), \hbeta - \bbeta\> 
			&\lesssim s\left(\frac{\log p}{n_0}\right)^{1/2}\left(\frac{\log p}{\nah+n_0}\right)^{1/2} + h\left(\frac{\log p}{n_0}\right)^{1/2} \\
			&\quad + \left(\frac{s\log p}{n_0}\right)^{1/2}\left(\frac{\log p}{\nah+n_0}\right)^{1/4}h^{1/2}, \label{eq: prediction error bound 1 with asmp}
	\end{align}
		with probability at least $1-n_0^{-1}$.
\end{theorem}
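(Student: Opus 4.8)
\textbf{Proof proposal for Theorem \ref{thm: prediction error with assumption}.}

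The plan is to piece together the prediction-error bound from the ingredients already produced in the proof of Theorem \ref{thm: l2 with assumption}. Recall that $\hbeta = \hwah + \hdeltaah$, and write the prediction error measure as
\begin{equation}
	\<\nabla L^{(0)}_{n_0}(\hbeta) -\nabla L^{(0)}_{n_0}(\bbeta), \hbeta - \bbeta\> = \frac{1}{n_0}\sum_{i=1}^{n_0}\left[\psi'((\bxk{0}_i)^T\hbeta) - \psi'((\bxk{0}_i)^T\bbeta)\right](\bxk{0}_i)^T(\hbeta - \bbeta).
\end{equation}
First I would invoke the basic inequality coming from the debiasing step of Algorithm \ref{algo: merging}: since $\hdeltaah$ minimizes the $\ell_1$-penalized target negative log-likelihood centered at $\hwah$, comparing the objective value at $\hdeltaah$ with that at the ``true'' contrast $\bbeta - \hwah$ yields, after rearranging,
\begin{equation}
	\<\nabla L^{(0)}_{n_0}(\hbeta) -\nabla L^{(0)}_{n_0}(\bbeta), \hbeta - \bbeta\> \leq \<\nabla L^{(0)}_{n_0}(\bbeta), \bbeta - \hbeta\> + \lambda_{\bdelta}\left(\onenorm{\bbeta - \hwah} - \onenorm{\hdeltaah}\right).
\end{equation}
The first term on the right is controlled by Hölder's inequality, $\infnorm{\nabla L^{(0)}_{n_0}(\bbeta)} \leq \lambda_{\bdelta}/2$ on the good event (the standard gradient concentration bound for sub-Gaussian designs under Assumptions \ref{asmp: x} and \ref{asmp: second derivative}, already established in the proof of Theorem \ref{thm: l2 with assumption}), combined with the $\ell_1$-error bound $\onenorm{\hbeta - \bbeta}$ from \eqref{eq: l1 bound 1 with asmp}.

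The second piece is to handle the penalty terms. Here I would split $\hdeltaah = \hbeta - \hwah$ and $\bbeta - \hwah$ over the support $S$ of $\bbeta$ and its complement, together with the ``transfer residual'' $\hwah - \bbeta$ whose $\ell_1$-norm is $\mathcal{O}_p\big(s\sqrt{\log p/(\nah+n_0)} + (\log p/n_0)^{1/4}h^{1/2}\wedge h\big)$ — this is exactly the intermediate bound on $\onenorm{\hwah - \bbeta}$ derived inside the proof of Theorem \ref{thm: l2 with assumption} using Assumption \ref{asmp: merging} (the homogeneity condition makes $\bwah$ close to $\bbeta$, and then the $\ell_1$-penalized pooled GLM estimator concentrates around $\bwah$). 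Plugging $\lambda_{\bdelta} = C_{\bdelta}\sqrt{\log p/n_0}$ and multiplying by the $\ell_1$-errors of the two steps gives precisely the three terms in \eqref{eq: prediction error bound 1 with asmp}: the term $s\sqrt{\log p/n_0}\sqrt{\log p/(\nah+n_0)}$ comes from $\lambda_{\bdelta}$ times the transferring-step $\ell_1$-error on $S$; the term $h\sqrt{\log p/n_0}$ comes from $\lambda_{\bdelta}$ times the intrinsic contrast size $h$; and the cross term $\sqrt{s\log p/n_0}(\log p/(\nah+n_0))^{1/4}h^{1/2}$ arises from the product of the debiasing-step sparse error $\sqrt{s}$ with the $(\log p/n_0)^{1/4}h^{1/2}$ part of the transferring error (or, equivalently, from $\lambda_{\bdelta}\onenorm{\hdeltaah}$ restricted to $S$, whose cardinality-$s$ bound interacts multiplicatively with the transferred bias). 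Throughout, one uses that on the good event the compatibility/restricted-eigenvalue condition holds for the target design (Lemma-level statements already used for Theorem \ref{thm: l2 with assumption}), so that $\ell_1$-cone conditions translate into the displayed rates.

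The main obstacle I anticipate is bookkeeping the interaction between the two $\ell_1$-penalized stages cleanly: the debiasing-step error $\hdeltaah$ is not itself sparse (only the target contrast relative to $\hwah$ is approximately so, up to the transfer residual), so the cone decomposition has to carry the extra $\onenorm{\hwah - \bbeta}$ term correctly through the restricted-eigenvalue argument, and one must verify that the cross term does not blow up — this is where the condition $h \leq C\sqrt{s}$ and $h \ll \sqrt{n_0/\log p}$ are used to guarantee the transferred bias is dominated. The rest is a routine assembly of the estimates already proved for Theorem \ref{thm: l2 with assumption}, so no genuinely new concentration inequality is needed; the work is entirely in the triangle-inequality/cone-condition accounting. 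Finally, taking a union bound over the events used gives the stated probability $1 - n_0^{-1}$, matching the probability in Theorem \ref{thm: l2 with assumption}.
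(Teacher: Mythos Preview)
Your approach is correct in spirit and lands on the right answer, but it is substantially more elaborate than the paper's argument, which is essentially a three-line calculation. The paper does not go through the basic inequality or any cone/support decomposition; instead it uses the KKT condition of the debiasing step directly: since $\hbeta=\hwah+\hdeltaah$ is a minimizer, $\nabla L^{(0)}_{n_0}(\hbeta)+\lambda_{\bdelta}\,\partial\onenorm{\hbeta-\hwah}=0$, so by H\"older
\[
\langle \nabla L^{(0)}_{n_0}(\hbeta)-\nabla L^{(0)}_{n_0}(\bbeta),\hbeta-\bbeta\rangle
\le \bigl(\lambda_{\bdelta}+\infnorm{\nabla L^{(0)}_{n_0}(\bbeta)}\bigr)\onenorm{\hbeta-\bbeta}
\lesssim \sqrt{\tfrac{\log p}{n_0}}\,\onenorm{\hbeta-\bbeta},
\]
and then one simply plugs in the $\ell_1$-bound \eqref{eq: l1 bound 1 with asmp}. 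All of the cone-condition and restricted-eigenvalue bookkeeping you describe is already absorbed into that $\ell_1$-bound, so repeating it here is unnecessary; in particular there is no need to separately track $\onenorm{\hwah-\bbeta}$. Note also that the third displayed term in the theorem is in fact dominated by the sum of the first two via AM--GM, so your attempt to explain its origin via a specific ``cross'' interaction is not needed --- the paper's simple product $\sqrt{\log p/n_0}\cdot\bigl(s\sqrt{\log p/(\nah+n_0)}+h\bigr)$ already implies it.
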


\begin{remark}
	When $h \ll s\sqrt{\frac{\log p}{n_0}}$ and $\nah \gg n_0$, the upper bounds in (\rom{1})  are better than the classical Lasso bound $\mathcal{O}_p\left(\frac{s\log p}{n_0}\right)$ with target data \citep{loh2015regularized}. 
\end{remark}

\begin{theorem}[Bound of a prediction error measure for $\mah$-Trans-GLM without Assumption \ref{asmp: merging}]\label{thm: prediction error without assumption}
By imposing the same conditions in Theorem \ref{thm: l2 without assumption}, we have	\begin{align}
		\sup_{\bxi \in \Xi(s, h)}\<\nabla L^{(0)}_{n_0}(\hbeta) -\nabla L^{(0)}_{n_0}(\bbeta), \hbeta - \bbeta\> 
		&\lesssim s\left(\frac{\log p}{n_0}\right)^{1/2}\left(\frac{\log p}{\nah+n_0}\right)^{1/2} + sh\left(\frac{\log p}{n_0}\right)^{1/2}\\
		&\quad + \left(\frac{s\log p}{n_0}\right)^{1/2}\left(\frac{\log p}{\nah+n_0}\right)^{1/4}h^{1/2} \label{eq: prediction error bound 1 with asmp}
	\end{align}
	with probability at least $1-n_0^{-1}$.
\end{theorem}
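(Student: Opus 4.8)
The plan is to bound $D \coloneqq \<\nabla L^{(0)}_{n_0}(\hbeta) -\nabla L^{(0)}_{n_0}(\bbeta), \hbeta - \bbeta\>$, which is automatically nonnegative because $\psi$ (hence $L^{(0)}_{n_0}$) is convex, and to do so by reducing it to the first-order optimality condition of the debiasing step together with the error bounds already established in Theorem~\ref{thm: l2 without assumption}. Since $\hdeltaah$ minimizes $L^{(0)}_{n_0}(\hwah+\bdelta)+\lambda_{\bm{\delta}}\onenorm{\bdelta}$ over $\bdelta$, there is a subgradient $\hat{\bm z}\in\partial\onenorm{\hdeltaah}$ with $\nabla L^{(0)}_{n_0}(\hbeta)=-\lambda_{\bm{\delta}}\hat{\bm z}$, so in particular $\infnorm{\nabla L^{(0)}_{n_0}(\hbeta)}\le\lambda_{\bm{\delta}}$ and $\<\hat{\bm z},\hdeltaah\>=\onenorm{\hdeltaah}$. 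A single application of H\"older's inequality then gives the crude but sufficient estimate
\[
D=\<-\lambda_{\bm{\delta}}\hat{\bm z}-\nabla L^{(0)}_{n_0}(\bbeta),\,\hbeta-\bbeta\>\;\le\;\big(\lambda_{\bm{\delta}}+\infnorm{\nabla L^{(0)}_{n_0}(\bbeta)}\big)\,\onenorm{\hbeta-\bbeta}.
\]

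Next I would control the two factors on the right. The empirical score at the truth, $\nabla L^{(0)}_{n_0}(\bbeta)=\tfrac1{n_0}\sum_i\big(\psi'((\bxk{0}_i)^T\bbeta)-y^{(0)}_i\big)\bxk{0}_i$, has mean zero by the GLM identity $\te[y^{(0)}\mid\bx]=\psi'(\bx^T\bbeta)$, and a sub-exponential concentration argument using the sub-Gaussian design of Assumption~\ref{asmp: x} and Assumption~\ref{asmp: second derivative} yields $\infnorm{\nabla L^{(0)}_{n_0}(\bbeta)}\lesssim\sqrt{\log p/n_0}$ on the same high-probability event already used in the proof of Theorem~\ref{thm: l2 without assumption}; since $\lambda_{\bm{\delta}}=C_{\bm{\delta}}\sqrt{\log p/n_0}$, the bound above is $\lesssim\sqrt{\log p/n_0}\cdot\onenorm{\hbeta-\bbeta}$. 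Substituting the $\ell_1$-bound \eqref{eq: l1 bound 1 without asmp}, $\onenorm{\hbeta-\bbeta}\lesssim s\sqrt{\log p/(\nah+n_0)}+sh$, gives
\[
D\;\lesssim\;s\Big(\tfrac{\log p}{n_0}\Big)^{1/2}\Big(\tfrac{\log p}{\nah+n_0}\Big)^{1/2}+sh\Big(\tfrac{\log p}{n_0}\Big)^{1/2},
\]
which already accounts for the first two terms of the assertion, and since the third term $(\tfrac{s\log p}{n_0})^{1/2}(\tfrac{\log p}{\nah+n_0})^{1/4}h^{1/2}$ is nonnegative, the claimed bound follows. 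It is worth noting \emph{where} that extra term comes from: it equals the geometric mean $\sqrt{(\text{first term})\cdot h\lambda_{\bm{\delta}}}$ and is precisely what one obtains by the alternative route that passes through the $\ell_2$-bound of Theorem~\ref{thm: l2 without assumption}, namely a second-order Taylor expansion of $L^{(0)}_{n_0}$ in which, after discarding the part absorbed by the penalty (here $\<\hat{\bm z},\hdeltaah\>=\onenorm{\hdeltaah}$ contributes a term $\le 0$), one bounds $D$ by $\sqrt{D}$ times a Hessian-weighted norm of $\hwah-\bbeta$; by the upper bound $\infnorm{\psi''}\le M_\psi$ (Assumption~\ref{asmp: second derivative}.(\rom{1}), or its bounded-design analogue under (\rom{2})) together with a restricted-eigenvalue bound on the target design this norm is $\lesssim\twonorm{\hwah-\bbeta}\lesssim(s\log p/(\nah+n_0))^{1/2}+\sqrt{s}h$, and solving the resulting quadratic inequality in $\sqrt{D}$ yields that term. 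I would present this route to keep the proof parallel to the with-Assumption case of Theorem~\ref{thm: prediction error with assumption}.

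Finally, all the events invoked above are contained in the high-probability event of Theorem~\ref{thm: l2 without assumption}, so the conclusion holds with probability at least $1-n_0^{-1}$ with no further union bound needed. The main obstacle is not in this reduction but in the ingredients it borrows: establishing the restricted-strong-convexity and restricted-eigenvalue inequalities for the pooled and target log-likelihoods under only Assumptions~\ref{asmp: convexity}--\ref{asmp: second derivative}, where $\psi$ is merely strictly (not strongly) convex, and handling the fact that the population contrast $\bwah-\bbeta$ need not be sparse. This last point is exactly why the transferring-step analysis must use the $s$-sparse vector $\bbeta$ itself as the reference and absorb the bias $\bwah-\bbeta$ into the larger penalty $\lambda_{\bm w}=C_{\bm w}(\sqrt{\log p/(\nah+n_0)}+h)$, and why the debiasing step is analyzed at the slow rate (penalty times $\ell_1$-norm) rather than the fast $s\lambda^2$ rate.
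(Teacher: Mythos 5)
Your proposal is correct and follows essentially the same route as the paper: the KKT condition of the debiasing step plus H\"older's inequality give $\<\nabla L^{(0)}_{n_0}(\hbeta)-\nabla L^{(0)}_{n_0}(\bbeta),\hbeta-\bbeta\>\le(\lambda_{\bm{\delta}}+\infnorm{\nabla L^{(0)}_{n_0}(\bbeta)})\onenorm{\hbeta-\bbeta}$, after which one plugs in $\infnorm{\nabla L^{(0)}_{n_0}(\bbeta)}\lesssim\sqrt{\log p/n_0}$ and the $\ell_1$-error bound from the proof of Theorem~\ref{thm: l2 without assumption}. The only cosmetic difference is that the paper obtains the third term directly from the cross term $\sqrt{sh}(\log p/(\nah+n_0))^{1/4}$ retained in the intermediate $\ell_1$-bound \eqref{eq: ua l1 bound without assumption} combined with \eqref{eq: sec step left 1}, whereas you add it as a free nonnegative term; both are valid.
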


\begin{remark}
	When $h \ll \sqrt{\frac{\log p}{n_0}}$ and $\nah \gg n_0$, the upper bounds in (\rom{1})  are better than the classical Lasso bound $\mathcal{O}_p\left(\frac{s\log p}{n_0}\right)$ with target data \citep{loh2015regularized}. 
\end{remark}

\subsection{Additional numerical results}

\subsubsection{More details about the implementation of numerical experiments}\label{subsubsec: implementation supp}
All experiments in this paper are conducted in R. The GLM Lasso is implemented via R package \texttt{glmnet} \citep{friedman2010regularization}. We summarize R codes for GLM transfer learning algorithms in a new R package \texttt{glmtrans}, which is available on CRAN. We use 10-fold cross-validation to choose the penalty parameter for na\"ive-Lasso and our GLM transfer learning algorithms. The largest $\lambda$ which achieves one standard error within the minimum cross-validation error will be chosen for the transferring step, which is sometimes called \textit{``lambda.1se''} \citep{friedman2010regularization}. To effectively debias the transferring step, we choose the lambda achieving minimal cross-validation error, which is often called \textit{``lambda.min''}. Since in transferable source detection, the first step is the same as the transferring step of $\{k\}$-Trans-GLM, therefore we keep the same setting as the transferring step in Algorithm \ref{algo: merging}, i.e. take \textit{``lambda.1se''}. And in Algorithm \ref{algo: a unknown}, we set the constant $C_0 = 2$. In the two-step transfer learning procedure of Algorithm \ref{algo: inf}, we use ``lambda.min'' in both transferring and debiasing steps.

In real-data studies, SVM with RBF kernel is implemented by package \texttt{e1071}, and decision trees are implemented through package \texttt{rpart}. We fit the random forest via package \texttt{randomForest}, and implement boosting trees through package \texttt{fastAdaboost}. The number of weak classifiers in boosting trees is set to be 50. Since the sample size of each state is small and some states have very imbalanced responses, we change the cross-validation folds from default 10 to 5 for all Lasso-based methods. All the other parameters are kept the same as the default settings.

\subsubsection{Transfer learning on $\mah$}\label{subsubsec: oracle supp}
In this section, we supplement more numerical results about the performance of Algorithm \ref{algo: merging} under different $h$ and $(\{n_k\}_{k=0}^K, p, s)$ settings. In addition to the previous $(\{n_k\}_{k=0}^K, p, s)$ setting studied in Section \ref{subsubsec: oracle}, the following two settings of $(\{n_k\}_{k=0}^K, p, s)$ are considered:
\begin{enumerate}[(i)]
	\item $n_k = 150$ for all $k = 0, \ldots, K$, $p = 1000$, $s =15$;
	\item $n_k = 200$ for all $k = 0, \ldots, K$, $p = 2000$, $s =20$.
\end{enumerate}

\begin{figure}[!h]
	\centering
	\includegraphics[width=\textwidth]{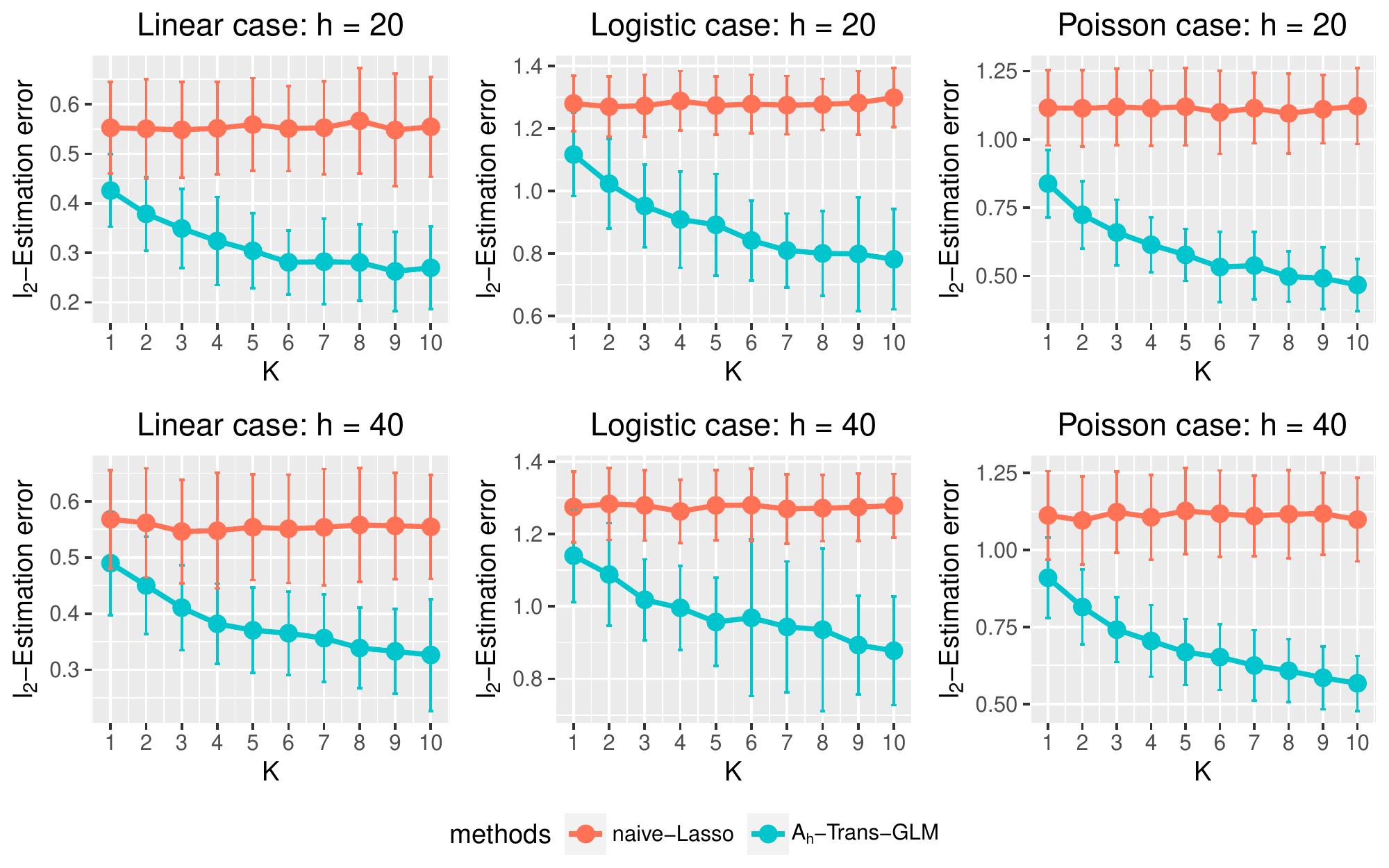}
	\caption{The average $\ell_2$-estimation of $\mah$-Trans-GLM and na\"ive-Lasso under linear, logistic and Poisson regression models with different settings of $h$ and $K$. $n_k = 150$ for all $k = 0, \ldots, K$, $p = 1000$, $s =15$. Error bars denote the standard deviations.}
	\label{fig: oracle2}
\end{figure}

\begin{figure}[!h]
	\centering
	\includegraphics[width=\textwidth]{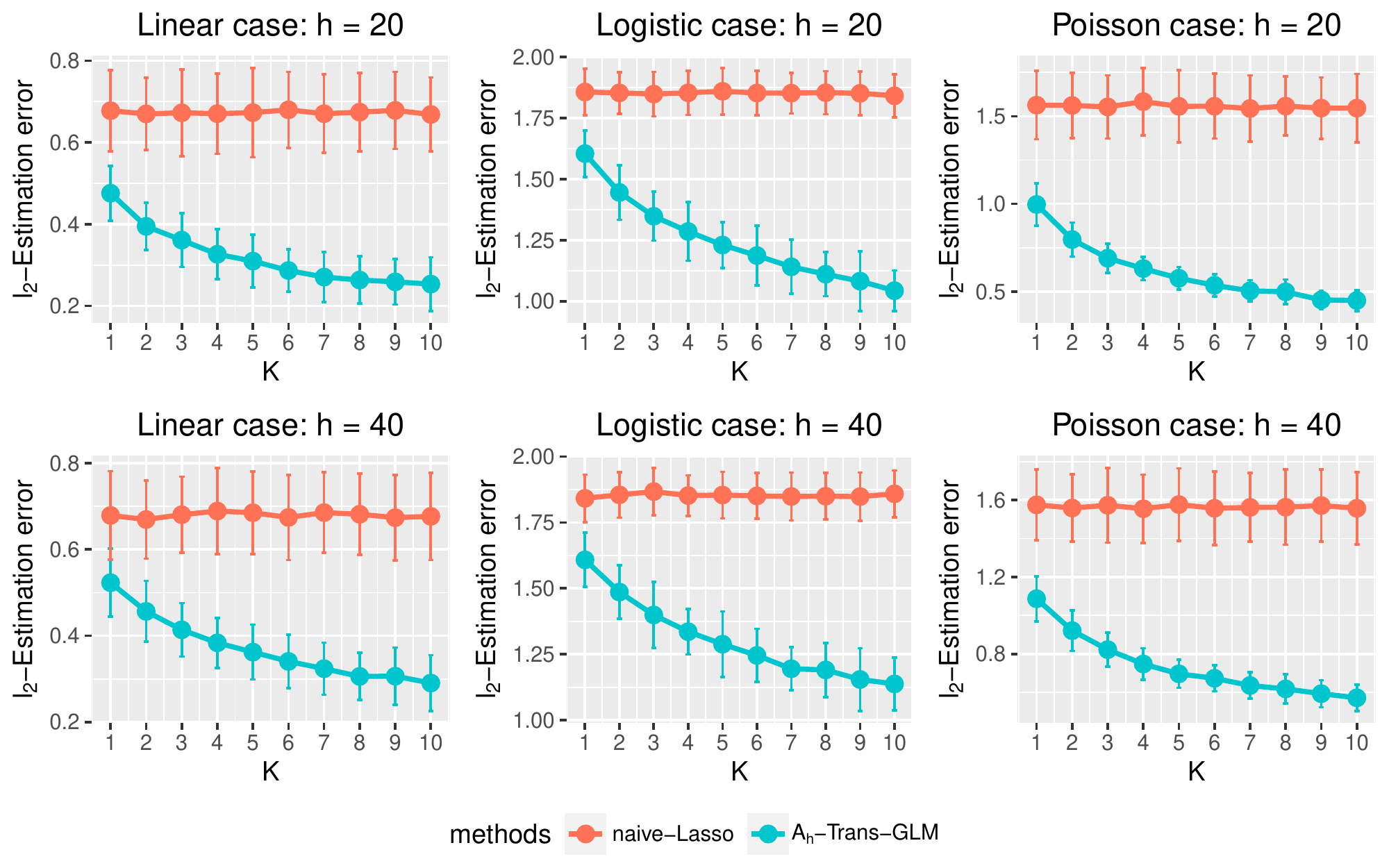}
	\caption{The average $\ell_2$-estimation of $\mah$-Trans-GLM and na\"ive-Lasso under linear, logistic, and Poisson regression models with different settings of $h$ and $K$. $n_k = 200$ for all $k = 0, \ldots, K$, $p = 2000$, $s =20$. Error bars denote the standard deviations.}
	\label{fig: oracle3}
\end{figure}

Given each $(\{n_k\}_{k=0}^K, p, s)$ setting, consider the same setting we use in Section \ref{subsubsec: oracle}. All the experiments are replicated 200 times and the average $\ell_2$-estimation errors of $\mah$-Trans-GLM and na\"ive-Lasso under linear, logistic and Poisson regression models are shown in Figure \ref{fig: oracle2} and \ref{fig: oracle3}.

The trends in Figures \ref{fig: oracle2} and \ref{fig: oracle3} are similar to that in Figure \ref{fig: oracle1}. As $K$ increases, the estimation error of $\Kah$-Trans-GLM continues declining and is lower that of na\"ive-Lasso on target data only.

\subsubsection{Transferable source detection}\label{subsubsec:source detection}

\begin{figure}[!h]
	\centering
	\includegraphics[width=\textwidth]{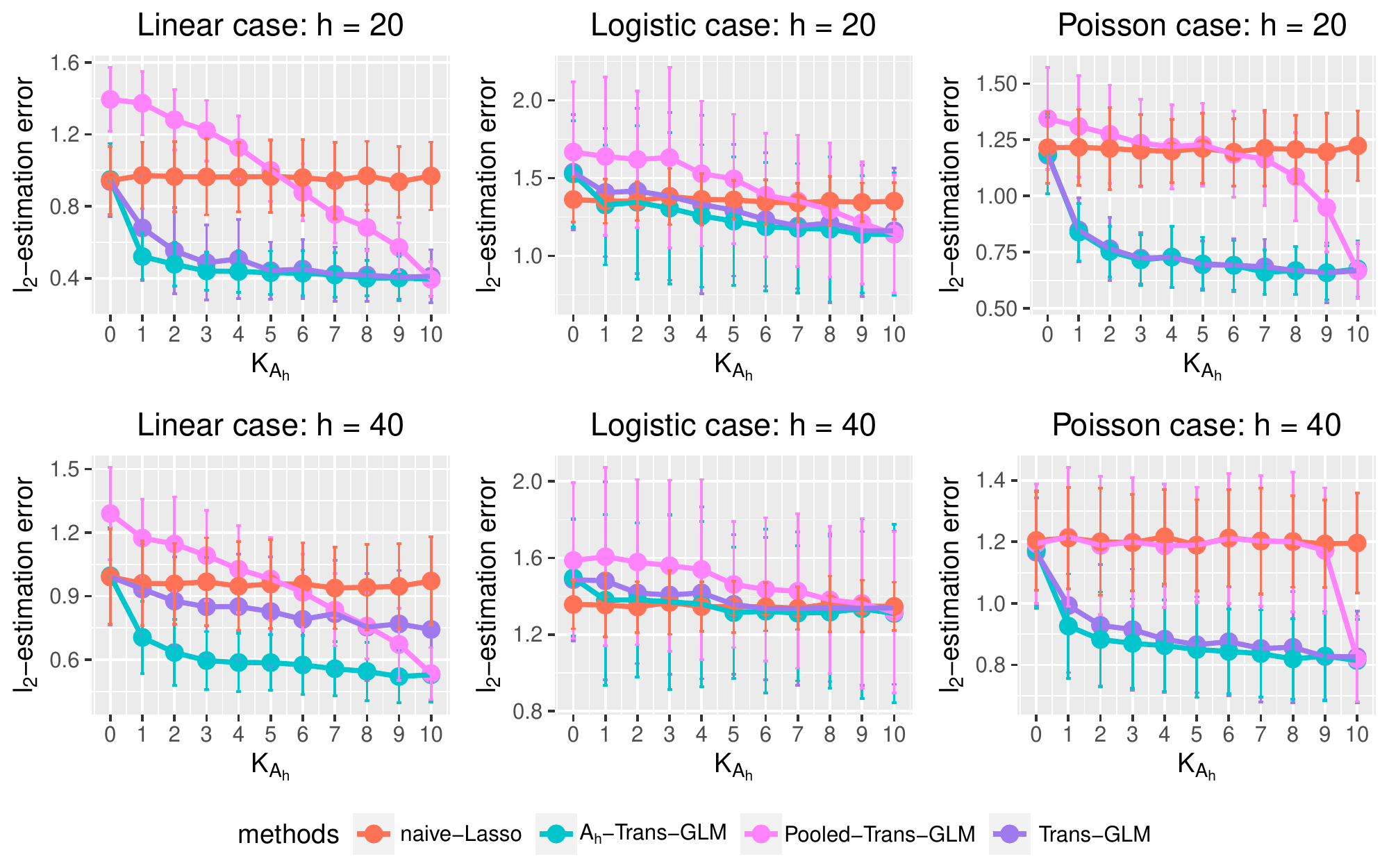}
	\caption{The average $\ell_2$-estimation error of various models with different settings of $h$ and $\Kah$ when $K = 10$. $n_k = 100$ for all $k = 0, \ldots, K$, $p = 500$, $s =10$. Error bars denote the standard deviations.}
	\label{fig: unknown1}
\end{figure} 

\begin{figure}[!h]
	\centering
	\includegraphics[width=\textwidth]{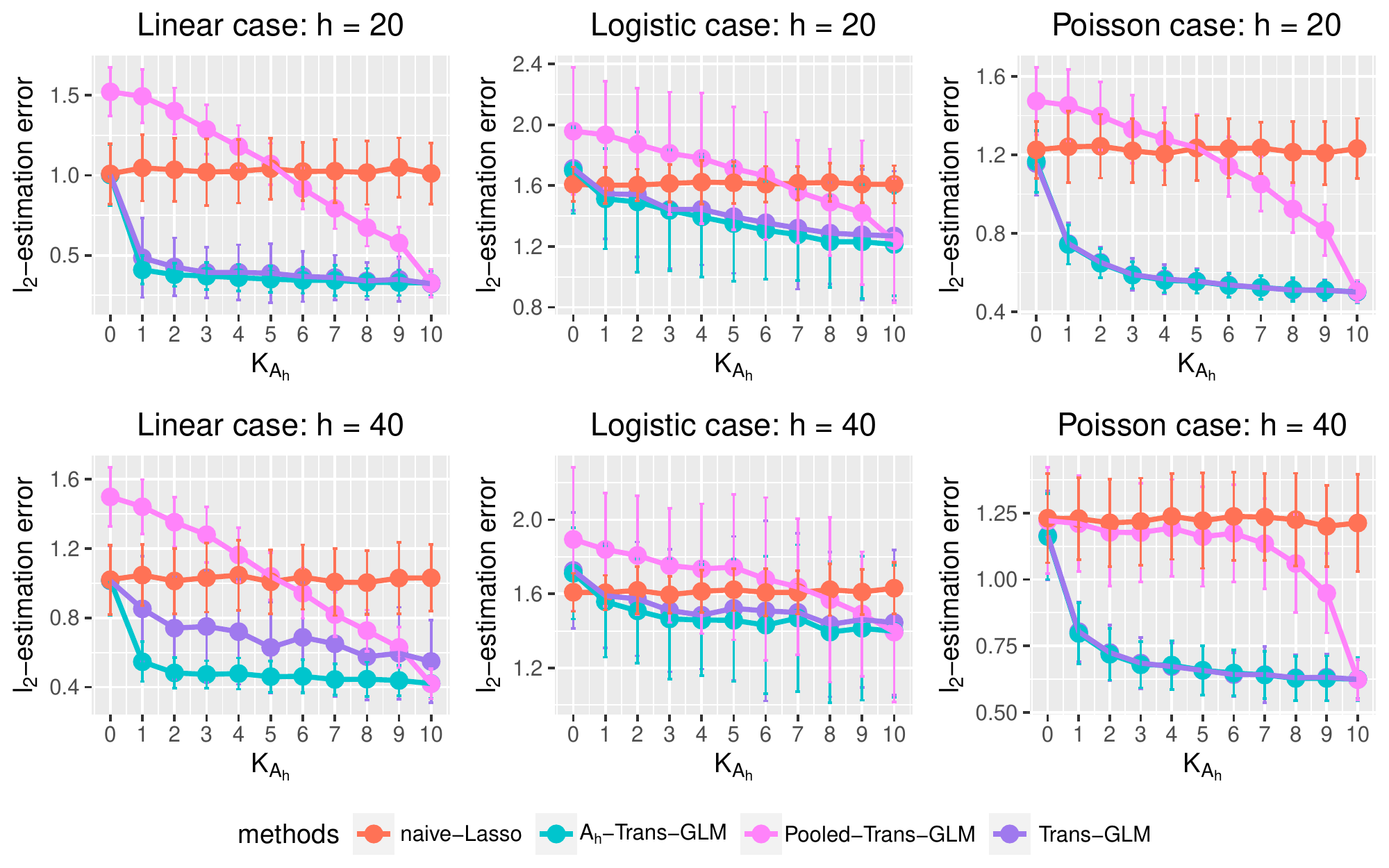}
	\caption{The average $\ell_2$-estimation error of various models with different settings of $h$ and $\Kah$ when $K = 10$. $n_k = 150$ for all $k = 0, \ldots, K$, $p = 1000$, $s =15$. Error bars denote the standard deviations.}
	\label{fig: unknown2}
\end{figure} 

In this section we supplement more experimental results in the case that some sources are not in the level-$h$ transferring set $\mah$. The model settings are the same as those in Section \ref{subsubsec: unknown simulation}. In addition to the setting used in Section \ref{subsubsec: unknown simulation}, two more ones are considered: 
\begin{enumerate}[(i)]
	\item $n_k = 100$ for all $k = 0, \ldots, K$, $p = 500$, $s = 10$;
	\item $n_k = 150$ for all $k = 0, \ldots, K$, $p = 1000$, $s = 15$.
\end{enumerate}

We vary the values of $|\Kah|$ and $h$, and repeat each setting for 200 times. The average $\ell_2$-estimation errors are summarized in Figures \ref{fig: unknown1} and \ref{fig: unknown2}.

Similar to Figure \ref{fig: unknown3}, it can be seen that $\mah$-Trans-GLM always achieves the best performance. Trans-GLM mimics the behavior of $\mah$-Trans-GLM very well in most cases, implying that the detection algorithm can accurately identify $\ma$. We also observe that for linear models and logistic regression models, when $h = 40$, there is a gap between the estimation error of $\mah$-Trans-GLM and Trans-GLM, meaning that when $h$ increases, Trans-GLM might begin missing sources in $\mah$ or wrongly including sources in $\mah$.

\begin{figure}[!h]
	\centering
	\includegraphics[width=\textwidth]{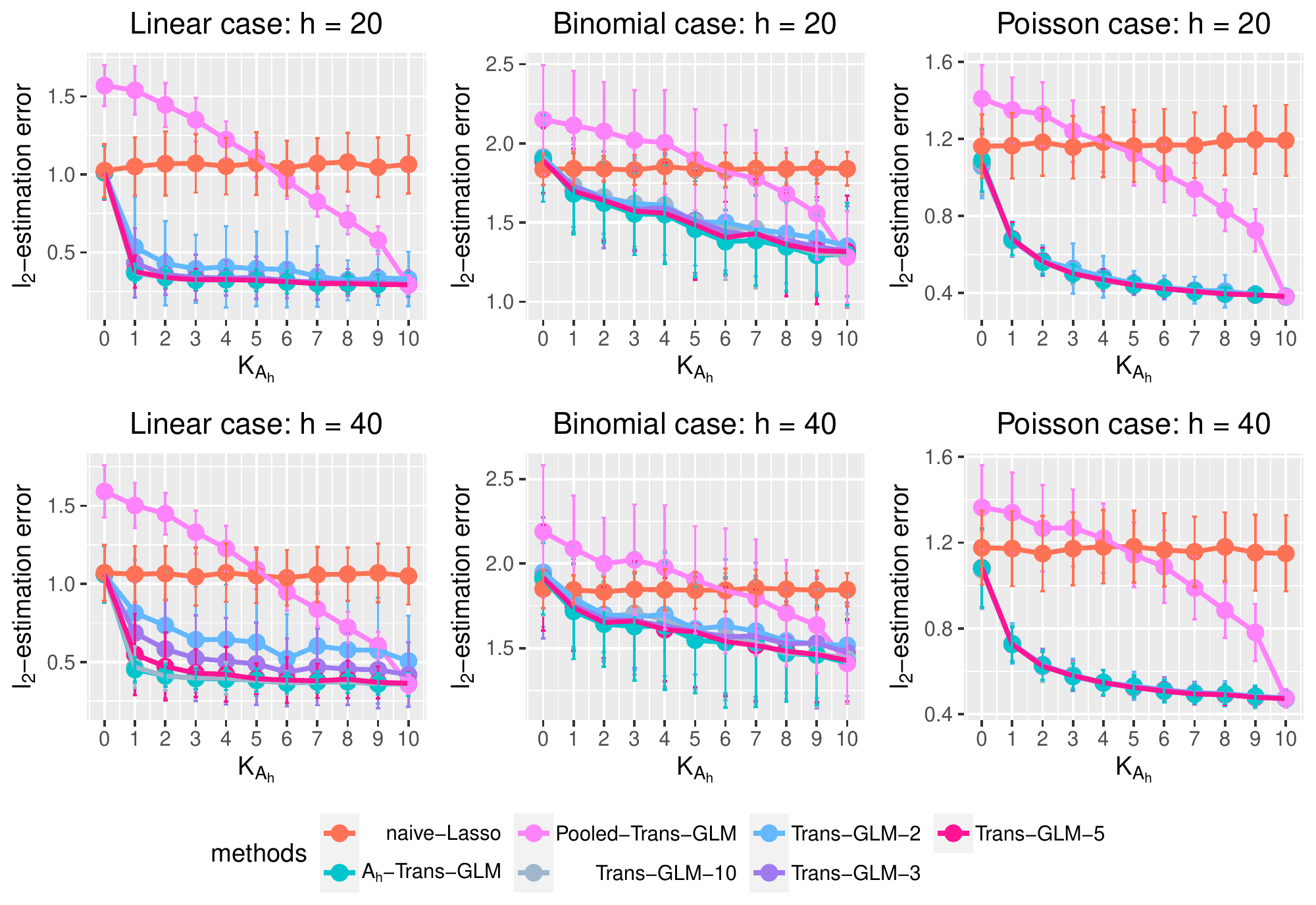}
	\caption{The average $\ell_2$-estimation error of various models with different settings of $h$ and $\Kah$ when $K = 10$. $n_k = 200$ for all $k = 0, \ldots, K$, $p = 2000$, $s =20$. Error bars denote the standard deviations. The numbers after Trans-GLM indicate the number of folds used in cross-validation procedure of Algorithm \ref{algo: a unknown} (steps 2-5)}
	\label{fig: unknown3_cv}
\end{figure} 

Furthermore, we pick the setting when $p=2000$ and try different numbers of folds in the cross-validation procedure of Algorithm \ref{algo: a unknown} (steps 2-5). The results are displayed in Figure \ref{fig: unknown3_cv}. The findings suggest that more cross-validation folds may lead to better performance of Trans-GLM. When the cross-validation folds are large, the detection is likely to be more accurate. In the meantime, this may cause more computational burdens. Therefore, we may choose a moderate fold number like 3 or 5 in practice to achieve a good trade-off between the accuracy and computational costs.

\subsubsection{Additional results of real data analysis}\label{subsubsec: election inf supp}

In this section, we aim to identify variables with significant effects for different targets in the real-data study (Section \ref{subsec:real data}), by applying Algorithm \ref{algo: inf}. Taking the randomness caused by the cross-validation procedure in algorithms, we repeat the experiment 500 times. In each replication, for each target state, we first run Algorithm \ref{algo: a unknown} to get a point estimate $\hbeta$ of the coefficient and the estimated informative source index set $\widehat{\ma}$. Then, we run Algorithm \ref{algo: inf} with $\hbeta$ and $\widehat{\ma}$ to get the significant variables under 90\% confidence level. Equivalently speaking, we identify coefficient components whose 90\% confidence interval (CI) does not cover zero and divided them into two parts based on the sign of CI center. Recall our recoding rule of the response: 0 denotes Republicans and 1 denotes Democrats. Therefore, given all other variables fixed, increasing the variable with a positive CI center gives rise to the chance of a county to vote Democrats. In contrast, increasing the variable with a negative CI center gives rise to the chance of a county to vote Republicans. The results are summarized in Figure \ref{fig: election_inf}, where we list all variables which are significant under 90\% level in at least 5\% of 500 replications. We use different colors and shapes to distinguish the effect of these variables (the sign of their CI centers). The $x$-axis and $y$-axis display the target state \footnote{Target state AR is removed because none of the variables are significant in 5\% of 500 replications} and the variable names. The description of some main effects is presented in Table \ref{tab: variable info election data}. It reveals that RHI825214 is positively significant in 6 of 8 target states, which means that when other variables are fixed, a larger White percentage in a county leads to a higher chance to vote Republicans. In opposition to this, EDU685213 is negatively significant in 4 of 8 target states, showing that when other variables are fixed, a larger Bachelor or higher degree holder percentage benefits Democrats under county-level. More interesting findings can be obtained from Figure \ref{fig: election_inf}, which are expected to provide some insights to better understand the election results.

\begin{figure}[!]
	\centering
	\includegraphics[width=\textwidth]{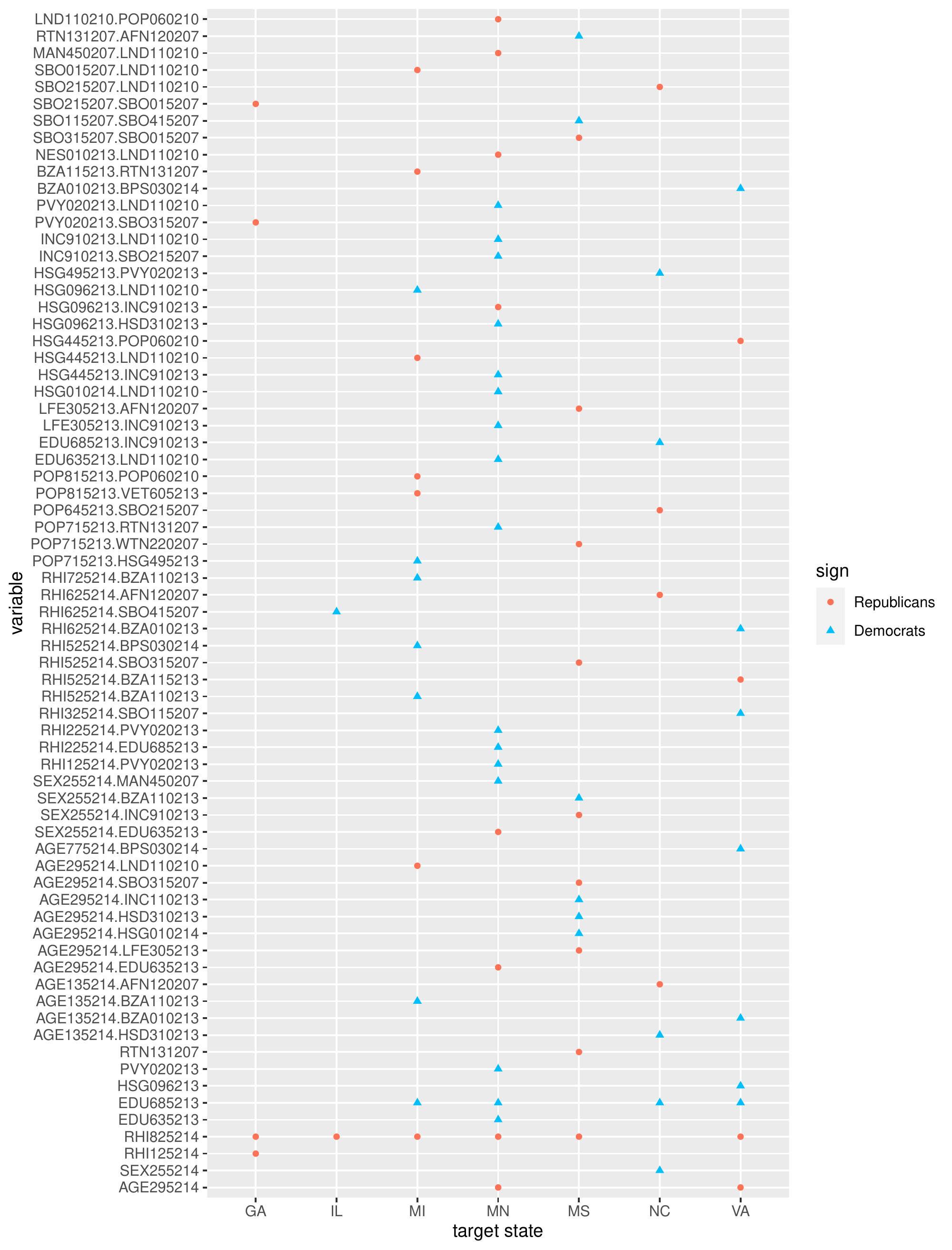}
	\caption{Variables that are significant under 90\% level confidence for different targets in at least 5\% of 500 replications. Provided by Algorithms \ref{algo: a unknown} and  \ref{algo: inf}.}
	\label{fig: election_inf}
\end{figure}

\begin{table}[!]
\renewcommand{\arraystretch}{0.65}
\begin{center}
\begin{tabular}{l|l}
\Xhline{1.25pt}
Variable name  & Description \\ 
\hline
AGE135214 &  Persons under 5 years, percent, 2014 \\ 
AGE295214 &  Persons under 18 years, percent, 2014 \\ 
AGE775214 &  Persons 65 years and over, percent, 2014 \\ 
SEX255214 &  Female persons, percent, 2014 \\ 
RHI125214 &  White alone, percent, 2014 \\ 
RHI225214 &  Black or African American alone, percent, 2014 \\ 
RHI325214 &  American Indian and Alaska Native alone, percent, 2014 \\ 
RHI525214 &  Native Hawaiian and Other Pacific Islander alone, percent, 2014 \\ 
RHI625214 &  Two or More Races, percent, 2014 \\ 
RHI725214 &  Hispanic or Latino, percent, 2014 \\ 
RHI825214 &  White alone, not Hispanic or Latino, percent, 2014 \\ 
POP715213 &  Living in same house 1 year \& over, percent, 2009-2013 \\ 
POP645213 &  Foreign born persons, percent, 2009-2013 \\ 
POP815213 &  Language other than English spoken at home, pct age 5+, 2009-2013 \\ 
EDU635213 &  High school graduate or higher, percent of persons age 25+, 2009-2013 \\ 
EDU685213 &  Bachelor's degree or higher, percent of persons age 25+, 2009-2013 \\ 
VET605213 &  Veterans, 2009-2013 \\ 
LFE305213 &  Mean travel time to work (minutes), workers age 16+, 2009-2013 \\ 
HSG010214 &  Housing units, 2014 \\ 
HSG445213 &  Homeownership rate, 2009-2013 \\ 
HSG096213 &  Housing units in multi-unit structures, percent, 2009-2013 \\ 
HSG495213 &  Median value of owner-occupied housing units, 2009-2013 \\ 
HSD310213 &  Persons per household, 2009-2013 \\ 
INC910213 &  Per capita money income in past 12 months (2013 dollars), 2009-2013 \\ 
INC110213 &  Median household income, 2009-2013 \\ 
PVY020213 &  Persons below poverty level, percent, 2009-2013 \\ 
BZA010213 &  Private nonfarm establishments, 2013 \\ 
BZA110213 &  Private nonfarm employment,  2013 \\ 
BZA115213 &  Private nonfarm employment, percent change, 2012-2013 \\ 
NES010213 &  Nonemployer establishments, 2013 \\ 
SBO315207 &  Black-owned firms, percent, 2007 \\ 
SBO115207 &  American Indian- and Alaska Native-owned firms, percent, 2007 \\ 
SBO215207 &  Asian-owned firms, percent, 2007 \\ 
SBO415207 &  Hispanic-owned firms, percent, 2007 \\ 
SBO015207 &  Women-owned firms, percent, 2007 \\ 
MAN450207 &  Manufacturers shipments, 2007 (\$1,000) \\ 
WTN220207 &  Merchant wholesaler sales, 2007 (\$1,000) \\ 
RTN131207 &  Retail sales per capita, 2007 \\ 
AFN120207 &  Accommodation and food services sales, 2007 (\$1,000) \\ 
BPS030214 &  Building permits, 2014 \\ 
LND110210 &  Land area in square miles, 2010 \\ 
POP060210 &  Population per square mile, 2010 \\ 
\Xhline{1.25pt}
\end{tabular}
\end{center}
\caption{Description of some variables in the original dataset.}
\label{tab: variable info election data}
\end{table}

\section{Proofs}\label{sec:proofs}
Define $\huah = \hwah - \bwah$ and $\mathcal{D} = \{(\bXk{k}, \byk{k})\}_{k \in \transet}$. In the following, we will use bolded $\bpsi'$ to represent the vector whose each component comes from the scalar function $\psi'$ with corresponding predictors. Denote 
\begin{align}
	\hat{L}(\bw, \md) &= -\frac{1}{\nah + n_0}\sum_{k \in \transet}(\byk{k})^T \bXk{k}\bw + \frac{1}{\nah + n_0}\sum_{k \in \transet}\sum_{i=1}^{n_k}\psi(\bw^T \bxk{k}_i), \\
	\nabla \hat{L}(\bw, \md) &= -\frac{1}{\nah + n_0}\sum_{k \in \transet}(\bXk{k})^T\byk{k} + \frac{1}{\nah + n_0}\sum_{k \in \transet}(\bXk{k})^T \bpsi'(\bw^T \bxk{k}), \\
	\delta \hat{L}(\bu, \md) &= \hat{L}(\bwah + \bu, \md) - \hat{L}(\bwah) - \nabla \hat{L}(\bwah)^T \bu.
\end{align}
Denote $\partial \onenorm{\bw}$ as the subgradient of $\onenorm{\bw}$ w.r.t. $\bw \in \mathbb{R}^p$, which falls between $-1$ and $1$.

For any $\bw \in \mathbb{R}^{n}$, denote $\bWk{k}_{\bw} = \diag\left(\sqrt{\psi''((\bxk{k}_1)^T\bw)}, \ldots, \sqrt{\psi''((\bxk{k}_{n_k})^T\bw)}\right)$ and $\bXk{k}_{\bw} = \bWk{k}_{\bw}\bXk{k}$. $\bXk{k}_{\bw, j}$ represents the $j$-th column of $\bXk{k}_{\bw}$ and $\bXk{k}_{\bw, -j}$ represents the matrix $\bXk{k}_{\bw}$ without the $j$-th column. Denote $\bXk{k}_{\bw, -j}$  as the submatrix without $j$-th column. $\bXk{k}_{\bw, j}$ represents the $j$-th column of $\bXk{k}_{\bw}$ without the diagonal $(j,j)$ elements. Denote $\bxk{k}_{\bw} = \sqrt{\psi''((\bxk{k})^T\bw)}\cdot \bxk{k}$. $\bxk{k}_{\bw, j}$ and $\bxk{k}_{\bw, -j}$ represent the $j$-th component of $\bxk{k}_{\bw}$ and the vector without $j$-th component, respectively. Define
\begin{equation}
	\bgammak{k}_j = \argmin_{\bgamma \in \mathbb{R}^{p-1}} \te\left[\bxk{k}_{\bwk{k}, j} - (\bxk{k}_{\bwk{k}, -j})^T\bgamma \right]^2 = \argmin_{\bgamma \in \mathbb{R}^{p-1}} \te\left\{\psi''((\bwk{k})^T\bxk{k})\cdot [\bxk{k}_{j} - (\bxk{k}_{-j})^T\bgamma]^2 \right\}.
\end{equation}
Also define $\bmetak{k}_{\bwk{k}, j} = \bXk{k}_{\bw, j} - \bXk{k}_{\bw, -j}\bgammak{k}_j$ and $(\tau^{(k)}_j)^2 = \te(\bmetak{k}_{\bwk{k}, j})^2$. And define
\begin{equation}
	\bgammaa_j = \argmin_{\bgamma \in \mathbb{R}^{p-1}} \left\{ \sum_{k \in \transet}\te \left[\psi''((\bwk{k})^T\bxk{k})\cdot [\bxk{k}_{j} - (\bxk{k}_{-j})^T\bgamma]^2\right] \right\}.
\end{equation}
\subsection{Some lemmas}

\begin{lemma}\label{lem: delta a}
	Under Assumptions \ref{asmp: convexity} and \ref{asmp: merging}, 
	\begin{equation}
		\onenorm{\bdeltaah} = \onenorm{\bwah - \bbeta} \leq C_1h,
	\end{equation}
	where $\bwah$ is defined by equation \eqref{eq: wa in population merging} and $C_1 \coloneqq  \sup_{k \in \transet}\onenorm{\widetilde{\bm{\Sigma}}_h^{-1}\widetilde{\bm{\Sigma}}_h^{(k)}} < \infty$.
\end{lemma}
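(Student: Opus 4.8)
The plan is to linearise the population defining equation \eqref{eq: wa in population merging} (taken with $\ma=\mah$) around the target parameter $\bbeta$ and then solve for $\bdeltaah=\bwah-\bbeta$ explicitly. First I would split each summand as $\psi'((\bwah)^T\bxk{k})-\psi'((\bwk{k})^T\bxk{k})=[\psi'((\bwah)^T\bxk{k})-\psi'(\bbeta^T\bxk{k})]+[\psi'(\bbeta^T\bxk{k})-\psi'((\bwk{k})^T\bxk{k})]$ and apply the fundamental theorem of calculus to each bracket, writing for instance $\psi'((\bwah)^T\bxk{k})-\psi'(\bbeta^T\bxk{k})=\big(\int_0^1\psi''(\bbeta^T\bxk{k}+t(\bxk{k})^T(\bwah-\bbeta))\,dt\big)(\bxk{k})^T\bdeltaah$ and $\psi'(\bbeta^T\bxk{k})-\psi'((\bwk{k})^T\bxk{k})=\big(\int_0^1\psi''(\bbeta^T\bxk{k}+t(\bxk{k})^T(\bwk{k}-\bbeta))\,dt\big)(\bxk{k})^T\bdeltak{k}$. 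Multiplying by $\bxk{k}$ turns each scalar factor into a rank-one matrix $\bxk{k}(\bxk{k})^T$ acting on $\bdeltaah$, respectively on $\bdeltak{k}$, and after taking expectations and weighting by $\alpha_k$ the two groups of terms become exactly $\widetilde{\bm{\Sigma}}_h\bdeltaah$ and $\sum_{k\in\transet}\alpha_k\widetilde{\bm{\Sigma}}_h^{(k)}\bdeltak{k}$, with $\widetilde{\bm{\Sigma}}_h$ and $\widetilde{\bm{\Sigma}}_h^{(k)}$ precisely as defined in Assumption \ref{asmp: merging} (the interchange of $\te$ and $\int_0^1$ is by Fubini under Assumptions \ref{asmp: convexity} and \ref{asmp: x}, and is routine).

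Since $\bwk{0}=\bbeta$ we have $\bdeltak{0}=\bm{0}_{p}$, so the $k=0$ term of the second sum drops out, and \eqref{eq: wa in population merging} becomes $\widetilde{\bm{\Sigma}}_h\bdeltaah=-\sum_{k\in\mah}\alpha_k\widetilde{\bm{\Sigma}}_h^{(k)}\bdeltak{k}$. Next I would invert $\widetilde{\bm{\Sigma}}_h$ — its invertibility is implicit in Assumption \ref{asmp: merging}, which already refers to $\widetilde{\bm{\Sigma}}_h^{-1}$ — to get $\bdeltaah=-\sum_{k\in\mah}\alpha_k\widetilde{\bm{\Sigma}}_h^{-1}\widetilde{\bm{\Sigma}}_h^{(k)}\bdeltak{k}$. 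Taking $\ell_1$-norms and using $\onenorm{\bm{A}\bx}\leq\onenorm{\bm{A}}\onenorm{\bx}$ for the induced $\ell_1\!\to\!\ell_1$ operator norm (the maximum absolute column sum, which is the matrix $\onenorm{\cdot}$ used in the paper) yields $\onenorm{\bdeltaah}\leq\big(\sup_{k\in\transet}\onenorm{\widetilde{\bm{\Sigma}}_h^{-1}\widetilde{\bm{\Sigma}}_h^{(k)}}\big)\sum_{k\in\mah}\alpha_k\onenorm{\bdeltak{k}}$. Finally, by the definition of $\mah$ we have $\onenorm{\bdeltak{k}}\leq h$ for every $k\in\mah$, and $\sum_{k\in\mah}\alpha_k\leq\sum_{k\in\transet}\alpha_k=1$, so the right-hand side is at most $C_1h$ with $C_1=\sup_{k\in\transet}\onenorm{\widetilde{\bm{\Sigma}}_h^{-1}\widetilde{\bm{\Sigma}}_h^{(k)}}<\infty$ by Assumption \ref{asmp: merging}, which is the claim.

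There is no genuinely hard step: the argument is a one-shot linearisation followed by a norm bound. The only points needing a little care are (i) matching the base point $\bbeta$ and the integration directions in the fundamental-theorem-of-calculus expansions to the exact definitions of $\widetilde{\bm{\Sigma}}_h$ and $\widetilde{\bm{\Sigma}}_h^{(k)}$ in Assumption \ref{asmp: merging}, and (ii) using the paper's convention that $\onenorm{\cdot}$ on matrices is the maximum absolute column sum, for which $\onenorm{\bm{A}\bx}\leq\onenorm{\bm{A}}\onenorm{\bx}$ is the appropriate submultiplicativity statement; everything else is bookkeeping.
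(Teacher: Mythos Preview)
Your proposal is correct and follows essentially the same route as the paper's own proof: split the population equation around $\bbeta$, apply the integral mean-value (fundamental-theorem-of-calculus) representation to obtain $\widetilde{\bm{\Sigma}}_h(\bwah-\bbeta)=\sum_{k\in\mah}\alpha_k\widetilde{\bm{\Sigma}}_h^{(k)}(\bwk{k}-\bbeta)$, invert, and bound in the $\ell_1$ operator norm. The only cosmetic difference is that you spell out the Fubini step and the $k=0$ term dropping explicitly, whereas the paper absorbs these into the phrase ``By Taylor expansion''.
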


\begin{lemma}\label{lem: re glm}
	Under Assumptions \ref{asmp: convexity} and \ref{asmp: x}, there exists some positive constants $\kappa_1$, $\kappa_2$, $C_3$ and $C_4$ such that,
	\begin{equation}
		\delta \hat{L}(\huah, \md) \geq \kappa_1 \twonorm{\bu}^2 - \kappa_1\kappa_2\sqrt{\frac{\log p}{\nah + n_0}}\onenorm{\bu}\twonorm{\bu}, \quad\forall \bu: \twonorm{\bu} \leq 1
	\end{equation}
	with probability at least $1-C_3\exp\{-C_4(\nah + n_0)\}$.
\end{lemma}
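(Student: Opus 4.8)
\textbf{Proof proposal for Lemma~\ref{lem: re glm}.}

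The plan is to prove restricted strong convexity of the pooled negative log-likelihood at its population minimizer $\bwah$ in two stages: a deterministic localization that lower-bounds the Bregman remainder $\delta\hat L(\bu,\md)$ by a \emph{truncated} quadratic form in the pooled design, followed by a uniform-in-$\bu$ lower bound on that truncated quadratic form, finally extended from a small ball to $\twonorm{\bu}\le1$ by convexity.

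First I would expand the remainder. With $N=\nah+n_0$, $a_{ki}=(\bwah)^T\bxk{k}_i$ and $b_{ki}=\bu^T\bxk{k}_i$, one has $\delta\hat L(\bu,\md)=\frac1N\sum_{k\in\transet}\sum_{i=1}^{n_k}\big[\psi(a_{ki}+b_{ki})-\psi(a_{ki})-\psi'(a_{ki})b_{ki}\big]$, and each summand equals $b_{ki}^2\int_0^1(1-t)\psi''(a_{ki}+tb_{ki})\,dt\ge 0$ by Assumption~\ref{asmp: convexity}. Since each summand is nonnegative I may multiply it by $\mathds{1}(|b_{ki}|\le1)$, and on that event the integral is at least $\tfrac12\inf_{|z|\le1}\psi''(a_{ki}+z)$. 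Because $\psi''$ is continuous and strictly positive, $\gamma_M:=\inf_{|v|\le M+1}\psi''(v)>0$ for any constant $M$, so on $\{|a_{ki}|\le M\}$ the summand is at least $\tfrac{\gamma_M}{2}b_{ki}^2\mathds{1}(|b_{ki}|\le1)$; using nonnegativity once more,
\[
 \delta\hat L(\bu,\md)\ \ge\ \frac{\gamma_M}{2N}\sum_{k\in\transet}\sum_{i=1}^{n_k}(\bu^T\bxk{k}_i)^2\,\mathds{1}(|\bu^T\bxk{k}_i|\le1)\,\mathds{1}(|(\bwah)^T\bxk{k}_i|\le M),
\]
for every $\bu$. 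Here $\bwah$ is the deterministic solution of \eqref{eq: wa in population merging}, with $\twonorm{\bwah}\le\onenorm{\bwah}\le\onenorm{\bbeta}+C_1h$ bounded via Lemma~\ref{lem: delta a}, so each $(\bwah)^T\bxk{k}_i$ is mean-zero subGaussian (Assumption~\ref{asmp: x}) with bounded variance proxy; I fix $M$ large enough that $\sup_k\p(|(\bwah)^T\bxk{k}|>M)$ is as small as desired.

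It then remains to lower-bound the truncated quadratic form. At the population level, for $\twonorm{\bu}\le\tau_0$ with a small absolute constant $\tau_0$, Assumption~\ref{asmp: x} gives $\te[(\bu^T\bxk{k})^2]=\bu^T\bSigma^{(k)}\bu\ge\kappa_l\twonorm{\bu}^2$, while Cauchy--Schwarz with subGaussian fourth-moment and tail bounds controls $\te[(\bu^T\bxk{k})^2\mathds{1}(|\bu^T\bxk{k}|>1)]$ and $\te[(\bu^T\bxk{k})^2\mathds{1}(|(\bwah)^T\bxk{k}|>M)]$ by arbitrarily small multiples of $\twonorm{\bu}^2$ once $\tau_0$ is small and $M$ large; averaging with the weights $\alpha_k$ (which sum to one), the population truncated form is $\ge\tfrac12\kappa_l\twonorm{\bu}^2$. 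For the empirical-to-population gap I would apply a standard uniform deviation bound for the bounded process $\bu\mapsto\frac1N\sum_{k,i}(\bu^T\bxk{k}_i)^2\mathds{1}(|\bu^T\bxk{k}_i|\le1)\mathds{1}(|(\bwah)^T\bxk{k}_i|\le M)$: symmetrization, replacing the hard truncation by a comparable Lipschitz truncation so that the Ledoux--Talagrand contraction applies, and a peeling argument over $\ell_1$-balls, which bounds the gap by $c\sqrt{\log p/N}\,\onenorm{\bu}\twonorm{\bu}$ uniformly over $\twonorm{\bu}\le1$ with probability at least $1-C_3\exp\{-C_4N\}$ (Talagrand's concentration for the bounded process yields the exponential-in-$N$ failure probability). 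Combining, $\delta\hat L(\bu,\md)\ge\kappa_1'\twonorm{\bu}^2-\kappa_1'\kappa_2'\sqrt{\log p/N}\,\onenorm{\bu}\twonorm{\bu}$ for $\twonorm{\bu}\le\tau_0$.

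To remove the restriction $\twonorm{\bu}\le\tau_0$ I would use that $\bu\mapsto\delta\hat L(\bu,\md)$ is convex, nonnegative and vanishes with zero gradient at $\bu=0$, hence star-shaped: for $\tau_0<\twonorm{\bu}\le1$ and $t=\tau_0/\twonorm{\bu}\in(\tau_0,1)$ one has $\delta\hat L(\bu,\md)\ge t^{-1}\delta\hat L(t\bu,\md)$, and plugging in the bound just proved for $t\bu$ yields $\delta\hat L(\bu,\md)\ge\tau_0\kappa_1'\twonorm{\bu}^2-\kappa_1'\kappa_2'\sqrt{\log p/N}\,\onenorm{\bu}\twonorm{\bu}$, which is the claim with $\kappa_1=\tau_0\kappa_1'$ and $\kappa_2=\kappa_2'/\tau_0$. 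I expect the empirical-process step to be the main obstacle: obtaining the deviation term in the sharp form $\sqrt{\log p/N}\,\onenorm{\bu}\twonorm{\bu}$ (rather than the cruder $\frac{\log p}{N}\onenorm{\bu}^2$) requires carrying out the truncation/contraction/peeling argument carefully and uniformly in $\bu$, and one must check that all constants are independent of $p$, $N$, the number of transferred sources, and the individual sample sizes $n_k$ (uniformity over $k\in\transet$ being supplied by $\inf_k\lambda_{\min}(\bSigma^{(k)})\ge\kappa_l$).
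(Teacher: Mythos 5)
Your proposal is correct and takes essentially the same route as the paper: the paper's own proof consists of the same second-order Taylor expansion reducing $\delta\hat{L}(\bu,\md)$ to a $\psi''$-weighted quadratic form in the pooled design, after which it simply invokes the truncation/contraction/peeling argument of Proposition~2 in \cite{negahban2009unified} — precisely the machinery you spell out. Your version just fills in the details (including the boundedness of $(\bwah)^T\bxk{k}_i$ via Lemma~\ref{lem: delta a} and the star-shaped extension to $\twonorm{\bu}\le 1$) that the paper delegates to that reference.
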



\begin{lemma}\label{lem: gamma a gamma 0 distance}
	Under Assumptions \ref{asmp: convexity}-\ref{asmp: second derivative} and Assumption \ref{asmp: inf}.(\rom{1})-(\rom{3}), 
	\begin{equation}
		\sup_{k,j} \onenorm{\bgammaah_j - \bgamma_j^{(0)}} \lesssim h_1.
	\end{equation}
\end{lemma}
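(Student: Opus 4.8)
This is the population-level counterpart of Lemma~\ref{lem: delta a}: the roles of $(\bwah,\bbeta,\{\bwk{k}\}_{k\in\mah})$ and the transferring level $h$ are played by $(\bgammaah_j,\bgammak{0}_j,\{\bgammak{k}_j\}_{k\in\mah})$ and $h_1$, where by the very definition of $h_1$ one already has $\sup_{k\in\mah,\,j}\onenorm{\bgammak{k}_j-\bgammak{0}_j}\le h_1$. The plan is to exploit that $\bgammaah_j$ minimizes the pooled weighted least-squares objective $\propto\sum_{k\in\transet}\alpha_k\te\big[\psi''((\bwk{k})^T\bxk{k})\,(\bxk{k}_j-(\bxk{k}_{-j})^T\bgamma)^2\big]$, whose first-order condition is $\bar{\bSigma}_{-j,-j}\,\bgammaah_j=\bar{\bSigma}_{-j,j}$ with $\bar{\bSigma}:=\sum_{k\in\transet}\alpha_k\bSigmak{k}_{\bwk{k}}$. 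Since $\bgammak{k}_j=(\bSigmak{k}_{\bwk{k},-j,-j})^{-1}\bSigmak{k}_{\bwk{k},-j,j}$, I would substitute $\bSigmak{k}_{\bwk{k},-j,j}=\bSigmak{k}_{\bwk{k},-j,-j}\bgammak{k}_j$ into the normal equations and subtract $\bar{\bSigma}_{-j,-j}\bgammak{0}_j$ from both sides to obtain the identity
\begin{equation}
	\bgammaah_j-\bgammak{0}_j \;=\; \bar{\bSigma}_{-j,-j}^{-1}\sum_{k\in\transet}\alpha_k\,\bSigmak{k}_{\bwk{k},-j,-j}\,(\bgammak{k}_j-\bgammak{0}_j),
\end{equation}
which exhibits $\bgammaah_j-\bgammak{0}_j$ as a matrix-weighted average of the discrepancies $\bgammak{k}_j-\bgammak{0}_j$, exactly as $\bwah-\bbeta$ is a matrix-weighted average of the $\bwk{k}-\bbeta$ in the proof of Lemma~\ref{lem: delta a}.

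Next I would take $\ell_1$-norms and use submultiplicativity of the induced matrix $1$-norm, giving $\onenorm{\bgammaah_j-\bgammak{0}_j}\le\sum_{k\in\transet}\alpha_k\onenorm{\bar{\bSigma}_{-j,-j}^{-1}\bSigmak{k}_{\bwk{k},-j,-j}}\onenorm{\bgammak{k}_j-\bgammak{0}_j}$. The $k=0$ summand vanishes since $\bwk{0}=\bbeta$; for $k\in\mah$ the last factor is at most $h_1$ (after, if needed, absorbing into $h_1$ the $O(\onenorm{\bwk{k}-\bbeta})$ gap between the $\psi''((\bwk{k})^T\bxk{k})$- and $\psi''(\bbeta^T\bxk{k})$-weighted versions of $\bgammak{k}_j$, which is controlled by a first-order Taylor expansion of $\psi''$ together with the uniform bounds $\infnorm{\bxk{k}}\le U$, $|(\bxk{k})^T\bwk{k}|\le U'$ and $\sup_{|z|\le\bar{U}}\psi'''((\bxk{k})^T\bwk{k}+z)\le M_{\psi}$ of Assumption~\ref{asmp: inf}.(\rom{1}) and (\rom{4})). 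The matrix factors $\onenorm{\bar{\bSigma}_{-j,-j}^{-1}\bSigmak{k}_{\bwk{k},-j,-j}}$ are uniformly bounded in $k,j$ by the homogeneity-type hypothesis on the weighted design covariances --- the nodewise-regression analogue of Assumption~\ref{asmp: merging}, supplied by Assumption~\ref{asmp: inf}.(\rom{5}) --- the lower eigenvalue bounds $\lambda_{\min}(\bm{\Sigma}^{(k)})\ge\kappa_l$ and $\lambda_{\min}(\bSigmak{k}_{\bwk{k}})\ge\underline{U}$ (Assumptions~\ref{asmp: x} and \ref{asmp: inf}.(\rom{3})) ensuring $\bar{\bSigma}_{-j,-j}$ is invertible, and the boundedness of $\psi''$ on the relevant range (Assumption~\ref{asmp: second derivative}) ensuring $\bar{\bSigma}$ has bounded entries. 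Since $\sum_{k\in\transet}\alpha_k\le1$, this yields $\onenorm{\bgammaah_j-\bgammak{0}_j}\lesssim h_1$ uniformly in $j$, and taking the supremum gives the claim.

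The main obstacle is the control of the $\ell_1$-operator norm $\onenorm{\bar{\bSigma}_{-j,-j}^{-1}\bSigmak{k}_{\bwk{k},-j,-j}}$: unlike the spectral norm, this is \emph{not} implied by uniform eigenvalue bounds alone (in the worst case it could grow polynomially in $p$), which is precisely why a homogeneity/diagonal-dominance condition on the (weighted) covariances --- mirroring the role of Assumption~\ref{asmp: merging} in Lemma~\ref{lem: delta a} --- is indispensable here. The only other point requiring care is reconciling the two natural normalizations of the source-$k$ nodewise coefficient (weighted at $\bwk{k}$ versus at $\bbeta$); I would dispatch this via the Taylor argument above so that the whole estimate stays at order $h_1$.
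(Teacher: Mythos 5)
Your proof is correct and is precisely the argument the paper intends: the paper omits the proof of this lemma, saying only that it ``follows the same idea as the proof of Lemma~\ref{lem: delta a}'', and your normal-equations identity expressing $\bgammaah_j-\bgammak{0}_j$ as a matrix-weighted average of the discrepancies $\bgammak{k}_j-\bgammak{0}_j$ (the $k=0$ term vanishing, the $k\in\mah$ terms bounded by $h_1$), followed by the induced $\ell_1$-operator-norm bound, is exactly that idea transplanted to the nodewise regressions. Your side remark is also well taken: the $\ell_1$-operator-norm control genuinely requires the homogeneity condition of Assumption~\ref{asmp: inf}.(\rom{5}) (the nodewise analogue of Assumption~\ref{asmp: merging}) rather than only conditions (\rom{1})--(\rom{3}) listed in the lemma's hypotheses, so the stated assumptions appear to under-report what the intended proof uses.
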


\begin{lemma}\label{lem: inf lemma gamma a}
	Assume Assumptions \ref{asmp: convexity}-\ref{asmp: merging} and \ref{asmp: inf} hold. Let $\lambda_j \asymp \sqrt{\frac{\log p}{\nah+n
	_0}} + (\frac{1}{\sqrt{s}}(\frac{\log p}{n_0})^{1/4}h^{1/2})\wedge \frac{h}{\sqrt{s}}$ for any $j = 1, \ldots, p$. Suppose $h_1 \lesssim s^{-1/2}$. With probability at least $1-\Kah n_0^{-1}$,
	\begin{align}
		\twonorm{\hgammaa_j - \bgammaa_j}^2 &\lesssim h_1\left(\frac{\log p}{\nah+n_0}\right)^{1/2} + \Re_1^2 + \Re_1\frac{h_1}{\sqrt{s}},\\
		\onenorm{\hgammaa_j - \bgammaa_j} &\lesssim \sqrt{s}\Re_1 + s^{1/4}h_1^{1/2}\Re_1^{1/2} + h_1.
	\end{align}
\end{lemma}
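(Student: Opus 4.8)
The plan is to view steps~2--4 of Algorithm~\ref{algo: inf} as a pooled, $\psi''$-reweighted Lasso regression and to rerun the analysis underlying the transferring step of $\mah$-Trans-GLM in Theorem~\ref{thm: l2 with assumption}, now with the population target $\bgammaa_j$ playing the role of $\bwah$ and the (approximately $s$-sparse) vector $\bgammak{0}_j$ playing the role of $\bbeta$. First I would condition on the event of Theorem~\ref{thm: l2 with assumption} that $\twonorm{\hbeta-\bbeta}\lesssim \Re_1$ and $\onenorm{\hbeta-\bbeta}\lesssim s\sqrt{\log p/(\nah+n_0)}+h$, and work deterministically on top of the design/noise events below. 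By Lemma~\ref{lem: gamma a gamma 0 distance} and Assumption~\ref{asmp: inf}.(ii), $\bgammaa_j$ lies within $\ell_1$-distance $\lesssim h_1$ of the $\lesssim s$-sparse vector $\bgammak{0}_j$, so it is ``weakly sparse'' in exactly the sense used for $\bbeta$ in the debiasing step of the main theorems.

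From the KKT conditions for $\hgammaa_j$ and the normal equations defining $\bgammaa_j$, the basic inequality for the error $\bv_j:=\hgammaa_j-\bgammaa_j$ is
\begin{equation}
\frac{1}{\nah+n_0}\sum_{k\in\transet}\twonorm{\bXk{k}_{\hbeta,-j}\bv_j}^2 \;\leq\; \langle \bv_j,\, \bm{r}_j\rangle \;+\; \lambda_j\big(\onenorm{\bgammaa_j}-\onenorm{\hgammaa_j}\big),
\end{equation}
where $\bm{r}_j=\frac{1}{\nah+n_0}\sum_{k\in\transet}(\bXk{k}_{\hbeta,-j})^T(\bXk{k}_{\hbeta,j}-\bXk{k}_{\hbeta,-j}\bgammaa_j)$. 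Two ingredients close the argument. First, a restricted-eigenvalue bound $\frac{1}{\nah+n_0}\sum_k\twonorm{\bXk{k}_{\hbeta,-j}\bv}^2\gtrsim\twonorm{\bv}^2$ on the relevant cone: the $\bwk{k}$-weighted empirical Gram matrix is handled as in Lemma~\ref{lem: re glm} using sub-Gaussianity (Assumption~\ref{asmp: x}) and $\inf_k\lambda_{\min}(\bSigmak{k}_{\bwk{k}})\geq\underline U$ (Assumption~\ref{asmp: inf}.(iii)), and the replacement of $\bwk{k}$-weights by $\hbeta$-weights is absorbed via a first-order Taylor expansion of $\psi''$ controlled through Assumptions~\ref{asmp: inf}.(i),(iv) and $\twonorm{\hbeta-\bbeta}\lesssim\Re_1$ (here $h\leq Cs^{-1/2}$ keeps the argument of $\psi'''$ inside the prescribed window). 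Second, an $\ell_\infty$ bound $\infnorm{\bm{r}_j}\lesssim\lambda_j$: decompose $\bm{r}_j$ into the empirical version of $\sum_k\alpha_k\te[\psi''((\bwk{k})^T\bxk{k})\,\bmetak{k}_{\bwk{k},j}\,\bxk{k}_{-j}]$, which is mean zero by the definition of $\bgammaa_j$ and concentrates at rate $\sqrt{\log p/(\nah+n_0)}$ by a sub-exponential tail bound, plus a plug-in bias term involving $\psi''((\bxk{k})^T\hbeta)-\psi''((\bxk{k})^T\bwk{k})$; after a Taylor expansion, Cauchy--Schwarz over the samples, and use of both the $\ell_1$- and $\ell_2$-error bounds for $\hbeta$ together with the $L^2$-orthogonality of $\bmetak{k}_{\bwk{k},j}$ to the sub-design, this bias is of order $\big(\frac{1}{\sqrt s}(\log p/n_0)^{1/4}h^{1/2}\big)\wedge\frac{h}{\sqrt s}$, which yields the stated choice $\lambda_j\asymp\sqrt{\log p/(\nah+n_0)}+\big(\frac{1}{\sqrt s}(\log p/n_0)^{1/4}h^{1/2}\big)\wedge\frac{h}{\sqrt s}$.

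Feeding the two ingredients into the standard weakly-sparse Lasso bookkeeping --- the cone inequality carries an extra additive $\onenorm{\bgammaa_j-\bgammak{0}_j}\lesssim h_1$ beyond the $\lesssim s$ active coordinates --- gives $\twonorm{\bv_j}^2\lesssim s\lambda_j^2+\lambda_j h_1$ and $\onenorm{\bv_j}\lesssim\sqrt s\,\twonorm{\bv_j}+h_1$. Since $\Re_1\gtrsim\sqrt{s\log p/(\nah+n_0)}$ one has $s\lambda_j^2\asymp\Re_1^2$ and $\lambda_j h_1\asymp h_1\sqrt{\log p/(\nah+n_0)}+\Re_1 h_1/\sqrt s$, and (using $h_1\lesssim s^{-1/2}$ to drop the dominated $\sqrt s\,h_1^{1/2}(\log p/(\nah+n_0))^{1/4}$ term) the $\ell_1$-bound simplifies to $\sqrt s\,\Re_1+s^{1/4}h_1^{1/2}\Re_1^{1/2}+h_1$, matching the claim; the probability $1-\Kah n_0^{-1}$ comes from a union bound over the $\Kah$ sources (and over $j$, absorbed into $\sqrt{\log p}$) in the concentration steps, plus the $1-n_0^{-1}$ event of Theorem~\ref{thm: l2 with assumption}. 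I expect the main obstacle to be the second ingredient: extracting the sharp $1/\sqrt s$ saving in the plug-in bias forces one to use the near-orthogonality structure of $\bmetak{k}_{\bwk{k},j}$ rather than bound the perturbation crudely by $\twonorm{\hbeta-\bbeta}$, while simultaneously ensuring the design-side perturbation in the first ingredient does not leak a non-negligible cross term into the score.
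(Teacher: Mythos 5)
Your overall architecture is the right one and matches the paper's: a basic inequality for the pooled $\psi''$-reweighted regression, a decomposition of the remainder into a mean-zero score, a source-heterogeneity bias controlled by $\onenorm{\bgammak{k}_j-\bgammaa_j}\lesssim h_1$ (Lemma \ref{lem: gamma a gamma 0 distance}), and a weight-perturbation term from replacing $\bbeta$ by $\hbeta$; the weak sparsity of $\bgammaa_j$ feeds the extra $h_1$ into the cone inequality; and your final bookkeeping ($s\lambda_j^2\asymp\Re_1^2$, $\lambda_jh_1\asymp h_1\sqrt{\log p/(\nah+n_0)}+\Re_1h_1/\sqrt{s}$, $\onenorm{\cdot}\lesssim\sqrt{s}\twonorm{\cdot}+\sqrt{s}\Re_1+h_1$) reproduces the stated rates. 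The heterogeneity part is also handled as you intend: it carries a coefficient of order $(\Re_1+\sqrt{\log p/(\nah+n_0)})h_1$ against $\onenorm{\hgammaa_j-\bgammaa_j}$, which is dominated by $\lambda_j\asymp\Re_1/\sqrt{s}$ precisely because $h_1\lesssim s^{-1/2}$.

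The genuine gap is the one you flag yourself as the ``main obstacle,'' and it does not close the way you hope. You want the full score, including the plug-in term generated by $\psi''((\bxk{k}_i)^T\hbeta)-\psi''((\bxk{k}_i)^T\bbeta)$ acting on the residuals $\bmetak{k}_{\bbeta,j}$, to satisfy $\infnorm{\bm{r}_j}\lesssim\lambda_j\asymp\Re_1/\sqrt{s}$. A Cauchy--Schwarz bound on that term only gives order $\twonorm{\hbeta-\bbeta}\lesssim\Re_1$, a factor $\sqrt{s}$ too large, and the ``$L^2$-orthogonality of $\bmetak{k}_{\bbeta,j}$ to the sub-design'' does not rescue it: that orthogonality holds under the $\psi''$-weighted inner product, whereas the perturbation introduces a $\psi'''(\cdot)\,(\bxk{k})^T(\hbeta-\bbeta)$ reweighting under which $\te[\eta\, x_l\,\psi'''(\cdot)(\bxk{k})^T\bu]$ has no reason to vanish. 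The paper sidesteps this entirely: it never puts this term into the score. Instead it applies the elementary inequality $ab\le\frac{1}{4c_0}a^2+c_0b^2$ to the cross term $\frac{1}{\nah+n_0}\sum_{k}\<(\bWk{k}_{\hbeta}(\bWk{k}_{\bbeta})^{-1}-\bm{I})\bmetak{k}_{\bbeta,j},\,\bXk{k}_{\hbeta,-j}(\bgammaa_j-\hgammaa_j)\>$, producing an \emph{additive} remainder of order $\twonorm{\hbeta-\bbeta}^2\lesssim\Re_1^2$ plus a small multiple of the prediction quadratic form that is absorbed into the left-hand side. This is exactly where the standalone $\Re_1^2$ in the lemma's $\ell_2^2$-bound (and the extra $\sqrt{s}\Re_1$ in the $\ell_1$-bound, via $\Re_1^2/\lambda_j\asymp\sqrt{s}\Re_1$) comes from. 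If you replace your ingredient two for that term by this Young-inequality split, and keep the $\ell_\infty$ control only for the genuinely mean-zero part $\frac{1}{\nah+n_0}\infnorm{\sum_k\bXk{k}_{\hbeta,-j}\bmetak{k}_{\bbeta,j}}\lesssim\sqrt{\log p/(\nah+n_0)}$ together with the $h_1$-bias, your argument becomes the paper's proof.
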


\begin{lemma}\label{lem: inf lemma gamma delta}
	Let $\tlambda_j \asymp \sqrt{\frac{\log p}{n_0}} + \Re_1$. Impose the same conditions assumed by Lemma \ref{lem: inf lemma gamma a}. Suppose $h_1 \lesssim s_*^{-1/2}$. Then with probability at least $1-\Kah n_0^{-1}$,
	\begin{align}
		\twonorm{\hrho_j - \brho_j}^2 &\lesssim h_1\sqrt{\frac{\log p}{n_0}} + h_1\Re_1 + \Re_1^2,\\
		\onenorm{\hrho_j - \brho_j} &\lesssim h_1 + \Re_1.
	\end{align}
\end{lemma}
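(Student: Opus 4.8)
The plan is to recognize Lemma~\ref{lem: inf lemma gamma delta} as the ``debiasing step'' of the two-step transfer procedure, now applied to the target nodewise regression, and to re-run the argument used for the debiasing step in the proof of Theorem~\ref{thm: l2 with assumption} under the dictionary: target coefficient $\leftrightarrow\bgammak{0}_j$, pooled population coefficient $\leftrightarrow\bgammaa_j$, contrast $\leftrightarrow\brho_j=\bgammak{0}_j-\bgammaa_j$, transferring-step estimate $\leftrightarrow\hgammaa_j$ (whose error is controlled by Lemma~\ref{lem: inf lemma gamma a}), target design $\leftrightarrow\bXk{0}_{\hbeta}$, the $\hbeta$-reweighted target design. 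Two new features must be handled relative to Theorem~\ref{thm: l2 with assumption}: the design carries the \emph{estimated} weights $\psi''((\bxk{0}_i)^{T}\hbeta)$ rather than the population weights $\psi''((\bxk{0}_i)^{T}\bbeta)$, and $\brho_j$ is only \emph{weakly} sparse, with $\onenorm{\brho_j}\lesssim h_1$ by Lemma~\ref{lem: gamma a gamma 0 distance}, rather than hard-sparse. Here $\Re_1$ denotes the quantity appearing in Lemma~\ref{lem: inf lemma gamma a}, essentially $\twonorm{\hbeta-\bbeta}$ (cf.\ Theorem~\ref{thm: l2 with assumption}).

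First I would write the Lasso basic inequality for Step~3 of Algorithm~\ref{algo: inf}, $g_{0,j}(\hgammaa_j+\hrho_j)+\tlambda_j\onenorm{\hrho_j}\le g_{0,j}(\hgammaa_j+\brho_j)+\tlambda_j\onenorm{\brho_j}$ with $g_{0,j}(\bgamma)=\tfrac{1}{2n_0}\twonorm{\bXk{0}_{\hbeta,j}-\bXk{0}_{\hbeta,-j}\bgamma}^2$, and, since $g_{0,j}$ is an exact quadratic, expand around $\bm{a}:=\hgammaa_j+\brho_j=\bgammak{0}_j+(\hgammaa_j-\bgammaa_j)$ to get, with $\hat{\bm{u}}:=\hrho_j-\brho_j$,
\[
\tfrac12\,\hat{\bm{u}}^{T}\hSigmak{0}_{\hbeta,-j,-j}\hat{\bm{u}}\ \le\ \langle -\nabla g_{0,j}(\bm{a}),\hat{\bm{u}}\rangle+\tlambda_j\big(\onenorm{\brho_j}-\onenorm{\brho_j+\hat{\bm{u}}}\big),
\]
and decompose $\nabla g_{0,j}(\bm{a})=-\tfrac1{n_0}(\bXk{0}_{\hbeta,-j})^{T}\bmetak{0}_{\hbeta,j}+\hSigmak{0}_{\hbeta,-j,-j}(\hgammaa_j-\bgammaa_j)$, where $\bmetak{0}_{\hbeta,j}:=\bXk{0}_{\hbeta,j}-\bXk{0}_{\hbeta,-j}\bgammak{0}_j$. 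The first term splits into the $\bbeta$-weighted part $\tfrac1{n_0}(\bXk{0}_{\bbeta,-j})^{T}\bmetak{0}_{\bbeta,j}$ --- mean zero since $\bgammak{0}_j=(\bSigmak{0}_{\bbeta,-j,-j})^{-1}\bSigmak{0}_{\bbeta,-j,j}$, concentrating in $\maxnorm{\cdot}$ at rate $\sqrt{\log p/n_0}$ (Assumptions~\ref{asmp: x}, \ref{asmp: inf}.(\rom{1})) --- plus a perturbation bounded by $\lesssim\Re_1$ via a first-order Taylor expansion of $\sqrt{\psi''}$ around $(\bxk{0}_i)^{T}\bbeta$ (bounded $\psi'''$, Assumption~\ref{asmp: inf}.(\rom{4}), and $\twonorm{\hbeta-\bbeta}=\Re_1$). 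This justifies the choice $\tlambda_j\asymp\sqrt{\log p/n_0}+\Re_1$ so that $\maxnorm{\tfrac1{n_0}(\bXk{0}_{\hbeta,-j})^{T}\bmetak{0}_{\hbeta,j}}\le\tlambda_j/2$ with high probability.

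For the warm-start term $\hSigmak{0}_{\hbeta,-j,-j}(\hgammaa_j-\bgammaa_j)$ I would \emph{not} bound $\maxnorm{\cdot}$ (that would cost a spurious $\sqrt s$); instead pair it against $\hat{\bm{u}}$ by Cauchy--Schwarz in the $\hSigmak{0}_{\hbeta,-j,-j}$-geometry, absorb $\tfrac14\hat{\bm{u}}^{T}\hSigmak{0}_{\hbeta,-j,-j}\hat{\bm{u}}$ on the left, and keep $(\hgammaa_j-\bgammaa_j)^{T}\hSigmak{0}_{\hbeta,-j,-j}(\hgammaa_j-\bgammaa_j)\lesssim\twonorm{\hgammaa_j-\bgammaa_j}^2$ --- using the generic Gram bound $\bm{v}^{T}\hSigmak{0}_{\hbeta}\bm{v}\lesssim\twonorm{\bm{v}}^2+\tfrac{\log p}{n_0}\onenorm{\bm{v}}^2$ and the cone membership of $\hgammaa_j-\bgammaa_j$ from the proof of Lemma~\ref{lem: inf lemma gamma a} ($\onenorm{\cdot}^2\lesssim s\twonorm{\cdot}^2$, the correction negligible as $n_0\gg s\log p$) --- then plug in the Lemma~\ref{lem: inf lemma gamma a} bound on $\twonorm{\hgammaa_j-\bgammaa_j}^2$. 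The inequality becomes $\tfrac14\hat{\bm{u}}^{T}\hSigmak{0}_{\hbeta,-j,-j}\hat{\bm{u}}\le\tfrac{\tlambda_j}{2}\onenorm{\hat{\bm{u}}}+\tlambda_j(\onenorm{\brho_j}-\onenorm{\brho_j+\hat{\bm{u}}})+\twonorm{\hgammaa_j-\bgammaa_j}^2$, which I would close with (i) the weak-sparsity cone inequality $\onenorm{\brho_j}-\onenorm{\brho_j+\hat{\bm{u}}}\le\onenorm{\hat{\bm{u}}_T}-\onenorm{\hat{\bm{u}}_{T^c}}+2\onenorm{(\brho_j)_{T^c}}$ for an arbitrary index set $T$, optimized against $\onenorm{\brho_j}\lesssim h_1$, and (ii) a restricted eigenvalue bound $\hat{\bm{u}}^{T}\hSigmak{0}_{\hbeta,-j,-j}\hat{\bm{u}}\gtrsim\twonorm{\hat{\bm{u}}}^2$ on the resulting cone, from concentration of $\hSigmak{0}_{\bbeta,-j,-j}$ around $\bSigmak{0}_{\bbeta,-j,-j}\succeq\underline{U}\cdot\bm{I}$ (Assumptions~\ref{asmp: x}, \ref{asmp: inf}.(\rom{3}); cf.\ Lemma~\ref{lem: re glm}) plus the weight-perturbation bound $\maxnorm{\hSigmak{0}_{\hbeta}-\hSigmak{0}_{\bbeta}}\lesssim\Re_1$. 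Routine algebra then gives $\twonorm{\hat{\bm{u}}}^2\lesssim\tlambda_j h_1+\twonorm{\hgammaa_j-\bgammaa_j}^2\lesssim h_1\sqrt{\log p/n_0}+h_1\Re_1+\Re_1^2$ and $\onenorm{\hat{\bm{u}}}\lesssim\twonorm{\hat{\bm{u}}}^2/\tlambda_j+\onenorm{\brho_j}\lesssim h_1+\Re_1$; a union bound over $k\in\transet$ for the concentration events yields probability $1-\Kah n_0^{-1}$.

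The main obstacle is keeping the two intertwined error sources --- the estimated weights $\psi''((\bxk{0}_i)^{T}\hbeta)$ and the estimated warm start $\hgammaa_j$ --- from inflating the rate by a spurious $\sqrt s$ or $\sqrt{\log p}$. Proving, uniformly in $j$, the weight-perturbation bounds $\maxnorm{\tfrac1{n_0}(\bXk{0}_{\hbeta,-j})^{T}\bmetak{0}_{\hbeta,j}-\tfrac1{n_0}(\bXk{0}_{\bbeta,-j})^{T}\bmetak{0}_{\bbeta,j}}\lesssim\Re_1$ and $\maxnorm{\hSigmak{0}_{\hbeta}-\hSigmak{0}_{\bbeta}}\lesssim\Re_1$ requires Taylor-expanding $\sqrt{\psi''}$ while handling the data-dependence of $\hbeta$ (conditioning on the good event of Theorem~\ref{thm: l2 with assumption} and invoking Assumption~\ref{asmp: inf}.(\rom{1}),(\rom{4})); and the step that pairs the warm-start term against $\hat{\bm{u}}$ within the $\hSigmak{0}_{\hbeta}$-geometry \emph{before} invoking restricted smoothness on the cone is precisely what keeps that contribution at $\twonorm{\hgammaa_j-\bgammaa_j}^2$, hence at $\Re_1^2$, and makes the final rate match the two-step debiasing rate. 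The weak sparsity of $\brho_j$ ($\onenorm{\brho_j}\lesssim h_1$ rather than a hard sparsity bound) is a minor but pervasive complication that forces the ``optimize over $T$'' form of the cone inequality throughout and is what produces the $h_1$-linear terms in the bounds.
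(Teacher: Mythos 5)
Your proposal is correct and follows essentially the same route as the paper's proof: the same Lasso basic inequality for the debiasing step, the same three-way decomposition of the gradient into a mean-zero noise term (absorbed into $\tlambda_j\asymp\sqrt{\log p/n_0}+\Re_1$), a weight-perturbation term of order $\Re_1$ handled via Young/Cauchy--Schwarz in the Gram geometry, and the warm-start error kept as $\twonorm{\hgammaa_j-\bgammaa_j}^2$, closed by the weak-sparsity cone argument with $\onenorm{\brho_j}\lesssim h_1$ and a restricted-eigenvalue lower bound. The only differences are presentational (exact quadratic expansion versus the paper's basic-inequality form, and Taylor-expanding $\sqrt{\psi''}$ versus the paper's $(\bWk{0}_{\hbeta}(\bWk{0}_{\bbeta})^{-1}-\bm{I})$ manipulation), and the resulting rates match.
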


The proof of Lemma \ref{lem: delta a} will be presented later in Section \ref{subsubsec: proof of lemma 1}. Lemma \ref{lem: re glm} can be derived in the same spirit as the proof of Proposition 2 in the full-length version of \cite{negahban2009unified}, so we omit the full proof and only highlight the sketch in Section \ref{subsubsec: proof of lemma 2}. Lemma \ref{lem: gamma a gamma 0 distance} can be proved by following the same idea in the proof of Lemma \ref{lem: delta a}, therefore we omit its proof as well.

Also, it is important to point out that all the constants involved in the proofs of Theorem \ref{thm: l2 with assumption}-\ref{thm: prediction error without assumption} are independent with $\bxi = \{\bbeta, \{\bwk{k}\}_{k \in \ma}\} \in \Xi(s, h)$, therefore we can take the supremum over $\bxi \in \Xi(s, h)$ in the final conclusion without changing the rate.

\subsection{Proof of lemmas}

\subsubsection{Proof of Lemma \ref{lem: delta a}}\label{subsubsec: proof of lemma 1}
By definition, 
\begin{equation}
	\sum\limits_{k\in\{0\}\cup \ma}\alpha_k\te\left\{[\psi'((\bwah)^T\bxk{k})-\psi'((\bwk{k})^T\bxk{k})]\bxk{k}\right\} = 0,
\end{equation}
which implies
\begin{align}
	&\sum\limits_{k\in\{0\}\cup \ma}\alpha_k\te\left\{[\psi'((\bwah)^T\bxk{k})-\psi'(\bbeta^T\bxk{k})]\bxk{k}\right\} \\
	&= \sum\limits_{k \in \ma}\alpha_k\te\left\{[\psi'((\bwk{k})^T\bxk{k})-\psi'(\bbeta^T\bxk{k})]\bxk{k}\right\}.
\end{align}
By Taylor expansion,
\begin{align}
	&\sum\limits_{k\in\{0\}\cup \ma}\alpha_k\te\left[\int_0^1\psi''((\bwah)^T\bxk{k} + t(\bwah - \bbeta)^T\bxk{k})\bxk{k}(\bxk{k})^T\right](\bwah - \bbeta) \\
	&= \sum\limits_{k \in \ma}\alpha_k\te\left[\int_0^1\psi''((\bwk{k})^T\bxk{k} + t(\bwk{k} - \bbeta)^T\bxk{k})\bxk{k}(\bxk{k})^T\right](\bwk{k} - \bbeta).
\end{align}
Therefore, by Assumption \ref{asmp: merging}, $\onenorm{\bwah - \bbeta} \leq \sum\limits_{k \in \ma}\alpha_k \onenorm{\widetilde{\bm{\Sigma}}^{-1}_h\widetilde{\bm{\Sigma}}_h^{(k)}} \cdot \onenorm{\bwk{k} - \bbeta} \leq C_1h$.

\subsubsection{Proof of Lemma \ref{lem: re glm}}\label{subsubsec: proof of lemma 2}
By the second-order Taylor expansion, for some $t_{i}^{(k)} \in [0, 1]$,
\begin{align}
	\delta \hat{L}(\bu, \md) &= \hat{L}(\bwah + \bu, \md) - \hat{L}(\bwah) - \nabla \hat{L}^{(k)}(\bwah)^T \bu \\
	&= \frac{1}{\nah + n_0}\sum_{k \in \transet}\sum_{i=1}^{n_k}\left[\psi((\bwah + \bu)^T \bxk{k}_i) - \psi((\bwah)^T \bxk{k}_i) - \psi'((\bwah)^T \bxk{k}_i)\bu^T\bxk{k}_i \right] \\
	&= \frac{1}{\nah + n_0}\sum_{k \in \transet}\sum_{i=1}^{n_k} \psi''((\bwah)^T \bxk{k}_i+t_{i}^{(k)}\bu^T\bxk{k}_i)(\bu^T\bxk{k}_i)^2,
\end{align}
which is the counterpart of equation (63) in the full-length version of \cite{negahban2009unified}. Due to the independence of between $\bxk{k}_i$ for any $i$ and $k$, the arguments in \cite{negahban2009unified} directly follow.

\subsubsection{Proof of Lemma \ref{lem: inf lemma gamma a}}
Recall Theorem \ref{thm: l2 with assumption}, under the assumptions, we have
\begin{align}
	\twonorm{\hbeta-\bbeta} &\lesssim \sqrt{\frac{s\log p}{\na+n_0}} + \left[\left(\frac{\log p}{n_0}\right)^{1/4}h^{1/2}\right] \wedge h, \\
	\onenorm{\hbeta-\bbeta} &\lesssim s\sqrt{\frac{\log p}{\na+n_0}} + h,
\end{align}
with probability at least $1-n_0^{-1}$.
By basic inequality,
\begin{align}
	&\frac{1}{2(\nah+n_0)}\sum_{k \in \transet} \twonorm{\bXk{k}_{\hbeta, j}-\bXk{k}_{\hbeta, -j}\hgammaah_j}^2 + \lambda_j\onenorm{\hgammaah_j} \\
	&\leq \frac{1}{2(\nah+n_0)}\sum_{k \in \transet} \twonorm{\bXk{k}_{\hbeta, j}-\bXk{k}_{\hbeta, -j}\bgammaah_j}^2 + \lambda_j\onenorm{\bgammaah_j},
\end{align}
which implies
\begin{align}
	&\frac{1}{2(\nah+n_0)}\sum_{k \in \transet} \twonorm{\bXk{k}_{\hbeta, -j}(\hgammaah_j-\bgammaah_j)}^2 \\
	&\leq \frac{1}{\nah+n_0}\sum_{k \in \transet}\<\bWk{k}_{\hbeta}(\bWk{k}_{\bbeta})^{-1}\bmetak{k}_{\bbeta, j}, \bXk{k}_{\hbeta, -j}(\bgammaah_j - \hgammaah_j)\> \\
	&\quad + \frac{1}{\nah+n_0}\sum_{k \in \transet}\<\bXk{k}_{\hbeta, -j}(\bgammak{k}_j - \bgammaah_j), \bXk{k}_{\hbeta, -j}(\bgammaah_j - \hgammaah_j)\> \\
	&\quad + \lambda_j(\onenorm{\bgammaah_j} - \onenorm{\hgammaah_j}). \label{eq: basic ineq for gamma a}
\end{align}
Denote $\bm{\Lambda}^{(k)} = \diag(\{\psi''((\bxk{k}_i)^T\bbeta)\}_{i=1}^{n_k})$ and $\widehat{\bm{\Lambda}}^{(k)} = \diag(\{\psi''((\bxk{k}_i)^T\hbeta)\}_{i=1}^{n_k})$. 
\begin{align}
	&\norma{\frac{1}{\nah+n_0}\sum_{k \in \transet}\<\bXk{k}_{\hbeta, -j}(\bgammak{k}_j - \bgammaah_j), \bXk{k}_{\hbeta, -j}(\bgammaah_j - \hgammaah_j)} \\
	&\leq \frac{1}{\nah+n_0}\left[\infnorma{\sum_{k \in \transet} (\bXk{k}_{-j})^T\bm{\Lambda}^{(k)}\bXk{k}_{-j}(\bgammak{k}_j - \bgammaah_j)} \right. \\
	&\quad \left. + \infnorma{\sum_{k \in \transet} (\bXk{k}_{-j})^T(\widehat{\bm{\Lambda}}^{(k)} - \bm{\Lambda}^{(k)})\bXk{k}_{-j}(\bgammak{k}_j - \bgammaah_j)}\right]\cdot \onenorm{\bgammaah_j - \hgammaah_j}. \label{eq: gamma a rhs}
\end{align}
It's easy to see that each component of $\sum_{k \in \transet} (\bXk{k}_{-j})^T\bm{\Lambda}^{(k)}\bXk{k}_{-j}(\bgammak{k}_j - \bgammaah_j)$ is a zero-mean, sub-exponential variable with variance $C(\na+n_0)h^2$. By union bounds,
\begin{equation}
	\frac{1}{\nah+n_0}\infnorma{\sum_{k \in \transet} (\bXk{k}_{-j})^T\bm{\Lambda}^{(k)}\bXk{k}_{-j}(\bgammak{k}_j - \bgammaah_j)} \lesssim \sqrt{\frac{\log p}{\na+n_0}}h_1,
\end{equation}
with probability at least $1-p^{-1}$. In addition,
\begin{align}
	&\frac{1}{\nah+n_0}\infnorma{\sum_{k \in \transet} (\bXk{k}_{-j})^T(\widehat{\bm{\Lambda}}^{(k)} - \bm{\Lambda}^{(k)})\bXk{k}_{-j}(\bgammak{k}_j - \bgammaah_j)} \\ 
	&\leq \frac{1}{\nah+n_0} \maxnorma{\sum_{k \in \transet} (\bXk{k}_{-j})^T(\widehat{\bm{\Lambda}}^{(k)} - \bm{\Lambda}^{(k)})\bXk{k}_{-j}}\cdot \onenorm{\bgammak{k}_j - \bgammaah_j}.
\end{align}
For $j_1$, $j_2 \neq j$, 
\begin{align}
	&\frac{1}{\nah+n_0}\sum_{k \in \transet}\sum_{i=1}^{n_k}x_{i, j_1}^{(k)}x_{i, j_2}^{(k)}[\psi''((\bxk{k}_i)^T\hbeta) - \psi''((\bxk{k}_i)^T\bbeta)] \\
	&\leq \sqrt{\frac{1}{\nah+n_0}\sum_{k \in \transet}\sum_{i=1}^{n_k}(x_{i, j_1}^{(k)}x_{i, j_2}^{(k)})^2}\cdot \sqrt{\frac{1}{\nah+n_0}\sum_{k \in \transet}\sum_{i=1}^{n_k}\psi'''(\bm{a}_i^{(k)})[(\bxk{k}_i)^T(\hbeta - \bbeta)]^2} \\
	&\leq C\twonorm{\hbeta - \bbeta},
\end{align}
where the constant $C$ is the same for different $j_1$ and $j_2$, and $\bm{a}_i^{(k)}$ falls on the line segment between $(\bxk{k}_i)^T\hbeta$ and $(\bxk{k}_i)^T\bbeta$. Therefore, the right-hand side of \eqref{eq: gamma a rhs} can be upper bounded by
\begin{align}
	C\left(\twonorm{\hbeta - \bbeta} + \sqrt{\frac{\log p}{\nah+n_0}}\right)h_1\cdot \onenorm{\bgammaah_j - \hgammaah_j},
\end{align}
with probability at least $1-p^{-1}$. 

On the other hand,
\begin{align}
	&\frac{1}{\nah+n_0}\sum_{k \in \transet}\<\bWk{k}_{\hbeta}(\bWk{k}_{\bbeta})^{-1}\bmetak{k}_{\bbeta, j}, \bXk{k}_{\hbeta, -j}(\bgammaah_j - \hgammaah_j)\> \\
	&\leq \frac{1}{\nah+n_0}\infnorma{\sum_{k \in \transet}\bXk{k}_{\hbeta, -j}\bmetak{k}_{\bbeta, j}}\onenorm{\bgammaah_j - \hgammaah_j} \\
	&\quad + \frac{1}{\nah+n_0}\sum_{k \in \transet}\left[\frac{1}{4c_0}\twonorma{(\bWk{k}_{\hbeta}(\bWk{k}_{\bbeta})^{-1} - \bm{I})\bmetak{k}_{\bbeta, j}}^2 \right. \\
	&\hspace{4.6cm} \left.+ c_0\twonorma{\bXk{k}_{\hbeta, -j}(\bgammaah_j - \hgammaah_j)}^2 \right], \label{eq: wwinverse}
\end{align}
where $c_0$ is a positive constant smaller than $1/4$. Note that
\begin{align}
	&\frac{1}{\nah+n_0}\sum_{k \in \transet}\twonorma{(\bWk{k}_{\hbeta}(\bWk{k}_{\bbeta})^{-1} - \bm{I})\bmetak{k}_{\bbeta, j}}^2 \\
	&\leq \frac{1}{\nah+n_0}\sum_{k \in \transet}\sum_{i=1}^{n_k}\frac{\left(\sqrt{\psi''((\bxk{k}_i)^T\hbeta)} - \sqrt{\psi''((\bxk{k}_i)^T\bbeta)}\right)^2}{\psi''((\bxk{k}_i)^T\bbeta)}\cdot \norm{\eta^{(k)}_i}^2 \\
	&\lesssim \frac{1}{\nah+n_0}\sum_{k \in \transet}\sum_{i=1}^{n_k} [(\bxk{k}_i)^T(\hbeta-\bbeta)]^2, \\
	&\lesssim \sup_{k \in \transet}\twonorm{\hbeta-\bbeta}^2 \\
	&\lesssim \Re_1^2,
\end{align}
with probability at least $1-\Kah n_0^{-1}$, where we use Assumption \ref{asmp: inf}.(\rom{1}) to bound $\psi''((\bxk{k}_i)^T\bbeta)$. The last two inequalities follows because of the same reason as that of \eqref{eq: rsc upper version}. Plugging \eqref{eq: gamma a rhs} and \eqref{eq: wwinverse} and into \eqref{eq: basic ineq for gamma a},
\begin{align}
	&\frac{1}{4(\nah+n_0)}\sum_{k \in \transet} \twonorm{\bXk{k}_{\hbeta, -j}(\hgammaah_j-\bgammaah_j)}^2 \\
	&\leq C\left(\sqrt{\frac{\log p}{\nah+n_0}} + \Re_1\right)h_1 \cdot \onenorm{\hgammaah_j - \bgammaah_j} + \Re_1^2 \\
	&\quad  + \frac{C}{\nah+n_0}\infnorma{\sum_{k \in \transet}\bXk{k}_{\hbeta, -j}\bmetak{k}_{\bbeta, j}}\onenorm{\bgammaah_j - \hgammaah_j} + \lambda_j(\onenorm{\bgammaah_j} - \onenorm{\hgammaah_j})
\end{align}
with probability at least $1-\Kah n_0^{-1}$. Note that $\frac{1}{\nah+n_0}\infnorma{\sum_{k \in \transet}\bXk{k}_{\hbeta, -j}} \lesssim \sqrt{\frac{\log p}{\nah+n_0}}$ with probability at least $1-p^{-1}$. Therefore, if $\lambda_j \geq C'\Re_1/\sqrt{s}$ with some large $C'>0$, then since $h_1 \lesssim s^{-1/2}$, we have $\lambda_j > \frac{2C}{\nah+n_0}\infnorma{\sum_{k \in \transet}\bXk{k}_{\hbeta, -j}} + C\left(\sqrt{\frac{\log p}{\nah+n_0}} + \Re_1\right)h_1$, which implies
\begin{align}
	\frac{1}{4(\nah+n_0)}\sum_{k \in \transet} \twonorm{\bXk{k}_{\hbeta, -j}(\hgammaah_j-\bgammaah_j)}^2 &\leq \frac{3}{2}\lambda_j\onenorm{(\hgammaah_j -\bgammaah_j)_{S_j}} - \frac{1}{2}\lambda_j\onenorm{(\hgammaah_j -\bgammaah_j)_{S_j^c}}\\
	&\quad + \lambda_j\onenorm{(\bgammaah_j)_{S_j^c}} + C\Re_1^2, \label{eq: gamma a final ineq}
\end{align}
with probability at least $1-\Kah n_0^{-1}$. Therefore,
\begin{equation}
	\onenorm{\hgammaah_j -\bgammaah_j} \leq 4\sqrt{s}\twonorm{\hgammaah_j -\bgammaah_j} + C\sqrt{s}\Re_1 + h_1, \label{eq: gamma a l1 norm final}
\end{equation}
with probability at least $1-\Kah n_0^{-1}$.
Thus, we have either $\onenorm{\hgammaah_j -\bgammaah_j} \leq 2C\sqrt{s}\Re_1 + 2h_1 \lesssim \sqrt{s}$ or $\onenorm{\hgammaah_j -\bgammaah_j} \leq 8\sqrt{s}\twonorm{\hgammaah_j -\bgammaah_j}$. By similar arguments to get \eqref{eq: rsc upper version}, we obtain
\begin{align}
	\frac{1}{4(\nah+n_0)}\sum_{k \in \transet} \twonorm{\bXk{k}_{\hbeta, -j}(\hgammaah_j-\bgammaah_j)}^2 &\gtrsim \frac{1}{\nah+n_0}\sum_{k \in \transet} \twonorm{\bXk{k}_{-j}(\hgammaah_j-\bgammaah_j)}^2 \\
	&\gtrsim \twonorm{\hgammaah_j-\bgammaah_j}^2, \label{eq: gamma a final eigen}
\end{align}
with probability at least $1-\Kah n_0^{-1}$. Combine \eqref{eq: gamma a final ineq},  \eqref{eq: gamma a l1 norm final} and \eqref{eq: gamma a final eigen} to get the desired conclusions.

\subsubsection{Proof of Lemma \ref{lem: inf lemma gamma delta}}
Similar to \eqref{eq: basic ineq for gamma a}, by basic inequality, 
\begin{align}
	\frac{1}{2n_0}\twonorma{\bXk{0}_{\hbeta, -j}(\hrho_j - \brho_j)}^2 &\leq \frac{1}{n_0}\<\bWk{0}_{\hbeta}(\bWk{0}_{\bbeta})^{-1}\bmetak{0}_{\bbeta, j}, \bXk{0}_{\hbeta, -j}(\hrho_j - \brho_j)\> \\
	&\quad + \frac{1}{n_0}\<\bXk{0}_{\hbeta, -j}(\hrho_j - \brho_j), \bXk{0}_{\hbeta, -j}(\bgammaah_j-\hgammaah_j)\> \\
	&\quad + \tlambda_j(\onenorm{\brho_j} - \onenorm{\hrho_j}).
\end{align}

Similar to the analysis in the proof of Lemma \ref{lem: inf lemma gamma a},
\begin{align}
	\frac{1}{4n_0}\twonorma{\bXk{0}_{\hbeta, -j}(\hrho_j - \brho_j)}^2 &\leq C\left(\sqrt{\frac{\log p}{n_0}} + \Re_1\right)h_1\cdot \onenorm{\hrho_j - \brho_j} + C\twonorm{\hgammaah_j-\bgammaah_j}^2 \\
	&\quad + C\twonorm{\hbeta - \bbeta}^2 + \frac{C}{n_0}\infnorma{\bXk{0}_{\bbeta, -j}\bmetak{0}_j}\cdot \onenorm{\hrho_j - \brho_j} \\
	&\quad + \tlambda_j(\onenorm{\brho_j} - \onenorm{\hrho_j})
\end{align}
with probability at least $1-n_0^{-1}$. Note that $\frac{1}{n_0}\infnorm{\bXk{0}_{\bbeta, -j}\bmetak{0}_j} \lesssim \sqrt{\frac{\log p}{n_0}}$ with probability at least $1-p^{-1}$. Therefore for $\tlambda_j \geq C'\left(\sqrt{\frac{\log p}{n_0}} + \Re_1\right)$ with large enough $C' > 0$, we have $\tlambda_j \geq C\left(\sqrt{\frac{\log p}{n_0}} + \Re_1\right)h_1 + \frac{C}{n_0}\infnorm{\bXk{0}_{\bbeta, -j}\bmetak{0}_j}$. Then with probability at least $1-n_0^{-1}$,
\begin{equation}
	\frac{1}{4n_0}\twonorma{\bXk{0}_{\hbeta, -j}(\hrho_j - \brho_j)}^2 \leq  2\tlambda_j\onenorm{\brho_j} - \frac{1}{2}\tlambda_j\onenorm{\hrho_j-\brho_j} + C\twonorm{\hgammaah_j-\bgammaah_j}^2 + C\twonorm{\hbeta - \bbeta}^2,
\end{equation}
which leads to
\begin{align}
	\onenorm{\hrho_j-\brho_j} &\leq 4\onenorm{\brho_j} + C\tlambda_j^{-1}\twonorm{\hgammaah_j-\bgammaah_j}^2 + C\tlambda_j^{-1}\twonorm{\hbeta - \bbeta}^2 \\
	&\lesssim h_1 + \Re_1 + \left(\sqrt{\frac{n_0}{\log p}}\Re_1^2\right) \wedge \Re_1,
\end{align}
with probability at least $1-\Kah n_0^{-1}$.
Similar to the trick we used before, we can get
\begin{equation}
	\twonorm{\hrho_j-\brho_j}^2 \lesssim \tlambda_j h_1 + \twonorm{\hgammaah_j-\bgammaah_j}^2 + \twonorm{\hbeta - \bbeta}^2 \lesssim h_1\sqrt{\frac{\log p}{n_0}} + h_1\Re_1 + \Re_1^2
\end{equation}
with probability at least $1-\Kah n_0^{-1}$, which completes the proof.

\subsection{Proof of theorems}

\subsubsection{Proof of Theorem \ref{thm: l2 with assumption}}\label{subsubsec: proof of thm 1}
\underline{\textbf{Transferring step:}}	Define $\huah = \hwah - \bwah$ and $\mathcal{D} = \{(\bXk{k}, \byk{k})\}_{k \in \transet}$. We first claim that when $\lambda_{\bw} \geq 2\infnorm{\nabla L(\bwah, \md)}$, with probability at least $1-C_3\exp\{-C_4(\nah + n_0)\}$, it holds that
	\begin{equation}\label{eq: claim}
		\twonorm{\huah} \leq  8\kappa_2C_1h\sqrt{\frac{\log p}{\nah + n_0}} + 3\frac{\sqrt{s}}{\kappa_1}\lambda_{\bw} + 2\sqrt{\frac{C_1}{\kappa_1}h\lambda_{\bw}}.
	\end{equation}
	To see this, first by the definition of $\hwah$, H\"older inequality and Lemma \ref{lem: delta a}, we have
	\begin{align}
		\delta \hat{L}(\huah, \md) &\leq \lambda_{\bw}(\onenorm{\bwah_S} + \onenorm{\bwah_{S^c}}) - \lambda_{\bw}(\onenorm{\hwah_S} + \onenorm{\hwah_{S^c}}) + \nabla \hat{L}(\bw, \md)^T\huah \\
		&\leq  \lambda_{\bw}(\onenorm{\bwah_S} + \onenorm{\bwah_{S^c}}) - \lambda_{\bw}(\onenorm{\hwah_S} + \onenorm{\hwah_{S^c}}) + \frac{1}{2}\lambda_{\bw}\onenorm{\huah} \\
		&\leq \frac{3}{2}\lambda_{\bw}\onenorm{\huah_S} - \frac{1}{2}\lambda_{\bw}\onenorm{\huah_{S^c}} + 2\lambda_{\bw}\onenorm{\bwah_{S^c}} \\
		&\leq \frac{3}{2}\lambda_{\bw}\onenorm{\huah_S} - \frac{1}{2}\lambda_{\bw}\onenorm{\huah_{S^c}} + 2\lambda_{\bw}C_1h. \label{eq: posi c}
	\end{align}
	If the claim does not hold, consider $\mathds{C} = \left\{\bu: \frac{3}{2}\onenorm{\bu_S} - \frac{1}{2}\onenorm{\bu_{S^c}} + 2C_1h \geq 0\right\}$. Due to \eqref{eq: posi c} and the convexity of $\hat{L}$, $\huah \in \mathds{C}$. Then for any $t \in (0,1)$, it's easy to see that
	\begin{equation}
		\frac{1}{2}\onenorm{t\huah_{S^c}} = t\cdot \frac{1}{2}\onenorm{\huah_{S^c}} \leq t\cdot \left(\frac{3}{2}\onenorm{\huah_{S}} + 2C_{\bw}h\right) \leq \frac{3}{2}\onenorm{t\huah_{S}} + 2C_1h,
	\end{equation}
	implying that $t\huah \in \mathds{C}$. We could find some $t$ satisfying that $\twonorm{t\huah} > 8\kappa_2C_1h\sqrt{\frac{\log p}{\nah + n_0}} + 3\frac{\sqrt{s}}{\kappa_1}\lambda_{\bw} + 2\sqrt{\frac{C_1}{\kappa_1}h\lambda_{\bw}}$ and $\twonorm{t\huah} \leq 1$. Denote $\tilde{\bu}^{\mah} = t\huah$ and $F(\bu) = \hat{L}(\bwah + \bu, \md) - \hat{L}(\bwah) + \lambda_{\bw}(\onenorm{\bwah + \bu} - \onenorm{\bwah})$. Since $F(\bm{0}) = 0$ and $F(\huah) \leq 0$, by convexity,
	\begin{equation}\label{eq: f neg}
		F(\tilde{\bu}^{\mah}) = F(t\huah + (1-t)\bm{0}) \leq tF(\huah) \leq 0.
	\end{equation}
	However, by Lemma \ref{lem: re glm} and the same trick of \eqref{eq: posi c},
	\begin{align}
		F(\tilde{\bu}^{\mah}) &\geq \delta \hat{L}(\huah, \md) + \nabla \hat{L}(\bwah)^T \tilde{\bu}^{\mah} -\lambda_{\bw}\onenorm{\bwah} + \lambda_{\bw}\onenorm{\bwah + \tilde{\bu}^{\mah}} \\
		&\geq \kappa_1 \twonorm{\tilde{\bu}^{\mah}}^2 - \kappa_1\kappa_2\sqrt{\frac{\log p}{\nah + n_0}}\onenorm{\tilde{\bu}^{\mah}}\twonorm{\tilde{\bu}^{\mah}} - \frac{3}{2}\lambda_{\bw}\onenorm{\tilde{\bu}^{\mah}_S} + \frac{1}{2}\lambda_{\bw}\onenorm{\tilde{\bu}^{\mah}_{S^c}} - 2\lambda_{\bw}C_1h \\
		&\geq \kappa_1 \twonorm{\tilde{\bu}^{\mah}}^2 - \kappa_1\kappa_2\sqrt{\frac{\log p}{\nah + n_0}}\onenorm{\tilde{\bu}^{\mah}}\twonorm{\tilde{\bu}^{\mah}} - \frac{3}{2}\lambda_{\bw}\onenorm{\tilde{\bu}^{\mah}_S}  - 2\lambda_{\bw}C_1h.
	\end{align}
	Note that since $\tilde{\bu}^{\mah} \in \mathds{C}$, it holds that
	\begin{equation}
		\frac{1}{2}\onenorm{\tilde{\bu}^{\mah}} \leq 2\onenorm{\tilde{\bu}^{\mah}_S} + 2C_{\bw}h \leq 2\sqrt{s}\twonorm{\tilde{\bu}^{\mah}} + 2C_1h.
	\end{equation}
	When $\nah + n_0 > 16\kappa_2^2s\log p$, we have $2k_2\sqrt{\frac{s\log p}{\nah+n_0}} \leq \frac{1}{2}$. Then it follows
	\begin{equation}
		F(\tilde{\bu}^{\mah}) \geq \frac{1}{2}\kappa_1 \twonorm{\tilde{\bu}^{\mah}}^2 - \left[2\kappa_1\kappa_2\sqrt{\frac{\log p}{\nah + n_0}}C_{\bw}h + \frac{3}{2}\lambda_{\bw}\sqrt{s}\right]\twonorm{\tilde{\bu}^{\mah}} - 2\lambda_{\bw}C_1h > 0,
	\end{equation}
	which conflicts with \eqref{eq: f neg}. Therefore our claim at the beginning holds. 
	
	Next, let's prove $\infnorm{\nabla \hat{L}(\bwah)} \lesssim \sqrt{\frac{\log p}{\nah + n_0}}$ with probability at least $1-(\nah + n_0)^{-1}$. To see this, notice that
	\begin{align}
		\nabla \hat{L}(\bwah) &= \frac{1}{\nah + n_0}\sum_{k \in \transet}(\bXk{k})^T[-\byk{k} + \bpsi'(\bXk{k}\bwah)] \\
		&= \frac{1}{\nah + n_0}\sum_{k \in \transet}(\bXk{k})^T[-\byk{k} + \bpsi'(\bXk{k}\bbeta)]\\ 
		&\quad + \frac{1}{\nah + n_0}\sum_{k \in \transet}(\bXk{k})^T[-\bpsi'(\bXk{k}\bbeta) + \bpsi'(\bXk{k}\bwah)]. \label{eq: decomp nabla}
	\end{align}
	Following a similar idea in the proof of Lemma 6 in \cite{negahban2009unified}, under Assumptions \ref{asmp: convexity}-\ref{asmp: second derivative} and the fact $\nah \geq Cs\log p$, we can show that
	\begin{equation}
		\frac{1}{\nah + n_0}\infnorma{\sum_{k \in \transet}(\bXk{k})^T[-\byk{k} + \bpsi'(\bXk{k}\bbeta)]} \lesssim \sqrt{\frac{\log p}{\nah + n_0}},
	\end{equation}
	with probability at least $1-(\nah + n_0)^{-1}$.
	
	The remaining step is to bound the infinity norm of the second term in \eqref{eq: decomp nabla}. Denote $V_{ij}^{(k)} = x_{ij}^{(k)}[-\psi'((\bxk{k}_i)^T\bbeta) + \psi'((\bxk{k}_i)^T\bwah)]$. 
	Under Assumption \ref{asmp: second derivative}, by mean value theorem and Lemma \ref{lem: delta a}, there exists $v_{i}^{(k)} \in (0, 1)$ such that
			\begin{align}
				&\frac{1}{\nah+n_0}\sum_{k\in \transet}\sum_{i=1}^{n_k}V_{ij}^{(k)}\\
				&= \frac{1}{\nah+n_0}\sum_{k\in \transet}\sum_{i=1}^{n_k} \psi''((\bxk{k}_i)^T\bbeta+v_{i}^{(k)}(\bxk{k}_i)^T(\bwah-\bbeta))x_{ij}^{(k)}(\bxk{k}_i)^T(\bwah-\bbeta).
			\end{align}
			$\psi''((\bxk{k}_i)^T\bbeta+v_{i}^{(k)}(\bxk{k}_i)^T(\bwah-\bbeta))x_{ij}^{(k)}$ is $M_{\psi}^2\kappa_u^2$-subGaussian due to the almost sure boundedness of $\psi''$ in Assumption \ref{asmp: second derivative}. And $(\bxk{k}_i)^T(\bwah-\bbeta)$ is a $4C_1^2h^2$-subGaussian due to Lemma \ref{lem: delta a}. Then the multiplication is a $4C_1^2M_{\psi}^2\kappa_u^2h^2$-subexponential variable. By definition of $\bwah$, $\frac{1}{\nah+n_0}\sum_{k\in \transet}\sum_{i=1}^{n_k}V_{ij}^{(k)}$ has zero mean. Notice that the infinity norm of the second term in \eqref{eq: decomp nabla} equals to $\frac{1}{\nah+n_0}\sup_{j = 1,\ldots,p}\norm{\sum_{k\in \transet}\sum_{i=1}^{n_k}V_{ij}^{(k)}}$, by tail bounds of subexponential variables and union bounds, we have
			\begin{equation}
				\frac{1}{\nah+n_0}\sup_{j = 1,\ldots,p}\norma{\sum_{k\in \transet}\sum_{i=1}^{n_k}V_{ij}^{(k)}} \lesssim C_1M_{\psi}\kappa_u h \sqrt{\frac{\log p}{\nah+n_0}},
			\end{equation}
			with probability at least $1-(\nah + n_0)^{-1}$. Therefore $\infnorm{\nabla \hat{L}(\bwah)} \lesssim \sqrt{\frac{\log p}{\nah + n_0}}$ holds with probability at least $1-(\nah + n_0)^{-1}$. Plugging the rate into \eqref{eq: claim}, we have
	\begin{equation}
		\twonorm{\huah} \lesssim h\sqrt{\frac{\log p}{\nah + n_0}} + \sqrt{\frac{s\log p}{\nah + n_0}}  + \left(\frac{\log p}{\nah+n_0}\right)^{1/4}\sqrt{h}, \label{eq: ua 2 norm}
	\end{equation}
	with probability at least $1-(\nah + n_0)^{-1}$, when $\lambda_{\bw} \asymp C_{\bw}\sqrt{\frac{\log p}{\nah + n_0}}$ with $C_{\bw} > 0$ sufficiently large. Since $\huah \in \mathds{C}$, \eqref{eq: ua 2 norm} encloses
	\begin{equation}
		\onenorm{\huah} \lesssim s\sqrt{\frac{\log p}{\nah + n_0}}  + \left(\frac{\log p}{\nah+n_0}\right)^{1/4}\sqrt{sh} + h\left(1+\sqrt{\frac{s\log p}{\nah + n_0}}\right), \label{eq: ua 1 norm}
	\end{equation}
	with probability at least $1-(\nah + n_0)^{-1}$.
	
	\underline{\textbf{Debiasing step:}} 
Denote $\md^{(0)} = (\bXk{0}, \byk{0})$, $\hat{L}^{(0)}(\bw, \md^{(0)}) = -\frac{1}{n_0}(\byk{0})^T \bXk{0} \bw $\\ $+\frac{1}{n_0}\sum_{i=1}^{n_0}\psi((\bxk{0}_i)^T\bw)$, $\nabla \hat{L}^{(0)}(\bw, \md^{(0)}) 
	= -\frac{1}{n_0}(\bXk{0})^T \byk{0} + \frac{1}{n_0}(\bXk{0})^T\bpsi'(\bXk{0}\bw)$, $\bdeltaah = \bbeta - \bwah$, $\hbeta = \hwah + \hdeltaah$, $\hvah = \hdeltaah - \bdeltaah$, and $\delta \hat{L}^{(0)}(\bdelta, \md) = \hat{L}^{(0)}(\hwah + \bdelta, \md^{(0)}) - \hat{L}^{(0)}(\hwah + \bdeltaah, \md^{(0)}) - \nabla \hat{L}^{(0)}(\hwah + \bdeltaah, \md^{(0)})^T \hvah$. 
	
	Similar to \eqref{eq: posi c}, when $\lambda_{\bdelta} \geq 2\infnorm{\nabla \hat{L}^{(0)}(\bbeta, \md^{(0)})}$, we have
		\begin{align}
			\delta \hat{L}^{(0)}(\hdeltaah, \md) &\leq \lambda_{\bdelta}(\onenorm{\bdeltaah}-\onenorm{\hdeltaah}) - \nabla \hat{L}^{(0)}(\hwah + \bdeltaah, \md^{(0)})^T \hvah \\
			&\leq \lambda_{\bdelta}(2\onenorm{\bdeltaah} - \onenorm{\hvah}) + \infnorm{\nabla \hat{L}^{(0)}(\bbeta, \md^{(0)})}\onenorm{\hvah}\\
			&\quad - \left[\nabla \hat{L}^{(0)}(\hwah + \bdeltaah, \md^{(0)}) - \nabla \hat{L}^{(0)}(\bbeta, \md^{(0)})\right]^T\hvah\\
			&\leq 2\lambda_{\bdelta}\onenorm{\bdeltaah} - \frac{1}{2}\lambda_{\bdelta}\onenorm{\hvah} \\
			&\quad - \frac{1}{n_0}\left[\bpsi'((\bXk{0})^T(\hwah+\bdeltaah))-\bpsi'((\bXk{0})^T\bbeta)\right]^T\hvah\\
			&\leq 2\lambda_{\bdelta}\onenorm{\bdeltaah} - \frac{1}{2}\lambda_{\bdelta}\onenorm{\hvah} +\frac{1}{4c_0}M_{\psi}^2\cdot \frac{1}{n_0}\twonorm{\bXk{0}\huah}^2 + c_0 \cdot \frac{1}{n_0}\twonorm{\bXk{0}\hvah}^2, \label{eq: sec step right}
		\end{align}
		where $c_0 > 0$ is an enough small constant. The last inequality holds because
		\begin{align}
			&- \frac{1}{n_0}\left[\bpsi'((\bXk{0})^T(\hwah+\bdeltaah))-\bpsi'((\bXk{0})^T\bbeta)\right]^T\hvah \\
			&\quad = \frac{1}{n_0}(\huah)^T(\bXk{0})^T\Lambda^{(0)}\bXk{0}\hvah \\
			&\leq \frac{1}{4c_0}M_{\psi}^2\cdot \frac{1}{n_0}\twonorm{\bXk{0}\huah}^2 + c_0 \cdot \frac{1}{n_0}\twonorm{\bXk{0}\hvah}^2,
		\end{align}
		where $\Lambda^{(0)} = \text{diag}(\{\psi''((\bxk{0}_i)^T\bbeta + t_i(\bxk{0}_i)^T\huah)\}_{i=1}^{n_0})$ is a $n_0 \times n_0$ diagonal matrix and by Assumption \ref{asmp: second derivative}, $\maxnorm{\Lambda^{(0)}} \leq M_{\psi}$. Denote $\tilde{\bv}^{\mah} = t\hvah$ and $F^{(0)}(\bv) = \hat{L}^{(0)}(\hwah + \bdeltaah + \bv, \md^{(0)}) - \hat{L}^{(0)}(\hwah+\bdeltaah, \md^{(0)}) + \lambda_{\bdelta}(\onenorm{\bdeltaah + \bv} - \onenorm{\bdeltaah})$. Since $F(\bm{0}) = 0$ and $F^{(0)}(\hvah) \leq 0$, by convexity, for any $t \in (0, 1]$,
		\begin{equation}\label{basic ineq v 2}
			F^{(0)}(\tilde{\bv}^{\mah}) = F^{(0)}(t\hvah + (1-t)\bm{0}) \leq tF^{(0)}(\huah) \leq 0.
		\end{equation}
		We set $t \in (0, 1]$ such that $\twonorm{\tilde{\bv}^{\mah}} \leq 1$, which allows us to apply Proposition 1 in \cite{loh2015regularized} on $\tilde{\bv}^{\mah}$. By basic inequality \eqref{basic ineq v 2} and the same arguments in \eqref{eq: sec step right},
		\begin{align}
			c_1\twonorm{\tilde{\bv}^{\mah}}^2 - c_2\cdot \frac{\log p}{n_0}\onenorm{\tilde{\bv}^{\mah}}^2 &\leq  F^{(0)}(\tilde{\bv}^{\mah}) - \nabla \hat{L}^{(0)}(\hwah + \bdeltaah, \md^{(0)})^T \tilde{\bv}^{\mah} \\
			&\leq 2\lambda_{\bdelta}\onenorm{\bdeltaah} - \frac{1}{2}\lambda_{\bdelta}\onenorm{\tilde{\bv}^{\mah}} +\frac{1}{4c_0}M_{\psi}^2\cdot \frac{1}{n_0}\twonorm{\bXk{0}\huah}^2 \\
			&\quad + c_0 \cdot \frac{1}{n_0}\twonorm{\bXk{0}\tilde{\bv}^{\mah}}^2. \label{eq: inter ineq proof thm 1}
		\end{align}
		In the discussion of transferring step, we showed that $\onenorm{\huah_{S^c}} \leq 3\onenorm{\huah_{S}} + 4C_1h$. We leverage on this fact to bound $\frac{1}{n_0}\twonorm{\bXk{0}\huah}^2$ by $\twonorm{\bXk{0}\huah}^2$.
		
		If $3\onenorm{\huah_{S}} \geq 4C_1h$, we will have $\onenorm{\huah_{S^c}} \leq 6\onenorm{\huah_{S}}$. Then by Theorem 1.6 of \cite{zhou2009restricted},
		\begin{equation}\label{eq: rsc upper version}
			\frac{1}{n_0}\twonorm{\bXk{0}\huah}^2 \lesssim \twonorm{\huah}^2 \lesssim \frac{s\log p}{\na+n_0} + h\cdot \sqrt{\frac{\log p}{\na + n_0}},
		\end{equation}
		with probability at least $1-C(\na+n_0)^{-1} -C\exp\{-n_0\}$.
		
		If $3\onenorm{\huah_{S}} < 4C_1h$, we will have $\onenorm{\huah_{S^c}} \leq 8C_1h \leq \sqrt{s}$. Also $\twonorm{\huah} \leq 1$ with probability $1-C(\na+n_0)^{-1}$. Denote 
		\begin{align}
			\Pi_0(s) &= \{\bu \in \mathbb{R}^p: \twonorm{\bu} \leq 1, \zeronorm{\bu} \leq s\}, \\
			\Pi_1(s) &= \{\bu \in \mathbb{R}^p: \twonorm{\bu} \leq 1, \onenorm{\bu} \leq \sqrt{s}\}.
		\end{align}
		By Lemma 3.1 of \cite{plan2013one}, $\Pi_1(s) \subseteq 2\overline{\text{conv}}(\Pi_0(s))$, where $\overline{\text{conv}}(\Pi_0(s))$ is the closure of convex hull of $\Pi_0(s)$. Then following a similar argument in the proof of Theorem 2.4 in \cite{mendelson2008uniform} leads to \eqref{eq: rsc upper version} as well. 
		
		Next we want to bound $\frac{1}{n_0}\twonorm{\bXk{0}\tilde{\bv}^{\mah}}^2$ by $\twonorm{\tilde{\bv}^{\mah}}^2$. By another basic inequality, 
		\begin{align}
			0&\leq \hat{L}^{(0)}(\hbeta, \md^{(0)}) - \hat{L}^{(0)}(\bbeta, \md^{(0)}) - \nabla \hat{L}^{(0)}(\bbeta, \md^{(0)})^T (\hbeta-\bbeta) \\
			&\leq \lambda_{\bdelta}(\onenorm{\bbeta-\hwah} - \onenorm{\hdeltaah}) + \infnorm{\nabla \hat{L}^{(0)}(\bbeta, \md^{(0)})}\onenorm{\hbeta-\bbeta}\\
			&\leq \lambda_{\bdelta}(\onenorm{\bbeta-\hwah} - \onenorm{\hdeltaah}) + \frac{1}{2}\lambda_{\bdelta}\onenorm{\hbeta-\bbeta}\\
			&\leq \frac{3}{2}\lambda_{\bdelta}\onenorm{\bbeta-\hwah} - \frac{1}{2}\lambda_{\bdelta}\onenorm{\hdeltaah}\\
			&\leq \frac{3}{2}\lambda_{\bdelta}C_1h + \frac{3}{2}\lambda_{\bdelta}\onenorm{\hua} - \frac{1}{2}\lambda_{\bdelta}\onenorm{\hdeltaah},
		\end{align}
		which implies
		\begin{equation}
			\onenorm{\hvah} \leq \onenorm{\hdeltaah} + C_1h \leq 3\onenorm{\hua} + 4C_1h.
		\end{equation}
		Combined with \eqref{eq: ua 1 norm}, this leads to $\onenorm{\tilde{\bv}^{\mah}} \leq \onenorm{\hvah} \leq \sqrt{s}$ when $s\log p/(\nah+n_0)$ and $h$ are small enough. Due to the strict convexity, $\delta \hat{L}^{(0)}(\hdeltaah, \md) > 0$, leading to $\onenorm{\hdeltaah} \leq 3\onenorm{\huah} + 3h$. Then,
		\begin{equation}\label{eq: sec step left 1}
			\onenorm{\hvah} \leq \onenorm{\bbeta - \hwah} + \onenorm{\hdeltaah} \leq 4\onenorm{\huah} + 4h \leq \sqrt{s}.
		\end{equation}
		Similar to the analysis of the case $3\onenorm{\huah_{S}} < 4C_1h$ above, we can get
		\begin{equation}
			c_0 \cdot \frac{1}{n_0}\twonorm{\bXk{0}\tilde{\bv}^{\mah}}^2 \leq c_0 \cdot C\twonorm{\tilde{\bv}^{\mah}}^2,
		\end{equation}
		with probability at least $1- C\exp\{-n_0\}$.
		As long as $c_0 C < c_1/2$, by \eqref{eq: inter ineq proof thm 1}, we have
		\begin{align}
			c_1\twonorm{\tilde{\bv}^{\mah}}^2 - c_2\cdot \frac{\log p}{n_0}\onenorm{\tilde{\bv}^{\mah}}^2 &\leq 2\lambda_{\bdelta}\onenorm{\bdeltaah} - \frac{1}{2}\lambda_{\bdelta}\onenorm{\tilde{\bv}^{\mah}}  + C\frac{s\log p}{\nah+n_0} + Ch \sqrt{\frac{\log p}{\nah + n_0}} \\
			&\quad + c_1/2\twonorm{\tilde{\bv}^{\mah}}^2 \label{eq: important in proof of thm 1}
		\end{align}
		with probability at least $1-C'n_0^{-1}$.
		
		If $\lambda_{\bdelta}\onenorm{\bdeltaah} \leq C\frac{s\log p}{\nah+n_0} + Ch \sqrt{\frac{\log p}{\nah + n_0}}$:
		\begin{equation}
			\onenorm{\tilde{\bv}^{\mah}} \lesssim \left[\frac{s\log p}{\nah+n_0} + h \sqrt{\frac{\log p}{\nah + n_0}}\right] \cdot \sqrt{\frac{n_0}{\log p}} + \twonorm{\tilde{\bv}^{\mah}}^2.
		\end{equation}
		Since $\twonorm{\tilde{\bv}^{\mah}} \leq 1$, by \eqref{eq: important in proof of thm 1}, it holds 
		\begin{equation}
			\twonorm{\tilde{\bv}^{\mah}}^2 \lesssim \frac{s\log p}{\nah+n_0} + h\sqrt{\frac{\log p}{\nah + n_0}} \lesssim \frac{s\log p}{\nah+n_0} + \left[h\sqrt{\frac{\log p}{n_0}}\right] \wedge h^2,
		\end{equation}
		with probability at least $1-C'n_0^{-1}$.

		If $\lambda_{\bdelta}\onenorm{\bdeltaah} > C\frac{s\log p}{\nah+n_0} + Ch \sqrt{\frac{\log p}{\nah + n_0}}$: $\onenorm{\tilde{\bv}^{\mah}} \lesssim h + \twonorm{\tilde{\bv}^{\mah}}^2$, which leads to
		\begin{equation}
			\twonorm{\tilde{\bv}^{\mah}}^2 \lesssim 2\lambda_{\bdelta}\onenorm{\bdeltaah} - \frac{1}{2}\lambda_{\bdelta}\onenorm{\tilde{\bv}^{\mah}},
		\end{equation}
		implying $\onenorm{\tilde{\bv}^{\mah}} \leq 4\onenorm{\bdeltaah} \leq 4C_1h$. Besides, by plugging this result into \eqref{eq: important in proof of thm 1}, we have
		\begin{equation}
			\twonorm{\tilde{\bv}^{\mah}}^2 \lesssim \frac{s\log p}{\nah+n_0} + \left[h\sqrt{\frac{\log p}{n_0}}\right] \wedge h^2, \label{eq: va bound final}
		\end{equation}
		with probability at least $1-C'n_0^{-1}$.
		
		When $s\log p/(\nah+n_0)$ and $h$ is small enough, and because $h\sqrt{\frac{\log p}{n_0}} = o(1)$, the right hand side of \eqref{eq: va bound final} can be very small. This means that we showed $\twonorm{\tilde{\bv}^{\mah}} \leq c < 1$ with probability at least $1-C'n_0^{-1}$. Note that this result holds for any $t \in (0, 1]$ such that $\twonorm{\tilde{\bv}^{\mah}} \leq 1$. Now let's consider our interested vector $\hvah$. Suppose $\twonorm{\tilde{\bv}^{\mah}}^2 \geq C\frac{s\log p}{\nah+n_0} + C\left[h\sqrt{\frac{\log p}{n_0}}\right] \wedge h^2$ for some constant $C > 0$ with probability at least $C'n_0^{-1}$, then we can choose $t \in (0, 1]$ such that $c < \twonorm{\tilde{\bv}^{\mah}} \leq 1$, which is contradicted with the fact $\twonorm{\tilde{\bv}^{\mah}} \leq c$ with probability at least $1-C'n_0^{-1}$. Therefore 
		\begin{equation}
			\twonorm{\hvah}^2 \lesssim \frac{s\log p}{\nah+n_0} + \left[h\sqrt{\frac{\log p}{n_0}}\right] \wedge h^2 \lesssim 1,
		\end{equation}
		with probability at least $1-C'n_0^{-1}$.
		Then we can go over the analysis procedure of $\tilde{\bv}^{\mah}$ again with $\hvah$ (on the high-probability event $\twonorm{\hvah} \leq 1$) to get the $\ell_1$-bound
		\begin{equation}
			\onenorm{\hvah} \lesssim s\sqrt{\frac{\log p}{\nah+n_0}} + h,
		\end{equation}
		with probability at least $1-C'n_0^{-1}$.
		
		Finally, we connect the conclusions above with the upper bounds on $\twonorm{\huah}$ and $\onenorm{\huah}$, which completes the proof.

\subsubsection{Proof of Theorem \ref{thm: minimax}}
The analysis in the proof of Theorem B and Theorem 2 in \cite{li2021transfer}, which proved the $\ell_2$ minimax rate under linear models, can be directly extended to the GLMs. We only point out some key observations. The readers can refer to the supplements of their paper  for the full proof.

The proofs of Theorem B and Theorem 2 in \cite{li2021transfer} leverage on Fano's method, which involves Kullback-Leibler divergence. For $\bw_1$ and $\bw_2$ in $\mathbb{R}^p$, consider two GLMs where $y|\bx \sim \tp(y|\bx; \bw_1) = \rho(y)\exp\{y\bx^T\bw_1 - \psi(\bx^T\bw_1)\}$ and $y|\bx \sim \tp(y|\bx; \bw_2) = \rho(y)\exp\{y\bx^T\bw_2 - \psi(\bx^T\bw_2)\}$. Denote the corresponding joint distribution of $(\bx, y)$ as $f_{\bw_1}$ and $f_{\bw_2}$ and suppose marginal distributions of $\bx$ are the same. Then by definition and Assumption \ref{asmp: second derivative},
\begin{align}
	\textrm{KL}(f_{\bw_1}||f_{\bw_2}) &= \te_{\bx \sim f_{\bw_1}}\left[\psi''(\bx^T(t\bw_1+(1-t)\bw_2))\cdot [\bx^T(\bw_1-\bw_2)]^2\right] \\
	&\leq C\te_{\bx \sim f_{\bw_1}}[\bx^T(\bw_1-\bw_2)]^2],
\end{align}
for some constant $C>0$, if $\infnorm{\psi''} < \infty$ or $\infnorm{\bx} \leq U$ a.s. and $\onenorm{\bw_1 - \bw_2} \leq C'$ with some $C' > 0$. By using this fact, all the analysis of $\ell_2$-estimation error in their proof works out for GLM.

About the $\ell_1$-estimation error, we can make slight changes to make the argument work again. In case (\rom{1}) of the proof of Theorem 2 (or case (\rom{1}) of Theorem B) in \cite{li2021transfer}, replace the $\delta$-packing under $\ell_2$ norm with the $\delta$-packing under $\ell_1$ norm, and set $\delta_0 = c_0s\sqrt{\frac{\log p}{\na+n_0}}$. In case (\rom{2}-1), we can do the same and set $\delta_0 = c_0s\sqrt{\frac{\log p}{n_0}}$. In (\rom{2}-2), we suppose $s\sqrt{\log p/n_0} > h$. Set $\delta_0 = \bar{m}\sqrt{\frac{\log p}{n_0}} \asymp h$ where $\bar{m}$ is the integer part of $h\sqrt{n_0/\log p}$. In (\rom{2}-3), the same argument works and we set $\delta_0 = h$. So finally we can get the minimax $\ell_1$ rate $s\sqrt{\log p/(\na+n_0)} + [s\sqrt{\log p/n_0}]\wedge h$.

\subsubsection{Proof of Theorem \ref{thm: l2 without assumption}}
Throughout this proof, we denote $\bwah$ as any vector $\bw$ satisfying $\onenorm{\bw - \bbeta} \leq h$. Such a $\bwah$ indeed exists, e.g. $\bwah = \sum_{k \in \transet} \alpha_k \bwk{k}$. Note that $\bwah$ here does not necessarily enjoy the moment condition \eqref{eq: wa in population merging}, which will bring more bias. This is the price we have to pay for relaxing Assumption \ref{asmp: merging}. Other notations are defined the same as in the proof of Theorem \ref{thm: l2 with assumption}.

The main idea of the proof is similar to that in proof of Theorem \ref{thm: l2 with assumption}. We only highlight the different parts here and do not dig into all the details.

First, the claim in \eqref{eq: claim} still holds here, i.e. when $\lambda_{\bw} \geq 2\infnorm{\nabla \hat{L}(\bwah, \md)}$, with probability at least $1-C_3\exp\{-C_4(\nah + n_0)\}$, it holds that
	\begin{equation}\label{eq: ua without asump}
		\twonorm{\huah} \leq  8\kappa_2C_{\bw}h\sqrt{\frac{\log p}{\nah + n_0}} + 3\frac{\sqrt{s}}{\kappa_1}\lambda_{\bw} + 2\sqrt{\frac{1}{\kappa_1}h\lambda_{\bw}}.
	\end{equation}
	Via the decomposition in \eqref{eq: decomp nabla}, $\infnorm{\nabla L(\bwah, \md)}$ can be bounded by two parts where the first part has rate $\mathcal{O}_p\left(\sqrt{\frac{\log p}{\nah+n_0}}\right)$. Denote $V_{ij}^{(k)} = x_{ij}^{(k)}[-\psi'((\bxk{k}_i)^T\bwk{k}) + \psi'((\bxk{k}_i)^T\bwah)]$.
	\begin{align}
		&\frac{1}{\nah+n_0}\sum_{k\in \transet}\sum_{i=1}^{n_k}V_{ij}^{(k)}\\
		&= \frac{1}{\nah+n_0}\sum_{k\in \transet}\sum_{i=1}^{n_k} \psi''((\bxk{k}_i)^T\bwk{k}+v_{i}^{(k)}(\bxk{k}_i)^T(\bwah-\bwk{k}))x_{ij}^{(k)}(\bxk{k}_i)^T(\bwah-\bwk{k}).
	\end{align}
	Similar as before, the multiplication $\psi''((\bxk{k}_i)^T\bwk{k}+v_{i}^{(k)}(\bxk{k}_i)^T(\bwah-\bwk{k}))x_{ij}^{(k)}(\bxk{k}_i)^T(\bwah-\bwk{k})$ is a $C_1^2M_{\psi}^2\kappa_u^2h^2$-subexponential variable. And by Cauchy-Schwarz inequality and subGaussian properties \citep{vershynin2018high},
	\begin{equation}
		\te\norm{V_{ij}^{(k)}} \leq (\te\norm{x_{ij}^{(k)}}^2)^{1/2}(\te[(\bxk{k}_i)^T(\bwah-\bwk{k})]^2)^{1/2} \lesssim \kappa_u h.
	\end{equation}
	Therefore, by tail bounds of sub-exponential variables and union bounds, we have
			\begin{equation}
				\frac{1}{\nah+n_0}\sup_{j = 1,\ldots,p}\norma{\sum_{k\in \transet}\sum_{i=1}^{n_k}V_{ij}^{(k)}} \lesssim \kappa_u h + C_1M_{\psi}\kappa_u h \sqrt{\frac{\log p}{\nah+n_0}},
			\end{equation}
	with probability at least $1-(\nah+n_0)^{-1}\vee p^{-1}$, which implies that $\infnorm{\nabla \hat{L}(\bwah)} \lesssim \sqrt{\frac{\log p}{\nah + n_0}} + h$ with probability at least $1-(\nah+n_0)^{-1}\vee p^{-1}$.	Let $\lambda_{\bw} = C_{\bw}\left(\sqrt{\frac{\log p}{\nah + n_0}} + h\right)$ and $\lambda_{\bw} \geq 2\infnorm{\nabla L(\bwah, \md)}$ in high probability. Plugging it into \eqref{eq: ua without asump}, we get
	\begin{equation}\label{eq: detection cite}
		\twonorm{\huah} \lesssim \sqrt{\frac{s\log p}{\nah+n_0}} + \sqrt{s}h + \sqrt{h}\left(\frac{\log p}{\nah+n_0}\right)^{1/4},
	\end{equation}
	with probability at least $1-(\nah+n_0)^{-1}\vee p^{-1}$. Similarly, we can obtain the $\ell_1$-error bound
	\begin{equation}
		\onenorm{\huah} \lesssim s\sqrt{\frac{\log p}{\nah+n_0}} + sh + \sqrt{sh}\left(\frac{\log p}{\nah+n_0}\right)^{1/4},\label{eq: ua l1 bound without assumption}
	\end{equation}
	with probability at least $1-(\nah+n_0)^{-1}\vee p^{-1}$. Next, consider the debiasing step. Similar as the proof of Theorem \ref{thm: l2 with assumption}, let $\lambda_{\bdelta} =  C_{\bdelta}\sqrt{\frac{\log p}{n_0}}$ which satisfies $\lambda_{\bdelta} \geq 2\infnorm{\nabla L^{(0)}(\bbeta, \md^{(0)})}$ in high probability to get
		\begin{align}
			\twonorm{\hbeta - \bbeta}^2 &\lesssim \frac{s\log p}{\nah+n_0} + sh^2 + \left[h\sqrt{\frac{\log p}{n_0}}\right] \wedge h^2, \\
			\onenorm{\hbeta - \bbeta} &\lesssim s\sqrt{\frac{\log p}{\na+n_0}} + sh + \sqrt{sh}\cdot \left(\frac{\log p}{\nah+n_0}\right)^{1/4},
		\end{align}
	with probability at least $1-Cn_0^{-1}$.

\subsubsection{Proof of Theorem \ref{thm: a detection}}

\begin{figure}[!h]
\centering
\begin{tikzpicture}
  \matrix (m) [matrix of math nodes, row sep=6em, column sep=8em]
    { L_0(\bbeta)  & \hat{L}_0^{[r]}(\bbeta) & \hat{L}_0^{[r]}(\hbeta^{(0)[r]})  \\
       L_0(\bbetak{k}) & \hat{L}_0^{[r]}(\bbetak{k}) & \hat{L}_0^{[r]}(\hbeta^{(k)[r]}) \\ };
  { [start chain]
    \chainin (m-2-1);
    { [start branch=A] \chainin (m-1-1)
        [join={node[right,labeled] {\textup{Assumption \ref{asmp: idenfity a}}}}];}}
   {[start chain]
    \chainin (m-1-2);
     {[start branch=A]\chainin (m-1-1)[join={node[above,labeled] {\Gamma_2}}];}
    { [start branch=B]\chainin (m-1-3) [join={node[above,labeled] {\Gamma_1}}];}
   }
  { [start chain] 
  \chainin (m-2-3);
    \chainin (m-2-2) [join={node[above,labeled] {\Gamma_1}}];
    \chainin (m-2-1) [join={node[above,labeled] {\Gamma_2}}];}
    {[start chain] 
    \chainin (m-1-3);
    \chainin (m-2-3) [join={node[left,labeled] {\textup{our goal}}}];}
\end{tikzpicture}
\caption{The idea behind Assumptions \ref{asmp: merging}, \ref{asmp: idenfity a} and Theorem \ref{thm: a detection}.}
\label{fig: decomp}
\end{figure}
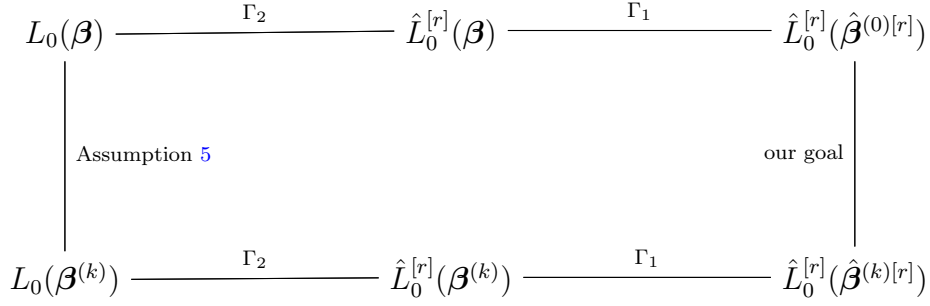

\begin{lemma}\label{lem: L a 0}
	Under Assumptions \ref{asmp: convexity}-\ref{asmp: second derivative}, $\sup\limits_{k \in \ma}L_0(\bbetak{k}) - L_0(\bbeta) \lesssim \twonorm{\bbetak{k} - \bbeta}^2 \lesssim h^2$.
\end{lemma}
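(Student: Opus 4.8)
The plan is to exploit that $\bbeta$ is a critical point (indeed the minimizer) of the convex population loss $L_0$, so that the gap $L_0(\bbetak{k}) - L_0(\bbeta)$ is a genuinely second-order quantity, controlled by the curvature of $L_0$ near $\bbeta$ times $\twonorm{\bbetak{k}-\bbeta}^2$. First I would differentiate the formula for $L_0(\bw)$; using $\te[y^{(0)}\mid\bx^{(0)}] = \psi'(\bbeta^T\bx^{(0)})$ and exchanging derivative and expectation (a routine dominated-convergence step under Assumptions \ref{asmp: convexity}--\ref{asmp: second derivative}) gives
\begin{equation*}
	\nabla L_0(\bw) = \te\big[(\psi'(\bw^T\bx^{(0)}) - \psi'(\bbeta^T\bx^{(0)}))\,\bx^{(0)}\big], \qquad \nabla^2 L_0(\bw) = \te\big[\psi''(\bw^T\bx^{(0)})\,\bx^{(0)}(\bx^{(0)})^T\big].
\end{equation*}
In particular $\nabla L_0(\bbeta) = \bm{0}_p$, so restricting $L_0$ to the segment from $\bbeta$ to $\bbetak{k}$ and applying the one-dimensional Taylor theorem there is $t_k\in(0,1)$ with $\tilde\bw_k = \bbeta + t_k(\bbetak{k}-\bbeta)$ such that
\begin{equation*}
	L_0(\bbetak{k}) - L_0(\bbeta) = \tfrac12\,(\bbetak{k}-\bbeta)^T\nabla^2 L_0(\tilde\bw_k)(\bbetak{k}-\bbeta) = \tfrac12\,\te\Big[\psi''(\tilde\bw_k^T\bx^{(0)})\big((\bbetak{k}-\bbeta)^T\bx^{(0)}\big)^2\Big].
\end{equation*}

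Next I would bound $\psi''(\tilde\bw_k^T\bx^{(0)})$ uniformly by $M_{\psi}$. Under Assumption \ref{asmp: second derivative}.(\rom{1}) this is immediate since $\infnorm{\psi''}\le M_{\psi}$ a.s. Under Assumption \ref{asmp: second derivative}.(\rom{2}), write $\tilde\bw_k^T\bx^{(0)} = (\bwk{0})^T\bx^{(0)} + z$ with $|z| = t_k|(\bbetak{k}-\bbeta)^T\bx^{(0)}| \le U\,\onenorm{\bbetak{k}-\bbeta}$; since $\onenorm{\bbetak{k}-\bbeta}\lesssim h$ for $k\in\ma$ — the single-source analogue of Lemma \ref{lem: delta a} applied to the population transferring estimator $\bbetak{k}$ — and $h\lesssim U^{-1}\bar{U}$ under the running conditions, one gets $|z|\le\bar{U}$ and hence $\psi''(\tilde\bw_k^T\bx^{(0)})\le M_{\psi}$. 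In either case,
\begin{equation*}
	L_0(\bbetak{k}) - L_0(\bbeta) \le \tfrac{M_{\psi}}{2}\,(\bbetak{k}-\bbeta)^T\bm{\Sigma}^{(0)}(\bbetak{k}-\bbeta) \le \tfrac{M_{\psi}}{2}\,\lambda_{\max}(\bm{\Sigma}^{(0)})\,\twonorm{\bbetak{k}-\bbeta}^2,
\end{equation*}
and $\lambda_{\max}(\bm{\Sigma}^{(0)})\lesssim\kappa_u$ because, by Assumption \ref{asmp: x}, $\bm{a}^T\bx^{(0)}$ is $\kappa_u\twonorm{\bm{a}}^2$-subGaussian and therefore has variance $\lesssim\kappa_u\twonorm{\bm{a}}^2$. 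Since the implied constants do not depend on $k$, this yields, uniformly over $k\in\ma$, $L_0(\bbetak{k}) - L_0(\bbeta)\lesssim\twonorm{\bbetak{k}-\bbeta}^2$, and $\twonorm{\bbetak{k}-\bbeta}^2\lesssim h^2$ is then immediate from the $\ell_1$-proximity $\onenorm{\bbetak{k}-\bbeta}\lesssim h$ invoked above (recall $\twonorm{\cdot}\le\onenorm{\cdot}$).

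The only delicate step, and the one I would be most careful about, is the uniform control of $\psi''$ under Assumption \ref{asmp: second derivative}.(\rom{2}): one must verify that the linear predictor $\tilde\bw_k^T\bx^{(0)}$ stays inside the window $(\bwk{0})^T\bx^{(0)}\pm\bar{U}$ along the \emph{entire} segment joining $\bbeta$ to $\bbetak{k}$, which is exactly what the smallness of $h$ (together with $\onenorm{\bbetak{k}-\bbeta}\lesssim h$ and the a.s. bound $\infnorm{\bx^{(0)}}\le U$) buys. Everything else — the exchange of derivative and expectation, the one-dimensional Taylor remainder along the line, and turning the subGaussian proxy into a variance bound — is standard, and I would only sketch it.
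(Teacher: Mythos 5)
Your proof is correct and follows essentially the same route as the paper's: a second-order Taylor expansion of $L_0$ along the segment from $\bbeta$ (where $\nabla L_0$ vanishes) to $\bbetak{k}$, a uniform bound on $\psi''$ along that segment, and a sub-Gaussian second-moment bound to turn the quadratic form into $\twonorm{\bbetak{k}-\bbeta}^2$. The only cosmetic differences are in case (\rom{2}) of Assumption \ref{asmp: second derivative}, where you bound $\psi''$ almost surely using $Uh\lesssim\bar{U}$ while the paper applies Cauchy--Schwarz with a fourth-moment sub-Gaussian bound, and in how $\onenorm{\bbetak{k}-\bbeta}\lesssim h$ is justified (the paper simply observes that $\bbetak{k}$ is a convex combination of $\bbeta$ and $\bwk{k}$, which needs no analogue of Assumption \ref{asmp: merging}).
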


\begin{proof}[Proof of Lemma \ref{lem: L a 0}]
	Note that the term involving $\te[\rho(\yk{0})]$ is canceled when taking the difference, therefore we drop that term and consider $L_0(\bw) = -\te[\psi'(\bw^T\bxk{0})\bw^T\bxk{0}] + \te[\psi(\bw^T\bxk{0})]$. Since $\nabla L_0(\bbeta) = \bm{0}$ and $\nabla^2 L_0(\bw) = \te[\psi''(\bw^T\bxk{0})\bxk{0}(\bxk{0})^T]$. By mean-theorem, $\exists t^{(k)}\in (0,1)$, such that
	\begin{equation}
		L_0(\bbetak{k}) - L_0(\bbeta) = (\bbetak{k} - \bbeta)^T \te[\psi''(\bbeta^T\bxk{0} + t^{(k)}(\bbetak{k} - \bbeta)^T\bxk{0})\bxk{0}(\bxk{0})^T](\bbetak{k} - \bbeta).
	\end{equation}
	Under Assumption \ref{asmp: second derivative}.(\rom{1}):
	\begin{equation}
		L_0(\bbetak{k}) - L_0(\bbeta) \leq M\te[((\bxk{0})^T(\bbetak{k}-\bbeta))^2] \lesssim \twonorm{\bbetak{k}-\bbeta}^2.
	\end{equation}
	Under Assumption \ref{asmp: second derivative}.(\rom{2}), by Cauchy-Schwarz inequality and the subGaussian moment bound \citep{vershynin2018high}:
	\begin{equation}
		L_0(\bbetak{k}) - L_0(\bbeta) \leq \left[\te\left(\max_{z:\norm{z}\leq 1}\psi''((\bxk{0})^T\bbeta + z)\right)^2\right]^{1/2}[\te((\bxk{0})^T(\bbetak{k}-\bbeta))^4]^{1/4} \lesssim \twonorm{\bbetak{k}-\bbeta}^2.
	\end{equation}
	The second half inequality automatically holds since $\bbetak{k}$ is a linear combination of $\bbeta$ and $\bwk{k}$. And it's easy to see that all the constants appearing in the inequalities are uniform for all $k \in \ma$, which completes the proof.
\end{proof}

Next, we prove Theorem \ref{thm: a detection}. We have
\begin{align}
	\hat{\sigma} &= \sqrt{\sum_{r=1}^3(\hat{L}_0^{[r]}(\hat{\bbeta}^{(0)[r]}) - L_0(\bbeta))^2/3} \leq \sqrt{\frac{2}{3}}\cdot \sum_{r=1}^3 \norm{\hat{L}_0^{[r]}(\hat{\bbeta}^{(0)[r]}) - L_0(\bbeta)}  \\
	&\leq \sqrt{\frac{2}{3}}\cdot\sum_{r=1}^3 \left[\norm{\hat{L}_0^{[r]}(\hat{\bbeta}^{(0)[r]}) - \hat{L}_0^{[r]}(\bbeta)} + \norm{\hat{L}_0^{[r]}(\bbeta) - L_0(\bbeta)}\right] \\
	&\lesssim \zeta\left(\Gamma_1^{(0)} + \Gamma_2^{(0)}\right),
\end{align}
with probability at least $1-g_1^{(0)}(\zeta) - g_2^{(0)}(\zeta)$.
As Figure \ref{fig: decomp} shows, by Lemma \ref{lem: L a 0}, for $k\in \ma$, there holds
\begin{align}
	\sup_r  \norm{\hat{L}_0^{[r]}(\hat{\bbeta}^{(k)[r]}) - \hat{L}_0^{[r]}(\hbeta^{(0)[r]})}&\leq 
	2\sup_r \norm{\hat{L}_0^{[r]}(\hat{\bbeta}^{(k)[r]}) - \hat{L}_0^{[r]}(\bbeta^{(k)[r]})} \\
	&\quad+ \sup_r \norm{\hat{L}_0^{[r]}(\bbeta^{(k)}) -\hat{L}_0^{[r]}(\bbeta)- L_0(\bbeta^{(k)})+L_0(\bbeta)} \\
	&\quad + \norm{L_0(\bbeta^{(k)}) - L_0(\bbeta)} \\
	&\lesssim \zeta\left(\Gamma_1^{(k)} + \Gamma_2^{(k)} + h^2\right) \\
	&\leq C_0(\hat{\sigma} \vee 0.01), \label{eq: l0 first eq}
\end{align}
simultaneously with probability at least $1-|\ma|\max_{k \in \ma}[g_1^{(k)}(\zeta) + g_2^{(k)}(\zeta)]$ for sufficiently small $\zeta > 0$ when $\min_{k \in \ma}n_k$ and $n_0$ go to infinity since $\Gamma_1^{(k)} + \Gamma_2^{(k)} + h^2 = o(1)$.
On the other hand, by Assumption \ref{asmp: idenfity a} and the fact $\nabla L_0(\bbeta) = \bm{0}$, for $k\in \mac$, 
\begin{align}
	&\inf_r\hat{L}_0^{[r]}(\hat{\bbeta}^{(k)[r]}) - \hat{L}_0^{[r]}(\hbeta^{(0)[r]}) \\
	&\geq \norm{L_0(\bbeta^{(k)}) - L_0(\bbeta)} - \Upsilon_1^{(k)} - \zeta\Gamma_1^{(k)} - \zeta\Gamma_2^{(k)} \\
	&=  \twonorm{\bbetak{k} - \bbeta}^2 \cdot\lambda_{\min}\left(\int_0^1\nabla^2 L_0(\bbeta + t(\bbetak{k}-\bbeta))dt\right) - \Upsilon_1^{(k)} - \zeta\Gamma_1^{(k)} - \zeta\Gamma_2^{(k)}  \\
	&\geq \twonorm{\bbetak{k} - \bbeta}^2 \cdot\underline{\lambda}  - \Upsilon_1^{(k)} - \zeta\Gamma_1^{(k)} - \zeta\Gamma_2^{(k)} \\
	&\geq C_1^2\left[(\Gamma_1^{(0)} + \Gamma_2^{(0)}) \vee 1\right] - \zeta\Gamma_1^{(k)} - \zeta\Gamma_2^{(k)} \\
	&> C_0(\hat{\sigma} \vee 0.01),
\end{align}
simultaneously with probability at least $1-|\mac|\max_{k \in \mac}[g_1^{(k)}(C_0^{-1}) + g_2^{(k)}(C_0^{-1})]-|\mac|\max_{k \in \mac}\allowbreak[g_1^{(k)}(\zeta) + g_2^{(k)}(\zeta)]$. It entails
\begin{align}
	\tp(\widehat{\mathcal{A}} \neq \mathcal{A}_h) & \leq \tp\left(\bigcup_{k \in \ma}\left\{\hat{L}_0^{(k)} - \hat{L}_0^{(0)} > C_0(\hat{\sigma} \vee 0.01) \right\}\bigcup \bigcup_{k \in \mac}\left\{\hat{L}_0^{(k)} - \hat{L}_0^{(0)} \leq C_0(\hat{\sigma} \vee 0.01) \right\}\right) \\
	&\leq \sum_{k \in \ma}\tp\left(\hat{L}_0^{(k)} - \hat{L}_0^{(0)} > C_0(\hat{\sigma} \vee 0.01)\right) + \sum_{k \in \mac}\tp\left(\hat{L}_0^{(k)} - \hat{L}_0^{(0)} \leq C_0(\hat{\sigma} \vee 0.01) \right) \\
	&\leq \sum_{k \in \ma}\tp\left(\inf_r\norm{\hat{L}_0^{[r]}(\hat{\bbeta}^{(k)[r]}) - \hat{L}_0^{[r]}(\hbeta^{(0)[r]})} > C_0(\hat{\sigma} \vee 0.01)\right) \\
	&\quad + \sum_{k \in \mac}\tp\left(\sup_r\norm{\hat{L}_0^{[r]}(\hat{\bbeta}^{(k)[r]}) - \hat{L}_0^{[r]}(\hbeta^{(0)[r]})} \leq C_0(\hat{\sigma} \vee 0.01) \right) \\
	&\leq |\ma|\max_{k \in \ma}[g_1^{(k)}(\zeta) + g_2^{(k)}(\zeta)] +|\mac|\max_{k \in \mac}[g_1^{(k)}(C_0^{-1}) + g_2^{(k)}(C_0^{-1})].
\end{align}
For any $\delta > 0$, there exist constants $C'(\delta)$ and $\zeta' > 0$ such that when $C_0 = C'(\delta)$, $K\max_{k \in \mac}[g_1^{(k)}(C_0^{-1}) + g_2^{(k)}(C_0^{-1})] \leq \delta/2$, $K\max_{k \in \ma}[g_1^{(k)}(\zeta') + g_2^{(k)}(\zeta')] < \delta/2$ and $C_1^2\big[(\Gamma_1^{(0)} + \Gamma_2^{(0)}) \vee 1\big] > \zeta'\Gamma_1^{(k)} + \zeta'\Gamma_2^{(k)}$. On the other hand, there exists $N = N(\delta) > 0$, such that when $\min_{k \in \transet} n_k > N(\delta)$, $\zeta'\left(\Gamma_1^{(k)} + \Gamma_2^{(k)} + h^2\right)/[C'(\delta)\cdot(\hat{\sigma} \vee 0.01)]$ is sufficiently small to make \eqref{eq: l0 first eq} hold. 

In summary, for any $\delta > 0$, there exist constants $C'(\delta)$ and $N = N(\delta) > 0$ such that when $C_0 = C'(\delta)$ and $\min_{k \in \transet} n_k > N(\delta)$, $\tp(\widehat{\mathcal{A}} \neq \mathcal{A}_h) \leq \delta$.

\subsubsection{Proof of Theorem \ref{thm: inf}}
Denote $\Re_1 = \sqrt{\frac{s\log p}{\na+n_0}} +  \left[h^{1/2}\left(\frac{\log p}{n_0}\right)^{1/4}\right] \wedge h$. First we present the following proposition.

\begin{proposition}\label{prop: gamma rate}
	Assume Assumptions \ref{asmp: convexity}-\ref{asmp: merging} and \ref{asmp: inf} hold (except (\rom{5})). Let $\lambda_j \asymp \sqrt{\frac{\log p}{\nah+n_0}} + \left[\frac{1}{\sqrt{s}}\cdot \left(\frac{\log p}{n_0}\right)^{1/4}h^{1/2}\right] \wedge \frac{h}{\sqrt{s}}$ for any $j = 1, \ldots, p$, and $\tlambda_j \asymp \sqrt{\frac{\log p}{n_0}} + \Re_1$. Then with probability at least $1-\Kah n_0^{-1}$,
	\begin{align}
	\twonorm{\hbgammak{0}_j - \bgammak{0}_j}^2 &\lesssim h_1\sqrt{\frac{\log p}{n_0}} + h_1\Re_1 + \Re_1^2, \label{eq: prop 2 eq 1}\\
	\onenorm{\hbgammak{0}_j - \bgammak{0}_j} &\lesssim \sqrt{s}\Re_1 + s^{1/4}\Re_1^{1/2}h_1^{1/2} + h_1, \label{eq: prop 2 eq 2}\\
	\norm{\htau^2_j - \tau_j^2} \vee \norm{\htau^{-2}_j - \tau_j^{-2}} &\lesssim \Re_1 + h^{1/2}_1\left(\frac{\log p}{n_0}\right)^{1/4} + h_1^{1/2}\Re_1^{1/2}+h_{\max}, \label{eq: prop 2 eq 3}\\
\end{align}
\end{proposition}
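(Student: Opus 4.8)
Proposition \ref{prop: gamma rate} is obtained by assembling the two building blocks already in hand. Since Algorithm \ref{algo: inf} sets $\hbgammak{0}_j = \hgammaa_j + \hrho_j$ and the population quantity it targets decomposes as $\bgammak{0}_j = \bgammaa_j + \brho_j$ with $\brho_j = \bgammak{0}_j - \bgammaa_j$ the population contrast, we have the exact identity
\[
\hbgammak{0}_j - \bgammak{0}_j = (\hgammaa_j - \bgammaa_j) + (\hrho_j - \brho_j).
\]
First I would record that the hypotheses of Lemmas \ref{lem: inf lemma gamma a} and \ref{lem: inf lemma gamma delta} hold under Assumptions \ref{asmp: convexity}--\ref{asmp: merging} and \ref{asmp: inf} (in particular $h_1 \lesssim s^{-1/2}$ and $\sup_j\zeronorm{\bgammak{0}_j}\lesssim s$ come from Assumption \ref{asmp: inf}(ii),(vii), and the stated $\lambda_j,\tlambda_j$ are exactly the ones those lemmas require), and that Theorem \ref{thm: l2 with assumption} applies and yields $\twonorm{\hbeta-\bbeta}\lesssim\Re_1$ and $\onenorm{\hbeta-\bbeta}\lesssim s\sqrt{\log p/(\nah+n_0)}+h$ on an event of probability at least $1-n_0^{-1}$. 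Then \eqref{eq: prop 2 eq 1} and \eqref{eq: prop 2 eq 2} follow from the triangle inequality (and $(a+b)^2\le 2a^2+2b^2$) applied to the identity above, adding the two lemmas' bounds and discarding dominated summands (e.g.\ $h_1(\log p/(\nah+n_0))^{1/2}\le h_1\sqrt{\log p/n_0}$, $\Re_1 h_1/\sqrt s\le h_1\Re_1$, $\Re_1\le\sqrt s\,\Re_1$). This part is routine.

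\textbf{The variance estimate.} The content is \eqref{eq: prop 2 eq 3}. Using $\htau_j^2 = \hSigma_{\hbeta,j,j} - \hSigma_{\hbeta,j,-j}\hbgammak{0}_j$ and $\tau_j^2 = \bSigmak{0}_{\bbeta,j,j} - \bSigmak{0}_{\bbeta,j,-j}\bgammak{0}_j$ I would split
\begin{multline*}
\htau_j^2 - \tau_j^2
= \underbrace{(\hSigma_{\hbeta,j,j} - \bSigmak{0}_{\bbeta,j,j})}_{(\mathrm{I})}
- \underbrace{(\hSigma_{\hbeta,j,-j} - \bSigmak{0}_{\bbeta,j,-j})\bgammak{0}_j}_{(\mathrm{II})} \\
- \underbrace{\bSigmak{0}_{\bbeta,j,-j}(\hbgammak{0}_j - \bgammak{0}_j)}_{(\mathrm{III})}
- \underbrace{(\hSigma_{\hbeta,j,-j} - \bSigmak{0}_{\bbeta,j,-j})(\hbgammak{0}_j - \bgammak{0}_j)}_{(\mathrm{IV})}.
\end{multline*}
Term $(\mathrm{III})$ is the dominant one: by Cauchy--Schwarz and $\twonorm{\bSigmak{0}_{\bbeta,-j,j}}\lesssim 1$ it is $\lesssim \twonorm{\hbgammak{0}_j-\bgammak{0}_j}$, and taking the square root of \eqref{eq: prop 2 eq 1} gives exactly $\Re_1 + h_1^{1/2}(\log p/n_0)^{1/4} + h_1^{1/2}\Re_1^{1/2}$. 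For $(\mathrm{I})$ and $(\mathrm{II})$ I would write $\hSigma_{\hbeta}-\bSigmak{0}_{\bbeta} = [\hSigma_{\hbeta}-\hSigma_{\bbeta}] + [\hSigma_{\bbeta}-\bm{\Sigma}^{\mah}_{\bbeta}] + [\bm{\Sigma}^{\mah}_{\bbeta}-\bSigmak{0}_{\bbeta}]$, where $\hSigma_{\bbeta} = \sum_{k\in\transet}\alpha_k\hSigmak{k}_{\bbeta}$ is the pooled empirical matrix evaluated at the truth and $\bm{\Sigma}^{\mah}_{\bw}=\sum_{k\in\transet}\alpha_k\bSigmak{k}_{\bw}$, and bound the three brackets separately: (a) $\hSigma_{\bbeta}-\bm{\Sigma}^{\mah}_{\bbeta}$ is an empirical-versus-population deviation at the fixed point $\bbeta$, so Bernstein's inequality (using the boundedness in Assumption \ref{asmp: inf}(i),(ii),(iv)) gives a contribution $\lesssim\sqrt{\log p/(\nah+n_0)}\le\Re_1$; (b) $\hSigma_{\hbeta}-\hSigma_{\bbeta}$ is handled by a first-order Taylor expansion of $\psi''$, whose remainder involves $\psi'''$ (bounded by Assumption \ref{asmp: inf}(iv) on the event where $\onenorm{\hbeta-\bbeta}$ and $h$ are small enough that the intermediate argument stays within $\bar U$) times $(\bxk{k}_i)^T(\hbeta-\bbeta)$, which after using $\infnorm{\bxk{k}}\le U$, $|(\bxk{k})^T\bgammak{0}_j|\le U''$ and Cauchy--Schwarz leaves a factor $\twonorm{\hbeta-\bbeta}\lesssim\Re_1$; (c) $\bm{\Sigma}^{\mah}_{\bbeta}-\bSigmak{0}_{\bbeta}$ is pure predictor heterogeneity, and its contributions to $(\mathrm{I})$ and $(\mathrm{II})$ are exactly $|\bm{\Sigma}^{\mah}_{\bbeta,j,j}-\bSigmak{0}_{\bbeta,j,j}|$ and $|(\bm{\Sigma}^{\mah}_{\bbeta,j,-j}-\bSigmak{0}_{\bbeta,j,-j})\bgammak{0}_j|$, both $\le\sup_{k\in\mah}|\cdot|\le h_{\max}$ by the definition of $h_{\max}$. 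Finally $(\mathrm{IV})$ is $\le\maxnorm{\hSigma_{\hbeta}-\bSigmak{0}_{\bbeta}}\cdot\onenorm{\hbgammak{0}_j-\bgammak{0}_j}$, a product of two vanishing quantities, hence of strictly smaller order. Collecting (a)--(c) and $(\mathrm{III})$ yields $\norm{\htau_j^2-\tau_j^2}\lesssim\Re_1 + h_1^{1/2}(\log p/n_0)^{1/4} + h_1^{1/2}\Re_1^{1/2} + h_{\max}$.

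\textbf{The reciprocal and probability bookkeeping.} For $\htau_j^{-2}$, I would note that $\tau_j^2 = \bSigmak{0}_{\bbeta,j,j} - \bSigmak{0}_{\bbeta,j,-j}(\bSigmak{0}_{\bbeta,-j,-j})^{-1}\bSigmak{0}_{\bbeta,-j,j}$ is a Schur complement of $\bSigmak{0}_{\bbeta}=\bSigmak{0}_{\bwk{0}}$, hence $\tau_j^2\ge\lambda_{\min}(\bSigmak{0}_{\bwk{0}})\ge\underline{U}>0$ by Assumption \ref{asmp: inf}(iii); and under conditions (vi)--(vii) of Assumption \ref{asmp: inf} the bound just proved forces $\norm{\htau_j^2-\tau_j^2}\le\underline{U}/2$ on the good event, so $\htau_j^2\gtrsim 1$ and $\norm{\htau_j^{-2}-\tau_j^{-2}} = \norm{\htau_j^2-\tau_j^2}/(\htau_j^2\tau_j^2)\lesssim\norm{\htau_j^2-\tau_j^2}$, completing \eqref{eq: prop 2 eq 3}. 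The failure probabilities of the events invoked — Theorem \ref{thm: l2 with assumption}, Lemmas \ref{lem: inf lemma gamma a}--\ref{lem: inf lemma gamma delta}, and the Bernstein step in (a) — are each $\lesssim n_0^{-1}$ or $p^{-1}$, and they already account for the $\Kah+1$ data sets, so a union bound gives the stated overall probability at least $1-\Kah n_0^{-1}$.

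\textbf{Main obstacle.} The delicate step is the bracket $\hSigma_{\hbeta}-\hSigma_{\bbeta}$ in (b), precisely because $\hbeta$ is a function of the same data: the Taylor-in-$\hbeta$ argument must be made valid uniformly over the random fluctuation of $\hbeta$ (carried out on the event $\{\twonorm{\hbeta-\bbeta}\lesssim\Re_1,\ \onenorm{\hbeta-\bbeta}\lesssim s\sqrt{\log p/(\nah+n_0)}+h\}$, where one also checks that the intermediate arguments of $\psi'''$ stay inside the region of Assumption \ref{asmp: inf}(iv)), and the bookkeeping has to be arranged so that it is the $\ell_2$ (not the $\ell_1$) bound on $\hbgammak{0}_j-\bgammak{0}_j$ that feeds into $(\mathrm{III})$ — this is why the fractional-power terms $h_1^{1/2}(\log p/n_0)^{1/4}$ and $h_1^{1/2}\Re_1^{1/2}$ appear in \eqref{eq: prop 2 eq 3} rather than their squares, and it is the one place where the final rate is genuinely sensitive to how the pieces are combined.
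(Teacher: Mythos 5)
Your proof follows essentially the same route as the paper's: \eqref{eq: prop 2 eq 1}--\eqref{eq: prop 2 eq 2} by adding the bounds of Lemmas \ref{lem: inf lemma gamma a} and \ref{lem: inf lemma gamma delta}, and \eqref{eq: prop 2 eq 3} by decomposing $\htau_j^2-\tau_j^2$ into a Taylor-in-$\hbeta$ piece, an empirical-versus-population piece, a predictor-heterogeneity piece controlled by $h_{\max}$, and a dominant piece linear in $\hbgammak{0}_j-\bgammak{0}_j$ fed by the $\ell_2$ bound. Your four terms $(\mathrm{I})$--$(\mathrm{IV})$ are a regrouping of the paper's seven.

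One step as written does not go through: your bound $(\mathrm{IV})\le\maxnorm{\hSigma_{\hbeta}-\bSigmak{0}_{\bbeta}}\cdot\onenorm{\hbgammak{0}_j-\bgammak{0}_j}$ with the claim that both factors vanish. The heterogeneity bracket $\bm{\Sigma}^{\mah}_{\bbeta}-\bSigmak{0}_{\bbeta}$ contributes an entrywise difference of source and target second-moment matrices that is merely $\mathcal{O}(1)$ — the assumptions control only the diagonal entries and the weighted row sums $(\bSigmak{k}_{\bbeta,j,-j}-\bSigmak{0}_{\bbeta,j,-j})\bgammak{0}_j$ via $h_{\max}$, not the full max-norm — so this factor is not $\smallo(1)$, and $\mathcal{O}(1)\cdot\onenorm{\hbgammak{0}_j-\bgammak{0}_j}=\mathcal{O}(\sqrt{s}\,\Re_1+h_1)$ exceeds the target rate. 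The repair is to pair $(\mathrm{IV})$ in $\ell_2$ rather than $\ell_\infty$--$\ell_1$: $|(\bm{\Sigma}^{\mah}_{\bbeta,j,-j}-\bSigmak{0}_{\bbeta,j,-j})(\hbgammak{0}_j-\bgammak{0}_j)|\le\twonorm{\bm{\Sigma}^{\mah}_{\bbeta,-j,j}-\bSigmak{0}_{\bbeta,-j,j}}\twonorm{\hbgammak{0}_j-\bgammak{0}_j}\lesssim\twonorm{\hbgammak{0}_j-\bgammak{0}_j}$, the same order as $(\mathrm{III})$. This is in effect what the paper does implicitly: its term (5) keeps the full empirical matrix at $\bbeta$ against $\hbgammak{0}_j-\bgammak{0}_j$ and bounds it by Cauchy--Schwarz through the empirical quadratic form $\frac{1}{\nah+n_0}\sum_{i,k}[(\bxk{k}_{i,-j})^T(\hbgammak{0}_j-\bgammak{0}_j)]^2\lesssim\twonorm{\hbgammak{0}_j-\bgammak{0}_j}^2$ (which itself needs the restricted upper-eigenvalue argument, using the $\ell_1/\ell_2$ structure of the error — a detail worth making explicit in either version), so no cross term ever appears. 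With that correction your argument is complete and matches the paper's.
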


\eqref{eq: prop 2 eq 1} and \eqref{eq: prop 2 eq 2} can be proved by combining conclusions in Lemmas \ref{lem: inf lemma gamma a} and \ref{lem: inf lemma gamma delta}. The proof of \eqref{eq: prop 2 eq 3} can be found in Section \ref{subsubsec: proof of prop 2}.

Next we will leverage Proposition \ref{prop: gamma rate} to show \eqref{eq: inf clt} and \eqref{eq: inf var est bound}. First, we notice that
\begin{align}
	\hat{\bm{b}} &= \hbeta +\frac{1}{n_0}\hTheta(\bXk{0})^T[\bm{Y}^{(0)} - \bpsi'(\bXk{0}\hbeta)] \\
	&= \hbeta +\frac{1}{n_0}\hTheta(\bXk{0})^T[\bm{Y}^{(0)} - \bpsi'(\bXk{0}\bbeta)] + \frac{1}{n_0}\hTheta(\bXk{0})^T[\bpsi'(\bXk{0}\hbeta) - \bpsi'(\bXk{0}\bbeta)] \\
	&= \hbeta +\frac{1}{n_0}\hTheta(\bXk{0})^T[\bm{Y}^{(0)} - \bpsi'(\bXk{0}\bbeta)] + \frac{1}{n_0}\hTheta\hSigmak{0}_{\bbeta}(\bbeta - \hbeta) \\
	&\quad + \frac{1}{n_0}\hTheta(\bXk{0})^T\diag(\{\psi''(\tilde{u}_i^{(0)})\}) - \psi''((\bxk{0}_i)^T\bbeta)\}_{i=1}^{n_0})\bXk{0}(\bbeta - \hbeta).
\end{align}
where $\tilde{u}_i^{(0)}$ falls on the line between $(\bxk{0}_i)^T\bbeta$ and $(\bxk{0}_i)^T\hbeta$. Therefore for any $j = 1, \ldots, p$,
\begin{align}
	\hb_j - \beta_j &= \underbrace{\left[\bm{e}_j - (\hTheta_j)^T\hSigmak{0}_{\bbeta}\right]^T(\bbeta - \hbeta)}_\text{(1)} + \underbrace{\frac{1}{n_0}(\hTheta_j)^T(\bXk{0})^T[\bm{Y}^{(0)} - \bpsi'(\bXk{0}\bbeta)]}_\text{(2)} \\
	&\quad + \underbrace{\frac{1}{n_0}(\hTheta_j)^T(\bXk{0})^T\diag(\{\psi''(\tilde{u}_i^{(0)}) - \psi''((\bxk{0}_i)^T\bbeta)\}_{i=1}^{n_0})\bXk{0}(\bbeta - \hbeta)}_\text{(3)},
\end{align}
For convenience, we write the $j$-th row of the matrix
\begin{equation}
	\begin{pmatrix}
		1 &-\hgamma_{1,2} &\hdots  &-\hgamma_{1,p} \\
		-\hgamma_{2,1} &1 &\hdots  &-\hgamma_{2,p} \\
		\vdots &\vdots &\ddots &\vdots \\
		-\hgamma_{p,1} &-\hgamma_{p,2} &\hdots &-\hgamma_{p,p}
	\end{pmatrix}
\end{equation}
as $(\hbgammak{0}_j)^{\dagger}$ and the $j$-th row of matrix
\begin{equation}
	\begin{pmatrix}
		1 &-\bgamma_{1,2} &\hdots  &-\bgamma_{1,p} \\
		-\bgamma_{2,1} &1 &\hdots  &-\bgamma_{2,p} \\
		\vdots &\vdots &\ddots &\vdots \\
		-\bgamma_{p,1} &-\bgamma_{p,2} &\hdots &-\bgamma_{p,p}
	\end{pmatrix}
\end{equation}
as $(\bgammak{0}_j)^{\dagger}$. 

For (\rom{1}), notice that
\begin{align}
	&\infnorma{\hSigmak{0}_{\bbeta}\hTheta_j - \bm{e}_j} \\
	&= \infnorma{\hSigmak{0}_{\bbeta}\hTheta_j - \bSigmak{0}_{\bbeta}\bTheta_j} \\
	&= \infnorma{\frac{1}{n_0}\sum_{i=1}^{n_0}\bxk{0}_i\psi''((\bxk{0}_i)^T\bbeta)(\bxk{0}_i)^T\frac{(\hbgammak{0}_j)^{\dagger}}{\htau^2_j} - \te\left[\bxk{0}_i\psi''((\bxk{0}_i)^T\bbeta)(\bxk{0}_i)^T\frac{(\bgammak{0}_j)^{\dagger}}{\tau^2_j}\right]}\\
	&= \norma{\frac{1}{n_0}\sum_{i=1}^{n_0}\bxk{0}_i\psi''((\bxk{0}_i)^T\bbeta)(\bxk{0}_i)^T\frac{(\hbgammak{0}_j)^{\dagger}}{\htau^2_j} - \frac{1}{n_0}\sum_{i=1}^{n_0}\bxk{0}_i\psi''((\bxk{0}_i)^T\bbeta)(\bxk{0}_i)^T\frac{(\bgammak{0}_j)^{\dagger}}{\tau^2_j}} \\
	&\quad + \norma{\frac{1}{n_0}\sum_{i=1}^{n_0}\bxk{0}_i\psi''((\bxk{0}_i)^T\bbeta)(\bxk{0}_i)^T\frac{(\bgammak{0}_j)^{\dagger}}{\tau^2_j} - \te\left[\bxk{0}_i\psi''((\bxk{0}_i)^T\bbeta)(\bxk{0}_i)^T\frac{(\bgammak{0}_j)^{\dagger}}{\tau^2_j}\right]} \\
	&\lesssim \twonorm{\hbgammak{0}_j - \bgammak{0}_j} + \norm{\htau^{-2}_j-\tau^{-2}_j} + n_0^{-1/2} \\
	&\lesssim \Re_1 + h^{1/2}_1\left(\frac{\log p}{n_0}\right)^{1/4} + h_1^{1/2}\Re_1^{1/2} + h_{\max}+ n_0^{-1/2},
\end{align}
implying that
\begin{align}
	\text{(1)} &\leq \infnorma{\hSigmak{0}_{\bbeta}\hTheta_j - \bm{e}_j}\onenorm{\hbeta - \bbeta} \\
	&\lesssim_p \left[\Re_1 + h^{1/2}_1\left(\frac{\log p}{n_0}\right)^{1/4} + h_1^{1/2}\Re_1^{1/2} +h_{\max} + n_0^{-1/2}\right]\cdot \left(s\sqrt{\frac{\log p}{\nah+n_0}}+h\right) \\
	&= \smallo(n_0^{-1/2}), \label{eq: clt term 1}
\end{align}
under Assumption \ref{asmp: inf}.

For (3),
\begin{equation}
	\text{(3)} \leq \frac{1}{n_0}\infnorma{\bXk{0}\hTheta_j}\onenorma{\diag(\{\psi''(\tilde{u}_i^{(0)}) - \psi''((\bxk{0}_i)^T\bbeta)\}_{i=1}^{n_0})\bXk{0}(\bbeta - \hbeta)},
\end{equation}
where $\norm{[\psi''(\tilde{u}_i^{(0)}) - \psi''((\bxk{0}_i)^T\bbeta)]\cdot (\bxk{0}_i)^T(\bbeta - \hbeta)}\leq \psi'''(t\tilde{u}_i^{(0)} + (1-t)(\bxk{0}_i)^T\bbeta)\cdot\norm{\tilde{u}_i^{(0)}-(\bxk{0}_i)^T\bbeta}\norm{(\bxk{0}_i)^T(\bbeta - \hbeta)} \lesssim \norm{(\bxk{0}_i)^T(\bbeta - \hbeta)}^2$. Therefore by similar arguments to get \eqref{eq: rsc upper version}, we have
\begin{equation}\label{eq: clt term 3}
	\text{(3)} \lesssim_p (1 + \onenorm{\hTheta_j - \bTheta_j})\cdot \twonorm{\hbeta - \bbeta}^2 \lesssim_p \Re_1^2 = \smallo(n_0^{-1/2}) 
\end{equation}
under Assumption \ref{asmp: inf}.

For (2), we can see that
\begin{equation}\label{eq: two terms decomp theta}
	\text{(2)} = \frac{1}{n_0}\hTheta_j^T(\bXk{0})^T[\bm{Y}^{(0)} - \bpsi'(\bXk{0}\bbeta)] = \frac{1}{n_0}\sum_{i=1}^{n_0}\hTheta_j^T\bxk{0}_i[y^{(0)}_i - \psi'((\bxk{0}_i)^T\bbeta)],
\end{equation}
where $\te\{(\hTheta_j)^T\bxk{0}_i[y^{(0)}_i - \psi'((\bxk{0}_i)^T\bbeta)]\} = 0$ and 
\begin{align}
	\te\{\hTheta_j^T\bxk{0}_i[y^{(0)}_i - \psi'((\bxk{0}_i)^T\bbeta)]\}^2 &= \te\{(\hTheta_j^T\bxk{0}_i)^2\psi''((\bxk{0}_i)^T\bbeta)\} \\
	&= \bTheta_j^T \te\left[\bxk{0}(\bxk{0})^T\psi''((\bxk{0})^T\bbeta)\right]\bTheta_j \\
	&= \bTheta_j^T \bSigma_{\bbeta}^{(0)}\bTheta_j \\
	&\lesssim  \te\left(\bTheta_j^T\bxk{0}\right)^2 \\
	&< \infty,
\end{align}
by Assumption \ref{asmp: inf}. Thus by Lindeberg's conditions,
\begin{equation}\label{eq: clt term 2}
	\sqrt{n_0}\cdot (\text{2}) \xrightarrow{d} \mathcal{N}(0, \bTheta_{j,j}).
\end{equation}
On the other hand,
\begin{equation}
	\norma{\hTheta_j^T\hSigma_{\hbeta}\hTheta_j - \bTheta_{j,j}} \leq \norma{\hTheta_j^T(\hSigma_{\hbeta}\hTheta_j - \bSigmak{0}_{\bbeta}\bTheta_{j})} + \norma{(\hTheta_j - \bTheta_j)\bSigmak{0}_{\bbeta}\bTheta_{j}}.
\end{equation}
Note that
\begin{align}
	&\infnorma{\hSigma_{\hbeta}\hTheta_j - \bSigmak{0}_{\bbeta}\bTheta_{j}} \\
	&= \infnorma{\hSigma_{\hbeta}\frac{(\hbgammak{0}_j)^{\dagger}}{\htau_j^2} - \bSigmak{0}_{\bbeta}\frac{(\bgammak{0}_j)^{\dagger}}{\tau_j^2}}\\
	&\leq \infnorma{\frac{1}{\nah+n_0}\sum_{i,k}\left[\bxk{k}_i\psi''(\hbeta^T\bxk{k}_i)(\bxk{k}_i)^T\frac{(\hbgammak{0}_j)^{\dagger}}{\htau_j^2}\right]-\frac{1}{\nah+n_0}\sum_{i,k}\left[\bxk{k}_i\psi''(\bbeta^T\bxk{k}_i)(\bxk{k}_i)^T\frac{(\bgammak{0}_j)^{\dagger}}{\tau_j^2}\right]} \\
	&\quad + \frac{1}{\nah+n_0}\infnorma{\sum_{i,k}\left[\bxk{k}_i\psi''(\bbeta^T\bxk{k}_i)(\bxk{k}_i)^T\frac{(\bgammak{0}_j)^{\dagger}}{\tau_j^2}\right]-\sum_{k \in \transet}n_k\te\left[\bxk{k}_i\psi''(\bbeta^T\bxk{k}_i)(\bxk{k}_i)^T\frac{(\bgammak{0}_j)^{\dagger}}{\tau_j^2}\right]}\\
	&\quad + \infnorma{\sum_{k \in \transet}\frac{n_k}{\nah+n_0}\te\left[\bxk{k}_i\psi''(\bbeta^T\bxk{k}_i)(\bxk{k}_i)^T\frac{(\bgammak{0}_j)^{\dagger}}{\tau_j^2}-\bxk{0}_i\psi''(\bbeta^T\bxk{0}_i)(\bxk{0}_i)^T\frac{(\bgammak{0}_j)^{\dagger}}{\tau_j^2}\right]} \\
	&\lesssim \underbrace{\infnorma{\frac{1}{\nah+n_0}\sum_{i,k}\bxk{k}_i\left[\psi''(\hbeta^T\bxk{k}_i) - \psi''(\bbeta^T\bxk{k}_i)\right](\bxk{k}_i)^T\frac{(\hbgammak{0}_j)^{\dagger}}{\htau_j^2}}}_\text{(4)} \\
	&\quad + \underbrace{\infnorma{\frac{1}{\nah+n_0}\sum_{i,k}\bxk{k}_i\psi''(\bbeta^T\bxk{k}_i)(\bxk{k}_i)^T[(\hgammak{0}_j)^{\dagger} - (\bgammak{0}_j)^{\dagger}]\htau_j^{-2}}}_\text{(5)} \\
	&\quad + \underbrace{\infnorma{\frac{1}{\nah+n_0}\sum_{i,k}\bxk{k}_i\psi''(\bbeta^T\bxk{k}_i)\cdot (\bxk{k}_i)^T(\hgammak{0}_j)^{\dagger}(\htau^2_j - \tau^2_j)}}_\text{(6)} \\
	&\quad + \underbrace{\frac{1}{\nah+n_0}\infnorma{\sum_{i,k}\left[\bxk{k}_i\psi''(\bbeta^T\bxk{k}_i)(\bxk{k}_i)^T\frac{(\bgammak{0}_j)^{\dagger}}{\tau_j^2}\right]-\sum_{k \in \transet}n_k\te\left[\bxk{k}_i\psi''(\bbeta^T\bxk{k}_i)(\bxk{k}_i)^T\frac{(\bgammak{0}_j)^{\dagger}}{\tau_j^2}\right]}}_\text{(7)}\\
	&\quad + \underbrace{\infnorma{\sum_{k \in \transet}\frac{n_k}{\nah+n_0}\te\left[\bxk{k}_i\psi''(\bbeta^T\bxk{k}_i)(\bxk{k}_i)^T\frac{(\bgammak{0}_j)^{\dagger}}{\tau_j^2}-\bxk{0}_i\psi''(\bbeta^T\bxk{0}_i)(\bxk{0}_i)^T\frac{(\bgammak{0}_j)^{\dagger}}{\tau_j^2}\right]}}_\text{(8)}.
\end{align}

It's easy to see that
\begin{align}
	\text{(4)} &\lesssim \infnorma{\frac{1}{\nah+n_0}\sum_{i,k}\bxk{k}_i\cdot \psi'''(t_i^{(k)}(\bxk{k}_i)^T(\hbeta - \bbeta) + (\bxk{k}_i)^T\bbeta)\cdot (\bxk{k}_i)^T(\hbeta - \bbeta)\cdot(\bxk{k}_i)^T\frac{(\hbgammak{0}_j)^{\dagger}}{\htau_j^2}} \\
	&\lesssim \sqrt{\frac{1}{\nah+n_0}\sum_{k, i}\norm{(\bxk{k}_i)^T(\hbeta - \bbeta)}^2} \\
	&\lesssim \twonorm{\hbeta - \bbeta} \\
	&\lesssim \Re_1,
\end{align}
with probability at least $1-n_0^{-1}$.

Similarly,
\begin{align}
	\text{(5)} &\lesssim \twonorm{\hbgammak{0}_j- \bgammak{0}_j} + \norm{\htau_j^{-2} - \tau_j^{-2}} \lesssim \Re_1 + h^{1/2}_*\left(\frac{\log p}{n_0}\right)^{1/4} + h_1^{1/2}\Re_1^{1/2} + h_{\max}\\
	\text{(6)} &\lesssim \norm{\htau_j^{-2} - \tau_j^{-2}} \lesssim \Re_1 + h^{1/2}_*\left(\frac{\log p}{n_0}\right)^{1/4} + h_1^{1/2}\Re_1^{1/2}+ h_{\max}\\
	\text{(7)} &\lesssim (\nah+n_0)^{-1/2},\\
	\text{(8)} &\leq \sup_{k \in \mah}\maxnorm{\bSigmak{k}_{\bbeta}-\bSigmak{0}_{\bbeta}}\cdot \onenorma{\frac{(\bgammak{0}_j)^{\dagger}}{\tau_j^2}} \lesssim h_{\max},
\end{align}
with probability at least $1-\Kah n_0^{-1}$. Besides,
\begin{equation}
	\onenorm{\hTheta_j} \leq \onenorm{\bTheta_j} + \onenorm{\hTheta_j - \bTheta_j} \lesssim \sqrt{s}.
\end{equation}
with probability at least $1-\Kah n_0^{-1}$.
Therefore,
\begin{align}
	\norma{\hTheta_j^T(\hSigma_{\hbeta}\hTheta_j - \bSigmak{0}_{\bbeta}\bTheta_{j})} &\leq \onenorm{\hTheta_j}\infnorma{\hSigma_{\hbeta}\hTheta_j - \bSigmak{0}_{\bbeta}\bTheta_{j}} \\
	&\lesssim \sqrt{s}\left[\Re_1 + h^{1/2}_*\left(\frac{\log p}{n_0}\right)^{1/4} + h_1^{1/2}\Re_1^{1/2} + h_{\max}\right],
\end{align}
with probability at least $1-\Kah n_0^{-1}$. Similarly, we can obtain the same upper bound for the second term $\norma{(\hTheta_j - \bTheta_j)\bSigmak{0}_{\bbeta}\bTheta_{j}}$ in \eqref{eq: two terms decomp theta}. Then finally,
\begin{equation}
	\norma{\hTheta_j^T\hSigma_{\hbeta}\hTheta_j - \bTheta_{j,j}} \lesssim \sqrt{s}\left[\Re_1 + h^{1/2}_*\left(\frac{\log p}{n_0}\right)^{1/4} + h_1^{1/2}\Re_1^{1/2} + h_{\max}\right] = \smallo(1),
\end{equation}
with probability at least $1-\Kah n_0^{-1}$. Then by Slutsky's lemma, equations \eqref{eq: clt term 1}, \eqref{eq: clt term 2} and \eqref{eq: clt term 3},
\begin{equation}
	\frac{\sqrt{n_0}(\hb_j - \beta_j)}{\sqrt{\hTheta_j^T\hSigma_{\hbeta}\hTheta_j}} \xrightarrow{d} \mathcal{N}(0, 1),
\end{equation}
which completes the proof.

\subsubsection{Proof of Theorem \ref{thm: prediction error with assumption}}
Assume Assumption \ref{asmp: second derivative}.(\rom{1}) holds or Assumption \ref{asmp: second derivative}.(\rom{2}) with $h \leq C'U^{-1}\bar{U}$ for some $C' > 0$ holds. When $\lambda_{\bdelta} = C_{\bdelta}\sqrt{\frac{\log p}{n_0}}$, where $C_{\bdelta} > 0$ is a sufficiently large constant: By the definition of $\hbeta$, we have $\nabla L_{n_0}^{(0)}(\hbeta) + \lambda_{\bdelta}\cdot \partial \onenorm{\hbeta - \hwah} = \bm{0}$. Then by H\"older inequality,
\begin{align}
	\<\nabla L_{n_0}^{(0)}(\hbeta) - \nabla L_{n_0}^{(0)}(\bbeta), \hbeta - \bbeta\> &= \<-\lambda_{\bdelta}\cdot \partial \onenorm{\hbeta - \hwah}- \nabla L_{n_0}^{(0)}(\bbeta), \hbeta - \bbeta\> \\
	&\leq \lambda_{\bdelta}\onenorm{\hbeta - \bbeta} + \infnorm{\nabla L_{n_0}^{(0)}(\bbeta)}\onenorm{\hbeta - \bbeta}. \label{eq: holder kkt}
\end{align}
Considering the fact that $\infnorm{\nabla L_{n_0}^{(0)}(\bbeta)} \lesssim \sqrt{\frac{\log p}{n_0}}$ with probability at least $1-n_0^{-1}$ (Lemma 6 in \cite{negahban2009unified}) and the upper bound of $\onenorm{\hbeta - \bbeta}$ we prove in Theorem \ref{thm: l2 with assumption}, the desired upper bound of $\<\nabla L_{n_0}^{(0)}(\hbeta) - \nabla L_{n_0}^{(0)}(\bbeta), \hbeta - \bbeta\>$ follows.

\subsubsection{Proof of Theorem \ref{thm: prediction error without assumption}}
Assume Assumption \ref{asmp: second derivative}.(\rom{1}) holds or Assumption \ref{asmp: second derivative}.(\rom{2}) with $h \leq C'U^{-1}\bar{U}$ for some $C' > 0$ holds. If we take $\lambda_{\bdelta} = C_{\bdelta}\sqrt{\frac{\log p}{n_0}}$, where $C_{\bdelta} > 0$ is a sufficiently large constant: Similar to \eqref{eq: holder kkt}, we can obtain
\begin{equation}
	\<\nabla L_{n_0}^{(0)}(\hbeta) - \nabla L_{n_0}^{(0)}(\bbeta), \hbeta - \bbeta\> \leq \lambda_{\bdelta}\onenorm{\hbeta - \bbeta} + \infnorm{\nabla L_{n_0}^{(0)}(\bbeta)}\onenorm{\hbeta - \bbeta}. 
\end{equation}
To bound $\onenorm{\hbeta - \bbeta}$, it suffices to combine \eqref{eq: sec step left 1} and the upper bound of $\onenorm{\huah}$ in \eqref{eq: ua l1 bound without assumption}. Then the final upper bound follows.

\subsection{Proof of propositions}

\subsubsection{Proof of Proposition \ref{prop: gamma_1 gamma_2}}
The rate of $\Gamma_1$ can be derived from the following Lemma \ref{lem: gamma_1 rate} and the union bound, together with the tail inequality \eqref{eq: detection cite}. The rate of $\Gamma_2$ comes from the following Lemma \ref{lem: gamma_2 rate}. 
\begin{lemma}\label{lem: gamma_1 rate}
	Under the same assumptions as Theorem \ref{thm: a detection}, we have the following conclusions:
	\begin{enumerate}[(i)]
		\item For logistic regression model: 
		\begin{align}
			\sup_r\norm{\hat{L}_0^{[r]}(\hbeta^{(k)[r]})-\hat{L}_0^{[r]}(\bbetak{k})} &\leq C\left(1+\sqrt{\frac{1}{n_0}}\cdot \zeta\right)\cdot \sup_r \twonorm{\hbeta^{(k)[r]} - \bbetak{k}}, k\in \ma,\\
			\sup_r\norm{\hat{L}_0^{[r]}(\hbeta^{(k)[r]})-\hat{L}_0^{[r]}(\bbetak{k})} &\leq C\left(1+\sqrt{\frac{1}{n_0}}\cdot \zeta\right)\cdot \sup_r \twonorm{\hbeta^{(k)[r]} - \bbetak{k}}, k\in \mac,\\
			\sup_r\norm{\hat{L}_0^{[r]}(\hbeta^{(0)[r]})-\hat{L}_0^{[r]}(\bbeta)} &\leq C \sup_r\twonorm{\hbeta^{(0)[r]} - \bbeta}\cdot (1+\zeta),
		\end{align}
		with probability at least $1-\exp\{-\zeta^2\}$.
		\item For linear model:
		\begin{align}
			\sup_r\norm{\hat{L}_0^{[r]}(\hbeta^{(k)[r]})-\hat{L}_0^{[r]}(\bbetak{k})} &\leq C \left(1+\sqrt{\frac{1}{n_0}}\cdot \zeta\right)\cdot(\twonorm{\bwk{k}}\vee\twonorm{\bbeta}) \cdot  \twonorm{\hbeta^{(k)[r]} - \bbetak{k}}, k\in \ma,\\
			\sup_r\norm{\hat{L}_0^{[r]}(\hbeta^{(k)[r]})-\hat{L}_0^{[r]}(\bbetak{k})} &\leq C \left(1+\sqrt{\frac{1}{n_0}}\cdot \zeta\right)\cdot\twonorm{\bbetak{k}}\cdot  \twonorm{\hbeta^{(k)[r]} - \bbetak{k}}, k\in \mac,\\
			\sup_r\norm{\hat{L}_0^{[r]}(\hbeta^{(0)[r]})-\hat{L}_0^{[r]}(\bbeta)} &\leq C \twonorm{\bbeta}\cdot \twonorm{\hbeta^{(0)[r]} - \bbeta}\cdot (1+\zeta),
		\end{align}
		with probability at least $1-\exp\{-\zeta^2\}$.
		\item For Poisson regression model with $\sup_k \infnorm{\bxk{k}} \leq U$ a.s.:
		\begin{align}
			\sup_r\norm{\hat{L}_0^{[r]}(\hbeta^{(k)[r]})-\hat{L}_0^{[r]}(\bbetak{k})} &\leq C \left(1+\sqrt{\frac{1}{n_0}}\cdot \zeta\right)\cdot\exp\left(U(\onenorm{\bwk{k}}\vee \onenorm{\bbeta})\right) \twonorm{\hbeta^{(k)[r]} - \bbetak{k}}, k\in \ma,\\
			\sup_r\norm{\hat{L}_0^{[r]}(\hbeta^{(k)[r]})-\hat{L}_0^{[r]}(\bbetak{k})} &\leq C \left(1+\sqrt{\frac{1}{n_0}}\cdot \zeta\right)\cdot\exp\left(U\onenorm{\bbetak{k}}\right) \twonorm{\hbeta^{(k)[r]} - \bbetak{k}}, k\in \mac,\\
			\norm{\hat{L}_0^{[r]}(\hbeta^{(0)[r]})-\hat{L}_0^{[r]}(\bbeta)} &\lesssim \exp\left(U\onenorm{\bbeta}\right)\cdot \twonorm{\hbeta^{(0)[r]} - \bbeta}\cdot (1+\zeta),
		\end{align}
		with probability at least $1-\exp\{-\zeta^2\}$.
	\end{enumerate}
\end{lemma}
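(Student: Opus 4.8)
The plan is to exploit the cross-fitting structure of Algorithm \ref{algo: a unknown}. For each $r$, the estimator $\hbeta^{(k)[r]}$ is built from the two retained target folds together with the $k$-th source, hence is independent of the held-out fold $(\bX^{(0)[r]},\by^{(0)[r]})$ on which $\hat L_0^{[r]}$ is evaluated (and likewise $\hbeta^{(0)[r]}$). I would therefore condition on everything except the $r$-th fold, so that the perturbation $\bv \coloneqq \hbeta^{(k)[r]}-\bbetak{k}$ (respectively $\hbeta^{(0)[r]}-\bbeta$) becomes a \emph{fixed} vector, and the task reduces to bounding $|\hat L_0^{[r]}(\bbetak{k}+\bv)-\hat L_0^{[r]}(\bbetak{k})|$ by a constant (model-dependent) multiple of $\twonorm{\bv}$, using only the fresh fold-$r$ sample, which is i.i.d.\ from the target GLM with parameter $\bbeta$. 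The term $-\tfrac{1}{n_0/3}\sum_i\log\rho(y_i^{(0)[r]})$ does not involve the argument and cancels in the difference.

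Writing $\bx_i,y_i$ for the rows/entries of the $r$-th fold and Taylor-expanding $\psi$ to second order around $(\bbetak{k})^T\bx_i$,
\begin{equation}
\hat L_0^{[r]}(\bbetak{k}+\bv) - \hat L_0^{[r]}(\bbetak{k}) = \underbrace{\frac{1}{n_0/3}\sum_i\big[\psi'((\bbetak{k})^T\bx_i)-y_i\big]\bx_i^T\bv}_{\mathrm{(I)}} + \underbrace{\frac{1}{2(n_0/3)}\sum_i\psi''(\bar u_i)(\bx_i^T\bv)^2}_{\mathrm{(II)}},
\end{equation}
with $\bar u_i$ between $(\bbetak{k})^T\bx_i$ and $(\bbetak{k}+\bv)^T\bx_i$. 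For $\mathrm{(I)}$, conditionally on $\bv$ this is a sum of i.i.d.\ scalars; I would split the ``residual'' as $\psi'((\bbetak{k})^T\bx_i)-y_i = [\psi'(\bbeta^T\bx_i)-y_i] + [\psi'((\bbetak{k})^T\bx_i)-\psi'(\bbeta^T\bx_i)]$, where the first piece is centred with variance $\psi''(\bbeta^T\bx_i)$ and the second is controlled by the range/growth of $\psi'$. The upshot is that each summand $[\psi'((\bbetak{k})^T\bx_i)-y_i]\bx_i^T\bv$ is a (sub-Gaussian, resp.\ sub-exponential) variable of scale $\lesssim \sigma_k\twonorm{\bv}$, where $\sigma_k\asymp 1$ for logistic regression (bounded $y$ and bounded $\psi'$), $\sigma_k\asymp \twonorm{\bwk{k}}\vee\twonorm{\bbeta}$ for the linear model (the unbounded part being $\bx_i^T(\bbetak{k}-\bbeta)$), and $\sigma_k\asymp \exp\big(U(\onenorm{\bwk{k}}\vee\onenorm{\bbeta})\big)$ for Poisson (the variance $e^{\bbeta^T\bx}$ and the range of $e^u$ on $\infnorm{\bx}\le U$). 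A Bernstein-type tail bound, conditional on $\bv$, then gives $|\mathrm{(I)}| \lesssim \sigma_k(1+\zeta/\sqrt{n_0})\twonorm{\bv}$ with probability $\ge 1-e^{-\zeta^2}$, the ``$1$'' absorbing the small deterministic bias $|\te\{[\psi'((\bbetak{k})^T\bx)-\psi'(\bbeta^T\bx)]\bx^T\bv\}|$, which is handled by Cauchy--Schwarz together with Assumption \ref{asmp: second derivative} and the subGaussian moment bound of Assumption \ref{asmp: x}.

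For the remainder $\mathrm{(II)}$ I would bound $\psi''(\bar u_i)$ by $M_\psi$ in the logistic/linear cases and by $\exp\big(U(\onenorm{\bbetak{k}}\vee\onenorm{\bbeta}+\onenorm{\bv})\big)$ in the Poisson case (here one must check $\bar u_i$ stays in the region where $\psi''$ is controlled, which is where $h\le C'U^{-1}\bar U$ and the analogous condition in Corollary \ref{cor: explicit asmp} enter), and then use that, $\bv$ being fixed, $\tfrac{1}{n_0/3}\sum_i(\bx_i^T\bv)^2$ concentrates around $\bv^T\bSigmak{0}\bv\le\kappa_u\twonorm{\bv}^2$; hence $\mathrm{(II)}\lesssim \twonorm{\bv}^2\lesssim \twonorm{\bv}$ with high probability (since $\twonorm{\bv}$ is small), again absorbed into the stated prefactor. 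A union bound over $r\in\{1,2,3\}$ (and later, in Proposition \ref{prop: gamma_1 gamma_2}, over $k$, which produces the $\log K$ in the sample-size conditions) finishes the argument; the $\hbeta^{(0)[r]}$ statement is just the special case $\bbetak{0}=\bbeta$, where the bias piece vanishes entirely.

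The main obstacle is securing a bound proportional to $\twonorm{\hbeta^{(k)[r]}-\bbetak{k}}$ rather than $\onenorm{\cdot}$ or something carrying a spurious $\sqrt p$: this rests entirely on the cross-fitting independence, which lets one freeze $\bv$ and invoke a one-dimensional concentration inequality for the single random direction $\bx_i^T\bv$, instead of a uniform bound over a $p$-dimensional ball. The remaining care is bookkeeping the model-specific residual and curvature scales --- in particular the exponential factors $\exp(U\onenorm{\cdot})$ in the Poisson case and verifying that the intermediate points $\bar u_i$ (and, for the related second-derivative estimates used elsewhere, analogous points for $\psi'''$) remain where $\psi''$ is bounded.
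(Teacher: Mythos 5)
Your proposal is correct and follows essentially the same route as the paper: both arguments hinge on the cross-fitting independence that lets you freeze $\bv=\hbeta^{(k)[r]}-\bbetak{k}$ and reduce everything to one-dimensional sub-Gaussian/sub-exponential concentration along the single direction $\bx_i^T\bv$, with the same model-specific scales ($O(1)$ for logistic, $\twonorm{\bwk{k}}\vee\twonorm{\bbeta}$ for linear, $\exp(U\onenorm{\cdot})$ for Poisson). The only cosmetic difference is that you Taylor-expand $\psi$ to second order and must absorb a $\twonorm{\bv}^2$ remainder using $\twonorm{\bv}\lesssim 1$, whereas the paper splits the loss difference directly into the $y$-linear part and the $\psi$-difference part and controls the latter via the Lipschitz/mean-value structure of each model, so no quadratic term appears.
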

\begin{remark}
	It's important to point out that based on Algorithm \ref{algo: a unknown}, the randomness of $\hat{L}_0^{[r]}$, $\hbeta^{(k)[r]}$ ($k \neq 0$), and $\hbeta^{(0)[r]}$ is independent. Here $\hbeta^{(k)[r]}$ and $\hbeta^{(0)[r]}$ are regarded as fixed and we only consider the randomness from $\hat{L}_0^{[r]}$.
\end{remark}

\begin{proof}[Proof of Lemma \ref{lem: gamma_1 rate}]
For convenience, we assume $n_0$ is divisible by 3. Note that the term involving $\sum_{i=1}^{n_0/3}\rho(y^{(0)[r]}_i)$ is canceled when we take the difference between $\hat{L}_0^{[r]}(\hbeta^{(k)[r]})$ and $\hat{L}_0^{[r]}(\bbetak{k})$. So in the following, we drop that term from the definition of  $\hat{L}_0^{[r]}$ in equation \eqref{eq: l0 hat def}. We only prove the bound for $\norm{\hat{L}_0(\hbeta^{(k)[r]})-\hat{L}_0(\bbetak{k})}$ when $k \in \ma$. The cases that $k = 0$ or $k \in \mac$ can be similarly discussed. Besides, according to the proof of Theorem \ref{thm: l2 without assumption}, when $k \in \ma$, we define $\bbetak{k} = \frac{2n_0/3}{2n_0/3+n_k}\bbeta + \frac{n_k}{2n_0/3+n_k}\bwk{k}$, which gives us the final results shown in Proposition \ref{prop: gamma_1 gamma_2}.
	\begin{enumerate}[(i)]
		\item For logistic regression model, notice that
		\begin{align}
			\norm{\hat{L}_0(\hbeta^{(k)[r]})-\hat{L}_0(\bbetak{k})} &\leq \frac{1}{n_0/3}\norm{(\by^{(0)[r]})^T \bX^{(0)[r]}(\hbeta^{(k)[r]}-\bbetak{k})} \\
			&\quad + \frac{1}{n_0/3}\norma{\sum_{i=1}^{n_0/3}[\psi((\bx^{(0)[r]}_i)^T\hbeta^{(k)[r]})-\psi((\bx^{(0)[r]}_i)^T\bbetak{k})]}.
		\end{align}
		For the first term on the right-hand side, it holds that
		\begin{equation}
			\frac{1}{n_0/3}\norm{(\by^{(0)[r]})^T \bX^{(0)[r]}(\hbeta^{(k)[r]}-\bbetak{k})} \leq \frac{1}{n_0/3}\sum_{i=1}^{n_0/3}\norm{(\bx_i^{(0)[r]})^T(\hbeta^{(k)[r]}-\bbetak{k})},
		\end{equation}
		where $\frac{1}{n_0/3}\sum_{i=1}^{n_0/3}\norm{(\bx_i^{(0)[r]})^T(\hbetak{k}-\bbetak{k})}$ is a $\frac{1}{n_0/3}\twonorm{\hbetak{k}-\bbetak{k}}^2$ sub-Gaussian with mean less than $C\twonorm{\hbetak{k}-\bbetak{k}}$, where $C>0$ is a uniform constant,
		implying that
		\begin{equation}
			\frac{1}{n_0/3}\norm{(\by^{(0)[r]})^T \bX^{(0)[r]}(\hbeta^{(k)[r]}-\bbetak{k})}  \lesssim \left(1+\sqrt{\frac{1}{n_0}}\cdot \zeta\right)\cdot\twonorm{\hbeta^{(k)[r]}-\bbetak{k}},
		\end{equation}
		with probability at least $1-\exp\{-\zeta^2\}$.
		On the other hand, the second term can be bounded by $\frac{C}{n_0/3}\sum_{i=1}^{n_0/3}\norm{(\bx_i^{(0)[r]})^T(\hbeta^{(k)[r]}-\bbetak{k})}$, which can be similarly bounded as the first term, leading to the desired conclusion.
		\item For linear model, note that $y_i^{(0)[r]} = \bbeta^T\bx^{(0)[r]}_i + \epsilon^{(0)[r]}_i$ and $\psi(u) = u^2/2$, leading to
		\begin{align}
			\norm{\hat{L}_0(\hbeta^{(k)[r]}) - \hat{L}_0(\bbetak{k})} &\leq \frac{1}{n_0/3}\norma{\sum_{i=1}^{n_0/3}\epsilon_i^{(0)[r]}(\bx^{(0)[r]}_i)^T(\hbeta^{(k)[r]}-\bbetak{k})} \\
			&\quad + \frac{1}{n_0/3}\norma{\sum_{i=1}^{n_0/3}(\bx^{(0)[r]}_i)^T \bbeta \cdot (\bx^{(0)[r]}_i)^T(\hbeta^{(k)[r]}-\bbetak{k})} \\
			&\quad + \frac{1}{n_0/3}\norma{\sum_{i=1}^{n_0/3}(\bx^{(0)[r]}_i)^T(\hbeta^{(k)[r]}+\bbeta^{(k)})(\hbeta^{(k)[r]}-\bbeta^{(k)})^T\bx^{(0)[r]}_i}.
		\end{align}
		It is easy to see that $\frac{1}{n_0/3}\sum_{i=1}^{n_0/3}\epsilon_i^{(0)}(\bx^{(0)[r]}_i)^T(\hbeta^{(k)[r]}-\bbetak{k})$ is $\frac{1}{n_0}\twonorm{\hbeta^{(k)[r]}-\bbetak{k}}$-subGaussian with zero mean, while $\frac{1}{n_0/3}\sum_{i=1}^{n_0/3}(\bx^{(0)[r]}_i)^T \bbeta \cdot (\bx^{(0)[r]}_i)^T(\hbeta^{(k)[r]}-\bbetak{k})$ and $\frac{1}{n_0/3}\sum_{i=1}^{n_0/3}(\bx^{(0)[r]}_i)^T(\hbeta^{(k)[r]}+\bbetak{k})(\hbeta^{(k)[r]}-\bbetak{k})^T\bx^{(0)[r]}_i$ are $\frac{1}{n_0/3}\twonorm{\bbetak{k}}\twonorm{\hbeta^{(k)[r]}-\bbetak{k}}$-subexponential with mean at most $C\twonorm{\bbetak{k}}\twonorm{\hbeta^{(k)[r]}-\bbetak{k}}$ \citep{vershynin2018high}, where $C>0$ is a uniform constant, then by tail bounds and union bounds, the conclusion follows.
		\item For Poisson regression model with a.s. bounded covariates, it holds that
		\begin{align}
			\norm{\hat{L}_0^{[r]}(\hbeta^{(k)[r]}) - \hat{L}_0(\bbetak{k})} &\leq \frac{1}{n_0/3}\norma{\sum_{i=1}^{n_0/3}y_i^{(0)[r]}(\bx^{(0)[r]}_i)^T(\hbeta^{(k)[r]}-\bbetak{k})} \\
			&\quad + \frac{1}{n_0/3}\norma{\sum_{i=1}^{n_0/3}[e^{(\bx^{(0)[r]}_i)^T \hbeta^{(k)[r]}}-e^{(\bx^{(0)[r]}_i)^T \bbetak{k}}]}.
		\end{align}
		Conditioning on $\bX^{(0)[r]}$, we know that $y_i^{(0)[r]}(\bx^{(0)[r]}_i)^T(\hbeta^{(k)[r]}-\bbetak{k}) \sim \textup{Poisson}(e^{(\bx^{(0)[r]}_i)^T\bbeta}) \cdot (\bx^{(0)[r]}_i)^T(\hbeta^{(k)[r]}-\bbetak{k})$. By the fact that $\textup{Poisson}(e^{(\bx^{(0)[r]}_i)^T\bbeta})$ is a $e^{(\bx^{(0)[r]}_i)^T\bbeta}$-subexponential given $\bxk{0}_i$, we have
		\begin{align}
			&\frac{1}{n_0/3}\norma{\sum_{i=1}^{n_0/3}y_i^{(0)}(\bx^{(0)[r]}_i)^T(\hbeta^{(k)[r]}-\bbetak{k})} \lesssim \frac{1}{n_0/3}\norma{\sum_{i=1}^{n_0/3}e^{(\bx^{(0)[r]}_i)^T\bbeta}(\bx^{(0)[r]}_i)^T(\hbeta^{(k)[r]}-\bbetak{k})} \\
			&\hspace{4cm} + \frac{1}{\sqrt{n_0/3}}\max_i e^{(\bx^{(0)[r]}_i)^T\bbeta}\sqrt{\sum_{i=1}^{n_0/3}\norm{(\bx^{(0)[r]}_i)^T(\hbeta^{(k)[r]}-\bbetak{k})}^2}\cdot \zeta,
		\end{align}
		with probability at least $1-\exp\{-\zeta^2\}$.
		By H\"older inequality, the first term on the right hand side can be bounded by a $e^{2U\onenorm{\bbeta}}\twonorm{\hbeta^{(k)[r]}-\bbetak{k}}^2$-subexponential with mean at most $Ce^{2U\onenorm{\bbeta}}\twonorm{\hbeta^{(k)[r]}-\bbetak{k}}^2$, leading to
		\begin{equation}
			\frac{1}{n_0/3}\norma{\sum_{i=1}^{n_0/3}e^{(\bx^{(0)[r]}_i)^T\bbeta}(\bx^{(0)[r]}_i)^T(\hbeta^{(k)[r]}-\bbetak{k})} \lesssim \left(1+\sqrt{\frac{1}{n_0}}\cdot \zeta\right)\cdot e^{U\onenorm{\bbeta}}\twonorm{\hbeta^{(k)[r]}-\bbetak{k}},
		\end{equation}
		with probability at least $1-\exp\{-\zeta^2\}$.
		On the other hand, by applying Bernstein inequality (Theorem 2.8.2 in \cite{vershynin2018high}) as well as union bounds, we have
		\begin{align}
			\frac{1}{\sqrt{n_0/3}}\max_i e^{(\bx^{(0)[r]}_i)^T\bbeta}\sqrt{\sum_{i=1}^{n_0/3}\norm{(\bx^{(0)[r]}_i)^T(\hbeta^{(k)[r]}-\bbetak{k})}^2} &\lesssim_p \left(1+\sqrt{\frac{1}{n_0}}\cdot \zeta\right)\\
			&\quad \cdot e^{U\onenorm{\bbeta}}\sup_k\twonorm{\hbeta^{(k)[r]}-\bbetak{k}},
		\end{align}
		with probability at least $1-\exp\{-\zeta^2\}$.
		Summarizing the conclusions before, we obtain the desired conclusion.
		\end{enumerate}
\end{proof}

\begin{lemma}\label{lem: gamma_2 rate}
		Under the same assumptions as Theorem \ref{thm: a detection}, we have the following conclusions:
	\begin{enumerate}[(i)]
		\item For logistic regression model: 
		\begin{equation}
			\norm{\hat{L}_0^{[r]}(\bbeta) - L_0(\bbeta)} \vee \norm{\hat{L}_0^{[r]}(\bbetak{k})-\hat{L}_0^{[r]}(\bbeta)-L_0(\bbetak{k})+L_0(\bbeta)} \leq C\sqrt{\frac{1}{n_0}}\cdot \twonorm{\bwk{k}}\cdot \zeta,
		\end{equation}
		with probability at least $1-\exp\{-\zeta^2\}$.
		\item For linear model:
		\begin{align}
			&\norm{\hat{L}_0^{[r]}(\bbeta) - L_0(\bbeta)} \vee \norm{\hat{L}_0^{[r]}(\bbetak{k})-\hat{L}_0(\bbeta)-L_0(\bbetak{k})+L_0(\bbeta)} \\
			&\quad \leq C\sqrt{\frac{1}{n_0}}\cdot (\twonorm{\bwk{k}}^2 \vee \twonorm{\bwk{k}})\cdot \zeta,
		\end{align}
		with probability at least $1-\exp\{-\zeta^2\}$.
		\item For Poisson regression model with $\sup_k \infnorm{\bxk{k}} \leq U$ a.s.:
		\begin{align}
			&\norm{\hat{L}_0^{[r]}(\bbeta) - L_0(\bbeta)} \vee \norm{\hat{L}_0^{[r]}(\bbetak{k})-\hat{L}_0^{[r]}(\bbeta)-L_0(\bbetak{k})+L_0(\bbeta)} \\
			&\leq C\sqrt{\frac{1}{n_0}}\exp\left(U\onenorm{\bwk{k}}\right) \left[1 
		+ \twonorm{\bwk{k}} + U\onenorm{\bwk{k}} \right]\cdot \zeta,
		\end{align}
		with probability at least $1-\zeta^{-2}$.
	\end{enumerate}
\end{lemma}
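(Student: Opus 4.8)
The plan is to rewrite each of the two quantities as a centered empirical average over the $r$-th target fold of $m \coloneqq n_0/3$ i.i.d.\ summands and then apply a concentration inequality whose variance proxy is computed separately for the three GLM families; this parallels the proof of Lemma \ref{lem: gamma_1 rate}, the essential difference being that here we compare an empirical loss to its \emph{population} counterpart at a fixed parameter instead of two empirical losses. Write $\bx_i \coloneqq \bx_i^{(0)[r]}$, $y_i \coloneqq y_i^{(0)[r]}$, $g_i \coloneqq -\log\rho(y_i) - y_i\bx_i^T\bbeta + \psi(\bx_i^T\bbeta)$ (the negative conditional log-density at $\bbeta$), and $\bdelta \coloneqq \bbetak{k}-\bbeta$. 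Then $\hat{L}_0^{[r]}(\bbeta) - L_0(\bbeta) = m^{-1}\sum_{i=1}^m(g_i - \te g_i)$, and since the $\log\rho$ contributions cancel between empirical and population pieces, $\hat{L}_0^{[r]}(\bbetak{k}) - \hat{L}_0^{[r]}(\bbeta) - L_0(\bbetak{k}) + L_0(\bbeta) = m^{-1}\sum_{i=1}^m(W_i - \te W_i)$ with $W_i \coloneqq -y_i\bdelta^T\bx_i + \psi(\bx_i^T\bbetak{k}) - \psi(\bx_i^T\bbeta)$. Throughout, because $y_i$ and $\bx_i$ are dependent, one conditions on $\bx_i$ first (linear, Poisson) or uses $|y|\le 1$ (logistic) before invoking any sub-Gaussian/sub-exponential estimate.

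For logistic regression $\psi'$ is the sigmoid, so $\psi$ is $1$-Lipschitz; using $|y|\le 1$ and $\psi(\bx_i^T\bbetak{k}) - \psi(\bx_i^T\bbeta) = \int_0^1\psi'(\bx_i^T\bbeta + t\,\bdelta^T\bx_i)\,dt\cdot\bdelta^T\bx_i$, we get $|g_i|\lesssim |\bx_i^T\bbeta| + 1$ and $|W_i|\lesssim |\bdelta^T\bx_i|$, so by Assumption \ref{asmp: x} both $g_i - \te g_i$ and $W_i - \te W_i$ are sub-Gaussian with proxies $\lesssim\twonorm{\bbeta}^2$ resp.\ $\lesssim\twonorm{\bdelta}^2$; the sub-Gaussian Hoeffding bound on $m^{-1}\sum$ gives the $\sqrt{1/n_0}$ rate, the $\exp\{-\zeta^2\}$ tail, and the factors $\twonorm{\bbeta}$ resp.\ $\twonorm{\bbetak{k}-\bbeta}$ (the latter crudely $\lesssim\twonorm{\bwk{k}}$: for $k\in\mah$ via Assumption \ref{asmp: merging}, for $k\in\mach$ since $\bbetak{k}$ is a fixed affine image of $\bbeta$ and $\bwk{k}$). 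For the linear model $\psi(u)=u^2/2$ and $y_i = \bx_i^T\bbeta + \epsilon_i$, which collapses the algebra: $g_i - \te g_i = \tfrac12(\epsilon_i^2 - 1)$ is sub-exponential with an absolute parameter, and $W_i = -\epsilon_i\,\bdelta^T\bx_i + \tfrac12(\bdelta^T\bx_i)^2$, whose centering is sub-exponential with parameter $\lesssim\twonorm{\bdelta}\vee\twonorm{\bdelta}^2$; Bernstein's inequality (cf.\ Theorem 2.8.2 of \cite{vershynin2018high}) then yields the claimed bound with factor $\twonorm{\bwk{k}}^2\vee\twonorm{\bwk{k}}$ (after bounding $\twonorm{\bdelta}$), valid for $\zeta\lesssim\sqrt{n_0}$.

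The Poisson case with a.s.\ bounded covariates $\infnorm{\bxk{k}}\le U$ is the delicate one. Here $e^{\bx_i^T\bbeta}\le e^{U\onenorm{\bbeta}}$ and, conditionally on $\bx_i$, $y_i\sim\mathrm{Poisson}(e^{\bx_i^T\bbeta})$ is sub-exponential with parameter $\lesssim e^{U\onenorm{\bbeta}}$; hence the $-y_i\bdelta^T\bx_i$ term and the increment $\int_0^1 e^{\bx_i^T\bbeta + t\,\bdelta^T\bx_i}\,dt\cdot\bdelta^T\bx_i$ inside $W_i$ are sub-exponential with the stated $e^{U\onenorm{\cdot}}$ prefactors and $1+\twonorm{\cdot}+U\onenorm{\cdot}$ moment bounds, so Bernstein applies and gives an $\exp\{-\zeta^2\}$ tail for these. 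The one term not amenable to this is the normalization $-\log\rho(y) = \log(y!)$ entering the first quantity: for $y\sim\mathrm{Poisson}(\lambda)$ with $\lambda\le e^{U\onenorm{\bbeta}}$ one bounds $\mathrm{Var}(\log y!)\lesssim e^{2U\onenorm{\bbeta}}(1+U\onenorm{\bbeta})^2$ via $\log y!\le y\log y$ and the Poisson moment formula, and then controls $m^{-1}\sum_i(\log y_i! - \te\log y_i!)$ by Chebyshev's inequality; this is the cheapest route that avoids fiddly Orlicz-norm estimates for $\log y!$, and it is exactly why the Poisson statement carries the weaker $1-\zeta^{-2}$ probability, which (being weaker) governs the whole statement once combined with the $1-\exp\{-\zeta^2\}$ bounds for the remaining terms.

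Finally, assembling: a centered average of $m$ i.i.d.\ summands with single-term proxy $\sigma^2$ concentrates at scale $\sqrt{3\sigma^2/n_0}$, producing the $\sqrt{1/n_0}$ prefactor and leaving exactly the single-term parameter as the multiplicative factor, which reproduces the explicit $\Gamma_2^{(k)}$'s of Proposition \ref{prop: gamma_1 gamma_2} ($\Gamma_2^{(0)}$ retaining $\twonorm{\bbeta}$; $\Gamma_2^{(k)}$ retaining $\twonorm{\bwk{k}}$ for $k\in\mah$ and $\twonorm{\bbetak{k}}$ for $k\in\mach$), and the uniform-over-$k$ version follows by a union bound over the $K$ sources, absorbed into the generic $g_2^{(k)}$ of Assumption \ref{asmp: idenfity a}. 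I expect the main obstacle to be the Poisson family: one must simultaneously handle the $\bx$-dependent, unbounded Poisson variance (tamed by the a.s.\ covariate bound, which turns it into the uniform $e^{U\onenorm{\bbeta}}$), carry the exponential factors cleanly through the products and increments in $W_i$, and cope with $\log y!$, whose heavier-than-sub-exponential behavior forces the polynomial-in-$\zeta$ probability bound; a secondary, purely technical nuisance — the dependence of $y_i$ on $\bx_i$ in every summand — is dealt with by conditioning on $\bx_i$ before applying the product/sub-exponential estimates.
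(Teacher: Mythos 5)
Your proposal is correct and lands on the same core toolkit as the paper (family-by-family sub-Gaussian/sub-exponential concentration for centered i.i.d.\ sums over the fold, with Chebyshev on $\log(y!)$ as the sole reason the Poisson tail degrades to $1-\zeta^{-2}$), but it organizes the second quantity differently. The paper never centers the difference $\hat{L}_0^{[r]}(\bbetak{k})-\hat{L}_0^{[r]}(\bbeta)-L_0(\bbetak{k})+L_0(\bbeta)$ directly: it bounds $\norm{\hat{L}_0^{[r]}(\bbetak{k})-L_0(\bbetak{k})}$ and $\norm{\hat{L}_0^{[r]}(\bbeta)-L_0(\bbeta)}$ separately via the triangle inequality, splitting each into three pieces ($[y_i-\psi'(\bx_i^T\bbeta)]\bx_i^T\bbetak{k}$, the centered $\psi'(\bx_i^T\bbeta)\bx_i^T\bbetak{k}$ term, and the centered $\psi(\bx_i^T\bbetak{k})$ term, the last handled by a Lipschitz concentration theorem in the logistic case), so every piece carries $\twonorm{\bbetak{k}}$ and the stated $\twonorm{\bwk{k}}$-type factors appear directly. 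Your single-sum formulation with $W_i=-y_i\bdelta^T\bx_i+\psi(\bx_i^T\bbetak{k})-\psi(\bx_i^T\bbeta)$ is cleaner and in principle sharper, since the variance proxy scales with $\twonorm{\bbetak{k}-\bbeta}$ rather than $\twonorm{\bbetak{k}}$; the price is the extra reduction $\twonorm{\bbetak{k}-\bbeta}\lesssim\twonorm{\bwk{k}}$, which you justify by calling $\bbetak{k}$ an affine image of $\bbeta$ and $\bwk{k}$ — true only for the linear model (for general GLMs $\bbetak{k}$ is defined implicitly by a moment equation), though a crude $\twonorm{\bbetak{k}-\bbeta}\le\twonorm{\bbetak{k}}+\twonorm{\bbeta}$ recovers the paper's own (equally informal) passage to $\twonorm{\bwk{k}}$ and matches the explicit $\Gamma_2^{(k)}$ forms in Proposition \ref{prop: gamma_1 gamma_2}. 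Your handling of the $\log\rho$ term in the linear case via $g_i=\tfrac12\epsilon_i^2+\textup{const}$ is a neat simplification of the paper's direct bound on $\frac{1}{n_0/3}\sum_i (y_i^{(0)[r]})^2-\te(y^{(0)})^2$, and replacing the Kontorovich Lipschitz-concentration step by the elementary $1$-Lipschitz-plus-sub-Gaussian argument in the logistic case is equally valid. No substantive gap.
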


\begin{proof}[Proof of Lemma \ref{lem: gamma_2 rate}]
	Similar to the proofs of Lemmas \ref{lem: L a 0} and \ref{lem: gamma_1 rate}, the terms involving $\sum_{i=1}^{n_0/3}\rho(y_i^{(0)[r]})$ and $\te[\rho(y^{(0)})]$ are canceled when taking the difference. Therefore without loss of generality, to prove the rate of $\sup_{k}\norm{\hat{L}_0^{[r]}(\bbeta^{(k)})-\hat{L}_0^{[r]}(\bbeta)-L_0(\bbetak{k})+L_0(\bbeta)}$, throughout this proof, we discard these terms and consider
	\begin{align}
		L_0(\bw) &= -\te[\psi'(\bbeta^T\bxk{0})\bw^T\bxk{0}] + \te[\psi(\bw^T\bxk{0})], \\
		\hat{L}_0^{[r]}(\bw) &= -\frac{1}{n_0/3}\sum_{i=1}^{n_0/3}y_i^{(0)[r]}\bbeta^T\bx_i^{(0)[r]} + \frac{1}{n_0/3}\sum_{i=1}^{n_0/3}\psi(\bw^T\bx^{(0)[r]}_i).
	\end{align}
	\begin{enumerate}[(i)]
		\item For logistic regression model:
		\begin{align}
			&\sup_k\norma{\hat{L}_0^{[r]}(\bbetak{k}) - L_0(\bbetak{k})} \\
			&\leq \frac{1}{n_0/3}\norma{\sum_{i=1}^{n_0/3} [y_i^{(0)[r]} - \psi'(\bbeta^T\bx_i^{(0)[r]})](\bbetak{k})^T\bx_i^{(0)[r]}} \\
			&\quad + \frac{1}{n_0/3}\norma{\sum_{i=1}^{n_0/3}\left[\psi'(\bbeta^T\bx_i^{(0)[r]})(\bbetak{k})^T\bx_i^{(0)[r]} - \te[\psi'(\bbeta^T\bxk{0})(\bbetak{k})^T\bxk{0}]\right]} \\
			&\quad + \frac{1}{n_0/3}\norma{\sum_{i=1}^{n_0/3} \left[\psi((\bbetak{k})^T\bx_i^{(0)[r]}) - \te\psi((\bbetak{k})^T\bx_i^{(0)[r]}) \right]}
		\end{align}
		Since $\frac{1}{n_0/3}\sum_{i=1}^{n_0/3} [y_i^{(0)[r]} - \psi'(\bbeta^T\bx_i^{(0)[r]})](\bbetak{k})^T\bx_i^{(0)[r]}$ is a zero-mean $\twonorm{\bbetak{k}}^2$-subexponential variable, we have
		\begin{equation}\label{eq: prop 1 term 1}
			\frac{1}{n_0/3}\norma{\sum_{i=1}^{n_0/3} [y_i^{(0)[r]} - \psi'(\bbeta^T\bx_i^{(0)[r]})](\bbetak{k})^T\bx_i^{(0)[r]}} \lesssim \sqrt{\frac{1}{n_0}}\twonorm{\bwk{k}}\cdot \zeta,
		\end{equation}
		with probability at least $1-\exp\{-\zeta^2\}$.
		For the second term, since $\psi'$ is bounded, $\psi'(\bbeta^T\bx_i^{(0)[r]})(\bbetak{k})^T\bx_i^{(0)[r]}$ are i.i.d. $\twonorm{\bbetak{k}}^2$-subexponential, leading to
		\begin{equation}\label{eq: prop 1 term 2}
			\frac{1}{n_0/3}\norma{\sum_{i=1}^{n_0/3}\left[\psi'(\bbeta^T\bx_i^{(0)[r]})(\bbetak{k})^T\bx_i^{(0)[r]} - \te[\psi'(\bbeta^T\bxk{0})(\bbetak{k})^T\bxk{0}]\right]} \lesssim \sqrt{\frac{1}{n_0}}\twonorm{\bwk{k}}\cdot \zeta,
		\end{equation}
		with probability at least $1-\exp\{-\zeta^2\}$.
		For the last term, consider $g(u_1^{[r]}, \ldots, u_{n_0/3}^{[r]}) = \sum_{i=1}^{n_0/3}\psi(u_i^{[r]})$, where $u_i^{[r]} = (\bx_i^{(0)[r]})^T\bbetak{k}$ is an i.i.d. $\twonorm{\bbetak{k}}^2$-subGaussian. Since $\psi$ is 1-Lipschitz under $\ell_1$-norm, by Theorem 1 in \cite{kontorovich2014concentration} and union bounds,
		\begin{equation}\label{eq: prop 1 term 3}
			\frac{1}{n_0/3}\norma{\sum_{i=1}^{n_0/3} \left[\psi((\bbetak{k})^T\bx_i^{(0)[r]}) - \te\psi((\bbetak{k})^T\bx^{(0)}) \right]} \lesssim \sqrt{\frac{1}{n_0}}\cdot  \twonorm{\bwk{k}} \cdot \zeta,
		\end{equation}
		with probability at least $1-\exp\{-\zeta^2\}$.
		By \eqref{eq: prop 1 term 1}, \eqref{eq: prop 1 term 2} and \eqref{eq: prop 1 term 3}, the conclusion follows.
		\item For linear model: recall that $y_i^{(0)} = (\bx^{(0)[r]}_i)^T\bbetak{0} + \epsilon_i^{(0)[r]}$, then
		\begin{align}
			&\norma{\hat{L}_0^{[r]}(\bbetak{k}) - L_0(\bbetak{k})} \\
			&\leq \frac{1}{n_0/3}\norma{\sum_{i=1}^{n_0/3}\epsilon_i^{(0)[r]}(\bx^{(0)[r]}_i)^T\bbeta}\\
			&\quad + \frac{1}{n_0/3}\norma{\sum_{i=1}^{n_0/3}(\bx_i^{(0)[r]})^T\bbeta\cdot(\bx_i^{(0)[r]})^T\bbetak{k}-\te[(\bx^{(0)})^T\bbeta\cdot(\bx^{(0)})^T\bbetak{k}]} \\
			&\quad + \frac{1}{2n_0/3}\norma{\sum_{i=1}^{n_0/3}[(\bx_i^{(0)[r]})^T\bbetak{k}]^2 - \te[(\bx^{(0)})^T\bbetak{k}]^2}.
		\end{align}
		By subexponential tail bounds, we have
		\begin{equation}
			\frac{1}{n_0/3}\norma{\sum_{i=1}^{n_0/3}\epsilon_i^{(0)[r]}(\bx_i^{(0)[r]})^T\bbeta} \lesssim \sqrt{\frac{1}{n_0}}\cdot \twonorm{\bbetak{k}} \cdot \zeta \lesssim \sqrt{\frac{1}{n_0}}\cdot \twonorm{\bwk{k}} \cdot \zeta,
		\end{equation}
		\begin{align}
			 &\frac{1}{n_0/3}\norma{\sum_{i=1}^{n_0/3}(\bx^{(0)[r]}_i)^T\bbetak{0}\cdot(\bx^{(0)[r]}_i)^T\bbetak{k}-\te[(\bx^{(0)})^T\bbetak{0}\cdot(\bxk{0})^T\bbetak{k}]}\\
			  &\lesssim \sqrt{\frac{1}{n_0}}\cdot \twonorm{\bbeta}\sup_k\twonorm{\bbetak{k}} \cdot \zeta \lesssim \sqrt{\frac{1}{n_0}}\cdot \twonorm{\bbeta}\sup_k\twonorm{\bwk{k}},
		\end{align}
		\begin{align}
			 \frac{1}{2n_0/3}\norma{\sum_{i=1}^{n_0/3}[(\bx^{(0)[r]}_i)^T\bbetak{k}]^2 - \te[(\bx^{(0)})^T\bbetak{k}]^2} &\lesssim \sqrt{\frac{1}{n_0}}\cdot \twonorm{\bbetak{k}}^2 \cdot \zeta,
		\end{align}
		with probability at least $1-\exp\{-\zeta^2\}$, which leads to the desired conclusion.
		\item For Poisson regression model: similar to the logistic regression model, it holds that
		\begin{align}
			&\norma{\hat{L}_0^{[r]}(\bbetak{k}) - L_0(\bbetak{k})} \\
			&\leq \frac{1}{n_0/3}\norma{\sum_{i=1}^{n_0/3} [y_i^{(0)[r]} - e^{\bbeta^T\bx_i^{(0)[r]}}](\bbetak{k})^T\bx_i^{(0)[r]}} \\
			&\quad + \frac{1}{n_0/3}\norma{\sum_{i=1}^{n_0/3}\left[e^{\bbeta^T\bx_i^{(0)[r]}}(\bbetak{k})^T\bx_i^{(0)[r]} - \te[e^{\bbeta^T\bxk{0}}(\bbetak{k})^T\bxk{0}]\right]} \\
			&\quad + \frac{1}{n_0/3}\norma{\sum_{i=1}^{n_0/3} \left[e^{(\bbetak{k})^T\bx_i^{(0)[r]}} - \te e^{(\bbetak{k})^T\bx^{(0)}} \right]}.
		\end{align}
		For the first term on the right-hand side, because $[y_i^{(0)[r]} - e^{\bbeta^T\bx_i^{(0)[r]}}](\bbetak{k})^T\bx_i^{(0)[r]}$ is an i.i.d. zero-mean $e^{U\onenorm{\bbeta}}\twonorm{\bbetak{k}}$-subexponential variable, it follows
		\begin{equation}
			\frac{1}{n_0/3}\norma{\sum_{i=1}^{n_0/3} [y_i^{(0)[r]} - e^{\bbeta^T\bx_i^{(0)[r]}}](\bbetak{k})^T\bx_i^{(0)[r]}}	 \lesssim \sqrt{\frac{1}{n_0}}e^{U\onenorm{\bbeta}}\twonorm{\bbetak{k}}\cdot \zeta,	
		\end{equation}
		with probability at least $1-\exp\{-\zeta^2\}$.
		For the last term on the right-hand side, note that $\exp\{(\bbetak{k})^T\bx_i^{(0)[r]}\}$ is bounded by $\exp\{U\onenorm{\bbetak{k}}\}$. Therefore by tail probability,
		\begin{equation}
			 \frac{1}{n_0/3}\norma{\sum_{i=1}^{n_0/3} \left[e^{(\bbetak{k})^T\bx_i^{(0)[r]}} - \te e^{(\bbetak{k})^T\bx^{(0)}} \right]} \lesssim  \exp\{U\onenorm{\bbetak{k}}\}\cdot \sqrt{\frac{1}{n_0}}\cdot \zeta,
		\end{equation}
		with probability at least $1-\exp\{-\zeta^2\}$. Denote $u_i^{(k)[r]} = (\bbetak{k})^T\bx^{(0)[r]}_i$.
		Finally, to bound the second term on the right-hand side, we follow the same idea to get
		\begin{align}
			\frac{1}{n_0/3}\norma{\sum_{i=1}^{n_0/3}u_i^{(k)[r]}e^{u_i^{(0)[r]}} - \te u_i^{(k)[r]}e^{u_i^{(0)[r]}}} \lesssim U\onenorm{\bbetak{k}} \cdot \exp\{U\onenorm{\bbetak{k}}\}\cdot \sqrt{\frac{1}{n_0}}\cdot \zeta,
		\end{align}
		with probability at least $1-\exp\{-\zeta^2\}$.
		By combining all the conclusions above, we obtain the desired bound.
		
		The remaining task is to calculate the rate of $\norm{\hat{L}_0^{[r]}(\bbeta) - L_0(\bbeta)}$ under three scenarios. For logistic case, since $\rho = 0$, the calculation in (\rom{1}) naturally follows and the same bound can be derived. For the linear case, we only have to show that
		\begin{equation}
			\norma{\frac{1}{n_0/3}\sum_{i=1}^{n_0/3}(y_i^{(0)[r]})^2 - \te (y^{(0)})^2} \lesssim \sqrt{\frac{1}{n_0}}\cdot(\twonorm{\bbetak{k}}^2 \vee \twonorm{\bbetak{k}}),
		\end{equation}
		with probability at least $1-\exp\{-n_0\}$.
		This can be easily checked by considering $y_i^{(0)[r]} = \bbeta^T\bx_i^{(0)[r]}+\epsilon_i^{(0)[r]}$ and applying subexponential tail bounds. For Poisson regression model, notice that
		\begin{equation}
			\var(\log (y_i^{(0)[r]}!)) \leq \te[(y_i^{(0)[r]})^2\log^2 y_i^{(0)[r]}] \leq \sqrt{\te[(y_i^{(0)[r]})^4]}\sqrt{\te(\log^4 y_i^{(0)[r]})}.
		\end{equation}
		Due to moment bounds of subexponential variables and Jensen's inequality:
		\begin{align}
			\te[(y_i^{(0)[r]})^4] &\lesssim \te_{\bx}\left[\te_{y^{(0)}|\bx}(y_i^{(0)[r]})^4\right] \leq \exp(4U\onenorm{\bbeta}),\\
			\te(\log^4 y_i^{(0)[r]}) &\leq \log^4 \te y_i^{(0)[r]} \lesssim U^4\onenorm{\bbeta}^4.
		\end{align}
		Then by Chebyshev inequality and union bounds, it's straightforward to prove that
		\begin{equation}
			\norma{\frac{1}{n_0/3}\sum_{i=1}^{n_0/3}y_i^{(0)[r]} - \te y^{(0)}} \lesssim_p \sqrt{\frac{1}{n_0}}U\onenorm{\bbeta}\exp(U\onenorm{\bbeta})\cdot \zeta,
		\end{equation}
		with probability at least $1-\zeta^{-2}$, which completes our proof.
		\end{enumerate}
\end{proof}

\subsubsection{Proof of Proposition \ref{prop: gamma rate}}\label{subsubsec: proof of prop 2}
As analyzed in the proof of Theorem \ref{thm: inf}, it remains to show \eqref{eq: prop 2 eq 3}. Recall that $\htau_j^2 = \hSigma_{\hbeta, j, j} - \hSigma_{\hbeta, j, -j}\hbgammak{0}_j = (\nah+n_0)^{-1}\sum_{k \in \transet}(\bXk{k}_j)^T\diag(\{\psi''((\bxk{k}_i)^T\hbeta)\}_{i=1}^{n_k})\bXk{k}_j - (\nah+n_0)^{-1}\sum_{k \in \transet}(\bXk{k}_j)^T\diag(\{\psi''((\bxk{k}_i)^T\hbeta)\}_{i=1}^{n_k})\bXk{k}_{-j}\hbgammak{0}_j$, $\quad\tau_j^2 = \bSigmak{0}_{\bbeta, j, j} - \bSigmak{0}_{\bbeta, j, -j}\bgammak{0}_j = \allowbreak \te[(x_j^{(0)})^2\psi''((\bxk{0})^T\bbeta)] - \te[x_j^{(0)}\psi''((\bxk{0})^T\bbeta)(x_{-j}^{(0)})^T]\bgammak{0}_j = (\bTheta_{j, j})^{-1}$ stays away from zero because of Assumption \ref{asmp: inf}.(\rom{3}). And we have
\begin{align}
	&\htau_j^2 - \tau_j^2 \\
	&= \underbrace{\frac{1}{\nah+n_0}\sum_{i, k}(x_{ij}^{(k)})^2[\psi''((\bxk{k}_i)^T\hbeta) - \psi''((\bxk{k}_i)^T\bbeta)]}_\text{(1)} \\
	&\quad + \underbrace{\frac{1}{\nah+n_0}\sum_{i, k}(x_{ij}^{(k)})^2\psi''((\bxk{k}_i)^T\bbeta) - \sum_{k \in \transet}\frac{n_k}{\nah+n_0}\te[(x_{ij}^{(k)})^2\psi''((\bxk{k}_i)^T\bbeta)]}_\text{(2)} \\
	&\quad + \underbrace{\sum_{k \in \transet}\frac{n_k}{\nah+n_0}\left\{\te[(x_{ij}^{(k)})^2\psi''((\bxk{k}_i)^T\bbeta)]-\te[(x_{ij}^{(0)})^2\psi''((\bxk{0}_i)^T\bbeta)]\right\}}_\text{(3)}\\
	&\quad + \underbrace{\frac{1}{\nah+n_0}\sum_{k \in \transet} (\bXk{k}_j)^T\diag(\{\psi''((\bxk{k}_i)^T\bbeta) - \psi''((\bxk{k}_i)^T\hbeta)\}_{i=1}^{n_k})\bXk{k}_{-j}\hbgammak{0}_j}_\text{(4)} \\
	&\quad - \underbrace{\frac{1}{\nah+n_0}\sum_{k \in \transet}(\bXk{k}_j)^T\diag(\{\psi''((\bxk{k}_i)^T\bbeta)\}_{i=1}^{n_k})\bXk{k}_{-j}(\hbgammak{0}_j - \bgammak{0}_j)}_\text{(5)} \\
	&\quad + \underbrace{\frac{1}{\nah+n_0}\sum_{k \in \transet}\left\{-(\bXk{k}_j)^T\diag(\{\psi''((\bxk{k}_i)^T\bbeta)\}_{i=1}^{n_k})\bXk{k}_{-j}\bgammak{0}_j + \te[x_j^{(k)}\psi''((\bxk{k})^T\bbeta)(x_{-j}^{(k)})^T]\bgammak{0}_j\right\}}_\text{(6)} \\
	&\quad + \underbrace{\sum_{k \in \transet}\frac{n_k}{\nah+n_0}\left\{-\te[x_j^{(k)}\psi''((\bxk{k})^T\bbeta)(x_{-j}^{(k)})^T] + \te[x_j^{(0)}\psi''((\bxk{0})^T\bbeta)(x_{-j}^{(0)})^T]\right\}\bgammak{0}_j}_\text{(7)}.
\end{align}
And we have the following control for each term:
\begin{align}
	\norm{\text{(1)}} &\leq \frac{1}{\nah+n_0}\norma{\sum_{i,k}(x_{ij}^{(k)})^2\psi'''((\bxk{k}_i)^T\bbeta + t(\bxk{k}_i)^T(\hbeta-\bbeta)) (\bxk{k}_i)^T(\hbeta-\bbeta)} \\
	&\leq \sqrt{\frac{1}{\nah+n_0}\sum_{i,k}(x_{ij}^{(k)})^4[\psi'''((\bxk{k}_i)^T\bbeta + t(\bxk{k}_i)^T(\hbeta-\bbeta))]^2}\cdot \sqrt{\frac{1}{\nah+n_0}\sum_{i,k}[(\bxk{k}_i)^T(\hbeta-\bbeta))]^2} \\
	&\lesssim \Re_1,
\end{align}
with probability at least $1-n_0^{-1}$.

Since $\{(x_{ij}^{(k)})^2\psi''((\bxk{k}_i)^T\bbeta)\}_{i,k}$ are independent sub-Gaussian variables with finite variance, by concentration inequality, $\norm{\text{(2)}} \lesssim (\nah+n_0)^{-1/2}$ with probability $1-\exp\{-(\nah+n_0)\}$. Similarly, $\norm{\text{(6)}} \lesssim (\nah+n_0)^{-1/2}$ with probability $1-\exp\{-(\nah+n_0)\}$. 
\begin{equation}
	\norm{\text{(3)}} \leq \sup_{k \in \mah}\norma{\bSigmak{k}_{\bbeta,j,j} - \bSigmak{0}_{\bbeta,j,j}} \leq h_{\max}.
\end{equation}

\begin{align}
	\norm{\text{(4)}} &\leq \norma{\frac{1}{\nah+n_0} \sum_{i,k}x_{ij}^{(k)}(\bxk{k}_{-j})^T\hbgammak{0}_j\cdot[\psi''((\bxk{k}_i)^T\bbeta) - \psi''((\bxk{k}_i)^T\hbeta)]} \\
	&\leq \norma{\frac{1}{\nah+n_0} \sum_{i,k}x_{ij}^{(k)}(\bxk{k}_{-j})^T\bgammak{0}_j\cdot \psi'''((\bxk{k}_i)^T\bbeta + t(\bxk{k}_i)^T(\hbeta-\bbeta))\cdot (\bxk{k}_i)^T(\hbeta-\bbeta)} \\
	&\quad + \norma{\frac{1}{\nah+n_0} \sum_{i,k}x_{ij}^{(k)}(\bxk{k}_{-j})^T(\hbgammak{0}_j-\bgammak{0}_j)\cdot \psi'''((\bxk{k}_i)^T\bbeta + t(\bxk{k}_i)^T(\hbeta-\bbeta))\cdot (\bxk{k}_i)^T(\hbeta-\bbeta)} \\
	&\lesssim \Re_1,
\end{align}
with probability at least $1-n_0^{-1}$.

\begin{align}
	\norm{\text{(5)}} &\leq \norma{\frac{1}{\nah+n_0} \sum_{i,k}x_{ij}^{(k)}\psi''((\bxk{k}_i)^T\bbeta)\cdot (\bxk{k}_{i,-j})^T(\hbgammak{0}_j-\bgammak{0}_j)} \\
	&\lesssim \sqrt{\frac{1}{\nah+n_0} \sum_{i,k}[(\bxk{k}_{i,-j})^T(\hbgammak{0}_j-\bgammak{0}_j)]^2} \\
	&\lesssim  h_1^{1/2}\left(\frac{\log p}{n_0}\right)^{1/4} + h_1^{1/2}\Re_1^{1/2} + \Re_1
\end{align}
with probability at least $1-\Kah n_0^{-1}$ by \eqref{eq: prop 2 eq 1}. 

\begin{equation}
	\norm{\text{(7)}} \lesssim  \sup_{k \in \mah}\norma{(\bSigmak{k}_{\bbeta, -j, j}-\bSigmak{0}_{\bbeta, -j, j})\bgammak{0}_j} \lesssim h_{\max}.
\end{equation}

Combine all the inequalities above to finish the proof of \eqref{eq: prop 2 eq 3}. Note that the bound of $\norm{\htau_j^{-2} - \tau_j^{-2}}$ follows because $\inf_{j}\tau_j^2 = \mathcal{O}(1)$.

\end{document}